\def\eqref#1{Eq.~\ref{#1}}
\def\Eqref#1{Eq.~\ref{#1}}
\def\1{\bm{1}}
\DeclareMathAlphabet{\mathsfit}{\encodingdefault}{\sfdefault}{m}{sl}
\SetMathAlphabet{\mathsfit}{bold}{\encodingdefault}{\sfdefault}{bx}{n}
\newtheorem{theorem}{Theorem}
\newtheorem{lemma}{Lemma}
\newtheorem{corollary}{Corollary}
\newtheorem{remark}{Remark}
\def\ie{\textit{i.e.}}
\def\eg{\textit{e.g.}}
\title{RecDreamer: Consistent Text-to-3D \\Generation via Uniform Score Distillation}
\author{Chenxi~Zheng\textsuperscript{1}, Yihong~Lin\textsuperscript{1}, Bangzhen~Liu\textsuperscript{1}, Xuemiao~Xu\textsuperscript{1}\footnotemark[2] , Yongwei~Nie\textsuperscript{1}\footnotemark[2] , Shengfeng~He\textsuperscript{2}\\
\textsuperscript{1}South China University of Technology, \textsuperscript{2}Singapore Management University\\
\{chansey0529, amcsyihonglin, liubz.scut\}@gmail.com,
\{xuemx, nieyongwei\}@scut.edu.cn,\\
shengfenghe@smu.edu.sg
}
\definecolor{customred}{HTML}{cc2936}
\begin{document}

\maketitle

\begin{abstract}
Current text-to-3D generation methods based on score distillation often suffer from geometric inconsistencies, leading to repeated patterns across different poses of 3D assets. This issue, known as the Multi-Face Janus problem, arises because existing methods struggle to maintain consistency across varying poses and are biased toward a canonical pose. While recent work has improved pose control and approximation, these efforts are still limited by this inherent bias, which skews the guidance during generation.
To address this, we propose a solution called RecDreamer, which reshapes the underlying data distribution to achieve a more consistent pose representation. The core idea behind our method is to rectify the prior distribution, ensuring that pose variation is uniformly distributed rather than biased toward a canonical form. By modifying the prescribed distribution through an auxiliary function, we can reconstruct the density of the distribution to ensure compliance with specific marginal constraints. In particular, we ensure that the marginal distribution of poses follows a uniform distribution, thereby eliminating the biases introduced by the prior knowledge.
We incorporate this rectified data distribution into existing score distillation algorithms, a process we refer to as uniform score distillation. To efficiently compute the posterior distribution required for the auxiliary function, RecDreamer introduces a training-free classifier that estimates pose categories in a plug-and-play manner. Additionally, we utilize various approximation techniques for noisy states, significantly improving system performance.
Our experimental results demonstrate that RecDreamer effectively mitigates the Multi-Face Janus problem, leading to more consistent 3D asset generation across different poses.

\end{abstract}

\section{Introduction}

\begin{figure*}[]
    \centering
    \begin{minipage}[c]{1\linewidth}
    \centering

    \setcounter{subfigure}{0}
    \subfloat[Statistics for pose classification in images generated by diffusion models. The plots show that the overall distribution is biased toward front-facing poses, even when directional text prompts are used to specify other poses.]{
        \centering
        \begin{minipage}[c]{0.32\linewidth}
            \centering
            \includegraphics[width=1\textwidth]{./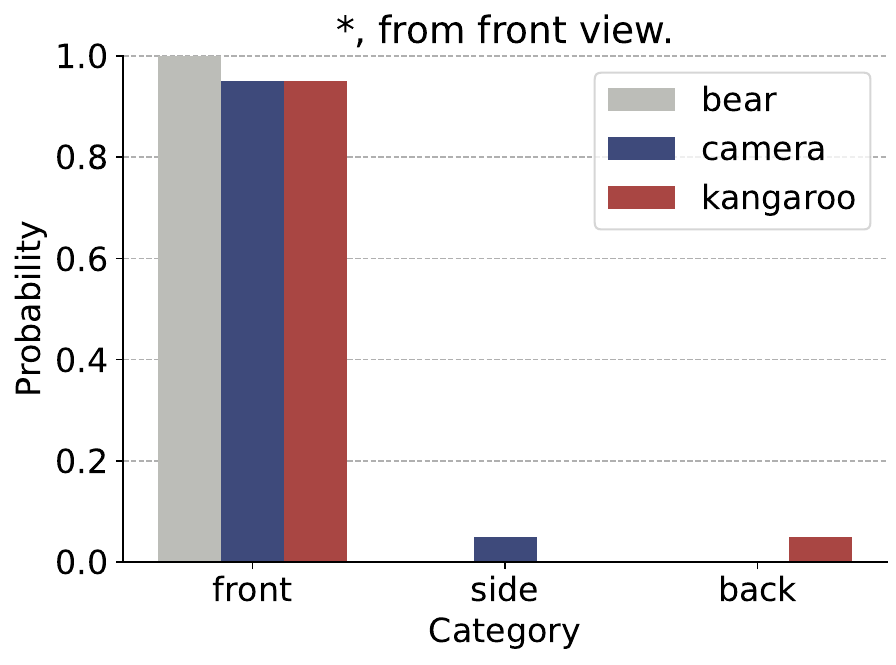}
        \end{minipage}\hspace{1.5mm}
        \begin{minipage}[c]{0.32\linewidth}
            \centering
            \includegraphics[width=1\textwidth]{./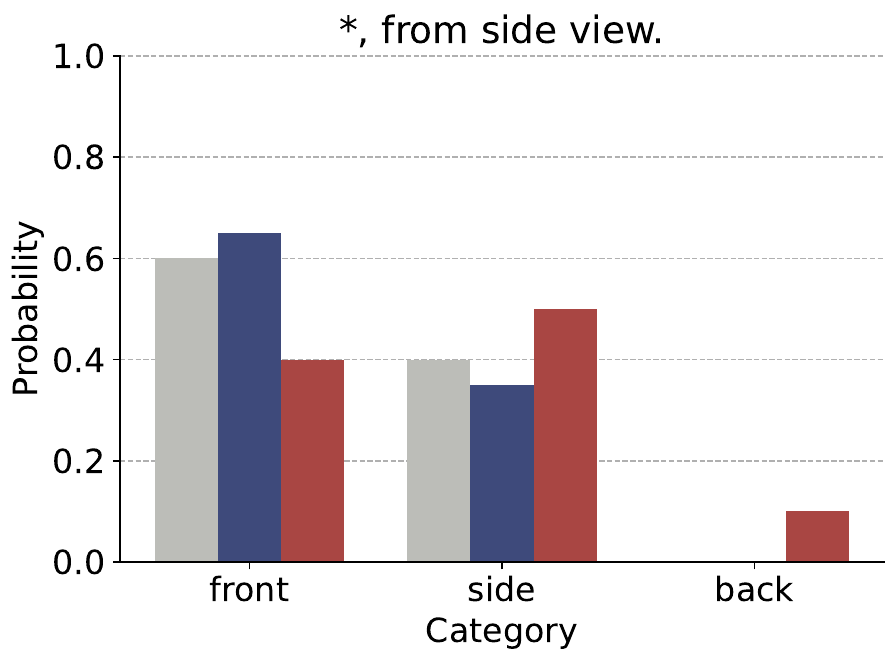}
        \end{minipage}\hspace{1.5mm}
        \begin{minipage}[c]{0.32\linewidth}
            \centering
            \includegraphics[width=1\textwidth]{./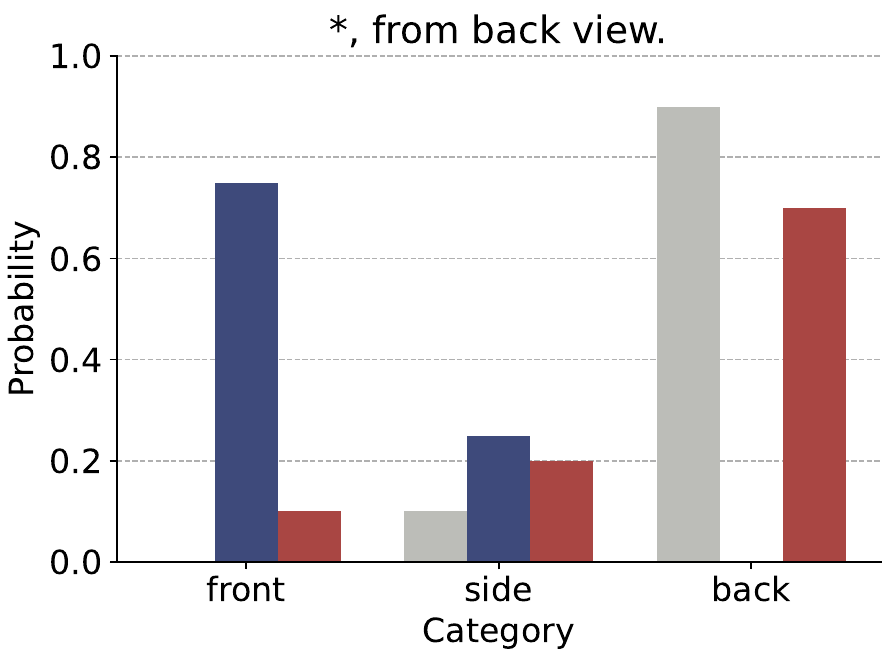}
        \end{minipage}
    }

    \subfloat{
        \begin{tikzpicture}
            \node[anchor=south west,inner sep=0] at (0,0) {\includegraphics[width=0.165\textwidth]{./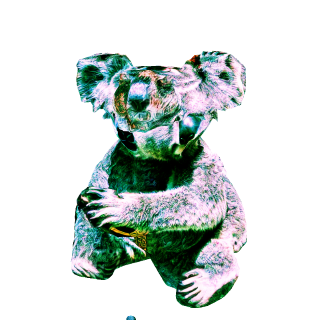}};
            \node[anchor=south west,inner sep=0] at (1.6,0) {\includegraphics[width=0.05\textwidth]{./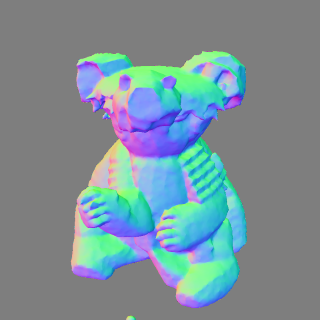}};
        \end{tikzpicture}
        \hspace{-1.7mm}
        \begin{tikzpicture}
            \node[anchor=south west,inner sep=0] at (0,0) {\includegraphics[width=0.165\textwidth]{./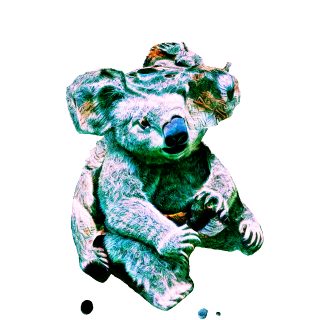}};
            \node[anchor=south west,inner sep=0] at (1.6,0) {\includegraphics[width=0.05\textwidth]{./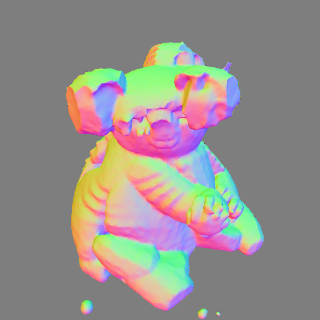}};
        \end{tikzpicture}
        \hspace{-1.7mm}
        \begin{tikzpicture}
            \node[anchor=south west,inner sep=0] at (0,0) {\includegraphics[width=0.165\textwidth]{./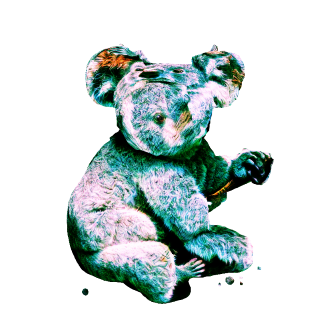}};
            \node[anchor=south west,inner sep=0] at (1.6,0) {\includegraphics[width=0.05\textwidth]{./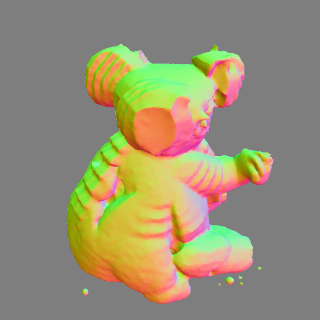}};
        \end{tikzpicture}
        \hspace{-1.7mm}
        \begin{tikzpicture}
            \node[anchor=south west,inner sep=0] at (0,0) {\includegraphics[width=0.165\textwidth]{./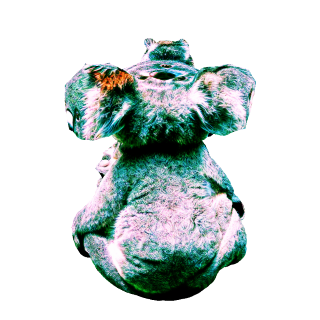}};
            \node[anchor=south west,inner sep=0] at (1.6,0) {\includegraphics[width=0.05\textwidth]{./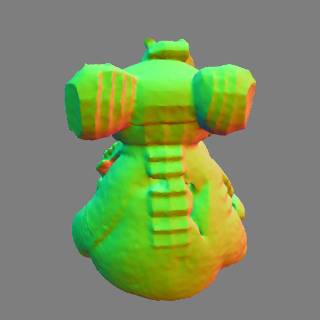}};
        \end{tikzpicture}
        \hspace{-1.7mm}
        \begin{tikzpicture}
            \node[anchor=south west,inner sep=0] at (0,0) {\includegraphics[width=0.165\textwidth]{./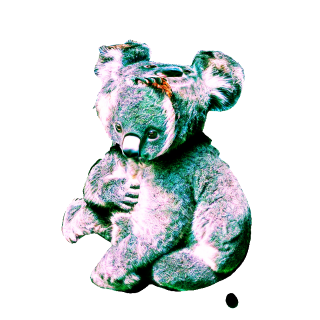}};
            \node[anchor=south west,inner sep=0] at (1.6,0) {\includegraphics[width=0.05\textwidth]{./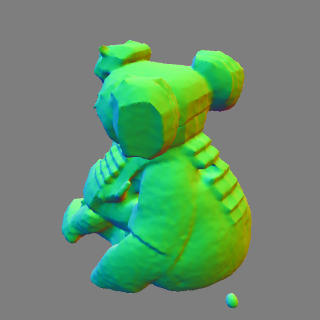}};
        \end{tikzpicture}
        \hspace{-1.7mm}
        \begin{tikzpicture}
            \node[anchor=south west,inner sep=0] at (0,0) {\includegraphics[width=0.165\textwidth]{./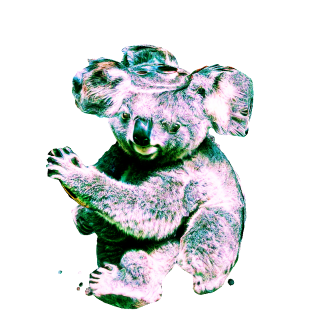}};
            \node[anchor=south west,inner sep=0] at (1.6,0) {\includegraphics[width=0.05\textwidth]{./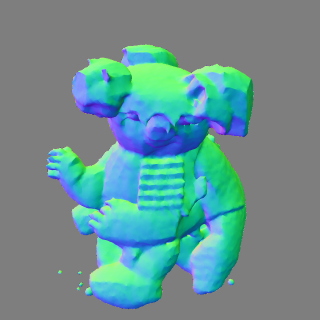}};
        \end{tikzpicture}
    }\vspace{-4mm}

    \subfloat{
        \begin{tikzpicture}
            \node[anchor=south west,inner sep=0] at (0,0) {\includegraphics[width=0.165\textwidth]{./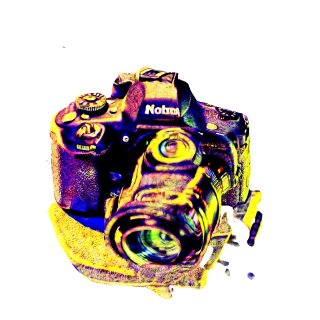}};
            \node[anchor=south west,inner sep=0] at (1.6,0) {\includegraphics[width=0.05\textwidth]{./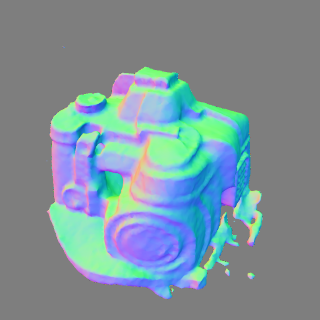}};
        \end{tikzpicture}
        \hspace{-1.7mm}
        \begin{tikzpicture}
            \node[anchor=south west,inner sep=0] at (0,0) {\includegraphics[width=0.165\textwidth]{./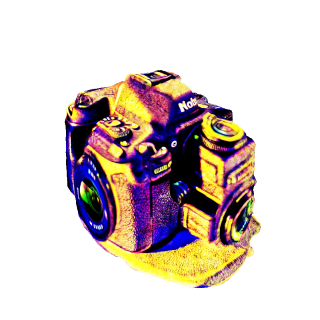}};
            \node[anchor=south west,inner sep=0] at (1.6,0) {\includegraphics[width=0.05\textwidth]{./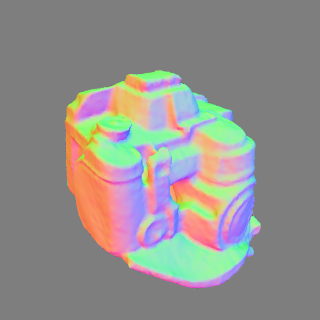}};
        \end{tikzpicture}
        \hspace{-1.7mm}
        \begin{tikzpicture}
            \node[anchor=south west,inner sep=0] at (0,0) {\includegraphics[width=0.165\textwidth]{./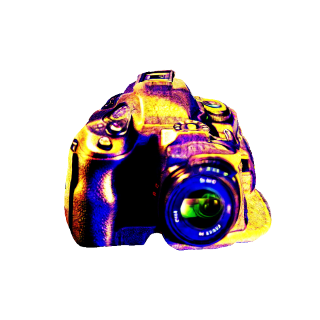}};
            \node[anchor=south west,inner sep=0] at (1.6,0) {\includegraphics[width=0.05\textwidth]{./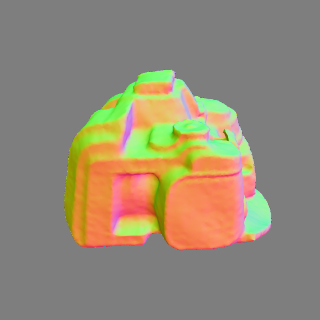}};
        \end{tikzpicture}
        \hspace{-1.7mm}
        \begin{tikzpicture}
            \node[anchor=south west,inner sep=0] at (0,0) {\includegraphics[width=0.165\textwidth]{./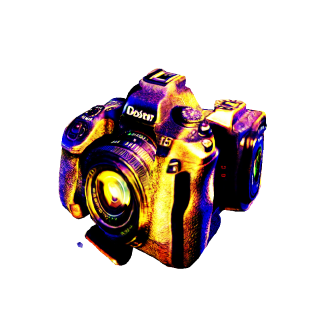}};
            \node[anchor=south west,inner sep=0] at (1.6,0) {\includegraphics[width=0.05\textwidth]{./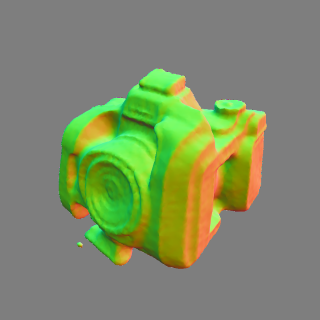}};
        \end{tikzpicture}
        \hspace{-1.7mm}
        \begin{tikzpicture}
            \node[anchor=south west,inner sep=0] at (0,0) {\includegraphics[width=0.165\textwidth]{./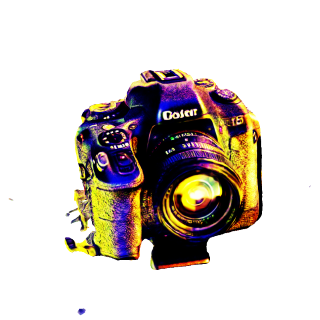}};
            \node[anchor=south west,inner sep=0] at (1.6,0) {\includegraphics[width=0.05\textwidth]{./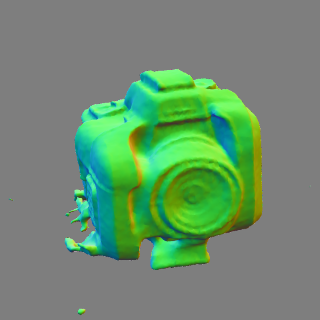}};
        \end{tikzpicture}
        \hspace{-1.7mm}
        \begin{tikzpicture}
            \node[anchor=south west,inner sep=0] at (0,0) {\includegraphics[width=0.165\textwidth]{./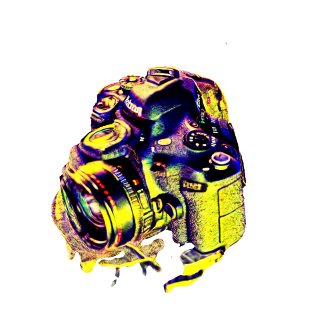}};
            \node[anchor=south west,inner sep=0] at (1.6,0) {\includegraphics[width=0.05\textwidth]{./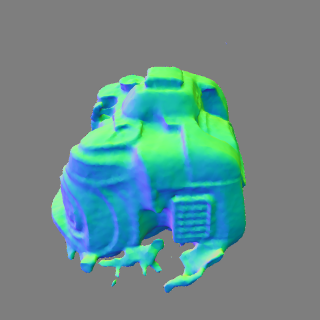}};
        \end{tikzpicture}
    }\vspace{-4mm}

    \setcounter{subfigure}{1}
    \subfloat[Score distillation from a biased distribution leads to the Multi-Face Janus problem. The generated 3D assets tend to overemphasize frontal features to align with the prior distribution, resulting in repeated patterns.]{
        \begin{tikzpicture}
            \node[anchor=south west,inner sep=0] at (0,0) {\includegraphics[width=0.165\textwidth]{./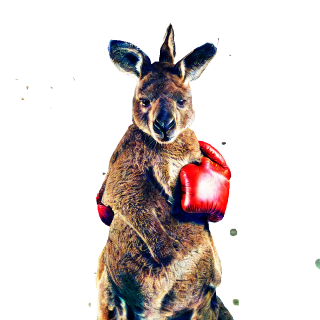}};
            \node[anchor=south west,inner sep=0] at (1.6,0) {\includegraphics[width=0.05\textwidth]{./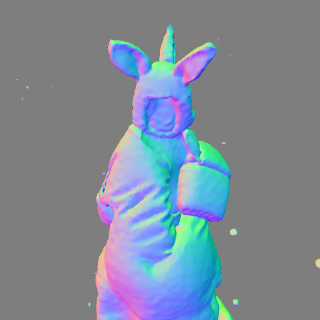}};
        \end{tikzpicture}
        \hspace{-1.7mm}
        \begin{tikzpicture}
            \node[anchor=south west,inner sep=0] at (0,0) {\includegraphics[width=0.165\textwidth]{./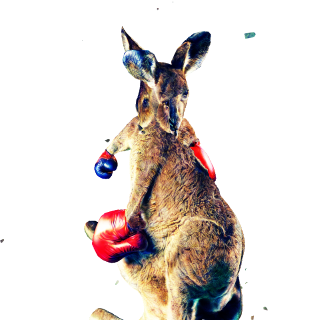}};
            \node[anchor=south west,inner sep=0] at (1.6,0) {\includegraphics[width=0.05\textwidth]{./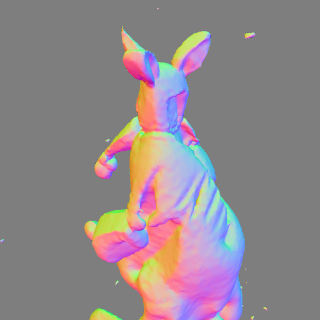}};
        \end{tikzpicture}
        \hspace{-1.7mm}
        \begin{tikzpicture}
            \node[anchor=south west,inner sep=0] at (0,0) {\includegraphics[width=0.165\textwidth]{./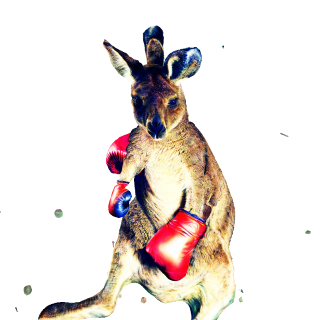}};
            \node[anchor=south west,inner sep=0] at (1.6,0) {\includegraphics[width=0.05\textwidth]{./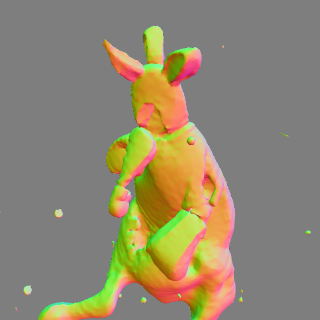}};
        \end{tikzpicture}
        \hspace{-1.7mm}
        \begin{tikzpicture}
            \node[anchor=south west,inner sep=0] at (0,0) {\includegraphics[width=0.165\textwidth]{./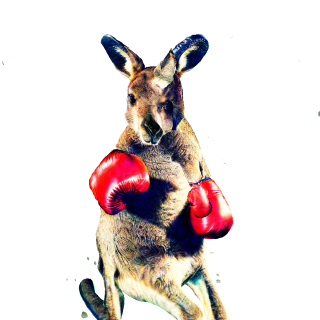}};
            \node[anchor=south west,inner sep=0] at (1.6,0) {\includegraphics[width=0.05\textwidth]{./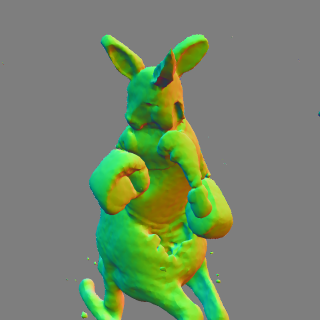}};
        \end{tikzpicture}
        \hspace{-1.7mm}
        \begin{tikzpicture}
            \node[anchor=south west,inner sep=0] at (0,0) {\includegraphics[width=0.165\textwidth]{./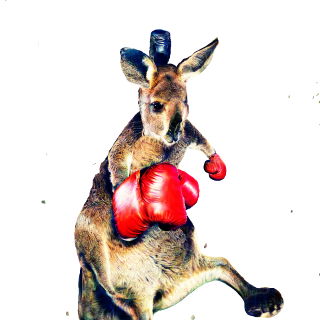}};
            \node[anchor=south west,inner sep=0] at (1.6,0) {\includegraphics[width=0.05\textwidth]{./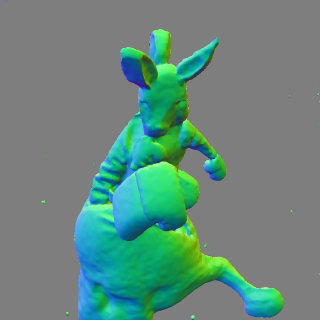}};
        \end{tikzpicture}
        \hspace{-1.7mm}
        \begin{tikzpicture}
            \node[anchor=south west,inner sep=0] at (0,0) {\includegraphics[width=0.165\textwidth]{./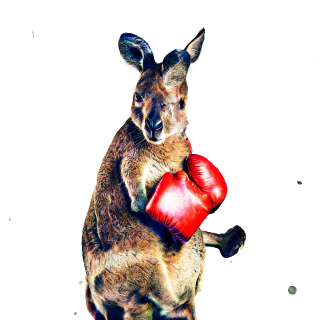}};
            \node[anchor=south west,inner sep=0] at (1.6,0) {\includegraphics[width=0.05\textwidth]{./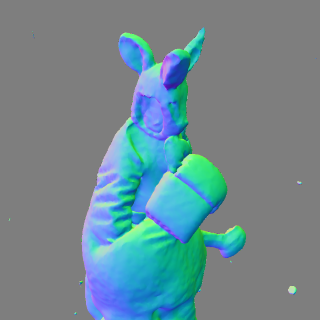}};
        \end{tikzpicture}
    }\vspace{-4mm}

    \subfloat{
        \begin{tikzpicture}
            \node[anchor=south west,inner sep=0] at (0,0) {\includegraphics[width=0.165\textwidth]{./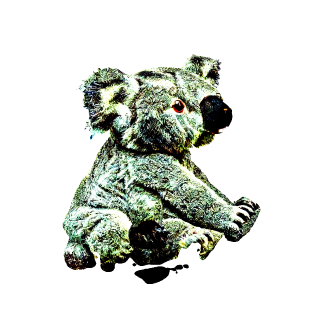}};
            \node[anchor=south west,inner sep=0] at (1.6,0) {\includegraphics[width=0.05\textwidth]{./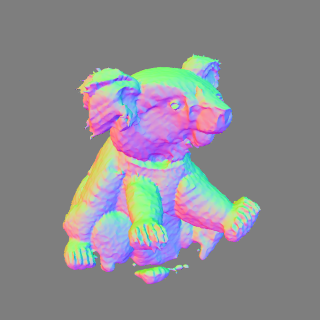}};
        \end{tikzpicture}
        \hspace{-1.7mm}
        \begin{tikzpicture}
            \node[anchor=south west,inner sep=0] at (0,0) {\includegraphics[width=0.165\textwidth]{./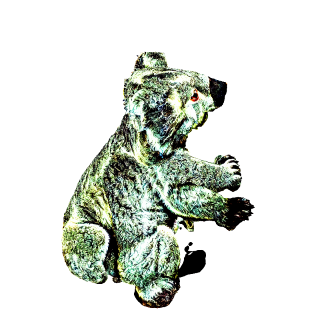}};
            \node[anchor=south west,inner sep=0] at (1.6,0) {\includegraphics[width=0.05\textwidth]{./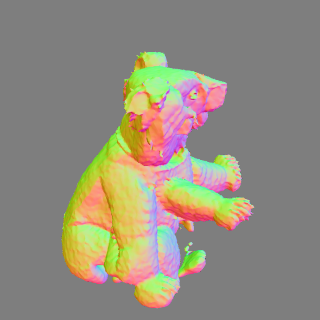}};
        \end{tikzpicture}
        \hspace{-1.7mm}
        \begin{tikzpicture}
            \node[anchor=south west,inner sep=0] at (0,0) {\includegraphics[width=0.165\textwidth]{./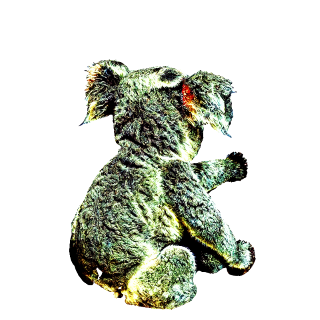}};
            \node[anchor=south west,inner sep=0] at (1.6,0) {\includegraphics[width=0.05\textwidth]{./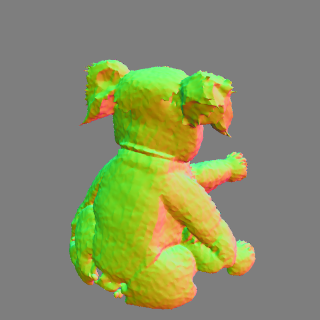}};
        \end{tikzpicture}
        \hspace{-1.7mm}
        \begin{tikzpicture}
            \node[anchor=south west,inner sep=0] at (0,0) {\includegraphics[width=0.165\textwidth]{./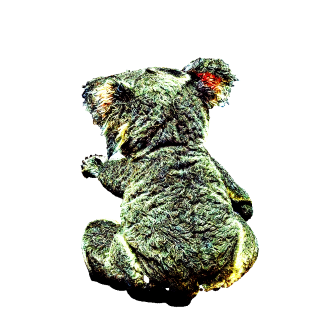}};
            \node[anchor=south west,inner sep=0] at (1.6,0) {\includegraphics[width=0.05\textwidth]{./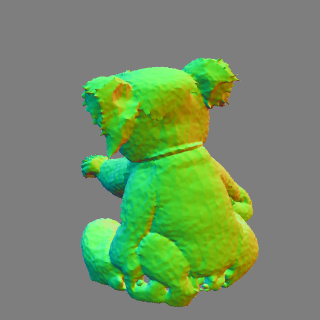}};
        \end{tikzpicture}
        \hspace{-1.7mm}
        \begin{tikzpicture}
            \node[anchor=south west,inner sep=0] at (0,0) {\includegraphics[width=0.165\textwidth]{./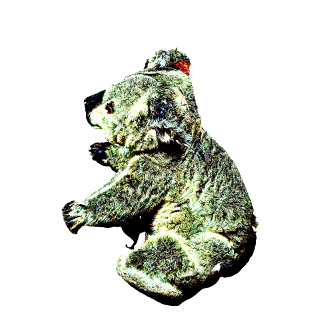}};
            \node[anchor=south west,inner sep=0] at (1.6,0) {\includegraphics[width=0.05\textwidth]{./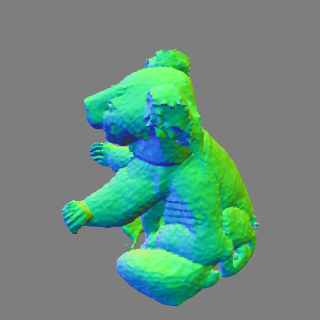}};
        \end{tikzpicture}
        \hspace{-1.7mm}
        \begin{tikzpicture}
            \node[anchor=south west,inner sep=0] at (0,0) {\includegraphics[width=0.165\textwidth]{./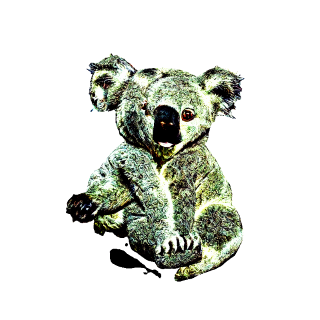}};
            \node[anchor=south west,inner sep=0] at (1.6,0) {\includegraphics[width=0.05\textwidth]{./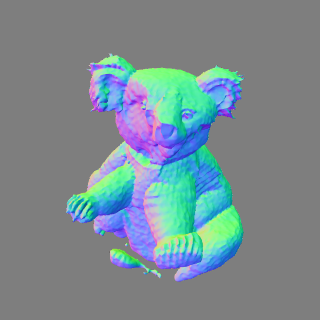}};
        \end{tikzpicture}
    }\vspace{-4mm}

    \subfloat{
        \begin{tikzpicture}
            \node[anchor=south west,inner sep=0] at (0,0) {\includegraphics[width=0.165\textwidth]{./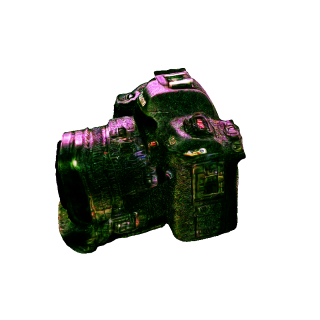}};
            \node[anchor=south west,inner sep=0] at (1.6,0) {\includegraphics[width=0.05\textwidth]{./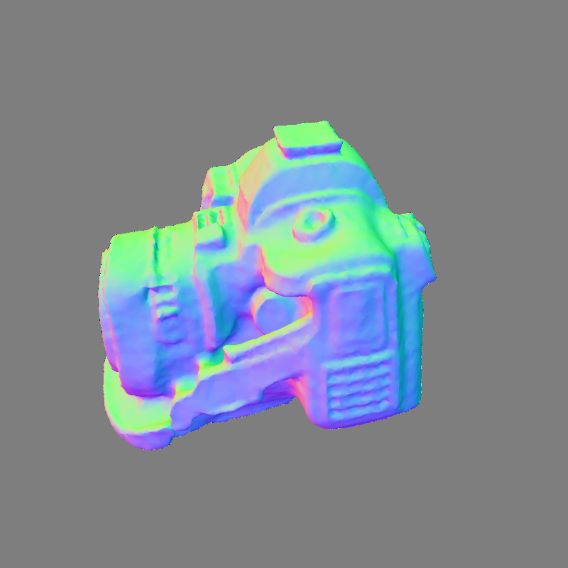}};
        \end{tikzpicture}
        \hspace{-1.7mm}
        \begin{tikzpicture}
            \node[anchor=south west,inner sep=0] at (0,0) {\includegraphics[width=0.165\textwidth]{./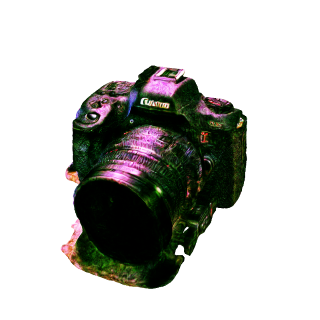}};
            \node[anchor=south west,inner sep=0] at (1.6,0) {\includegraphics[width=0.05\textwidth]{./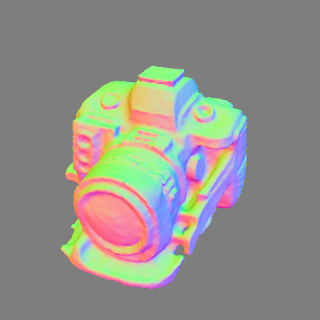}};
        \end{tikzpicture}
        \hspace{-1.7mm}
        \begin{tikzpicture}
            \node[anchor=south west,inner sep=0] at (0,0) {\includegraphics[width=0.165\textwidth]{./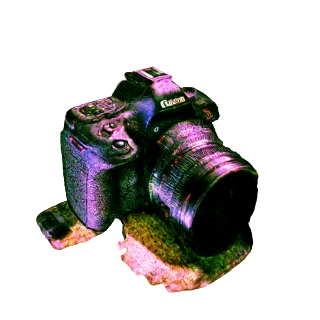}};
            \node[anchor=south west,inner sep=0] at (1.6,0) {\includegraphics[width=0.05\textwidth]{./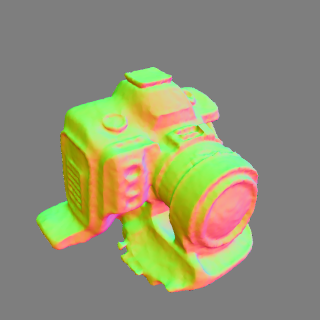}};
        \end{tikzpicture}
        \hspace{-1.7mm}
        \begin{tikzpicture}
            \node[anchor=south west,inner sep=0] at (0,0) {\includegraphics[width=0.165\textwidth]{./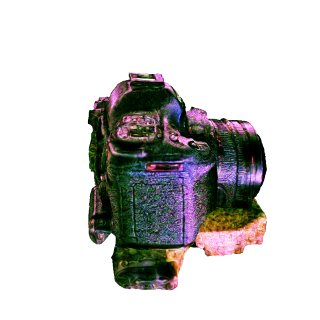}};
            \node[anchor=south west,inner sep=0] at (1.6,0) {\includegraphics[width=0.05\textwidth]{./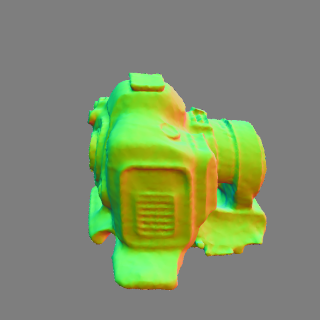}};
        \end{tikzpicture}
        \hspace{-1.7mm}
        \begin{tikzpicture}
            \node[anchor=south west,inner sep=0] at (0,0) {\includegraphics[width=0.165\textwidth]{./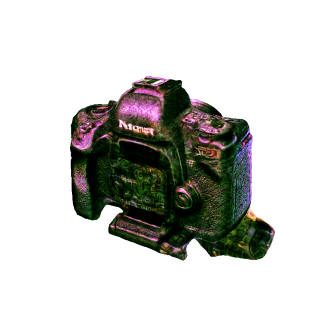}};
            \node[anchor=south west,inner sep=0] at (1.6,0) {\includegraphics[width=0.05\textwidth]{./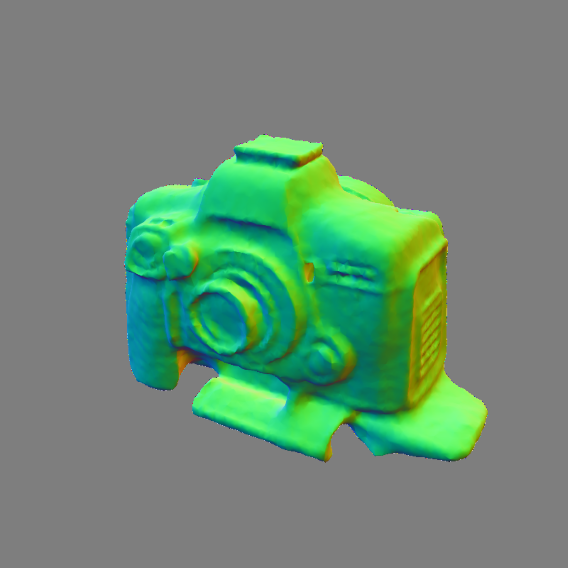}};
        \end{tikzpicture}
        \hspace{-1.7mm}
        \begin{tikzpicture}
            \node[anchor=south west,inner sep=0] at (0,0) {\includegraphics[width=0.165\textwidth]{./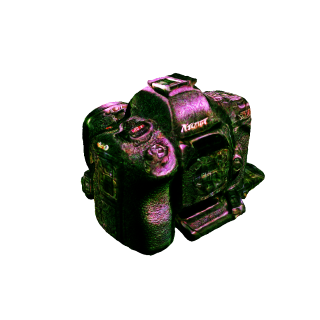}};
            \node[anchor=south west,inner sep=0] at (1.6,0) {\includegraphics[width=0.05\textwidth]{./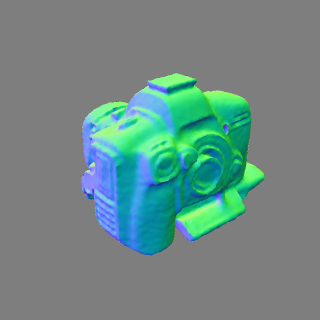}};
        \end{tikzpicture}
    }\vspace{-4mm}

    \setcounter{subfigure}{2}
    \subfloat[Score distillation using a uniform distribution with our RecDreamer. The process relies on a rectified prior distribution that incorporates guidance from various poses, effectively alleviating geometric inconsistencies.]{
        \begin{tikzpicture}
            \node[anchor=south west,inner sep=0] at (0,0) {\includegraphics[width=0.165\textwidth]{./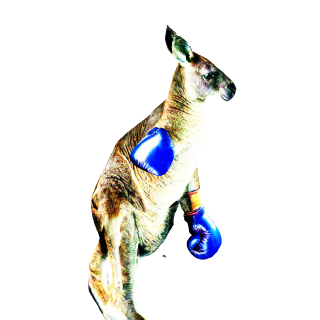}};
            \node[anchor=south west,inner sep=0] at (1.6,0) {\includegraphics[width=0.05\textwidth]{./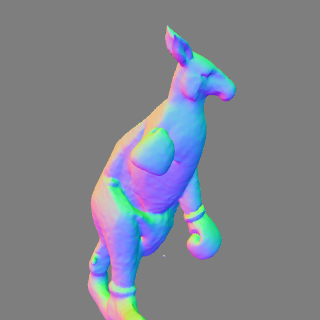}};
        \end{tikzpicture}
        \hspace{-1.7mm}
        \begin{tikzpicture}
            \node[anchor=south west,inner sep=0] at (0,0) {\includegraphics[width=0.165\textwidth]{./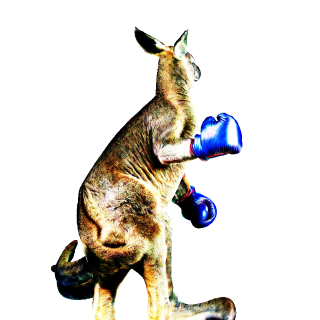}};
            \node[anchor=south west,inner sep=0] at (1.6,0) {\includegraphics[width=0.05\textwidth]{./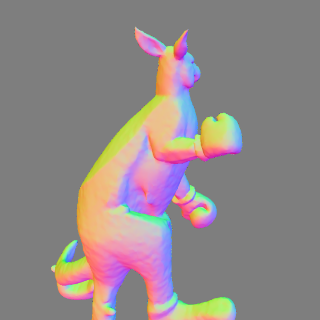}};
        \end{tikzpicture}
        \hspace{-1.7mm}
        \begin{tikzpicture}
            \node[anchor=south west,inner sep=0] at (0,0) {\includegraphics[width=0.165\textwidth]{./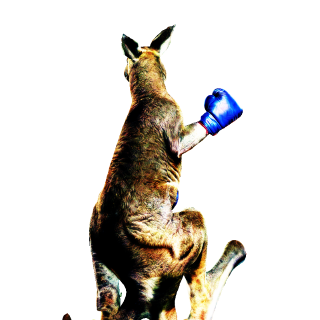}};
            \node[anchor=south west,inner sep=0] at (1.6,0) {\includegraphics[width=0.05\textwidth]{./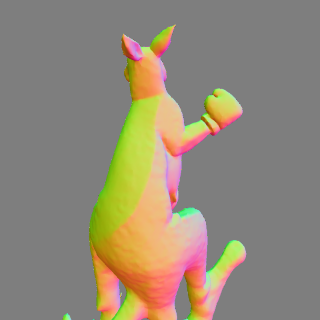}};
        \end{tikzpicture}
        \hspace{-1.7mm}
        \begin{tikzpicture}
            \node[anchor=south west,inner sep=0] at (0,0) {\includegraphics[width=0.165\textwidth]{./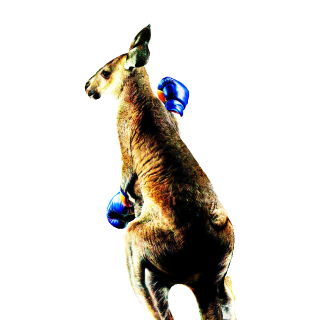}};
            \node[anchor=south west,inner sep=0] at (1.6,0) {\includegraphics[width=0.05\textwidth]{./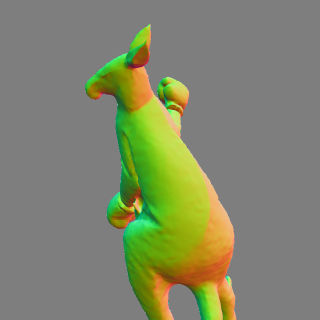}};
        \end{tikzpicture}
        \hspace{-1.7mm}
        \begin{tikzpicture}
            \node[anchor=south west,inner sep=0] at (0,0) {\includegraphics[width=0.165\textwidth]{./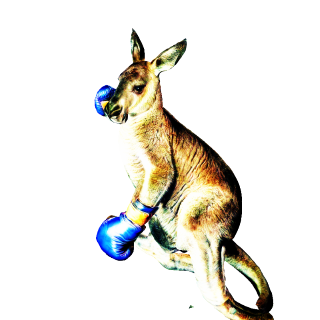}};
            \node[anchor=south west,inner sep=0] at (1.6,0) {\includegraphics[width=0.05\textwidth]{./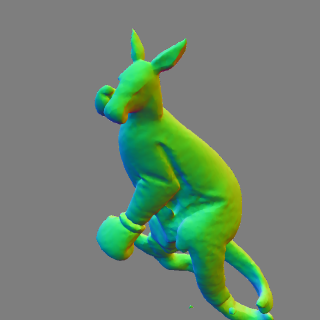}};
        \end{tikzpicture}
        \hspace{-1.7mm}
        \begin{tikzpicture}
            \node[anchor=south west,inner sep=0] at (0,0) {\includegraphics[width=0.165\textwidth]{./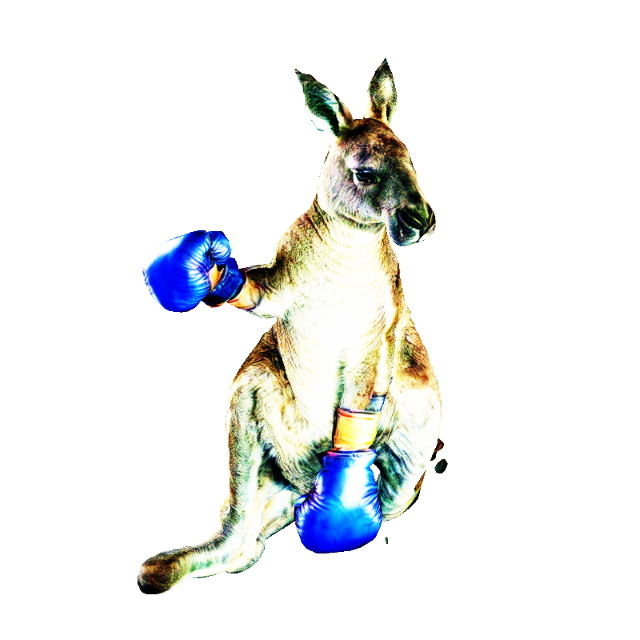}};
            \node[anchor=south west,inner sep=0] at (1.6,0) {\includegraphics[width=0.05\textwidth]{./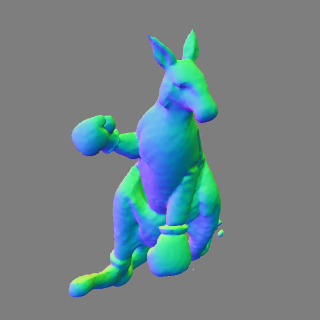}};
        \end{tikzpicture}
    }
\end{minipage}

    \caption{The Multi-Face Janus problem arises from an imbalance in the pose distribution of pretrained models, which tend to generate predominantly frontal images. This bias results in excessive faces appearing in the generated 3D assets. RecDreamer addresses this issue by producing a distribution with a uniform pose marginal, enabling more diverse pose generation and mitigating the Multi-Face Janus problem.}
    \label{fig:teaser}

\end{figure*} 

Text-to-3D generation has become a transformative technology with broad applications, enabling the creation of 3D models from natural language descriptions. By lowering the technical barriers, it allows non-experts to generate intricate 3D objects without specialized tools or expertise. This advancement significantly enhances productivity in fields such as gaming, virtual reality (VR), and augmented reality (AR), where manual 3D model creation is often labor-intensive. Current methods~\citep{wang2024prolificdreamer, chen2023fantasia3d, lin2023magic3d} rely on score distillation techniques~\citep{poole2022dreamfusion, wang2023score, graikos2022diffusion} to leverage text-to-image priors from diffusion models, generating high-quality 3D assets with remarkable visual fidelity, precise alignment to text descriptions, and strong conceptual integrity.

However, despite these advances, generated 3D assets frequently suffer from geometric inconsistencies, particularly in the form of repeated patterns or textures across different camera poses, a problem known as the \emph{Multi-Face Janus} issue (see Fig.~\ref{fig:teaser}(b)). This arises from biases in the underlying data distribution (see Fig.~\ref{fig:teaser}(a)), which current methods fail to fully address. Efforts to tackle this issue, such as modifying directional text descriptions through gradient-based adjustments~\citep{hong2023debiasing, armandpour2023re}, have yielded limited success, often introducing unwanted artifacts or irrelevant patterns. Other approaches~\citep{huang2024dreamcontrol, wang2024taming} attempt to impose constraints on the rendered 3D assets, but they still fall short of resolving the core bias present in text-to-image distributions.

To address this, we propose \emph{RecDreamer}, a novel solution designed to eliminate the biases in pretrained models by modifying the underlying data distribution. The rationale behind our approach is to reconstruct the original data distribution so that the marginal distribution of the pose becomes uniform, thus removing the bias toward a canonical pose. We achieve this by introducing a weighting function that reweights the density of the original distribution, ensuring it meets specific marginal constraints. Specifically, we derive a rectified distribution where the pose component in the joint distribution follows a uniform distribution across all possible poses.

This rectified distribution is then incorporated into the score distillation framework~\citep{wang2024prolificdreamer}. The use of reverse Kullback-Leibler divergence~\citep{kullback1951information} in score distillation allows the integration of the modified distribution without altering the overall sampling process or gradient derivation. As a result, we develop a process known as uniform score distillation (USD), which aligns the target distribution with a uniform distribution, effectively improving pose consistency in the generated 3D assets.

To compute the auxiliary function necessary for rectifying the distribution, RecDreamer introduces a training-free classifier that estimates pose categories by discretizing the continuous pose space. This classifier predicts pose based on orientation score and texture similarity, leveraging a pretrained feature extractor without the need for additional fine-tuning. Furthermore, we dynamically handle noisy image estimates, ensuring robust pose estimation and reliable performance even in the optimization process.

Experiments demonstrate the effectiveness of our method in alleviating the Multi-Face Janus problem and improving geometric consistency, while maintaining rendering quality comparable to baseline methods, as shown in Fig.~\ref{fig:teaser}(c). We also conduct additional experiments on 2D images and a toy dataset to further validate our algorithm. Additionally, we showcase further applications of the pose classifier.
\section{Background}
In this section, we provide a brief overview of diffusion models, conditional guidance techniques, and text-to-3D generation using score distillation. We follow the notation conventions introduced in VSD~\citep{wang2024prolificdreamer}.

\subsection{Diffusion Models}
Diffusion models~\citep{sohl2015deep, ho2020denoising, song2020score} are latent variable models that simulate a diffusion process to model the data distribution $\boldsymbol{x}_0 \sim q_0(\boldsymbol{x}_0)$. These models consist of a forward process $q$, which progressively adds Gaussian noise to the data, and a reverse process $p$, which denoises the data to recover the original distribution.

In the forward process, noise is iteratively added through transitions $q_t(\boldsymbol{x}_t|\boldsymbol{x}_{t-1})$. This allows the posterior distribution $q_t(\boldsymbol{x}_t|\boldsymbol{x}_0) = \mathcal{N}(\alpha_t \boldsymbol{x}_0, \sigma_t^2 \boldsymbol{I})$ to be computed, where $\alpha_t$ and $\sigma_t$ are time-dependent hyperparameters. The marginal distribution $q_t(\boldsymbol{x}_t)$ is derived by integrating over the data distribution:
\begin{equation}
q_t(\boldsymbol{x}_t) = \int q_t(\boldsymbol{x}_t|\boldsymbol{x}_0)q_0(\boldsymbol{x}_0)\,d\boldsymbol{x}_0.
\end{equation}

The reverse process begins with a standard Gaussian distribution, $p_T(\boldsymbol{x}_T) = \mathcal{N}(\boldsymbol{0}, \boldsymbol{I})$, and removes the noise through transitions $p_t(\boldsymbol{x}_t|\boldsymbol{x}_{t+1}) = q_t(\boldsymbol{x}_t|\boldsymbol{x}_{t+1}, \boldsymbol{x}_0 = \hat{\boldsymbol{x}}_0)$, where $\hat{\boldsymbol{x}}_0$ is an estimate of the clean data. Instead of predicting $\hat{\boldsymbol{x}}_0$ directly,~\citet{ho2020denoising} proposed optimizing a noise estimator $\epsilon_\phi(\boldsymbol{x}_t, t)$ by minimizing the following loss function:
\begin{equation}
    \mathcal{L}_{\mathrm{Diff}}(\phi) = \mathbb{E}_{\boldsymbol{x}_0 \sim q_0(\boldsymbol{x}_0), t \sim \mathcal{U}[0,T], \boldsymbol{\epsilon} \sim \mathcal{N}(\boldsymbol{0}, \boldsymbol{I})} \left[ \omega(t)\|\epsilon_\phi(\alpha_t \boldsymbol{x}_0 + \sigma_t \boldsymbol{\epsilon}) - \boldsymbol{\epsilon}\|_2^2 \right],
\end{equation}
where $\omega(t)$ is a time-dependent weighting function. The noise predictor $\epsilon_\phi(\boldsymbol{x}_t, t)$ can be viewed as a linear transformation of the score function: $\epsilon_\phi(\boldsymbol{x}_t, t) = -\sigma_t s_\phi(\boldsymbol{x}_t, t)$.

\subsection{Text-to-3D Generation with Score Distillation}
To enable text-to-3D generation using a text-to-image prior,~\citet{poole2022dreamfusion} proposed aligning the distribution of rendered images from an optimizable 3D representation $\theta$ with the text-to-image distribution $p_t(\boldsymbol{x}_t|y)$ generated by a pretrained diffusion model. Let $\Theta$ represent the space of scene parameters, $\mathbb{R}^c$ the space of poses, and $\mathbb{R}^d$ the space of images. Given a pose $c$ and a differentiable renderer $\boldsymbol{g}(\cdot, \cdot): \Theta \times \mathbb{R}^c \rightarrow \mathbb{R}^d$, the distribution of the rendered image is computed through the forward process:
\begin{equation}
q^\theta_t(\boldsymbol{x}_t|c) = \int q^\theta_t(\boldsymbol{x}_0|c) q_t(\boldsymbol{x}_t|\boldsymbol{x}_0) d\boldsymbol{x}_0,
\end{equation}
where $\boldsymbol{x}_0 = \boldsymbol{g}(\theta, c)$ is the rendered image. The 3D representation $\theta$ is then optimized using a weighted probability density distillation loss:
\begin{equation}\label{eq:sds_kl}
    \min_{\theta \in \Theta} \mathcal{L}_{\mathrm{SDS}}(\theta) = \mathbb{E}_{t, c} \left[ \left( \frac{\sigma_t}{\alpha_t} \right) \omega(t) D_{\mathrm{KL}}(q_t^\theta(\boldsymbol{x}_t|c) \parallel p_t(\boldsymbol{x}_t|y^c)) \right],
\end{equation}
where $\boldsymbol{x}_t = \alpha_t \boldsymbol{g}(\theta, c) + \sigma_t \boldsymbol{\epsilon}$, and $y^c$ is a text prompt corresponding to the pose. Since $q^\theta_0(\boldsymbol{x}_0|c)$ is a Dirac distribution $q^\theta_0(\boldsymbol{x}_0|c) = \delta(\boldsymbol{x}_0 - \boldsymbol{g}(\theta, c))$~\citep{wang2024taming}, the gradient of \eqref{eq:sds_kl} can be simplified through reparameterization~\citep{ho2020denoising} as:
\begin{equation}\label{eq:sds_grad}
    \nabla_\theta \mathcal{L}_{\mathrm{SDS}}(\theta) = \mathbb{E}_{t, \boldsymbol{\epsilon}, c} \left[ \omega(t) \left( \epsilon_{\text{pretrain}}(\boldsymbol{x}_t, t, y^c) - \boldsymbol{\epsilon} \right) \frac{\partial \boldsymbol{g}(\theta, c)}{\partial \theta} \right],
\end{equation}
where $\epsilon \sim \mathcal{N}(0, I)$ and $\epsilon_{\text{pretrain}}$ is the pretrained diffusion denoiser. During optimization, \eqref{eq:sds_grad} often results in mode-seeking behavior toward the text-to-image distribution $p_t(\boldsymbol{x}_t|y^c)$, which causes over-smoothness and over-saturation in the generated 3D scene.

To address these issues,~\citet{wang2024prolificdreamer} expanded the point estimate of 3D parameters into a more expressive distribution $\mu(\theta|y)$ by introducing multiple particles $\{\theta^i\}_{i=1}^n$. This extends the simple Gaussian distribution $q^\theta_t(\boldsymbol{x}_t|c)$ in SDS to a more complex distribution:
\begin{equation}
q^\mu_t(\boldsymbol{x}_t|c, y) = \int q^\mu_0(\boldsymbol{x}_0|c, y) p_{t0}(\boldsymbol{x}_t|\boldsymbol{x}_0) d \boldsymbol{x}_0.
\end{equation}
The distribution $\mu$ is then optimized using a variational score distillation (VSD) objective:
\begin{equation}
    \mu^* = \arg\min_\mu \mathbb{E}_{t, c} \left[ \left( \frac{\sigma_t}{\alpha_t} \right) \omega(t) D_{\mathrm{KL}}(q_t^\mu(\boldsymbol{x}_t|c, y) \parallel p_t(\boldsymbol{x}_t|y^c)) \right].
\end{equation}

To solve this, VSD employs particle-based variational inference~\citep{chen2018unified, liu2016stein} and fine-tunes an additional U-Net~\citep{ronneberger2015u}, $\epsilon_\phi$, using LoRA~\citep{hu2021lora}. The fine-tuning is formulated as:
\begin{equation}\label{eq:vsd_lora}
    \min_\phi \sum_{i=1}^n \mathbb{E}_{t \sim \mathcal{U}[0, T], \boldsymbol{\epsilon} \sim \mathcal{N}(\boldsymbol{0}, \boldsymbol{I}), c \sim p(c)} \left[ \| \epsilon_\phi(\alpha_t \boldsymbol{g}(\theta^{(i)}, c) + \sigma_t \boldsymbol{\epsilon}, t, c, y) - \boldsymbol{\epsilon} \|_2^2 \right].
\end{equation}
Finally, the gradient for each particle $\theta^i$ is computed as:
\begin{equation}
    \nabla_\theta \mathcal{L}_{\mathrm{VSD}}(\theta) = \mathbb{E}_{t, \boldsymbol{\epsilon}, c} \left[ \omega(t) \left( \epsilon_{\mathrm{pretrain}}(\boldsymbol{x}_t, t, y^c) - \epsilon_\phi(\boldsymbol{x}_t, t, c, y) \right) \frac{\partial \boldsymbol{g}(\theta, c)}{\partial \theta} \right].
\end{equation}

\section{Method}

The primary goal of our \textit{RecDreamer} is to mitigate the Multi-Face Janus problem through rectification of underlying data distribution in the pre-trained diffusion models. In the following sections, we will first theoretically illustrate the idea of how we rectify the data density via an auxiliary function to ensure a uniform pose distribution~(Sec.~\ref{method:rec}). Based on the former theoretical analysis, we introduce a \textit{uniform score distillation} approach for optimizing 3D representations in aligning with the rectified distribution~(Sec.~\ref{method:usd}). Furthermore, a series of designed components for implementing the auxiliary function is detailly discussed in Sec.~\ref{method:recdreamer}, including a pose classifier, approximation of the posterior distribution of pose, and estimation of pose-relevant statistics.

\subsection{Rectification of Data Distribution}\label{method:rec}

To directly analyze the relationship between data and pose, we eliminate redundant variables and simplify the text-conditioned probability $p_t(\boldsymbol{x}_t|y)$ to an unconditional density $p(\boldsymbol{x})$, removing the influence of the time step.
We denote the data with a general variable $\boldsymbol{x}$.
Assuming that $p(\boldsymbol{x}, c)$ represents the joint distribution, the pose distribution can be expressed as $p(c) = \int p(\boldsymbol{x}, c) \mathrm{d}\boldsymbol{x} = \int p(\boldsymbol{x}) p(c|\boldsymbol{x}) \mathrm{d}\boldsymbol{x}$, which is not a uniform distribution.
To mitigate this bias, we frame the simplified problem as follows: given the data distribution $p(\boldsymbol{x})$ and the target attribute distribution $f(c)$, \textit{how can we adjust $p(\boldsymbol{x})$ to a new distribution $\tilde{p}(\boldsymbol{x})$ such that $\tilde{p}(c) = \int \tilde{p}(\boldsymbol{x}) p(c|\boldsymbol{x}) \mathrm{d}\boldsymbol{x}=f(c)$ holds.}

By introducing a weighting function to the joint probability $p(\boldsymbol{x}, c)$, we establish that the original data density can be adjusted as follows.

\begin{theorem}[Proof in Appendix~\ref{app:method_theorem}]\label{thm:rpx}
 Let $p(\boldsymbol{x})$ denote the data density, $p(c | \boldsymbol{x})$ the conditional distribution of the attribute $c$ given data $\boldsymbol{x}$, and $p(c)$ the marginal distribution of $c$ induced by $p(\boldsymbol{x})$. Given a target distribution $f(c)$ for the attribute $c$, we can construct a new data density $\tilde{p}(\boldsymbol{x})$ such that the marginal distribution of $c$ under $\tilde{p}(\boldsymbol{x})$ matches the target distribution $f(c)$. This new density is given by:
    \begin{equation}
 \tilde{p}(\boldsymbol{x}) = p(\boldsymbol{x}) \int \frac{f(c)}{p(c)} p(c | \boldsymbol{x}) \, dc.
    \end{equation}
\end{theorem}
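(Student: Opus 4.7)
The plan is to prove the theorem by constructing a reweighted joint distribution and then reading off its two marginals. First, I would define the candidate joint
\begin{equation}
    \tilde{p}(\boldsymbol{x}, c) := p(\boldsymbol{x})\, p(c \mid \boldsymbol{x}) \cdot \frac{f(c)}{p(c)} = p(\boldsymbol{x}, c) \cdot \frac{f(c)}{p(c)},
\end{equation}
which is just the original joint reweighted by an importance ratio depending only on the attribute. A quick sanity check confirms that $\tilde{p}(\boldsymbol{x}, c)$ is a valid density: integrating over both variables gives $\int (f(c)/p(c))\, p(c)\, dc = \int f(c)\, dc = 1$, using that $p(c) = \int p(\boldsymbol{x}, c)\, d\boldsymbol{x}$ and that $f$ is a probability density.

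Next, I would compute the two marginals of this joint in parallel. Marginalizing out $\boldsymbol{x}$ immediately yields the target attribute distribution,
\begin{equation}
    \int \tilde{p}(\boldsymbol{x}, c)\, d\boldsymbol{x} = \frac{f(c)}{p(c)} \int p(\boldsymbol{x}, c)\, d\boldsymbol{x} = \frac{f(c)}{p(c)} \cdot p(c) = f(c),
\end{equation}
while marginalizing out $c$ recovers precisely the formula claimed in the theorem,
\begin{equation}
    \tilde{p}(\boldsymbol{x}) := \int \tilde{p}(\boldsymbol{x}, c)\, dc = p(\boldsymbol{x}) \int \frac{f(c)}{p(c)}\, p(c \mid \boldsymbol{x})\, dc.
\end{equation}
Combining the two computations proves that the $\tilde{p}(\boldsymbol{x})$ stated in the theorem is exactly the data marginal of a joint whose attribute marginal equals $f(c)$.

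The main subtlety I anticipate is notational rather than computational: the reweighted joint does not factor as $\tilde{p}(\boldsymbol{x})\, p(c \mid \boldsymbol{x})$ in general; it induces a new conditional
\begin{equation}
    \tilde{p}(c \mid \boldsymbol{x}) = p(c \mid \boldsymbol{x}) \cdot \frac{f(c)/p(c)}{\int (f(c')/p(c'))\, p(c' \mid \boldsymbol{x})\, dc'}.
\end{equation}
So the statement must be interpreted as ``the $c$-marginal of the rectified joint equals $f(c)$,'' computed with $\tilde{p}(c \mid \boldsymbol{x})$. The literal identity $\int \tilde{p}(\boldsymbol{x})\, p(c \mid \boldsymbol{x})\, d\boldsymbol{x} = f(c)$ using the \emph{original} conditional holds exactly when $p(c \mid \boldsymbol{x})$ is essentially deterministic (e.g.\ pose is identifiable from a rendered image), which is precisely the regime of interest in the text-to-3D setting. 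I would flag this point explicitly and then conclude that the construction satisfies the desired marginal constraint.
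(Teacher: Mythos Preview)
Your proof is correct and follows essentially the same route as the paper: define the reweighted joint $\tilde{p}(\boldsymbol{x},c) = p(\boldsymbol{x},c)\,f(c)/p(c)$, verify it is a valid density, and read off both marginals (the paper packages the first two steps as a separate lemma before marginalizing over $c$). Your additional caveat---that the literal identity $\int \tilde{p}(\boldsymbol{x})\,p(c\mid\boldsymbol{x})\,d\boldsymbol{x} = f(c)$ with the \emph{original} conditional requires $p(c\mid\boldsymbol{x})$ to be essentially deterministic---is a valid subtlety that the paper's problem statement in Sec.~\ref{method:rec} implicitly relies on but does not flag.
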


Theorem~\ref{thm:rpx} reveals that the new data density that features a uniformly distributed marginal $f(c)$ can be computed by the original data distribution and an auxiliary function. Furthermore, Theorem~\ref{thm:rpx} can be naturally extended to conditional distributions, as demonstrated in Corollary~\ref{cr:rpx_cond} (see Appendix~\ref{app:method_theorem}). So far, we have derived the rectified distribution for clean images, $\tilde{p}(\boldsymbol{x}_0|y)$.

However, since score distillation operates in the noise space, our ultimate goal is to reach the rectified density of the noisy data. Given the transition $p_t(\boldsymbol{x}_t|y) = \int p_0(\boldsymbol{x}_0|y)p_{t0}(\boldsymbol{x}_t|\boldsymbol{x}_0)\mathrm{d}\boldsymbol{x}_0$ where $p_{t0}(\boldsymbol{x}_t|\boldsymbol{x}_0)=\mathcal{N}(\boldsymbol{x}_t|\alpha_t\boldsymbol{x}_0,\sigma_t^2\boldsymbol{I})$, we prove that the rectified distributions for any time step share a unified form, as presented in the following theorem.

\begin{theorem}[Proof in Appendix~\ref{app:method_theorem}]\label{thm:rpx0t_cond}
 For any $t \sim \mathcal{U}[0, T]$, the rectified density of $\boldsymbol{x}_t$ is given by:
    \begin{equation}\label{eq:rpx0t_cond}
 \tilde{p}_t(\boldsymbol{x}_t|y) = p(\boldsymbol{x}_t|y) \int \frac{f(c|y)}{p_t(c|y)} p(c | \boldsymbol{x}_t, y) dc.
    \end{equation}
\end{theorem}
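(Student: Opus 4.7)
The plan is to reduce Theorem~\ref{thm:rpx0t_cond} to the $t=0$ case (the conditional version of Theorem~\ref{thm:rpx}, i.e.\ Corollary~\ref{cr:rpx_cond}) by pushing the rectified clean distribution through the forward Gaussian kernel. The key structural fact I will rely on is that the noising process $p_{t0}(\boldsymbol{x}_t\mid\boldsymbol{x}_0)=\mathcal{N}(\alpha_t\boldsymbol{x}_0,\sigma_t^2\boldsymbol{I})$ is independent of both the pose $c$ and the text prompt $y$; consequently the same kernel is used to noise $p$ and the rectified $\tilde{p}$, and $p_t(\boldsymbol{x}_t\mid c,y)=\int p(\boldsymbol{x}_0\mid c,y)\,p_{t0}(\boldsymbol{x}_t\mid\boldsymbol{x}_0)\,d\boldsymbol{x}_0$.

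First, I would define $\tilde{p}_t(\boldsymbol{x}_t\mid y):=\int \tilde{p}(\boldsymbol{x}_0\mid y)\,p_{t0}(\boldsymbol{x}_t\mid\boldsymbol{x}_0)\,d\boldsymbol{x}_0$ and substitute the expression from Corollary~\ref{cr:rpx_cond}, namely $\tilde{p}(\boldsymbol{x}_0\mid y) = p(\boldsymbol{x}_0\mid y)\int \frac{f(c\mid y)}{p(c\mid y)}\,p(c\mid\boldsymbol{x}_0,y)\,dc$. I would then interchange the order of integration (a Tonelli-type step, legitimate since all integrands are nonnegative) so that the $c$-integral sits outside.

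Next, inside the double integral I would apply Bayes' rule at time $0$ to write $p(\boldsymbol{x}_0\mid y)\,p(c\mid\boldsymbol{x}_0,y) = p(c\mid y)\,p(\boldsymbol{x}_0\mid c,y)$, which cancels the $p(c\mid y)$ in the denominator. The inner $\boldsymbol{x}_0$-integral then collapses to $p_t(\boldsymbol{x}_t\mid c,y)$ by the marginalisation identity above. The expression becomes $\tilde{p}_t(\boldsymbol{x}_t\mid y) = \int f(c\mid y)\,p_t(\boldsymbol{x}_t\mid c,y)\,dc$.

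Finally, I would apply Bayes' rule a second time, now at time $t$, to substitute $p_t(\boldsymbol{x}_t\mid c,y) = \frac{p_t(c\mid\boldsymbol{x}_t,y)\,p_t(\boldsymbol{x}_t\mid y)}{p_t(c\mid y)}$, and pull the factor $p_t(\boldsymbol{x}_t\mid y)$ (which does not depend on $c$) outside the integral. This yields exactly the claimed form of Eq.~\ref{eq:rpx0t_cond}. The only real subtlety, and the step I would justify explicitly, is the initial interchange of integrals together with the claim that the Gaussian noising kernel is pose-independent so that the same $p_{t0}$ transports $\tilde{p}_0$ into $\tilde{p}_t$; everything else is bookkeeping with Bayes' rule, so I do not anticipate a genuine obstacle beyond a careful write-up.
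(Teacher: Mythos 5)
Your proposal is correct and follows essentially the same route as the paper's own proof: both define $\tilde{p}_t$ by pushing the rectified clean density through the forward kernel, rewrite the integrand via Bayes' rule at $t=0$ to get $f(c\mid y)\,p(\boldsymbol{x}_0\mid c,y)$, swap the order of integration (Fubini/Tonelli), collapse the inner integral to $p_t(\boldsymbol{x}_t\mid c,y)$ using the pose/text-independence of the Gaussian kernel, and finish with Bayes' rule at time $t$. The only difference is cosmetic ordering of the swap and the Bayes substitution, so no further comparison is needed.
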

Theorem~\ref{thm:rpx0t_cond} reveals that the noisy density of the rectified text-to-image distribution can be expressed as the original noisy density multiplied by an auxiliary function, denoted as $r(\boldsymbol{x}_t|y)$. Specifically, $r(\boldsymbol{x}_t|y) = \int \frac{f(c|y)}{p_t(c|y)} p(c | \boldsymbol{x}_t, y) dc$.

\subsection{Uniform Score Distillation}\label{method:usd}

We now return to the original variational distillation problem. First, we define a set of 3D representations $\{\theta^i\}_{i=0}^n$, also named particles in the later gradient flow simulation. Given the distribution $\mu(\theta|y)$ composed of the set $\{\theta^i\}_{i=0}^n$, the camera pose $c$, and the text prompt $y$, the distribution of noisy rendered images is computed as $q_t^\mu(\boldsymbol{x}_t|c,y) = \int q_0^\mu(\boldsymbol{x}_0|c,y)p_{t0}(\boldsymbol{x}_t|\boldsymbol{x}_0)\mathrm{d}\boldsymbol{x}_0$, where $\boldsymbol{x}_0=\boldsymbol{g}(\theta,c)$. Given the rectified distribution $\tilde{p}_t(\boldsymbol{x}_t|y)$, the objective is as follows:
\begin{equation}\label{eq:usd_rkl}
 \min_\mu \mathbb{E}_{t,c}\left[(\sigma_t/\alpha_t)\omega(t)D_{\mathrm{KL}}(q_t^\mu(\boldsymbol{x}_t|c,y)\parallel \tilde{p}_t(\boldsymbol{x}_t|y))\right].
\end{equation}
We refer to this as \textit{uniform score distillation} (USD), as it seeks to approximate the score of the rectified distribution, which is uniformly distributed across the camera poses. To optimize the particles, we derive a corollary based on Theorem 2 from VSD~\citep{wang2024prolificdreamer}:
\begin{corollary}[Corollary to Theorem 2 from VSD]\label{cr:usd_gradient}
 For Wasserstein gradient flow minimizing~\eqref{eq:usd_rkl}, the gradient for the particles is given by:
    \begin{equation}\label{eq:usd_grad}
        \nabla_\theta\mathcal{L}_\text{USD} = \nabla_\theta \mathcal{L}_\text{VSD}^\prime(\theta) - \mathbb{E}_{t,\boldsymbol{\epsilon},c}\left[\omega(t)\frac{\sigma_t}{\alpha_t}\nabla_{\theta}\log r(\boldsymbol{x}_t|y)\right],
    \end{equation}
 where
    \begin{equation}
        \nabla_\theta\mathcal{L}_\text{VSD}^\prime = \mathbb{E}_{t,\boldsymbol{\epsilon},c}\left[\omega(t)\left(\boldsymbol{\epsilon}_\text{pretrain}(\boldsymbol{x}_t,t,y)-\boldsymbol{\epsilon}_\phi(\boldsymbol{x}_t,t,c,y)\right)\frac{\partial\boldsymbol{g}(\theta,c)}{\partial\theta}\right],
    \end{equation}    
 and $\boldsymbol{x}_t=\alpha_t\boldsymbol{g}(\theta,c)+\sigma_t\boldsymbol{\epsilon}$.
\end{corollary}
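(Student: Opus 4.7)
The plan is to exploit the multiplicative decomposition $\tilde{p}_t(\boldsymbol{x}_t|y)=p_t(\boldsymbol{x}_t|y)\,r(\boldsymbol{x}_t|y)$ furnished by Theorem~\ref{thm:rpx0t_cond} so that the uniform score distillation KL cleanly splits into the standard VSD KL plus an auxiliary term whose particle gradient is accessible by reparameterization. Formally, since $\log\tilde{p}_t(\boldsymbol{x}_t|y)=\log p_t(\boldsymbol{x}_t|y)+\log r(\boldsymbol{x}_t|y)$,
\begin{equation}
D_{\mathrm{KL}}(q_t^\mu(\boldsymbol{x}_t|c,y)\parallel\tilde{p}_t(\boldsymbol{x}_t|y))=D_{\mathrm{KL}}(q_t^\mu(\boldsymbol{x}_t|c,y)\parallel p_t(\boldsymbol{x}_t|y))-\mathbb{E}_{\boldsymbol{x}_t\sim q_t^\mu}[\log r(\boldsymbol{x}_t|y)],
\end{equation}
and multiplying by $(\sigma_t/\alpha_t)\omega(t)$ and taking the outer expectation over $t$ and $c$ decomposes $\mathcal{L}_\text{USD}$ into a VSD-type summand plus an auxiliary one.

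For the first summand I would invoke Theorem~2 of VSD~\citep{wang2024prolificdreamer} directly: its Wasserstein gradient flow over the particle measure $\mu$ yields exactly the stated $\nabla_\theta\mathcal{L}_\text{VSD}^\prime(\theta)$, namely the difference of the pretrained and LoRA-estimated noise predictors weighted by $\omega(t)\,\partial\boldsymbol{g}(\theta,c)/\partial\theta$. This step is inherited verbatim, so no new calculation is required beyond noting that the splitting above is well-defined whenever $r(\boldsymbol{x}_t|y)>0$, which is ensured by the positivity of $f(c|y)$ and $p(c|\boldsymbol{x}_t,y)$ in the definition of $r$.

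For the auxiliary summand $-\mathbb{E}_{t,c}\!\left[(\sigma_t/\alpha_t)\omega(t)\,\mathbb{E}_{\boldsymbol{x}_t\sim q_t^\mu}[\log r(\boldsymbol{x}_t|y)]\right]$, I would reparameterize: because $q_0^\mu(\boldsymbol{x}_0|c,y)$ concentrates on the deterministic render $\boldsymbol{g}(\theta,c)$ for each particle, one has $\boldsymbol{x}_t=\alpha_t\boldsymbol{g}(\theta,c)+\sigma_t\boldsymbol{\epsilon}$ with $\boldsymbol{\epsilon}\sim\mathcal{N}(\boldsymbol{0},\boldsymbol{I})$, so the inner expectation becomes differentiable in $\theta$ through $r$ alone. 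Pushing $\nabla_\theta$ inside the expectation produces exactly $-\mathbb{E}_{t,\boldsymbol{\epsilon},c}[(\sigma_t/\alpha_t)\omega(t)\nabla_\theta\log r(\boldsymbol{x}_t|y)]$, which is the second term of~\eqref{eq:usd_grad}. Summing the two contributions recovers the claim.

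The main obstacle is justifying the interchange of $\nabla_\theta$ with (i) the $c$-integral defining $r(\boldsymbol{x}_t|y)=\int[f(c|y)/p_t(c|y)]\,p(c|\boldsymbol{x}_t,y)\,dc$ and (ii) the outer expectation over $\boldsymbol{\epsilon},c,t$. This reduces to a dominated-convergence argument that exploits boundedness of the reweighting ratio $f(c|y)/p_t(c|y)$ on the support of $c$ together with smoothness of $p(c|\boldsymbol{x}_t,y)$ in $\boldsymbol{x}_t$; these are precisely the regularity assumptions already implicitly invoked for Theorem~\ref{thm:rpx0t_cond}, so no additional hypotheses are introduced and the remaining manipulations are routine.
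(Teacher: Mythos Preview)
Your proposal is correct and follows essentially the same approach as the paper: both exploit the factorization $\tilde{p}_t=p_t\cdot r$ from Theorem~\ref{thm:rpx0t_cond} to split $\log\tilde{p}_t$ additively, invoke Theorem~2 of VSD for the $p_t$ part, and handle the $\log r$ part by reparameterization through $\boldsymbol{x}_t=\alpha_t\boldsymbol{g}(\theta,c)+\sigma_t\boldsymbol{\epsilon}$. The only cosmetic difference is that the paper performs the split directly inside the score-level update rule $\nabla_{\boldsymbol{x}_t}\log\tilde{p}_t=\nabla_{\boldsymbol{x}_t}\log p_t+\nabla_{\boldsymbol{x}_t}\log r$, whereas you split one level up at the KL divergence; your added dominated-convergence remark for the interchange of $\nabla_\theta$ and the expectations is more careful than the paper, which leaves this implicit.
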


Since the rectification algorithm is based on reweighting the sub-distributions, it cannot generate content that was not present in the original distribution. To ensure that $y$ provides a comprehensive distribution including contents of multiple perspectives, we detail the construction techniques in the Appendix~\ref{app:method_others}.

In the optimization process, we follow VSD by iteratively optimizing the U-Net $\epsilon_{\phi}$ and the particles $\{\theta^i\}_{i=0}^n$ using~\eqref{eq:vsd_lora} and~\eqref{eq:usd_rkl}.

\subsection{RecDreamer}\label{method:recdreamer}

The previous sections derive the analytical solution of the rectified distribution and introduce a parameter optimization scheme based on score distillation.
To apply this scheme to optimize the 3D scene, we must also compute the rectification function $r(\boldsymbol{x}_t|y)$.
We design an effective \textit{classifier to accurately categorize image poses}.
Finally, we account for the effects of noisy states and \textit{estimate the posterior distribution of noisy images and its expected value}.

\begin{figure}[t]
    \centering
    \includegraphics[width=1\linewidth]{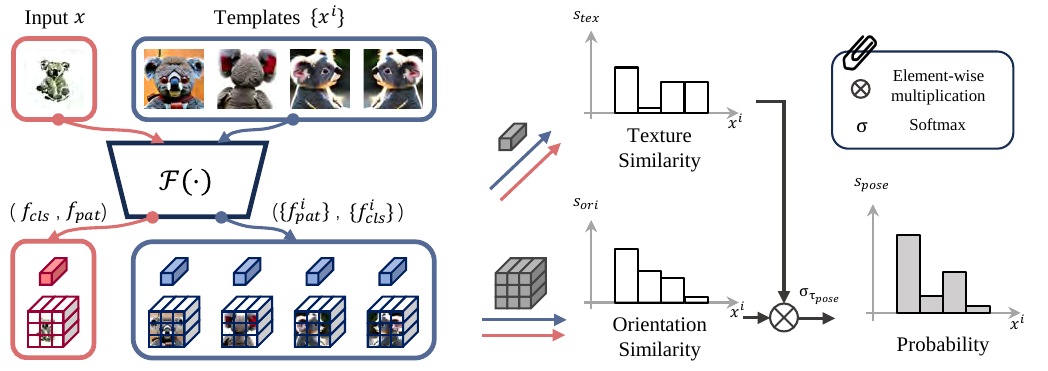} 
    \caption{The architecture of our classifier combines orientation and texture similarities in a differential ``and-gate'' manner. Orientation similarity is evaluated using a patch-matching distance metric, while texture similarity is calculated via cosine similarity of the $[cls]$ token.}
    \label{fig:app_method_classifier}
\end{figure} 

\textbf{Discretization of the pose space.}
The crux to computing the auxiliary function $r(\boldsymbol{x}_t|y)$ lies in estimating both $p(c | \boldsymbol{x}_t, y)$ and $p_t(c | y)$.
Since $p_t(c | y)= \mathbb{E}_{\boldsymbol{x}_t \sim p(\boldsymbol{x}_t|y)}p(c | \boldsymbol{x}_t, y)$ is a term that depends on $p(c | \boldsymbol{x}_t, y)$, we begin by analyzing $p(c | \boldsymbol{x}_t, y)$, which can be interpreted as a pose estimator of noisy images.
However, obtaining an estimator for noisy images requires additional data and fine-tuning. To address this, we relate the noisy predictor to the clean predictor by following the DPS~\citep{chung2022diffusion} formulation: $p(c | \boldsymbol{x}_t, y) = \int p(\boldsymbol{x}_0|\boldsymbol{x}_t,y) p(c|\boldsymbol{x}_0,y) d \boldsymbol{x}_0$.
Thus, we prioritize the design of a clean estimator $p(c|\boldsymbol{x}_0, y)$ before tackling the noisy case.

Instead of explicitly estimating the camera's extrinsic parameters, we propose modeling a simplified pose by categorizing the images into broad pose categories, such as ``front'', ``back'', ``left'' and ``right''. In the context of USD, these global categories help maintain a rough balance between different poses and promote 3D consistency. Accordingly, we define the auxiliary function in a discrete form as follows: $r_\xi(\boldsymbol{x}_t|y) = \sum_{\bar{c}} \frac{f(\bar{c}|y)}{p_t(\bar{c}|y)} p_\xi(\bar{c} | \boldsymbol{x}_t, y)$, where $\bar{c}$ represents the discrete pose category, $f(\bar{c}|y) \sim \mathcal{U}\{\bar{c}_i\}_{i=0}^{k}$, and $p_\xi$ is the parameterized classifier.

\textbf{Pose classifier.} Building on this formulation, our goal is to create a lightweight pose classifier without the need for training. To achieve this, we propose a matching-based pose classifier that leverages a pretrained feature extractor and user-provided image templates for each category. Given an input image, the class probabilities are computed by assessing the similarity between the input and the templates. Empirically, the main challenge is distinguishing between 2D orientations (i.e., ``left-middle-right'') and classifying textures (i.e., ``front-back'').

To address this, we compute the overall similarity by combining orientation similarity and texture similarity in a differential ``and-gate'' manner. The pipeline of our classifier is shown in Fig.~\ref{fig:app_method_classifier}. Drawing inspiration from dense matching techniques~\citep{zhang2024telling, zhang2024tale}, we propose using a patch-matching distance metric to evaluate orientation similarity. Texture similarity is determined by calculating the cosine similarity of the $[cls]$ token between the input and template images. Orientation and texture similarities are then multiplied after normalization. Finally, the combined similarity is normalized using a low-temperature softmax function~\citep{goodfellow2016deep}. For more details on the patch-matching distance and the architecture, please refer to Appendix~\ref{app:method_classifier}.

\textbf{Estimating $p(c|\boldsymbol{x}_t, y)$ and $p_t(c|y)$.}
By establishing the calculation of $p(c|\boldsymbol{x}_0, y)$ with a plug-and-play pose classifier, we can now introduce the computation of $p(c | \boldsymbol{x}_t, y)$ and $p_t(c | y)$.
To compute $p(c | \boldsymbol{x}_t, y) = \int p(\boldsymbol{x}_0|\boldsymbol{x}_t,y) p(c|\boldsymbol{x}_0,y) d \boldsymbol{x}_0$, we follow DPS~\citep{chung2022diffusion} by replacing the calculation of probability with expectation $\mathbb{E}_{\boldsymbol{x}_0{\sim}p(\boldsymbol{x}_0|\boldsymbol{x}_t,y)}p(c|\boldsymbol{x}_0, y)$ and further approximating the expectation with Tweedie's formula~\citep{robbins1992empirical}. Formally, $p(c | \boldsymbol{x}_t, y)\approx p(c|\hat{\boldsymbol{x}}_0, y)$, where $\hat{\boldsymbol{x}}_0 = \left(\boldsymbol{x}_t-\sigma_t\epsilon_{pretrain}(\boldsymbol{x}_t, t, y)\right)/\alpha_t$. Beyond this approximation, we provide an on-the-fly estimate of the marginal density $p_t(c | y)$, avoiding any form of distribution estimation~\citep{robert1999monte}. Concretely, since $p_t(c | y)$ is the expected value of $p(c | \boldsymbol{x}_t, y)$ over $\boldsymbol{x}_t$, we update a distribution $\bar{p}_t(\bar{c} | y)$ using exponential moving average (EMA) of $p(c | \boldsymbol{x}_t, y)$ during optimization, with an update rate $\alpha_{ema}$, to approximate $p(c | \boldsymbol{x}_t, y)$. To enable the in-time estimate of the current pose distribution, we propose a time-interval EMA to capture the distribution. Technical details are left in Appendix~\ref{app:method_recfunc}.

The proposed scheme allows for the accurate estimation of the auxiliary function $r_\xi$, facilitating the adjustment of the initial distribution so that the sampling results align with the assumption of a uniform pose distribution. The implementation of uniform score distillation is presented in Algorithm~\ref{alg:usd}, and we refer to this systematic approach as \textit{RecDreamer}.

\begin{algorithm}[t]
    \caption{Uniform Score Distillation}
    \begin{algorithmic}[1]\label{alg:usd}
        \REQUIRE A pretrained diffusion model $\epsilon_{pretrain}$, a noise predictor $\epsilon_\phi$ with optimizable parameters $\phi$, a set of particles $\{\theta^i\}_{i=0}^n$, a text prompt $y$, learning rates $\eta_1$ and $\eta_2$, a rectify function $r_\xi$ and a classifier $p_{\xi}(\bar{c}|\boldsymbol{x}_{t},y)$ parameterized by $\xi$, the number of discrete pose categories $n_{\bar{c}}$, the number of time steps $n_{\bar{t}}$, EMA update rate $\alpha_{ema}$.

        Initialize the EMA probabilities $\{\bar{p}_t(\bar{c}|y)\}_{t=0}^{n_t}$, with $\bar{p}_t(\bar{c}|y) = 1 / n_{\bar{c}}$.

        \WHILE {not converged}
            \STATE Randomly sample $\{\theta^i\}_{i=0}^n$ and $c$, render the image $\boldsymbol{x}_0=\boldsymbol{g}(\theta,c)$.
            \STATE Apply a forward step $\boldsymbol{x}_t=\mathcal{N}(\boldsymbol{x}_t|\alpha_t\boldsymbol{x}_0,\sigma_t^2\boldsymbol{I})$
            \STATE $\theta\leftarrow\theta-\eta_1\mathbb{E}_{t,\boldsymbol{\epsilon},c}\left[\omega(t)\left(\boldsymbol{\epsilon}_{\mathrm{pretrain}}(\boldsymbol{x}_t,t,y)-\boldsymbol{\epsilon}_\phi(\boldsymbol{x}_t,t,c,y)\right)\frac{\partial\boldsymbol{g}(\theta,c)}{\partial\theta}\right]$ \\
            \hspace{10mm} $+ \eta_1\mathbb{E}_{t,\boldsymbol{\epsilon},c}\left[\omega(t)\frac{\sigma_t}{\alpha_t}\nabla_{\theta}\log r_\xi (\boldsymbol{x}_t|y)\right]$
            \STATE $\bar{p}_t(\bar{c}|y) \leftarrow \alpha_{ema}p_{\xi}(\bar{c}|\boldsymbol{x}_{t},y) + (1 - \alpha_{ema})\bar{p}_t(\bar{c}|y)$
            \STATE $\phi\leftarrow\phi-\eta_2\nabla_\phi\mathbb{E}_{t,\epsilon}||\boldsymbol{\epsilon}_\phi(\boldsymbol{x}_t,t,c,y)-\boldsymbol{\epsilon}||_2^2.$
        \ENDWHILE
        \RETURN
    \end{algorithmic}
\end{algorithm}

\section{Experiments} \label{sec:experiment}

\subsection{Experiment Settings}\label{sec:exp_settings}

To evaluate the performance of USD, we selected 22 prompts describing various objects for comparison experiments. The comparison involves three baseline methods (SDS~\citep{poole2022dreamfusion}, SDS-Bridge~\citep{mcallister2024rethinking}, and VSD~\citep{wang2024prolificdreamer}), and three open-source methods designed to address the Multi-Face Janus problem (PerpNeg~\citep{armandpour2023re}, Debiased-SDS~\citep{hong2023debiasing}, and ESD~\citep{wang2024taming}). We introduce several metrics to assess both the quality of the generated outputs and the severity of the Multi-Face Janus problem. For VSD and USD, we optimize a single particle (\ie, a 3D representation~\citep{mildenhall2021nerf, muller2022instant}) for score distillation. Additionally, for each prompt, we include auxiliary descriptions to ensure the text-to-image distribution includes the side and back sub-distributions, satisfying the assumption that $p(c) > 0$ in Lemma~\ref{lm:weight} (see Appendix~\ref{app:method_others} for more discussion).

\subsection{Metrics} 
We evaluate our approach through three complementary metrics. The \text{Fr\'echet} Inception Distance~\citep{heusel2017gans} assesses generation fidelity, while categorical entropy measures quantify distributional bias. Additionally, CLIP~\citep{radford2021learning} scores measure the alignment between generated scenes and their corresponding text prompts. Details are left in Appendix~\ref{app:main_exps_metrics}.

\textbf{\text{Fr\'echet Inception Distance (FID).}} FID evaluates generation quality by comparing two distribution pairs. We compute standard FID against a base diffusion model (60 images per prompt) and unbiased FID (uFID in Table~\ref{table:comparison_main}) against its pose-balanced version (by annotating and resampling the generated images). For each method, we render 5 images per scene from uniform viewpoints to form the rendered image set for evaluation.

\textbf{Categorical Entropy.} We evaluate 3D consistency by quantifying the Multi-Face Janus Problem through classifier predictions. Inconsistent scenes show similar classification probabilities across viewpoints with a bias toward canonical poses, while consistent scenes produce diverse viewpoint-dependent probabilities. We measure this using the entropy of averaged classification probabilities, with higher entropy indicating better consistency. We use two methods: a CLIP-based classifier with directional text descriptions and our proposed pose classifier. The metrics are marked as ``cEnt'' and ``pEnt'' in Table~\ref{table:comparison_main}. For each method, we render 10 images per scene from uniform viewpoints to calculate the entropy.

\textbf{CLIP Score.} Following ESD~\citep{wang2024taming}, we evaluate text-image alignment by computing CLIP scores between text descriptions and the scenes' corresponding rendered images.

\subsection{Comparisons}
\textbf{Quantitative Evaluation.} As shown in Table~\ref{table:comparison_main}, our method outperforms other baselines concerning the measures for generation quality. However, the limited test set size and comparable texture quality between our method and VSD make it challenging to fully quantify the impact of geometric consistency through these metrics alone. We provide more details in qualitative comparison and Appendix~\ref{app:main_exps}.
In terms of diversity measures, our method achieves higher entropy scores in both CLIP-based categorization (cEnt) and pose classification (pEnt), indicating that USD effectively incorporates multi-view information into the 3D representations and mitigates the issue of repetitive patterns across different viewpoints.
Regarding text-image alignment, USD shows lower CLIP scores than VSD, as our multi-view approach incorporates back and side views that may not align with prompts describing predominantly frontal features (\eg, back views of a dog versus front-oriented descriptions).

\begin{table}[t!]
    \caption{Quantitative comparison. The best and second-best results are highlighted in \textbf{bold} and \underline{underlined}, respectively. While these metrics provide valuable insights, they may not fully capture all performance aspects. For a comprehensive evaluation, please refer to the qualitative comparisons and additional experiments in Appendix~\ref{app:main_exps}.}
    \vspace{-1mm}
    \label{table:comparison_main}
    \begin{center}
        \begin{tabular}{lccccc}
            \hline
            \textbf{Method} & \textbf{FID} ($\downarrow$) & \textbf{uFID} ($\downarrow$) & \textbf{cEnt} ($\uparrow$) & \textbf{pEnt} ($\uparrow$) & \textbf{CLIP} ($\downarrow$)\\ \hline
            SDS~\citep{poole2022dreamfusion}             & 204.81       & 205.66        & 1.0235            & \underline{1.1542}    & 0.6966            \\
            Debiased-SDS~\citep{hong2023debiasing}    & 219.46       & 218.83        & 1.0171            & 1.0609   & 0.7251            \\
            PerpNeg~\citep{armandpour2023re}         & 203.01       & 203.45        & \underline{1.0348}            & 1.0390    & 0.7076            \\
            ESD~\citep{wang2024taming}    & 187.31  & 188.13 & 1.0271 & 1.0928 & 0.6871\\
            SDS-Bridge~\citep{mcallister2024rethinking} & 230.87  & 229.41 & 1.0278 & 1.0932 & 0.7250 \\
            VSD~\citep{wang2024prolificdreamer}             & \underline{168.19}       & \underline{169.66}        & 1.0276            & 1.0676 & \textbf{0.6807}            \\ \hline
            USD             & \textbf{165.97}       & \textbf{165.25}        & \textbf{1.0375}            & \textbf{1.2488} & \underline{0.6842}            \\ \hline
            \end{tabular}%
    \end{center}
\end{table}

\begin{figure*}[t!]
    \centering

    \begin{minipage}[c]{1\linewidth}
        \parbox{1\linewidth}{\centering ``A DSLR photo of a beagle in a detective's outfit.''}
        \vspace{-7mm}

        \subfloat{
            \begin{minipage}[c]{0.13\linewidth}
                \includegraphics[width=1\linewidth]{./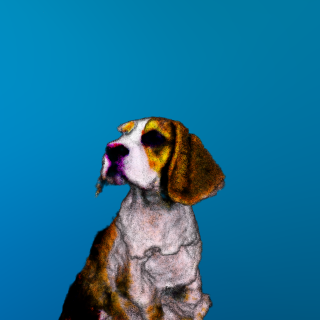}
            \end{minipage}
            \hspace{-1.8mm}
            \begin{minipage}[c]{0.065\linewidth}
                \includegraphics[width=1\linewidth]{./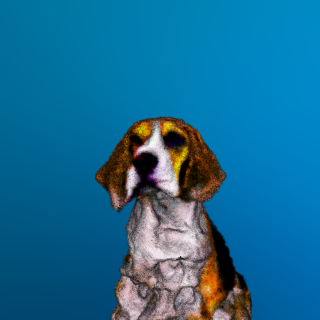}\\
                \vspace{-4.2mm}
                \includegraphics[width=1\linewidth]{./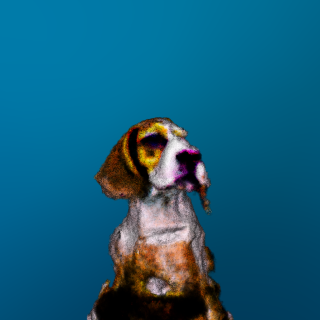}
            \end{minipage}
        }
        \hspace{-2mm}
        \subfloat{
            \begin{minipage}[c]{0.13\linewidth}
                \includegraphics[width=1\linewidth]{./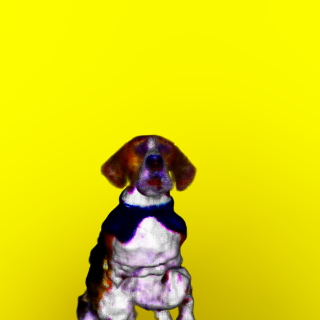}
            \end{minipage}
            \hspace{-1.8mm}
            \begin{minipage}[c]{0.065\linewidth}
                \includegraphics[width=1\linewidth]{./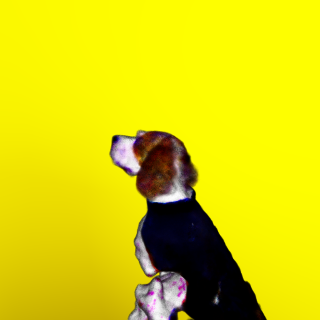}\\
                \vspace{-4.2mm}
                \includegraphics[width=1\linewidth]{./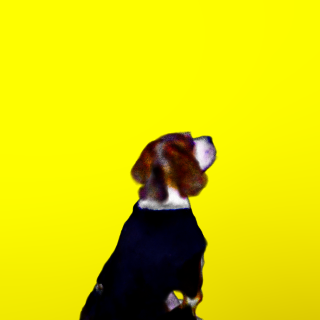}
            \end{minipage}
        }
        \hspace{-2mm}
        \subfloat{
            \begin{minipage}[c]{0.13\linewidth}
                \includegraphics[width=1\linewidth]{./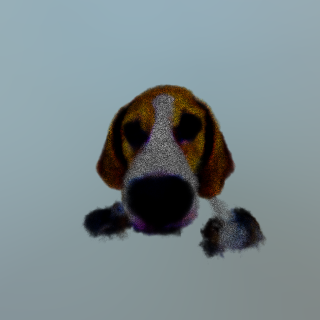}
            \end{minipage}
            \hspace{-1.8mm}
            \begin{minipage}[c]{0.065\linewidth}
                \includegraphics[width=1\linewidth]{./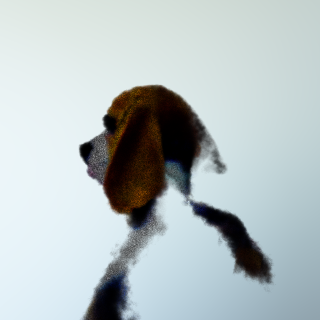}\\
                \vspace{-4.2mm}
                \includegraphics[width=1\linewidth]{./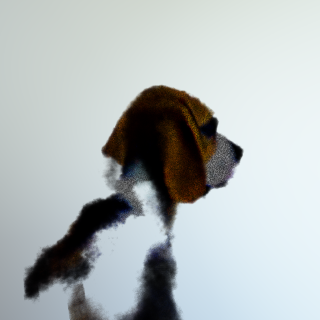}
            \end{minipage}
        }
        \hspace{-2mm}
        \subfloat{
            \begin{minipage}[c]{0.13\linewidth}
                \includegraphics[width=1\linewidth]{./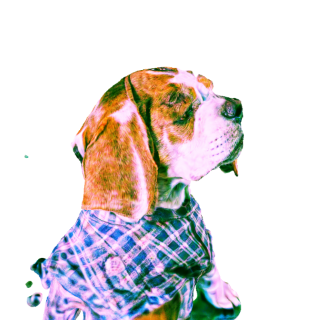}
            \end{minipage}
            \hspace{-1.8mm}
            \begin{minipage}[c]{0.065\linewidth}
                \includegraphics[width=1\linewidth]{./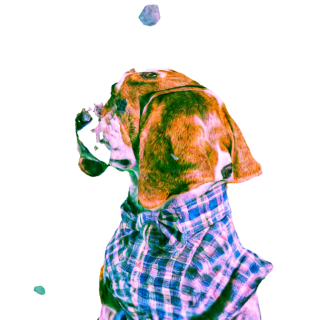}\\
                \vspace{-4.2mm}
                \includegraphics[width=1\linewidth]{./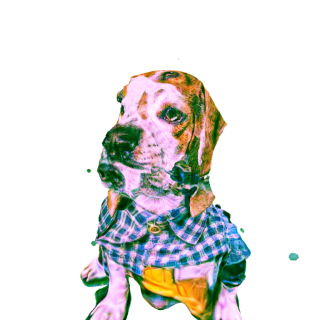}
            \end{minipage}
        }
        \hspace{-2mm}
        \subfloat{
            \begin{minipage}[c]{0.13\linewidth}
                \includegraphics[width=1\linewidth]{./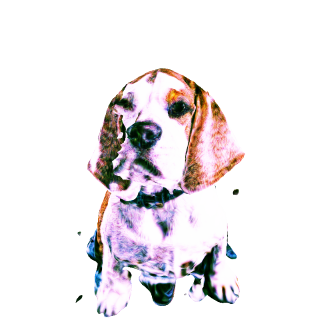}
            \end{minipage}
            \hspace{-1.8mm}
            \begin{minipage}[c]{0.065\linewidth}
                \includegraphics[width=1\linewidth]{./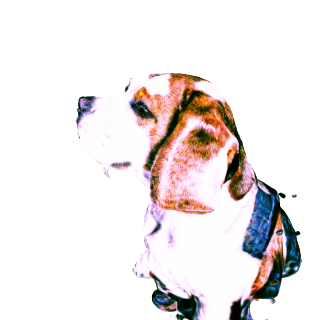}\\
                \vspace{-4.2mm}
                \includegraphics[width=1\linewidth]{./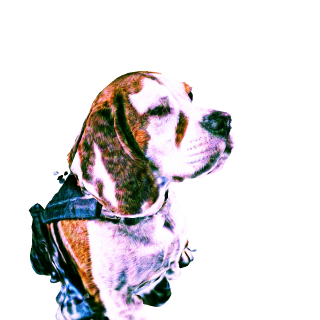}
            \end{minipage}
        }
    \end{minipage}
    \\

    \vspace{1mm}
    \begin{minipage}[c]{1\linewidth}
        \parbox{1\linewidth}{\centering ``A portrait of Groot, head, HDR, photorealistic, 8K.''}
        \vspace{-7mm}

        \subfloat{
            \begin{minipage}[c]{0.13\linewidth}
                \includegraphics[width=1\linewidth]{./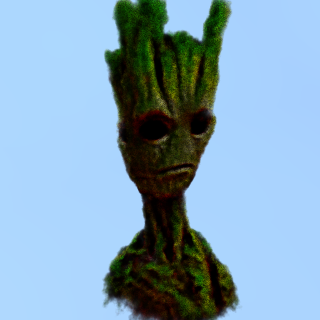}
            \end{minipage}
            \hspace{-1.8mm}
            \begin{minipage}[c]{0.065\linewidth}
                \includegraphics[width=1\linewidth]{./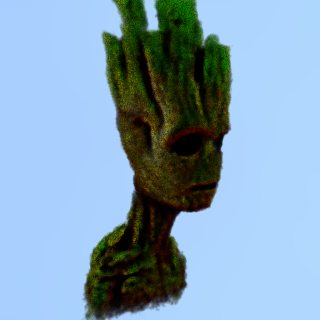}\\
                \vspace{-4.2mm}
                \includegraphics[width=1\linewidth]{./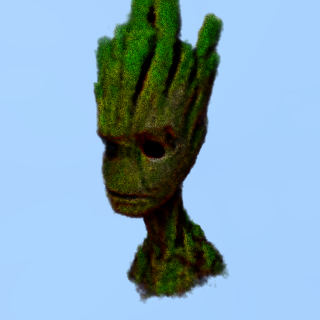}
            \end{minipage}
        }
        \hspace{-2mm}
        \subfloat{
            \begin{minipage}[c]{0.13\linewidth}
                \includegraphics[width=1\linewidth]{./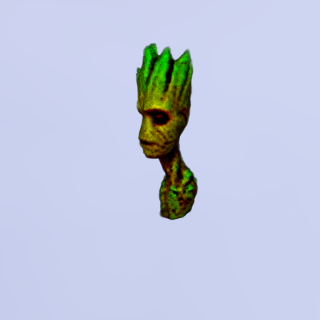}
            \end{minipage}
            \hspace{-1.8mm}
            \begin{minipage}[c]{0.065\linewidth}
                \includegraphics[width=1\linewidth]{./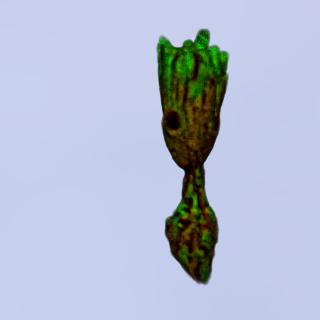}\\
                \vspace{-4.2mm}
                \includegraphics[width=1\linewidth]{./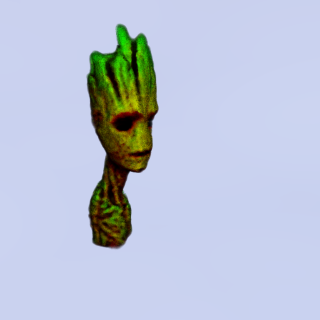}
            \end{minipage}
        }
        \hspace{-2mm}
        \subfloat{
            \begin{minipage}[c]{0.13\linewidth}
                \includegraphics[width=1\linewidth]{./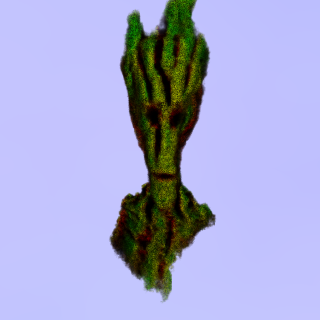}
            \end{minipage}
            \hspace{-1.8mm}
            \begin{minipage}[c]{0.065\linewidth}
                \includegraphics[width=1\linewidth]{./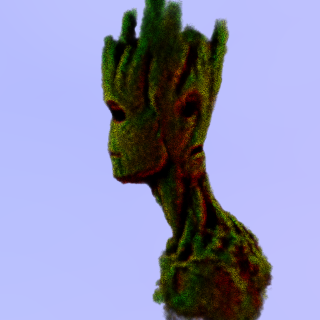}\\
                \vspace{-4.2mm}
                \includegraphics[width=1\linewidth]{./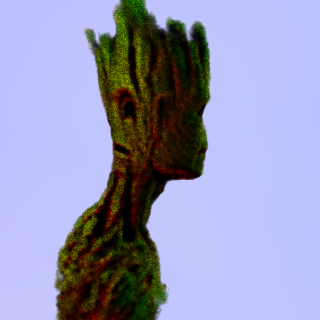}
            \end{minipage}
        }
        \hspace{-2mm}
        \subfloat{
            \begin{minipage}[c]{0.13\linewidth}
                \includegraphics[width=1\linewidth]{./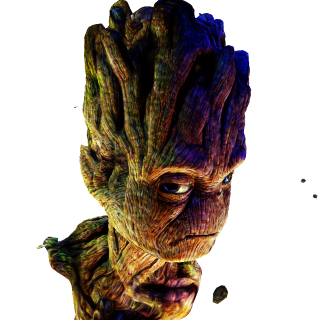}
            \end{minipage}
            \hspace{-1.8mm}
            \begin{minipage}[c]{0.065\linewidth}
                \includegraphics[width=1\linewidth]{./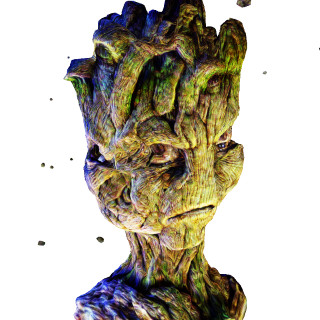}\\
                \vspace{-4.2mm}
                \includegraphics[width=1\linewidth]{./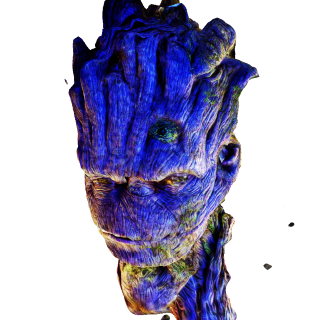}
            \end{minipage}
        }
        \hspace{-2mm}
        \subfloat{
            \begin{minipage}[c]{0.13\linewidth}
                \includegraphics[width=1\linewidth]{./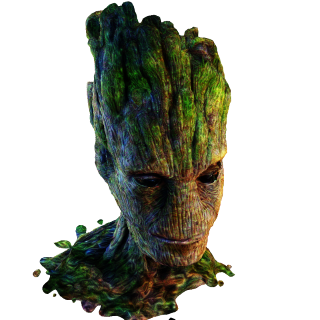}
            \end{minipage}
            \hspace{-1.8mm}
            \begin{minipage}[c]{0.065\linewidth}
                \includegraphics[width=1\linewidth]{./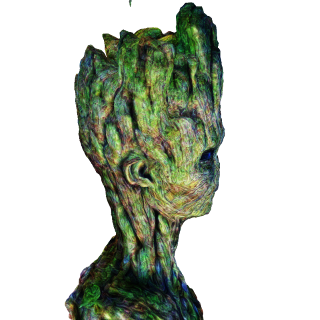}\\
                \vspace{-4.2mm}
                \includegraphics[width=1\linewidth]{./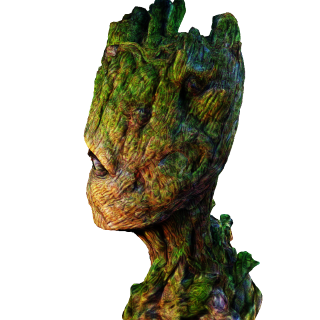}
            \end{minipage}
        }
    \end{minipage}
    \\

    \vspace{1mm}
    \begin{minipage}[c]{1\linewidth}
        \parbox{1\linewidth}{\centering ``A kangaroo wearing boxing gloves.''}
        \vspace{-7mm}

        \subfloat{
            \begin{minipage}[c]{0.13\linewidth}
                \includegraphics[width=1\linewidth]{./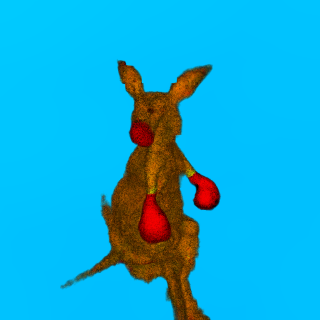}
            \end{minipage}
            \hspace{-1.8mm}
            \begin{minipage}[c]{0.065\linewidth}
                \includegraphics[width=1\linewidth]{./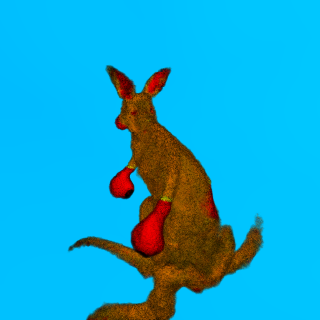}\\
                \vspace{-4.2mm}
                \includegraphics[width=1\linewidth]{./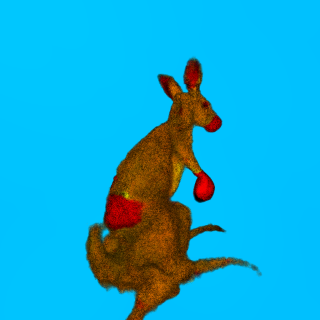}
            \end{minipage}
        }
        \hspace{-2mm}
        \subfloat{
            \begin{minipage}[c]{0.13\linewidth}
                \includegraphics[width=1\linewidth]{./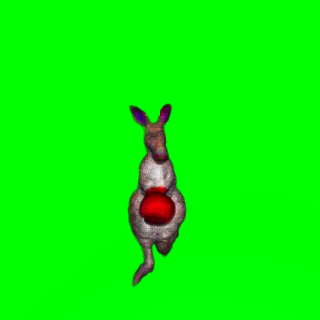}
            \end{minipage}
            \hspace{-1.8mm}
            \begin{minipage}[c]{0.065\linewidth}
                \includegraphics[width=1\linewidth]{./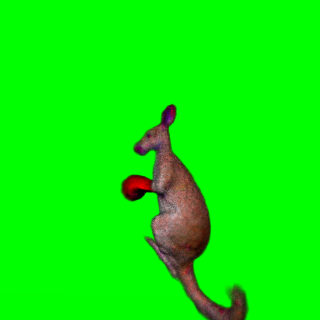}\\
                \vspace{-4.2mm}
                \includegraphics[width=1\linewidth]{./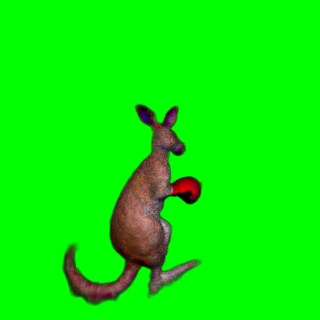}
            \end{minipage}
        }
        \hspace{-2mm}
        \subfloat{
            \begin{minipage}[c]{0.13\linewidth}
                \includegraphics[width=1\linewidth]{./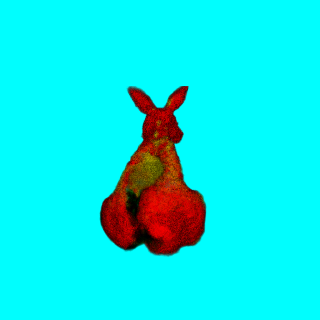}
            \end{minipage}
            \hspace{-1.8mm}
            \begin{minipage}[c]{0.065\linewidth}
                \includegraphics[width=1\linewidth]{./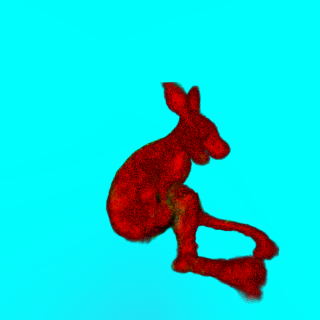}\\
                \vspace{-4.2mm}
                \includegraphics[width=1\linewidth]{./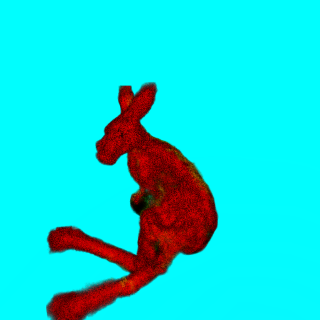}
            \end{minipage}
        }
        \hspace{-2mm}
        \subfloat{
            \begin{minipage}[c]{0.13\linewidth}
                \includegraphics[width=1\linewidth]{./resource/render/vsd/kangaroo/rgb_5.png}
            \end{minipage}
            \hspace{-1.8mm}
            \begin{minipage}[c]{0.065\linewidth}
                \includegraphics[width=1\linewidth]{./resource/render/vsd/kangaroo/rgb_3.png}\\
                \vspace{-4.2mm}
                \includegraphics[width=1\linewidth]{./resource/render/vsd/kangaroo/rgb_1.png}
            \end{minipage}
        }
        \hspace{-2mm}
        \subfloat{
            \begin{minipage}[c]{0.13\linewidth}
                \includegraphics[width=1\linewidth]{./resource/render/usd/kangaroo/rgb_5.png}
            \end{minipage}
            \hspace{-1.8mm}
            \begin{minipage}[c]{0.065\linewidth}
                \includegraphics[width=1\linewidth]{./resource/render/usd/kangaroo/rgb_3.png}\\
                \vspace{-4.2mm}
                \includegraphics[width=1\linewidth]{./resource/render/usd/kangaroo/rgb_1.png}
            \end{minipage}
        }
    \end{minipage}
    \\

    \vspace{1.5mm}
    \begin{minipage}[c]{1\linewidth}
        \parbox{1\linewidth}{\centering ``A DSLR photo of a squirrel playing guitar.''}
        \vspace{-7mm}

        \setcounter{subfigure}{0}
        \subfloat[SDS]{
            \begin{minipage}[c]{0.13\linewidth}
                \includegraphics[width=1\linewidth]{./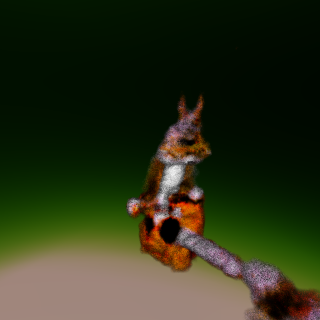}
            \end{minipage}
            \hspace{-1.8mm}
            \begin{minipage}[c]{0.065\linewidth}
                \includegraphics[width=1\linewidth]{./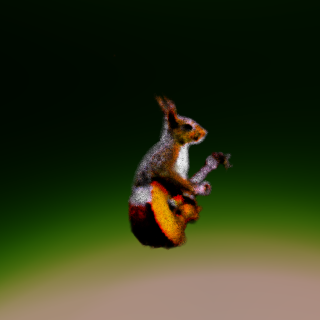}\\
                \vspace{-4.2mm}
                \includegraphics[width=1\linewidth]{./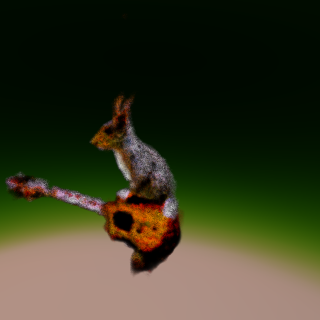}
            \end{minipage}
        }
        \hspace{-2mm}
        \subfloat[Debiased-SDS]{
            \begin{minipage}[c]{0.13\linewidth}
                \includegraphics[width=1\linewidth]{./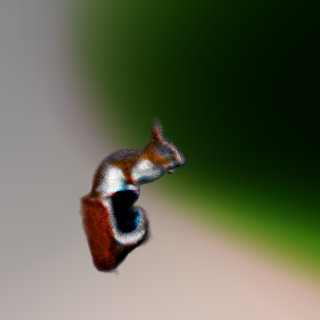}
            \end{minipage}
            \hspace{-1.8mm}
            \begin{minipage}[c]{0.065\linewidth}
                \includegraphics[width=1\linewidth]{./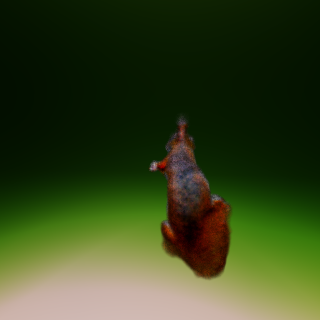}\\
                \vspace{-4.2mm}
                \includegraphics[width=1\linewidth]{./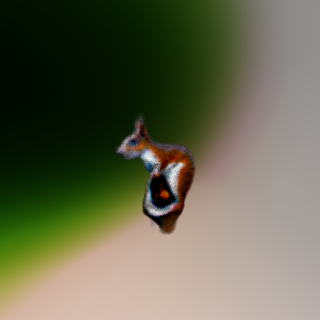}
            \end{minipage}
        }
        \hspace{-2mm}
        \subfloat[PerpNeg]{
            \begin{minipage}[c]{0.13\linewidth}
                \includegraphics[width=1\linewidth]{./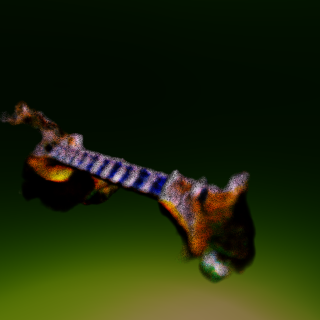}
            \end{minipage}
            \hspace{-1.8mm}
            \begin{minipage}[c]{0.065\linewidth}
                \includegraphics[width=1\linewidth]{./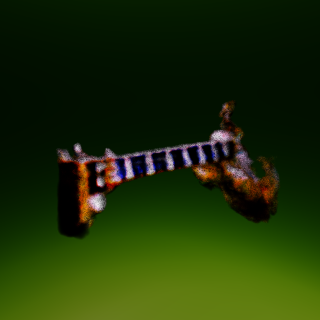}\\
                \vspace{-4.2mm}
                \includegraphics[width=1\linewidth]{./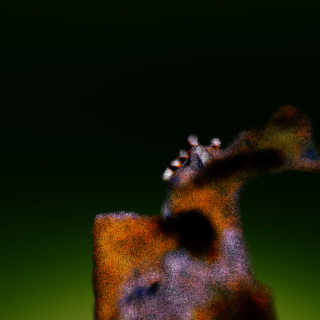}
            \end{minipage}
        }
        \hspace{-2mm}
        \subfloat[VSD]{
            \begin{minipage}[c]{0.13\linewidth}
                \includegraphics[width=1\linewidth]{./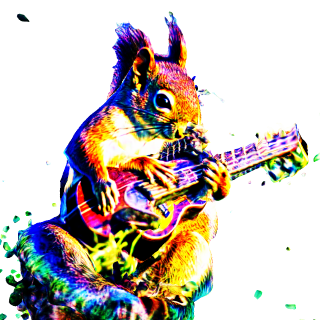}
            \end{minipage}
            \hspace{-1.8mm}
            \begin{minipage}[c]{0.065\linewidth}
                \includegraphics[width=1\linewidth]{./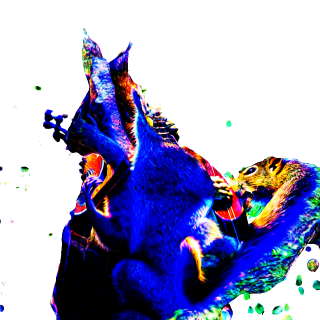}\\
                \vspace{-4.2mm}
                \includegraphics[width=1\linewidth]{./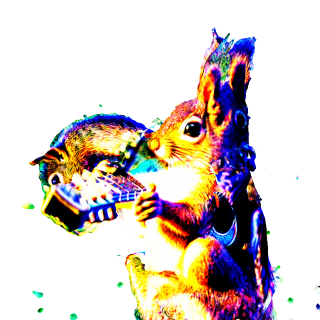}
            \end{minipage}
        }
        \hspace{-2mm}
        \subfloat[USD]{
            \begin{minipage}[c]{0.13\linewidth}
                \includegraphics[width=1\linewidth]{./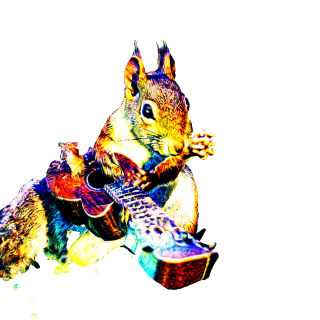}
            \end{minipage}
            \hspace{-1.8mm}
            \begin{minipage}[c]{0.065\linewidth}
                \includegraphics[width=1\linewidth]{./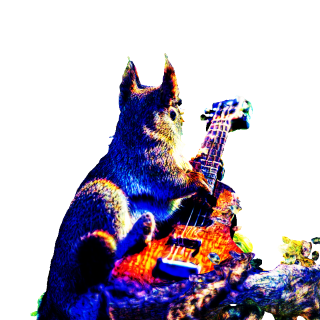}\\
                \vspace{-4.2mm}
                \includegraphics[width=1\linewidth]{./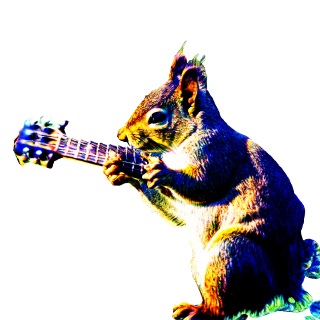}
            \end{minipage}
        }
    \end{minipage}
    \\

    \caption{Qualitative comparison. The text-to-3D generation results are visualized from three perspectives (front, left, and right side views), illustrating how the rectified distribution in our USD framework effectively mitigates the Multi-Face Janus phenomenon.}
    \label{fig:exp_comparison_main}
\end{figure*}

\textbf{Qualitative Evaluation.} As shown in Fig.~\ref{fig:exp_comparison_main}, the texture quality achieved by our method is comparable to that of VSD. In terms of geometry, USD demonstrates a reasonable structure, capturing the shapes of different poses and successfully simulating some finer details like bumps. Although some artifacts remain (not as smooth as SDS and its variants), our method maintains a relatively accurate geometry compared to VSD. More results are available in \url{https://chansey0529.github.io/RecDreamer_proj/}.

\subsection{Ablations}\label{app:main_exps_abl}
\begin{figure*}[t!]
    \centering

    \begin{minipage}[c]{1\linewidth}
        \centering
        \parbox{1\linewidth}{\centering ``Samurai koala bear.''}
        \vspace{-7mm}

        \subfloat{
            \begin{minipage}[c]{1\linewidth}
                \centering
                \rotatebox[origin=l]{90}{\parbox[c][0.03\linewidth]{0.135\linewidth}{\centering $p(c|\boldsymbol{x}_t, y)$}}
                \includegraphics[width=0.95\linewidth]{./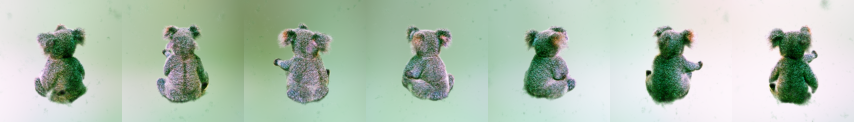}
            \end{minipage}
        }\vspace{-5mm}\\
        \subfloat{
            \begin{minipage}[c]{1\linewidth}
                \centering
                \rotatebox[origin=l]{90}{\parbox[c][0.03\linewidth]{0.135\linewidth}{\centering Sampling}}
                \includegraphics[width=0.95\linewidth]{./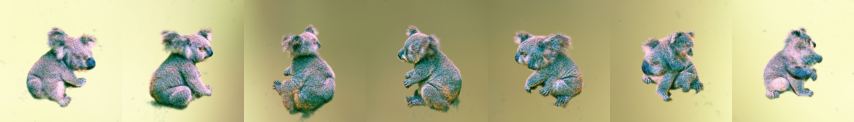}
            \end{minipage}
        }\vspace{-5mm}\\
        \subfloat{
            \begin{minipage}[c]{1\linewidth}
                \centering
                \rotatebox[origin=l]{90}{\parbox[c][0.03\linewidth]{0.135\linewidth}{\centering Full}}
                \includegraphics[width=0.95\linewidth]{./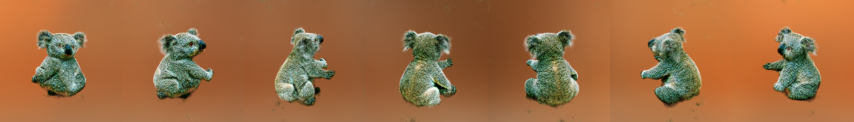}
            \end{minipage}
        }
    \end{minipage}

    \begin{minipage}[c]{1\linewidth}
        \centering
        \parbox{1\linewidth}{\centering ``A zombie bust.''}
        \vspace{-7mm}

        \subfloat{
            \begin{minipage}[c]{1\linewidth}
                \centering
                \rotatebox[origin=l]{90}{\parbox[c][0.03\linewidth]{0.135\linewidth}{\centering $p(c|\boldsymbol{x}_t, y)$}}
                \includegraphics[width=0.95\linewidth]{./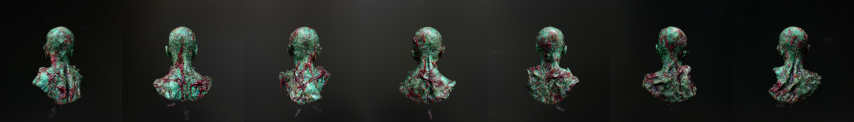}
            \end{minipage}
        }\vspace{-5mm}\\
        \subfloat{
            \begin{minipage}[c]{1\linewidth}
                \centering
                \rotatebox[origin=l]{90}{\parbox[c][0.03\linewidth]{0.135\linewidth}{\centering Sampling}}
                \includegraphics[width=0.95\linewidth]{./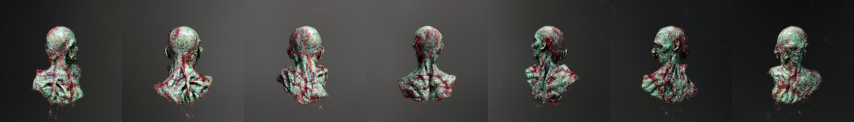}
            \end{minipage}
        }\vspace{-5mm}\\
        \subfloat{
            \begin{minipage}[c]{1\linewidth}
                \centering
                \rotatebox[origin=l]{90}{\parbox[c][0.03\linewidth]{0.135\linewidth}{\centering Full}}
                \includegraphics[width=0.95\linewidth]{./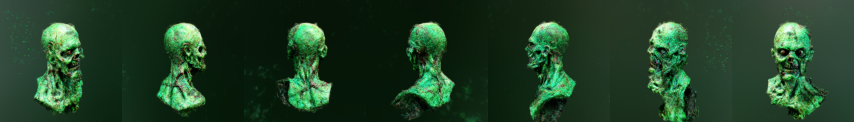}
            \end{minipage}
        }
    \end{minipage}

    \caption{Ablation studies. We construct two variants, $p(c|\boldsymbol{x}_t, y)$ and $p_t(c|y)$ for comparison. The variant $p(c|\boldsymbol{x}_t, y)$ directly predicts the category of noisy images $\boldsymbol{x}_t$, while the sampling-based method, $p_t(c|y)$, estimates the pose distribution by generating multiple samples and predicting the respective categories.}
    \label{fig:app_main_abl}
\end{figure*}

We provide ablation results in Fig.~\ref{fig:app_main_abl}. Since the first stage of training establishes the overall geometry, all subsequent experiments are conducted using only the first stage for comparison.

\textbf{Ablation for $p(c|\boldsymbol{x}_t, y)$.} To verify the approximation of $p(c|\boldsymbol{x}_t, y)$, we design a variant that directly estimates the distribution of the noisy images $\boldsymbol{x}$ using a pose classifier. As shown in Fig.~\ref{fig:app_main_abl}, the rectification term in this implementation is almost ineffective because the classifier struggles to determine the class of noisy images. This causes the rectification to work only in relatively small time steps for clean images, limiting its impact on the global optimization. This result highlights the necessity of approximating $p(c|\boldsymbol{x}_t, y)$.

\textbf{Ablation for $p(c|y)$.} Furthermore, to validate the effectiveness of $p(c|y)$, we devise a sampling variant. Instead of predicting the current distribution in real-time using EMA, we sample a batch of images before training and predict the distribution for each interval. Score distillation is then performed based on this fixed pose distribution. However, the results are quite random (the ``bear'' case is over-rectified and the ``zombie'' case is under-rectified). This is because there may be a gap between the distribution of score distillation and the sampled distribution, leading to incorrect guidance to generate another bias. Additionally, a fixed rectified distribution is unable to adaptively balance the gradient of the noise predictor and the classifier, therefore may lead to over-adjustment.

In conclusion, we adopt the approximation $p(c|\boldsymbol{x}_t, y) \approx p(c|\hat{\boldsymbol{x}}_0, y)$ and employ dynamic distribution updates via EMA. This ensures an effective simulation of the rectification function values.

\subsection{Component Analysis}

\textbf{Hyperparameter Evaluation.} We analyze the impact of key hyperparameters, such as the update rate of EMA $\alpha_{ema}$ and the number of particles $n_t$. The results indicate that a larger $\alpha_{ema}$ facilitates more responsive updates, allowing for real-time tracking and adjustment of the pose distribution. Additionally, our findings suggest that the back-and-forth time scheduling, as detailed in Appendix~\ref{app:method_others}, enhances multi-particle optimization. Further specifics can be found in Appendix~\ref{app:main_exps_hyper}.

\textbf{Additional Experiments.} In addition to the hyperparameter evaluation, we conduct further investigations, detailed in Appendices~\ref{app:main_exps},~\ref{app:cls_exps}, and~\ref{app:val_exps}. Using the annotated pose data, we quantitatively validate the effectiveness of the pose classifier, with ablation studies on texture and orientation scores confirming the robustness of the classifier architecture. Validation experiments on 2D particles provide an intuitive demonstration of USD's performance. Furthermore, by utilizing RecDreamer, we extend the conditional image generation~\citep{graikos2022diffusion} from one single particle into a multi-particle optimization scheme, enabling more effective control with promising practical applications.

\vspace{-2mm}
\section{Conclusion}\label{sec:conclusion}
In this paper, we presented RecDreamer, a novel approach to mitigating the Multi-Face Janus problem in text-to-3D generation. Our solution introduces a rectification function to modify the prior distribution, ensuring that the resulting joint distribution achieves uniformity across poses. By expressing the modified data distribution as the product of the original density and the rectification function, we seamlessly integrate this adjustment into the score distillation algorithm. This allows us to derive a particle optimization framework for uniform score distillation. Additionally, we developed a pose classifier and implemented reliable approximations and simulations to enhance the particle optimization process. Extensive experiments on both 2D and 3D synthesis tasks demonstrate the effectiveness of our approach in addressing the Multi-Face Janus problem, resulting in more consistent geometries and textures across different views.

\textbf{Limitations.} While our method significantly reduces bias in prior distributions, further exploration of 3D modeling with multi-view priors could improve geometric and texture consistency. Extending our approach through deeper research into conditional control presents another promising avenue for addressing these challenges in future work.


\subsubsection*{Acknowledgments}
The work is supported by Guangdong Provincial Natural Science Foundation for Outstanding Youth Team Project (No. 2024B1515040010), NSFC Key Project (No. U23A20391), China National Key R\&D Program (Grant No. 2023YFE0202700, 2024YFB4709200), Key-Area Research and Development Program of Guangzhou City (No. 202206030007, 2023B01J0022), the Guangdong Natural Science Funds for Distinguished Young Scholars (No. 2023B1515020097), the AI Singapore Programme under the National Research Foundation Singapore (No. AISG3-GV-2023-011), and the Lee Kong Chian Fellowships.

\bibliography{iclr2025_conference}
\bibliographystyle{iclr2025_conference}

\newpage
\appendix

\section{Related Works}

\subsection{Diffusion Models for Text-to-Image Generation}
A major challenge in text-to-image generation using diffusion models~\citep{yu2024beyond, liu2024drag, xu2024dreamanime} is guiding the generative process to reflect the input text accurately. A widely adopted solution is classifier-free guidance (CFG,~\cite{ho2022classifier}), which eliminates the need for external classifiers by training a unified model for both unconditional and conditional image generation. During inference, CFG achieves conditional generation by interpolating between the conditional and unconditional scores, effectively guiding the model to match the input text. This method has shown significant success in various text-to-image tasks~\citep{balaji2022ediff, nichol2021glide, ramesh2022hierarchical}. Models like DALL·E 2 and Stable Diffusion~\citep{rombach2022high} have demonstrated exceptional capabilities in producing diverse and complex images, with promising extensions into text-to-3D generation.

\subsection{Text-to-3D Generation}
Recent advances in text-to-3D generation can be broadly divided into two main approaches. The first approach focuses on directly learning 3D asset distributions from large-scale datasets such as Objaverse~\citep{deitke2023objaverse}. Notable models within this category include GET3D~\citep{gao2022get3d}, Point-E~\citep{nichol2022point}, Shap-E~\citep{jun2023shap}, CLAY~\citep{zhang2024clay}, and MeshGPT~\citep{siddiqui2024meshgpt}, all of which leverage extensive 3D data to generate accurate 3D models.

The second approach relies on 2D priors~\citep{jiang2023diffuse3d} for generating 3D models. Techniques like score distillation are foundational here, as exemplified by DreamFusion/SJC~\citep{poole2022dreamfusion, wang2023score} and ProlificDreamer~\citep{wang2024prolificdreamer}.

Building on these baselines, researchers continue to improve visual quality. Classifier Score Distillation~\citep{yu2023text} reframes the fundamental approach by treating classifier-free guidance as the central mechanism rather than a supplementary component, enabling more realistic synthesis. Noise-Free Score Distillation~\citep{katzir2023noise} addresses the over-smoothing problem by eliminating unnecessary noise terms, allowing for effective generation at standard guidance scales. SteinDreamer~\citep{wang2023steindreamer} introduces Stein's identity to reduce gradient variance in score distillation for faster and higher-quality generation. LucidDreamer~\citep{liang2024luciddreamer} tackles the over-smoothing challenge differently by combining interval score matching with 3D Gaussian Splatting~\citep{kerbl20233d} and deterministic diffusion paths. Most recently, SDS-Bridge~\citep{mcallister2024rethinking} enhances the entire pipeline by introducing calibrated sampling based on optimal transport theory and improved distribution estimates.

A separate line of research focuses on improving geometric quality. Magic3D~\citep{lin2023magic3d} enhances output resolution by first generating a coarse 3D hashgrid and subsequently refining it into a mesh. Fantasia3D~\citep{chen2023fantasia3d} introduces hybrid scene representations and spatially varying BRDF~\citep{united1977geometrical} for realistic modeling. Other models, such as Dreamtime~\citep{huang2023dreamtime} and HiFA~\citep{zhu2023hifa}, concentrate on optimizing time-step sampling to improve texture stability and geometric consistency.

\subsection{Alleviating the Multi-Face Janus Problem}
One of the key challenges when extending text-to-image priors to 3D is the Multi-Face Janus problem, where inconsistencies arise in 3D geometries, especially for objects with multiple faces. To address this,~\citet{yi2023gaussiandreamer} and~\citet{ma2023geodream} introduce pretrained shape generators that provide geometric priors. DreamCraft3D~\citep{sun2023dreamcraft3d} tackles the challenge through a hierarchical framework, combining view-dependent diffusion with Bootstrapped Score Distillation to separate geometry and texture optimization. MVDream~\citep{shi2023mvdream} introduces a dedicated multi-view diffusion model that bridges 2D and 3D domains, enabling few-shot learning from 2D examples. JointDreamer~\citep{jiang2025jointdreamer} presents Joint Score Distillation with view-aware models as energy functions to ensure coherence by explicitly modeling cross-view relationships. While these methods effectively handle diverse scenarios, certain complex text descriptions can still pose challenges due to inherent limitations in pretrained generators and multi-view generative models.

Beyond introducing basic geometric priors, several methods have aimed to improve control over pose prompts. Debiased-SDS~\citep{hong2023debiasing} tackles text bias by removing words that conflict with pose descriptions, while Perp-Neg~\citep{armandpour2023re} proposes a perpendicular gradient sampling technique to remove undesired attributes from negative prompts. Other works have sought to address pose bias in pretrained models by altering the approximation distribution through pose sampling or entropy constraints.

DreamControl~\citep{huang2024dreamcontrol} approaches the pose bias issue by employing adaptive viewpoint sampling, which adjusts the rendering pose distribution to better mimic the inherent biases of the model. Additionally, ESD~\citep{wang2024taming} demonstrates that the score distillation process degenerates into maximum-likelihood seeking, and proposes an entropic term to introduce diversity across different views, helping to prevent repetitive patterns in 3D generation.

\section{Methodological Details}\label{app:method}

In this part, we further complement the introduction of the RecDreamer. Appendix~\ref{app:method_classifier} presents the detailed architecture of the pose classifier.~\ref{app:method_recfunc} provides a more detailed derivation of the approximation discussed in Sec.~\ref{method:recdreamer}, and an explanation of the EMA update process. In~\ref{app:method_others}, we present the precise implementation details of RecDreamer. Lastly, the proof of the main theorems is given in~\ref{app:method_theorem}.

\subsection{Pose Classifier}\label{app:method_classifier}
\subsubsection{Overview}
The classification of pose typically involves two key points: when shapes of input and template images are similar (front and back), texture information is usually used for differentiation. Conversely, when shapes are different (left, center, and right), attention must be given to the features' positions. 

As introduced in Sec.~\ref{method:recdreamer}, the classifier mimics an ``AND gate'' structure, combining texture similarity and orientational similarity between the input and template for computation. Here, texture similarity mainly distinguishes the front and back of objects, while orientational similarity differentiates the three 2D orientations (left, center, and right). 

Texture similarity is obtained by calculating the cosine similarity between global features (\ie, $[cls]$ token) derived from the input and template images. However, orientational similarity cannot be easily derived from pretrained feature extractors, as they often use image augmentation techniques (like flipping) during training, leading to similar global features for left and right orientations. To address this issue, we propose the most matching patch distance to measure orientational similarity. The overall process is illustrated in Fig.~\ref{fig:app_method_classifier} and formulation is detailed in Appendix~\ref{app:pc_arch}.

\subsubsection{Architecture}\label{app:pc_arch}
Assume that we have a feature extractor $\mathcal{F}(\cdot)$ that maps an image to the feature space. Given the height $h$ and width $w$ of the features, we denote the global feature and patch features of an input image $\boldsymbol{x} \in \mathbb{R}^d$ as $\boldsymbol{f}_{cls} \in \mathbb{R}^{f}$ and $\boldsymbol{f}_{pat} \in \mathbb{R}^{h\times w\times f}$, \ie, $\boldsymbol{f}_{cls}, \boldsymbol{f}_{pat} = \mathcal{F}(\boldsymbol{x})$. The features of template images $\{\boldsymbol{x}^i | 0 \leq i < n_p\}$ are given by $\{\boldsymbol{f}_{cls}^i | 0 \leq i < n_p\}$, $\{\boldsymbol{f}_{pat}^i | 0 \leq i < n_p\}$, where $n_p$ is the number of the pose categories. To calculate the texture similarity, we directly calculate the cosine similarity $cos(\cdot, \cdot)$ between the global input features and the template features as follows:
\begin{equation}\label{eq:cls_tex}
    \begin{gathered}
 \boldsymbol{s}_{tex} = \{s_{tex}^i | 0 \leq i < n_p\}, \\
 s_{tex}^i = cos(\boldsymbol{f}_{cls}, \boldsymbol{f}_{cls}^i).
    \end{gathered}
\end{equation}
In \Eqref{eq:cls_tex}, we consolidate the per-class texture similarity scores (scalars) $s_{tex}^i$ into a vector representation $\boldsymbol{s}_{tex}$ to streamline notation.

To calculate the orientation similarity, we propose the matching patch distance to evaluate the orientation discrepancy between the input and output images. To concentrate on the main subject, we introduce the binary mask (the calculation of binary mask is introduced in Appendix~\ref{app:pc_seg}) of both input and template images, denoted as $\boldsymbol{b}$ and $\{\boldsymbol{b}^i | 0 \leq i < n_p\}$, where $\boldsymbol{b} \in \mathbb{R}^{h \times w \times 1}$. We also distribute a coordinate map for the input and template images based on the binary masks to mark the relevant coordinate, denoted as $\boldsymbol{m}$ and $\boldsymbol{m}^i$. To be specified, the leftmost pixel of the subject is assigned with a value $-0.5$ and the rightmost as $0.5$, where the value of intermediate pixels between the leftmost and rightmost are interpolated from $-0.5$ to $0.5$.

We denote the patch features by $\boldsymbol{f}_{pat}$ and $\boldsymbol{f}_{pat}^i$, the binary masks by $\boldsymbol{b}$ and $\boldsymbol{b}_{pat}^i$, and the coordinate maps by $\boldsymbol{m}$ and $\boldsymbol{m}_{pat}^i$. At a specific patch coordinate $u = (x, y)$, these quantities are written as $(\boldsymbol{f}_u, \boldsymbol{f}_u^i)$, $(b_u, b_u^i)$, and $(m_u, m_u^i)$, respectively. For subject patches (that is, coordinates where the binary mask is 1), we collect the corresponding features and coordinates into the sets $\hat{\boldsymbol{f}}_{pat} = \{\boldsymbol{f}_u \mid b_u = 1\}$ and $\hat{\boldsymbol{m}} = \{m_u \mid b_u = 1\}$. Similarly, for template images, we denote these sets by $\hat{\boldsymbol{f}}_{pat}^i$ and $\hat{\boldsymbol{m}}^i$.

For the input image, the similarities between the foreground patch $\hat{\boldsymbol{f}}_{pat, u}$ and the template patch $\hat{\boldsymbol{f}}_{pat, u'}^i$ is accessed by calculating the L2-norm, \ie, $s_{u, u'}= 1 - \sigma_{\tau_{pat}}(\|\hat{\boldsymbol{f}}_{pat, u} - \hat{\boldsymbol{f}}_{pat, u'}^i\|_2)$, where $\sigma_{\tau_{pat}}(\cdot)$ is a softmax function with temperature. Then, the mean distance between the input and the template is computed by traversing all the patches of the input image and the template image, and we use the negative of the distance to indicate the similarity of orientation:
\begin{equation}\label{eq:cls_pose}
    \begin{gathered}
 \boldsymbol{s}_{ori} = \{s_{ori}^i | 0 \leq i < n_p\}, \\
 s^i_{ori} = 1 - \frac{1}{\|u\|\|u'\|}\sum_{u}\sum_{u'} s_{u,u'}\|\hat{m}_u-\hat{m}_{u'}^i\|.
    \end{gathered}
\end{equation}
Finally, we compute the final probability $\boldsymbol{s}_{pose}$ by:
\begin{equation}
 \boldsymbol{s}_{pose} = \sigma_{\tau_{pose}}(\boldsymbol{s}_{tex} \otimes \boldsymbol{s}_{ori}).
\end{equation}
Here, $\sigma_{\tau_{pose}}(\cdot)$ is a softmax function and $\otimes$ signifies element-wise multiplication. In Fig.~\ref{fig:app_method_demo}, we demonstrate the distance calculation between a single pixel $u$ from the input image and the pixels $\{u'\}$ in the templates.

\begin{figure*}[t!]
    \centering
    \begin{minipage}[c]{0.9\linewidth}
    \begin{minipage}[c]{1\linewidth}
        \centering

        \subfloat[$\boldsymbol{x}$]{
            \begin{minipage}[c]{0.19\linewidth}
                \includegraphics[width=\linewidth]{./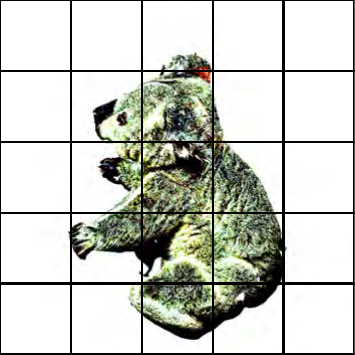}
            \end{minipage}
        }\hspace{-2mm}
        \subfloat[$\hat{\boldsymbol{b}}$]{
            \begin{minipage}[c]{0.19\linewidth}
                \includegraphics[width=\linewidth]{./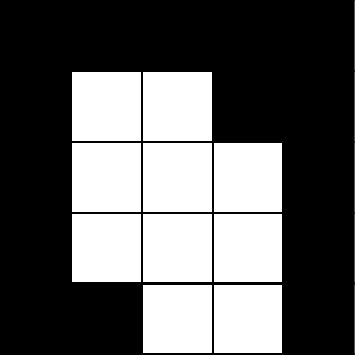}
            \end{minipage}
        }\hspace{-2mm}
        \subfloat[$\hat{\boldsymbol{m}}$]{
            \begin{minipage}[c]{0.19\linewidth}
                \includegraphics[width=\linewidth]{./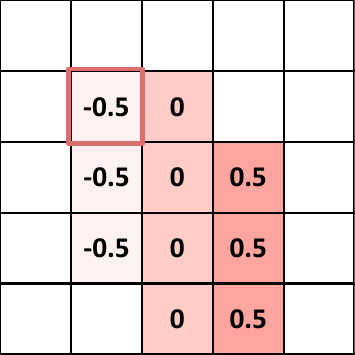}
            \end{minipage}
        }\hspace{-2mm}
        \subfloat[$\hat{m}_{u}$]{
            \begin{minipage}[c]{0.19\linewidth}
                \includegraphics[width=\linewidth]{./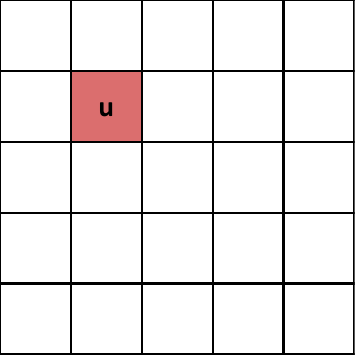}
            \end{minipage}
        }\hspace{-2mm}
        \subfloat{
            \begin{minipage}[c]{0.19\linewidth}
                \includegraphics[width=\linewidth]{./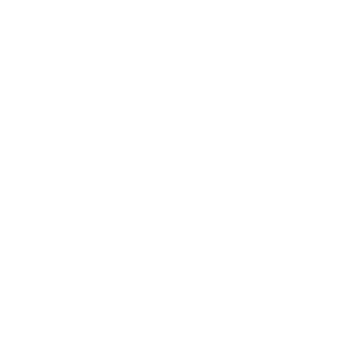}
            \end{minipage}
        }\hspace{-2mm}
    \end{minipage}\vspace{-3.5mm}

    \setcounter{subfigure}{4}
    \begin{minipage}[c]{1\linewidth}
        \centering

        \subfloat[$\boldsymbol{x}^{i_2}$]{
            \begin{minipage}[c]{0.19\linewidth}
                \includegraphics[width=\linewidth]{./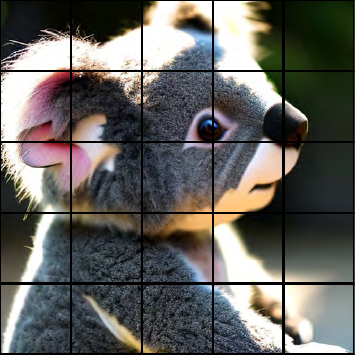}
            \end{minipage}
        }\hspace{-2mm}
        \subfloat[$\hat{\boldsymbol{b}}^{i_2}$]{
            \begin{minipage}[c]{0.19\linewidth}
                \includegraphics[width=\linewidth]{./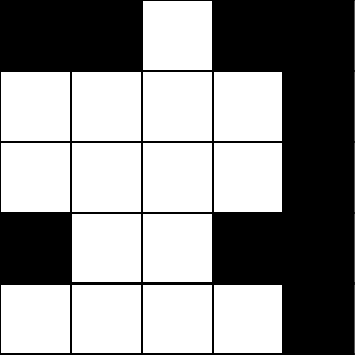}
            \end{minipage}
        }\hspace{-2mm}
        \subfloat[$\hat{\boldsymbol{m}}^{i_2}$]{
            \begin{minipage}[c]{0.19\linewidth}
                \includegraphics[width=\linewidth]{./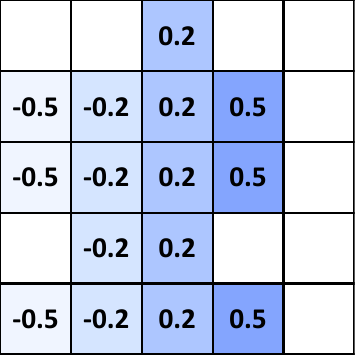}
            \end{minipage}
        }\hspace{-2mm}
        \subfloat[$s_{u,u'}$ for $\hat{\boldsymbol{m}}^{i_2}$]{
            \begin{minipage}[c]{0.19\linewidth}
                \includegraphics[width=\linewidth]{./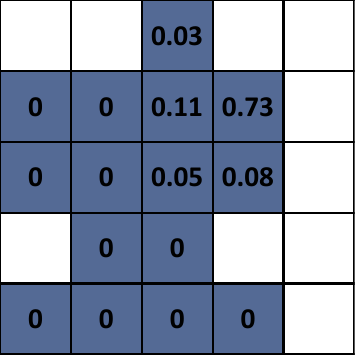}
            \end{minipage}
        }\hspace{-2mm}
        \subfloat[mean distance]{
            \begin{minipage}[c]{0.19\linewidth}
                \includegraphics[width=\linewidth]{./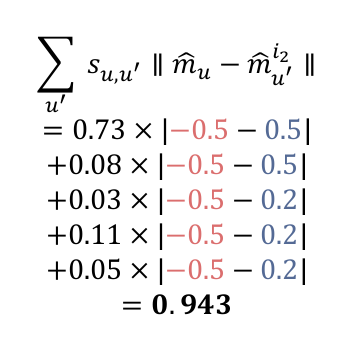}
            \end{minipage}
        }\hspace{-2mm}
    \end{minipage}\vspace{-3.5mm}

    \begin{minipage}[c]{1\linewidth}
        \centering

        \subfloat[$\boldsymbol{x}^{i_3}$]{
            \begin{minipage}[c]{0.19\linewidth}
                \includegraphics[width=\linewidth]{./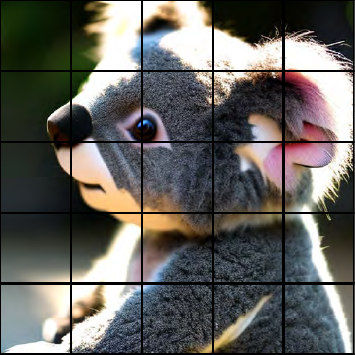}
            \end{minipage}
        }\hspace{-2mm}
        \subfloat[$\hat{\boldsymbol{b}}^{i_3}$]{
            \begin{minipage}[c]{0.19\linewidth}
                \includegraphics[width=\linewidth]{./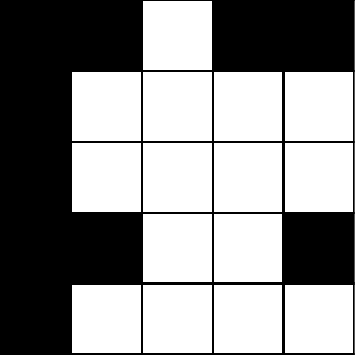}
            \end{minipage}
        }\hspace{-2mm}
        \subfloat[$\hat{\boldsymbol{m}}^{i_3}$]{
            \begin{minipage}[c]{0.19\linewidth}
                \includegraphics[width=\linewidth]{./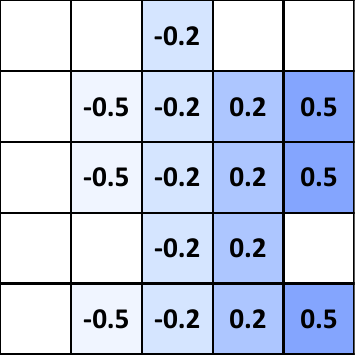}
            \end{minipage}
        }\hspace{-2mm}
        \subfloat[$s_{u,u'}$ for $\hat{\boldsymbol{m}}^{i_3}$]{
            \begin{minipage}[c]{0.19\linewidth}
                \includegraphics[width=\linewidth]{./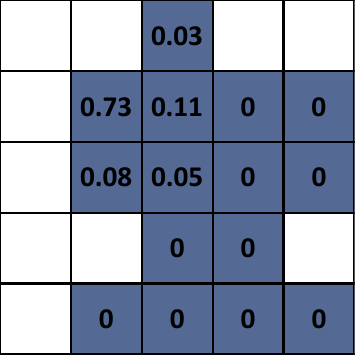}
            \end{minipage}
        }\hspace{-2mm}
        \subfloat[mean distance]{
            \begin{minipage}[c]{0.19\linewidth}
                \includegraphics[width=\linewidth]{./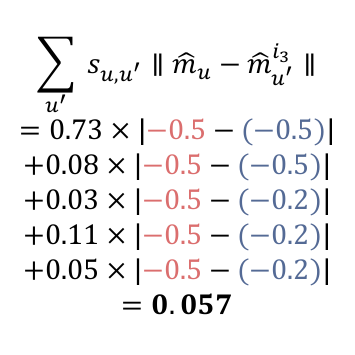}
            \end{minipage}
        }\hspace{-2mm}
    \end{minipage}
    \end{minipage}

    \caption{Calculation of orientation distance. (a) is the input image for testing. (e) and (j) are templates showing ``facing right'' and ``facing left'' orientations. Each row demonstrates the orientation distance between the specific image patch (d) and the templates. For opposite orientations, the mean distance is larger compared to the similar orientations ((i) v.s. (n)). This feature is important for the classification of pose.}
    \label{fig:app_method_demo}
\end{figure*}

\subsubsection{Foreground Mask Segmentation}\label{app:pc_seg}

To segment the foreground subject, we follow DINOv2~\citep{oquab2023dinov2, darcet2023vision} and apply a Principle Component Analysis (PCA)~\citep{abdi2010principal} to the patch features $\{\boldsymbol{f}_{pat}^i | 0 \leq i < n_p\}$. The PCA algorithm reduces the dimensions of the patch from $\mathbb{R}^{h \times w \times f}$ to $\mathbb{R}^{h \times w \times f'}$, where $f'$ is a small number. We use the first component along the feature channel dimension, as the basis for foreground segmentation. However, since it's unclear whether $>0$ corresponds to the subject or $<0$ does after PCA, we introduce the cross-attention feature map from Stable Diffusion~\citep{rombach2022high} for guidance. Concretely, we encode the template image into latent space and use the cross-attention features of the image and prompt at different time steps to obtain the subject area. Subsequently, based on the cross-attention feature map, we can infer whether the positive or negative value of PCA's first component stands for foreground. Note that we do not use cross-attention directly as a segmentation map because this method is more complex and not suitable for efficient segmentation during training. Instead, using PCA for segmentation is a lightweight approach that does not affect training efficiency.

\subsubsection{Implementation of Pose Classifier}

To minimize the impact on efficiency, we employ the feature extractor ``dinov2\_vits14''. We resize the classifier features to a size $h=w=16$. The temperature for patch-wise similarity $\tau_{pat}$ is set to $0.01$, while the temperature for pose in~\Eqref{eq:cls_pose} is set to $0.05$ to enhance the distinction between categories. When implementing the classifier (calculating PCA parameters), we apply data augmentations (random noise, random affine, random grayscale, and color jitter) to the template images to achieve robust foreground-background segmentation and classification.

\subsection{Calculation of the rectification function}\label{app:method_recfunc}
\textbf{Approximation of $p(c | \boldsymbol{x}_t, y)$.} In Sec.~\ref{method:recdreamer}, we have demonstrated the approximation of $p(c | \boldsymbol{x}_t, y)$. By expanding the derivation of Eq.~7 of DPS~\citep{chung2022diffusion} to a conditional form, we have:
\begin{equation}\label{eq:dps_pcxt}
    \begin{aligned}
 p(c | \boldsymbol{x}_t, y) &= \int p(\boldsymbol{x}_0|\boldsymbol{x}_t,y) p(c|\boldsymbol{x}_0,\boldsymbol{x}_t,y) d \boldsymbol{x}_0 \\
        &= \int p(\boldsymbol{x}_0|\boldsymbol{x}_t,y) p(c|\boldsymbol{x}_0,y) d \boldsymbol{x}_0 \\
        &= \mathbb{E}_{\boldsymbol{x}_0{\sim}p(\boldsymbol{x}_0|\boldsymbol{x}_t,y)}p(c|\boldsymbol{x}_0, y).
    \end{aligned}
\end{equation}
Note that we follow DPS's assumption that pose $c$ is independent to $\boldsymbol{x}_t$.~\Eqref{eq:dps_pcxt} is approximated by exchanging the calculation of expectation and $p(c|\boldsymbol{x}_0, y)$, \ie, $p(c | \boldsymbol{x}_t, y) \approx p(c|\hat{\boldsymbol{x}}_0, y)$, where $\hat{\boldsymbol{x}}_0 = \mathbb{E}_{\boldsymbol{x}_0{\sim}p(\boldsymbol{x}_0|\boldsymbol{x}_t,y)} [\boldsymbol{x}_0] $. By Tweedie's formula, $\hat{\boldsymbol{x}}_0 \approx \left(\boldsymbol{x}_t-\sigma_t\epsilon_{pretrain}(\boldsymbol{x}_t, t, y)\right)/\alpha_t$.

\textbf{Estimate of $p_t(c | y)$.} Another term, $p_t(c | y)$, represents the expectation of $p(c | \boldsymbol{x}_t, y)$ over $\boldsymbol{x}_t$. To estimate $p_t(c | y)$, we employ an exponential moving average (EMA) to iteratively update this term with the values of $p(c | \boldsymbol{x}_t, y)$ during training. Since $p_t(c | y)$ is time-dependent, each time step in the diffusion model requires a corresponding EMA value. However, updating the EMA at each iteration only occurs with a probability of 1/1000, which does not accurately track the current distribution. Fortunately, we observe that the empirical discrete pose probability of $p_t(\bar{c} | y)$ for adjacent time steps are nearly identical (e.g., $p_1(\bar{c} | y)$ and $p_2(\bar{c} | y)$ are almost the same). As a result, the EMA values across multiple time steps can be unified within intervals to enhance efficiency. We denote the number of intervals as $n_t$, and the number of steps within each interval, $n_s$, is calculated as $n_s = T / n_t$, where $T$ is the training step of DDPM. We maintain a list of EMA values for different intervals, $\{v_{ema}^i\}_{i=0}^{n_t}$. The EMA version of the pose probability, $\bar{p}_t(\bar{c}|y)$, is then given by $\bar{p}_t(\bar{c}|y) = v_{ema}^{i=\lfloor t/n_s \rfloor}$. The update of the EMA follows the update rate $\alpha_{ema}$, as described in the following:

\begin{equation}
 \bar{p}_t(\bar{c}|y) \leftarrow \alpha_{ema}p_{\xi}(\bar{c}|\boldsymbol{x}_t, y) + (1 - \alpha_{ema})\bar{p}_t(\bar{c}|y)
\end{equation}

Empirically, given $T=1000$ is the total steps, we set $n_t=10$ and $n_s=100$. $\alpha_{ema}$ is set to satisfy that the previous $n_{ema}$ samples has total EMA weights greater than $0.9$.

\subsection{Other Implementations}\label{app:method_others}
Additionally, we implement other tricks to ensure the effectiveness of gradient updates at different time steps.

\textbf{Gradient norm.} The optimization dynamics of uniform score distillation present a fundamental challenge: the framework simultaneously processes two conflicting gradient signals (classifier gradients and denoiser gradients) with inherent magnitude disparity. Through empirical measurement, we observe that classifier gradients for rendered images $\boldsymbol{x}_0$ exhibit substantially higher magnitudes compared to denoiser gradients. To resolve this optimization instability, we implement gradient norm alignment - specifically constraining the L2 norm of the classifier gradient to match the denoiser gradient norm. This normalization strategy ensures balanced parameter updates while preserving relative gradient directions.

\textbf{Time scheduler.} Furthermore, we utilized a back-and-forth (BNF) time scheduler. The time scheduler constrains the sampling interval for each iteration. Assuming the number of intervals for BNF is denoted as $n_i$, we divide the total number of iterations into $2n_i$ intervals. For the first $n_i$ intervals, the sampled time steps are expanded from $[T*0.98, T - (T / n_i)]$ to $[T*0.98, T*0.02]$. For the last $n_i$ intervals, the sampled time steps are reduced from $[T*0.98, T*0.02]$ to $[(T / n_i), T*0.02]$. Typically, we set $n_i=2$ for one particle optimization. 

\textbf{Three-stage optimization.} Similar to VSD, we use a three-stage optimization paradigm. For the first stage, we train the Instant-NGP~\citep{muller2022instant} using USD for $15k$ iters. In the second stage, we use SDS for geometric refinement for $15k$ iters. In the third stage, we optimize the texture with USD for $15k$ iters.

\textbf{Auxiliary prompts.}
Our method is essentially reweighting the subdistribution of a prior distribution, so in the proof of Lemma~\ref{lm:weight} we assume that $p(c) > 0$. To ensure this condition holds, we augment the original prompt with phrases like ``from side view, from back view.'' to introduce additional pose information. While these auxiliary prompts may not create a perfectly balanced distribution, they guarantee that $p(c) > 0$ by incorporating multiple viewpoints. Our algorithm then adjusts this distribution to achieve uniformity. It is important to note that our use of directional text differs fundamentally from prior work. Previous research has primarily employed directional text for conditional generation, aiming to constrain the model to produce content that strictly aligns with the provided text. In contrast, our approach leverages directional prompts to expand the model’s distribution, allowing it to capture a more diverse range of information.

\subsection{Proof of Main Theorem}\label{app:method_theorem}
\subsubsection{Proof of Theorem~\ref{thm:rpx}}
Since two marginal distributions $p(\boldsymbol{x})$ and $p(c)$ are involved, we first study their joint distribution $p(\boldsymbol{x},c)$. We introduce the weighting function $w(c)$ to correct $p(\boldsymbol{x},c)$ so that the rectified marginal distribution obeys the target distribution $f(c)$. The rectified joint distribution is given by the following lemma.

\begin{lemma}\label{lm:weight}
 Given the original joint distribution $p(\boldsymbol{x}, c)$, where $p(c)$ is the marginal distribution of $c$, and $p(c) \neq 0$, and a target marginal distribution $f(c)$, we can rectify $p(c)$ to $f(c)$ by introducing a weighting function $w(c) = \frac{f(c)}{p(c)}$. The corrected joint distribution $\tilde{p}(\boldsymbol{x}, c)$ is then given by:
    \begin{equation}\label{eq:rjoint}
 \tilde{p}(\boldsymbol{x}, c) = w(c) p(\boldsymbol{x}, c) = \frac{f(c)}{p(c)} p(\boldsymbol{x}, c).
    \end{equation}
\end{lemma}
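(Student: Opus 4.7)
The plan is to verify that the proposed $\tilde p(\vx, c) = w(c)\, p(\vx, c)$ with $w(c) = f(c)/p(c)$ is (i) a valid probability density and (ii) has $c$-marginal equal to $f(c)$. Both properties follow from direct calculation, so this is essentially a one-line verification; the substantive content lies in stating precisely what ``rectification'' means and checking the edge case $p(c) = 0$.

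First I would note that $\tilde p(\vx, c) \geq 0$ pointwise, since $p(\vx, c) \geq 0$ and $w(c) = f(c)/p(c) \geq 0$ under the hypothesis $p(c) > 0$ and the fact that $f$ is a density. Next I would compute the $c$-marginal of $\tilde p$ by integrating $\vx$ out:
\begin{equation}
\int \tilde p(\vx, c)\, d\vx \;=\; \frac{f(c)}{p(c)} \int p(\vx, c)\, d\vx \;=\; \frac{f(c)}{p(c)} \cdot p(c) \;=\; f(c),
\end{equation}
where the weight $w(c)$ factors out of the $\vx$-integral because it does not depend on $\vx$. Normalization of $\tilde p$ then follows by integrating the marginal: $\iint \tilde p(\vx, c)\, d\vx\, dc = \int f(c)\, dc = 1$, since $f$ is itself a probability density over $c$.

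The one subtlety worth flagging is the assumption $p(c) \neq 0$, which is needed so that $w(c)$ is well-defined everywhere on the support of $f$. If $f$ were supported outside the support of $p(c)$, no reweighting of $p(\vx, c)$ could produce that marginal, so absolute continuity of $f$ with respect to $p(c)$ is implicitly required. I would state this as a standing assumption, matching the hypothesis $p(c) \neq 0$ in the lemma, and note that in the paper's setting this is enforced by augmenting the prompt with directional descriptions (as discussed in Appendix~\ref{app:method_others}) so that all pose classes have positive prior mass. No obstacle beyond this bookkeeping is expected; the proof is a direct substitution.
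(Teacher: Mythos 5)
Your proof is correct and follows essentially the same route as the paper: factor the $c$-dependent weight out of the $\vx$-integral to obtain the desired marginal, then verify non-negativity and normalization. The added remark on absolute continuity of $f$ with respect to $p(c)$ is a sensible observation that the paper handles identically via the hypothesis $p(c) \neq 0$ and the auxiliary-prompt construction.
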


\begin{proof}[Proof of Lemma~\ref{lm:weight}]
 To adjust the marginal distribution $p(c)$ to the target distribution $f(c)$, we apply a weighting function $w(c)$ to the original joint distribution following importance sampling. The new joint density is given by:
    \begin{equation}
 \tilde{p}(\boldsymbol{x}, c) = w(c) p(\boldsymbol{x}, c).
    \end{equation}
 The marginal distribution of $c$ under $\tilde{p}(\boldsymbol{x}, c)$ is:
    \begin{equation}
 \tilde{p}(c) = \int \tilde{p}(\boldsymbol{x}, c) d\boldsymbol{x} = \int w(c) p(\boldsymbol{x}, c) d\boldsymbol{x} = w(c) p(c).
    \end{equation}
 To satisfy $\tilde{p}(c) = f(c)$, we set $w(c) = \frac{f(c)}{p(c)}$. Substituting this into the expression for $\tilde{p}(\boldsymbol{x}, c)$ gives~\Eqref{eq:rjoint}. Since $f(c)$, $p(c)$, and $p(\boldsymbol{x}, c)$ are non-negative, $\tilde{p}(\boldsymbol{x}, c) \geq 0$. To validate normalization, we compute:
    \begin{equation}
        \int \int \tilde{p}(\boldsymbol{x}, c) d\boldsymbol{x} dc = \int \frac{f(c)}{p(c)} \left( \int p(\boldsymbol{x}) p(c | \boldsymbol{x}) d\boldsymbol{x} \right) dc = 1.
    \end{equation}
 This confirms that $\tilde{p}(\boldsymbol{x}, c)$ is a valid probability distribution, which completes the proof.
\end{proof}

Below we provide proof of Theorem~\ref{thm:rpx}.

\begin{proof}[Proof of Theorem~\ref{thm:rpx}]
According to Lemma~\ref{lm:weight}, the rectified joint distribution $\tilde{p}(\boldsymbol{x}, c)$ satisfies that the marginal distribution $\tilde{p}(c)=f(c)$. The rectified data density $\tilde{p}(\boldsymbol{x})$ is obtained by marginalizing $\tilde{p}(\boldsymbol{x}, c)$ over $c$ as follow:
    \begin{equation}
 \tilde{p}(\boldsymbol{x})=\int \tilde{p}(\boldsymbol{x},c) dc=\int\frac{f(c)}{p(c)}p(\boldsymbol{x})p(c|\boldsymbol{x}) dc=p(\boldsymbol{x})\int\frac{f(c)}{p(c)}p(c|\boldsymbol{x}) dc
    \end{equation}
 This completes the derivation of $\tilde{p}(\boldsymbol{x})$.
\end{proof}

\subsubsection{Corollary of Theorem~\ref{thm:rpx}}

We generalize the conclusions of Theorem~\ref{thm:rpx} to conditional distributions as follows.

\begin{corollary}\label{cr:rpx_cond}

For the conditional case, we can extend the result to $\tilde{p}(\boldsymbol{x}|y)$ as:
    \begin{equation}
 \tilde{p}(\boldsymbol{x}|y) = p(\boldsymbol{x}|y) \int \frac{f(c|y)}{p(c|y)} p(c | \boldsymbol{x}, y) dc.
    \end{equation}
 This follows directly from the general form by conditioning on $y$.
\end{corollary}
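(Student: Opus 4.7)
The plan is to mirror the unconditional argument of Theorem~\ref{thm:rpx} verbatim, treating the text prompt $y$ as a fixed parameter so that every probability in the proof of Lemma~\ref{lm:weight} and Theorem~\ref{thm:rpx} carries an additional ``$|y$'' in its conditioning bar. Concretely, I would work throughout with the conditional joint $p(\boldsymbol{x},c\mid y)$, its conditional marginals $p(\boldsymbol{x}\mid y)$ and $p(c\mid y)$, and the conditional posterior $p(c\mid\boldsymbol{x},y)$, all of which inherit the identity $p(\boldsymbol{x},c\mid y)=p(\boldsymbol{x}\mid y)\,p(c\mid\boldsymbol{x},y)$ from the chain rule.

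First, I would prove a conditional analogue of Lemma~\ref{lm:weight}: under the support assumption $p(c\mid y)>0$ on the support of $f(c\mid y)$, define the weighting function $w(c\mid y)=f(c\mid y)/p(c\mid y)$ and set
\begin{equation}
\tilde{p}(\boldsymbol{x},c\mid y) \;=\; w(c\mid y)\,p(\boldsymbol{x},c\mid y).
\end{equation}
Integrating out $\boldsymbol{x}$ gives $\tilde{p}(c\mid y)=w(c\mid y)\,p(c\mid y)=f(c\mid y)$, which is exactly the target conditional marginal. Non-negativity is immediate, and normalization follows from
\begin{equation}
\int\!\!\int \tilde{p}(\boldsymbol{x},c\mid y)\,d\boldsymbol{x}\,dc
\;=\; \int f(c\mid y)\,dc \;=\; 1.
\end{equation}

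Next, I would marginalize $\tilde{p}(\boldsymbol{x},c\mid y)$ over $c$. Substituting the chain-rule factorization $p(\boldsymbol{x},c\mid y)=p(\boldsymbol{x}\mid y)\,p(c\mid\boldsymbol{x},y)$ yields
\begin{equation}
\tilde{p}(\boldsymbol{x}\mid y)
\;=\; \int \tilde{p}(\boldsymbol{x},c\mid y)\,dc
\;=\; p(\boldsymbol{x}\mid y)\int \frac{f(c\mid y)}{p(c\mid y)}\,p(c\mid\boldsymbol{x},y)\,dc,
\end{equation}
which is the stated formula. This completes the derivation.

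There is no substantive obstacle: the only care needed is the support condition $p(c\mid y)>0$ on $\mathrm{supp}(f(\cdot\mid y))$, which is the conditional version of the non-vanishing hypothesis in Lemma~\ref{lm:weight} and is precisely what the auxiliary multi-view prompts in Appendix~\ref{app:method_others} are designed to guarantee. Apart from this caveat, the argument is a purely mechanical conditionalization of the proof of Theorem~\ref{thm:rpx}.
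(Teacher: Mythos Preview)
Your proposal is correct and follows essentially the same approach as the paper's own proof: both define the conditional weight $w(c\mid y)=f(c\mid y)/p(c\mid y)$, reweight the conditional joint $p(\boldsymbol{x},c\mid y)$, verify the target marginal, and then marginalize over $c$ using the chain-rule factorization $p(\boldsymbol{x},c\mid y)=p(\boldsymbol{x}\mid y)\,p(c\mid\boldsymbol{x},y)$. Your version is slightly more explicit about normalization and the support caveat, but the argument is the same.
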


\begin{proof}[Proof of Corollary~\ref{cr:rpx_cond}]
 Analogous to Lemma~\ref{lm:weight}, we aim to rectify the conditional marginal distribution $p(c|y)$ to the target distribution $f(c|y)$. By $\tilde{p}(c|y) = \int p(c,\boldsymbol{x}|y) dx = w(c|y)p(c|y) = f(c|y)$, we derive $w(c|y)=\frac{f(c|y)}{p(c|y)}$. The rectified distribution $\tilde{p}(\boldsymbol{x}|y)$ is then expressed by integrating over $c$ as follows:
    \begin{equation}
 \tilde{p}(\boldsymbol{x}|y) = \int \tilde{p}(c,\boldsymbol{x}|y) dc=\int\frac{f(c|y)}{p(c|y)}p(c, \boldsymbol{x}|y) dc=p(\boldsymbol{x}|y) \int \frac{f(c|y)}{p(c|y)} p(c|\boldsymbol{x},y) dc.
    \end{equation}
 This completes the derivation of $\tilde{p}(\boldsymbol{x}|y)$.
\end{proof}

\subsubsection{Proof of Theorem~\ref{thm:rpx0t_cond}}
We provide the proof for deriving the rectified density for different time steps.
\begin{remark}
 Our primary objective is to obtain the rectified density at $t=0$, i.e., $\tilde{p}_0(\boldsymbol{x}_0 \mid y)$, without imposing a uniform distribution across all other noise states. Consequently, for any $t>0$, the density $\tilde{p}_t(\boldsymbol{x}_t|y)$ should be derived from a transition from $\tilde{p}_t(\boldsymbol{x}_0|y)$, as detailed in the following proof.
\end{remark}

\begin{proof}[Proof of Theorem~\ref{thm:rpx0t_cond}]
 First, we consider the $t=0$. According to Corollary~\ref{cr:rpx_cond}, we have:
\begin{equation}\label{eq:rpx0_cond}
 \tilde{p}_0(\boldsymbol{x}_0|y) = p_0(\boldsymbol{x}_0|y) \int \frac{f(c|y)}{p_0(c|y)} p(c | \boldsymbol{x}_0, y) dc.
\end{equation}
For any $t \in [1, T]$, the probability of noisy images is given by:
\begin{equation}\label{eq:rpx1t_cond}
    \begin{aligned}
        \tilde{p}_t(\boldsymbol{x}_t|y) &= \int\tilde{p}_0(\boldsymbol{x}_0|y)p_{t0}(\boldsymbol{x}_t|\boldsymbol{x}_0)d\boldsymbol{x}_0 \\
        &= \int \left[\int f(c|y) p(\boldsymbol{x}_0|c, y)dc\right]p_{t0}(\boldsymbol{x}_t|\boldsymbol{x}_0)d\boldsymbol{x}_0 \\
        &\overset{(a)}{=} \int f(c|y) \left[\int p(\boldsymbol{x}_0|c, y) p_{t0}(\boldsymbol{x}_t|\boldsymbol{x}_0) d \boldsymbol{x}_0\right] dc \\
        &\overset{(b)}{=} \int f(c|y) \left[\int p(\boldsymbol{x}_0|c, y) p_{t0}(\boldsymbol{x}_t|\boldsymbol{x}_0, c, y) d \boldsymbol{x}_0\right] dc \\
        &= \int f(c|y) \left[\int p(\boldsymbol{x}_t, \boldsymbol{x}_0|c, y)d \boldsymbol{x}_0\right] dc \\
        &= \int f(c|y)p(\boldsymbol{x}_t|c, y) dc \\
        &= p_t(\boldsymbol{x}_t|y) \int \frac{f(c|y)}{p_t(c|y)} p(c | \boldsymbol{x}_t, y) dc,
    \end{aligned}
\end{equation}
where (a) is according to Fubini's theorem, (b) is based on the that the forward process proceeds according to the original scheme, unaffected by text or pose conditions. By combining \eqref{eq:rpx0_cond} and \eqref{eq:rpx1t_cond}, we conclude that for any $t \in [0, T]$, \eqref{eq:rpx0t_cond} holds, completing the proof.
\end{proof}

\subsubsection{Proof of Corollary~\ref{cr:usd_gradient}}
This corollary directly follows from Theorem 2 as proposed by VSD~\citep{wang2024prolificdreamer}.
\begin{proof}
Theorem~2 by VSD establishes that the update rule for each particle $\theta_\tau$ at ODE time $\tau$ within a Wasserstein gradient flow is given by:
\begin{equation}\label{eq:vsd_update}
 \frac{\mathrm{d}\theta_\tau}{\mathrm{d}\tau}=\mathbb{E}_{t,\boldsymbol{\epsilon},c}\left[\sigma_t\omega(t)\left(\nabla_{\boldsymbol{x}_t}\log p_t(\boldsymbol{x}_t|y^c)-\nabla_{\boldsymbol{x}_t}\log q_t^{\mu_\tau}(\boldsymbol{x}_t|c,y)\right)\frac{\partial\boldsymbol{g}(\theta_\tau,c)}{\partial\theta_\tau}\right].
\end{equation}
In the case of rectified distribution, the update rule is modified as follows:
\begin{equation}\label{eq:usd_update}
 \frac{\mathrm{d}\theta_\tau}{\mathrm{d}\tau}=\mathbb{E}_{t,\boldsymbol{\epsilon},c}\left[\sigma_t\omega(t)\left(\nabla_{\boldsymbol{x}_t}\log \tilde{p}_t(\boldsymbol{x}_t|y)-\nabla_{\boldsymbol{x}_t}\log q_t^{\mu_\tau}(\boldsymbol{x}_t|c,y)\right)\frac{\partial\boldsymbol{g}(\theta_\tau,c)}{\partial\theta_\tau}\right],
\end{equation}
which can be further simplified as:

\begin{equation}
    \begin{aligned}
        \frac{d\theta_\tau}{d\tau}
        &= \mathbb{E}_{t,\boldsymbol{\epsilon},c}\left[
            \sigma_t\omega(t)\left(\nabla_{\boldsymbol{x}_t}\log \left[ p(\boldsymbol{x}_t|y) r(\boldsymbol{x}_t|y) \right]-\nabla_{\boldsymbol{x}_t}\log q_t^{\mu_\tau}(\boldsymbol{x}_t|c,y)\right)\frac{\partial\boldsymbol{g}(\theta_\tau,c)}{\partial\theta_\tau}\right] \\
        &= \mathbb{E}_{t,\boldsymbol{\epsilon},c}\left[
            \sigma_t\omega(t)\left(\nabla_{\boldsymbol{x}_t}\log p(\boldsymbol{x}_t|y) -\nabla_{\boldsymbol{x}_t}\log q_t^{\mu_\tau}(\boldsymbol{x}_t|c,y) + \nabla_{\boldsymbol{x}_t}\log r(\boldsymbol{x}_t|y) \right)\frac{\partial\boldsymbol{g}(\theta_\tau,c)}{\partial\theta_\tau}\right] \\
        &= \mathbb{E}_{t,\boldsymbol{\epsilon},c}\left[
            \omega(t)\left(\boldsymbol{\epsilon}_\phi(\boldsymbol{x}_t,t,c,y)-\boldsymbol{\epsilon}_\text{pretrain}(\boldsymbol{x}_t,t,y)\right)\frac{\partial\boldsymbol{g}(\theta_\tau,c)}{\partial\theta_\tau} + \omega(t)\frac{\sigma_t}{\alpha_t}\nabla_{\theta_\tau}\log r(\boldsymbol{x}_t|y)\right]. \\
    \end{aligned}
\end{equation}
Therefore, $\theta^{(i)}$ can be update by $\theta^{(i)}\leftarrow\theta^{(i)}-\eta\nabla_\theta\mathcal{L}_\text{USD}(\theta^{(i)})$, where:
\begin{equation}
    \begin{gathered}
        \nabla_\theta\mathcal{L}_\text{USD} = \nabla_\theta \mathcal{L}_\text{VSD}^\prime(\theta) - \mathbb{E}_{t,\boldsymbol{\epsilon},c}\left[\omega(t)\frac{\sigma_t}{\alpha_t}\nabla_{\theta}\log r(\boldsymbol{x}_t|y)\right], \\
        \nabla_\theta\mathcal{L}_\text{VSD}^\prime = \mathbb{E}_{t,\boldsymbol{\epsilon},c}\left[\omega(t)\left(\boldsymbol{\epsilon}_\text{pretrain}(\boldsymbol{x}_t,t,y)-\boldsymbol{\epsilon}_\phi(\boldsymbol{x}_t,t,c,y)\right)\frac{\partial\boldsymbol{g}(\theta,c)}{\partial\theta}\right].
    \end{gathered}
\end{equation}
Proof complete.

\end{proof}

\section{Supplementary Experiments}\label{app:main_exps}

This appendix contains supplementary experimental details.~\ref{app:main_exps_metrics} provides a detailed discussion of metric calculations.~\ref{app:main_exps_hyper} analyzes the influence of hyperparameters.~\ref{app:main_exps_templates} visualizes the user-provided templates.~\ref{app:main_exps_cross} demonstrates our method's scalability through cross-domain rectification.~\ref{app:main_exps_control} explores special cases to showcase practical applications.~\ref{app:main_exps_runtime} and~\ref{app:main_exps_study} present additional performance results. Note that the prompt list, comparisons, and results
are left in Appendix~\ref{app:prompt} and Appendix~\ref{app:compare}.

\subsection{Metrics}\label{app:main_exps_metrics}
\subsubsection{Fréchet Inception Distance for Generation Quality}
The Fréchet Inception Distance~\citep{heusel2017gans} (FID) serves as our primary metric for evaluating generation quality by measuring the statistical distance between two image distributions. In our evaluation process, we compare our generated images against two different target distributions:

\paragraph{Standard FID.}
To evaluate the quality gap between our generated 3D scenes and pretrained Stable Diffusion~\citep{rombach2022high} outputs, we establish a target distribution by sampling 60 images per prompt across 22 different prompts, yielding a total test set of 1,320 images. To mitigate pose bias in this distribution, we incorporate directional text descriptions such as ``front view,'' ``side view,'' and ``back view'' during sampling. However, we note that some pose bias remains, with frontal views being over-synthesized. For our generated distribution, we render 5 images from each 3D scene using uniformly sampled camera poses. The standard FID score is then calculated between this rendered set and our target distribution.

\paragraph{Unbiased FID (uFID).}
To address the inherent pose bias present in standard FID evaluation, we develop an alternative metric called uFID. This approach begins with manual annotation of camera poses for all images in the standard test set. Using these annotations, we resample the test set to ensure equal representation across different poses. While this resampling strategy may result in some image duplication, it yields a more balanced distribution of viewpoints and textures. The uFID score is then computed between this pose-balanced dataset and our rendered images, providing a more equitable assessment of generation quality across different viewpoints.

\subsubsection{Categorial Entropy for Geometric Consistency}
We evaluate the Multi-Face Janus problem using a pose classifier to analyze viewpoint consistency across different perspectives. The underlying principle is straightforward: in a scene with severe Multi-Face Janus issues, different viewpoints will yield similar classification probabilities because they share similar features. This similarity typically manifests as a strong bias toward a particular class (usually the canonical pose) in the average classification probability across viewpoints. Conversely, a geometrically consistent 3D scene will produce more diverse classification probabilities that average toward a more uniform distribution across viewpoints.

Based on this insight, we use the entropy of the average classification probability from different views as a metric for measuring the severity of multiplicity problems. Higher entropy values indicate greater diversity in information across viewpoints, while lower values suggest excessive pattern duplication in the generated scene. Formally, for each prompt, we calculate the entropy $R_{ent}$ as:
\begin{equation}
    \begin{gathered}
        \bar{\boldsymbol{p}} = \frac{1}{n_v}\sum_{i=0}^{n_v} \boldsymbol{\Phi} (\boldsymbol{g}(\theta,c_i)), \\
        R_{ent} = \frac{1}{n_{\bar{c}}}\sum_{i=0}^{n_{\bar{c}}} \bar{p_i}\log\bar{p_i},
    \end{gathered}
\end{equation}
where $\boldsymbol{g}$ represents the renderer and $\boldsymbol{\Phi}$ the pose classifier. The probability vector $\bar{\boldsymbol{p}}$ consists of components $\bar{p_i}$. $n_v$ denotes the number of sampled views (default: 10), and $n_{\bar{c}}$ represents the number of pose categories. We implement this entropy evaluation using two different classification approaches:

\paragraph{CLIP Entropy (cEnt).} This method employs CLIP~\citep{radford2021learning} as the classifier, using three textual descriptions that combine the original prompt with directional modifiers: ``from front view'', ``from side view'', and ``from back view''. These descriptions establish three distinct categories for classifying input images.

\paragraph{Pose Entropy (pEnt).} This variant utilizes our specially designed pose classifier for categorization, providing a more direct assessment of pose-related geometric consistency.

\subsubsection{CLIP Score for Textual Alignment.}
To evaluate textual alignment, we calculate the CLIP score by measuring the negative cosine similarity between CLIP feature embeddings of the rendered images and their corresponding text prompts, following~\citet{wang2024taming}.

\subsection{Hyperparameter Analysis}\label{app:main_exps_hyper}

\begin{figure*}[t!]
    \centering

    \begin{minipage}[c]{1\linewidth}
        \centering
        \parbox{1\linewidth}{\centering ``A DSLR photo of a beagle in a detective's outfit.''}\vspace{-3mm}
        \begin{minipage}[c]{0.02\linewidth}
            \rotatebox[origin=l]{90}{\centering $n_i=2$}
        \end{minipage}
        \subfloat{
            \begin{minipage}[c]{0.31\linewidth}
                \includegraphics[width=\linewidth]{./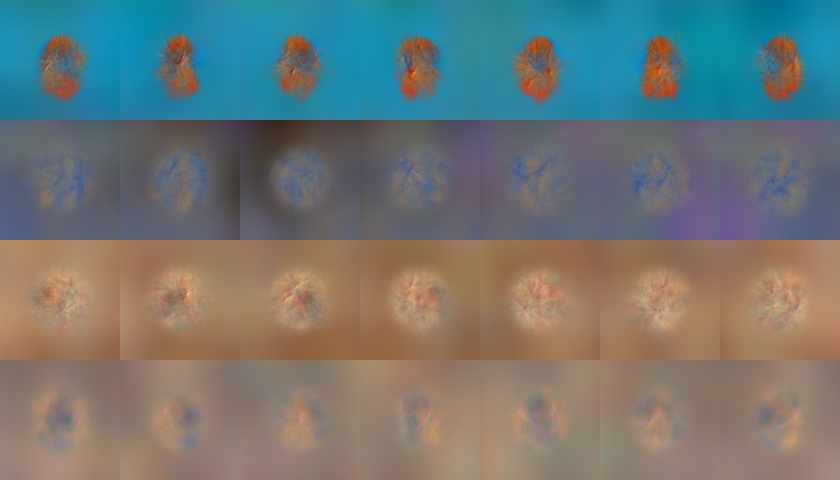}
            \end{minipage}
        }\hspace{-2mm}
        \subfloat{
            \begin{minipage}[c]{0.31\linewidth}
                \includegraphics[width=\linewidth]{./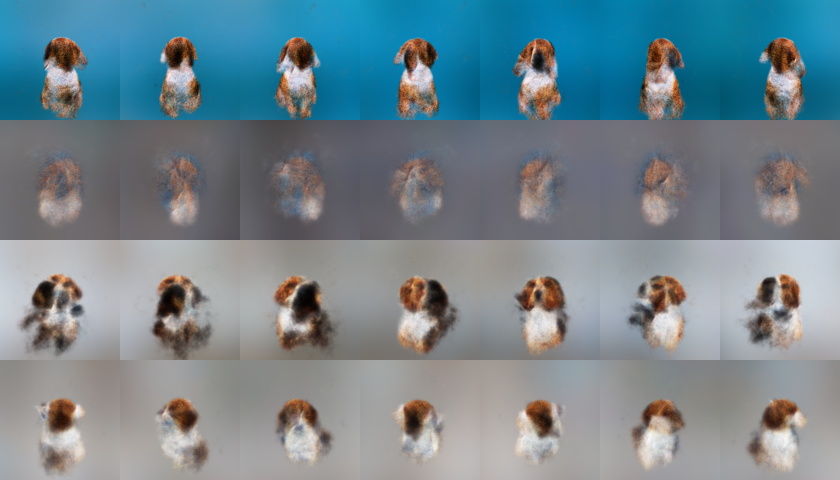}
            \end{minipage}
        }\hspace{-2mm}
        \subfloat{
            \begin{minipage}[c]{0.31\linewidth}
                \includegraphics[width=\linewidth]{./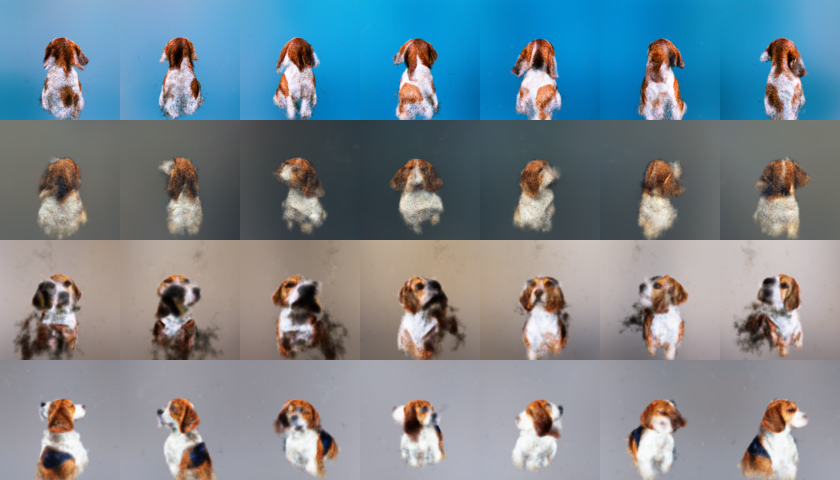}
            \end{minipage}
        }
    \end{minipage}\vspace{-3.5mm}

    \begin{minipage}[c]{1\linewidth}
        \centering
        \begin{minipage}[c]{0.02\linewidth}
            \rotatebox[origin=l]{90}{\centering $n_i=10$}
        \end{minipage}
        \subfloat{
            \begin{minipage}[c]{0.31\linewidth}
                \includegraphics[width=\linewidth]{./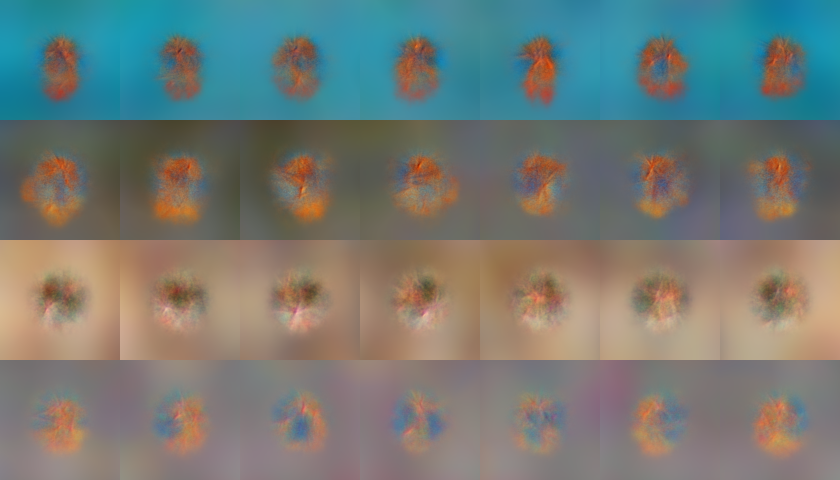}
            \end{minipage}
        }\hspace{-2mm}
        \subfloat{
            \begin{minipage}[c]{0.31\linewidth}
                \includegraphics[width=\linewidth]{./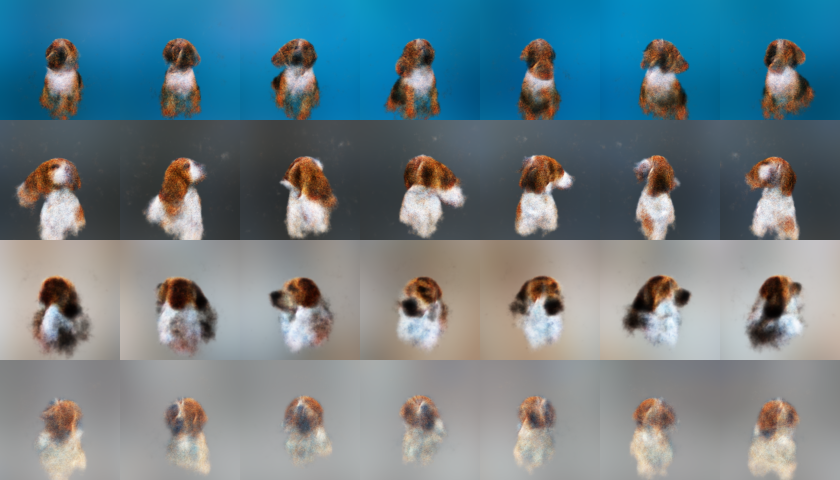}
            \end{minipage}
        }\hspace{-2mm}
        \subfloat{
            \begin{minipage}[c]{0.31\linewidth}
                \includegraphics[width=\linewidth]{./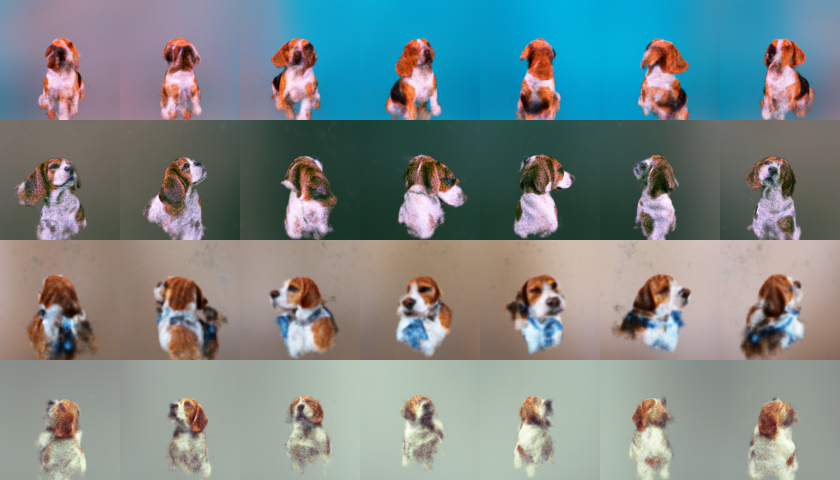}
            \end{minipage}
        }
    \end{minipage}

    \vspace{1mm}
    \begin{minipage}[c]{1\linewidth}
        \centering
        \parbox{1\linewidth}{\centering ``A portrait of Groot, head, HDR, photorealistic, 8K.''}\vspace{-3mm}
        \begin{minipage}[c]{0.02\linewidth}
            \rotatebox[origin=l]{90}{\centering $n_i=2$}
        \end{minipage}
        \subfloat{
            \begin{minipage}[c]{0.31\linewidth}
                \includegraphics[width=\linewidth]{./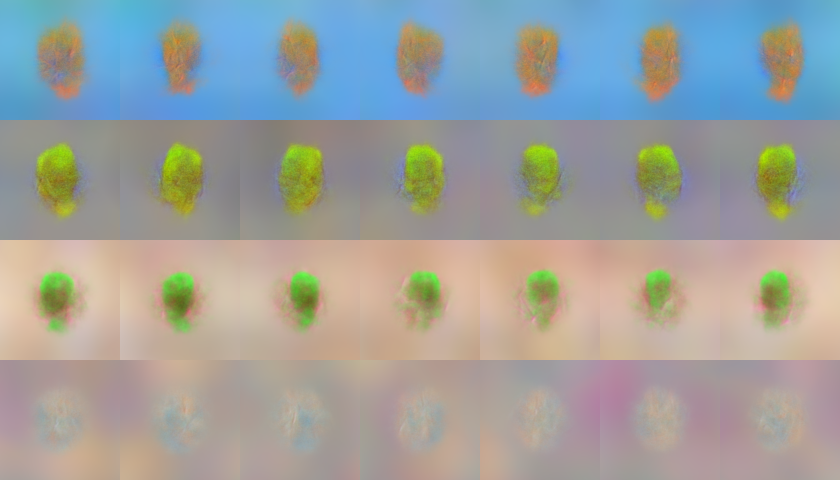}
            \end{minipage}
        }\hspace{-2mm}
        \subfloat{
            \begin{minipage}[c]{0.31\linewidth}
                \includegraphics[width=\linewidth]{./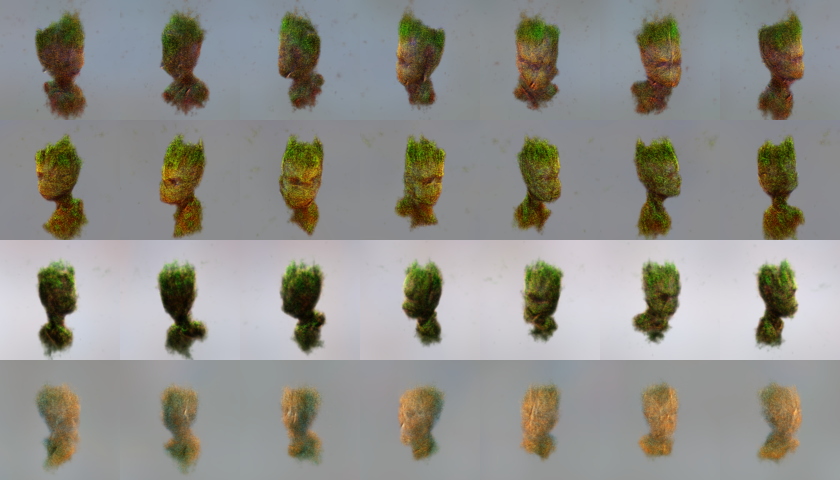}
            \end{minipage}
        }\hspace{-2mm}
        \subfloat{
            \begin{minipage}[c]{0.31\linewidth}
                \includegraphics[width=\linewidth]{./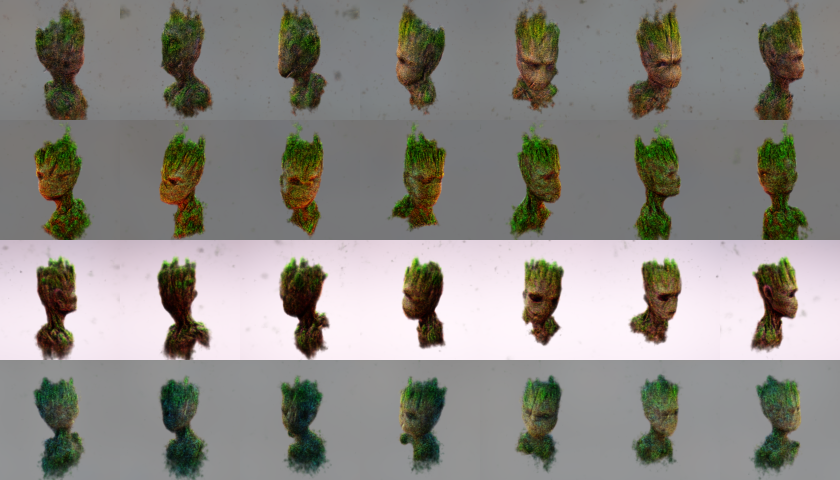}
            \end{minipage}
        }
    \end{minipage}\vspace{-3.5mm}

    \setcounter{subfigure}{0}
    \begin{minipage}[c]{1\linewidth}
        \centering
        \begin{minipage}[c]{0.02\linewidth}
            \rotatebox[origin=l]{90}{\centering $n_i=10$}
        \end{minipage}
        \subfloat[iters=5,000]{
            \begin{minipage}[c]{0.31\linewidth}
                \includegraphics[width=\linewidth]{./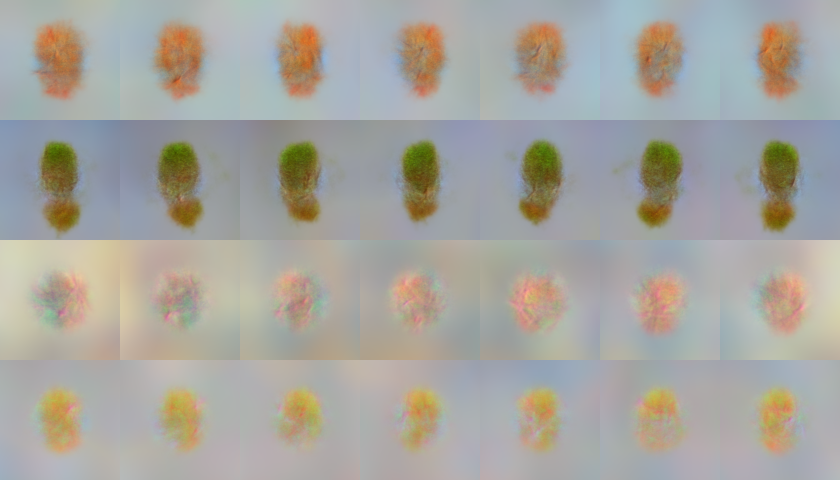}
            \end{minipage}
        }\hspace{-2mm}
        \subfloat[iters=15,000]{
            \begin{minipage}[c]{0.31\linewidth}
                \includegraphics[width=\linewidth]{./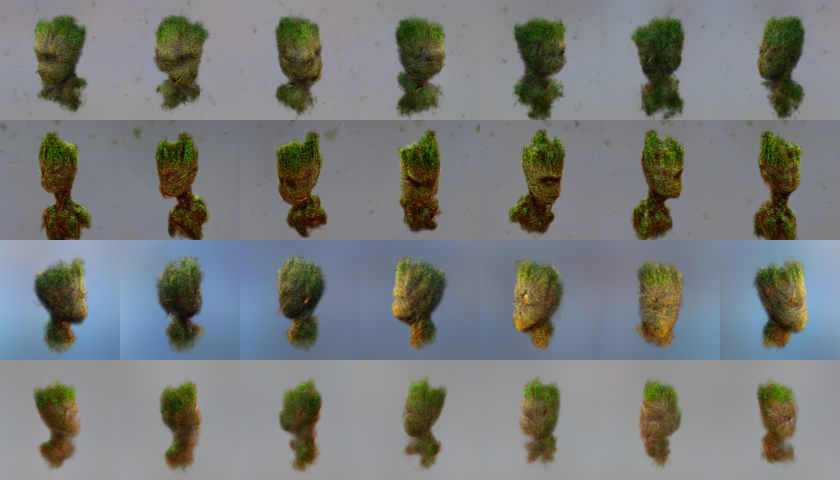}
            \end{minipage}
        }\hspace{-2mm}
        \subfloat[iters=25,000]{
            \begin{minipage}[c]{0.31\linewidth}
                \includegraphics[width=\linewidth]{./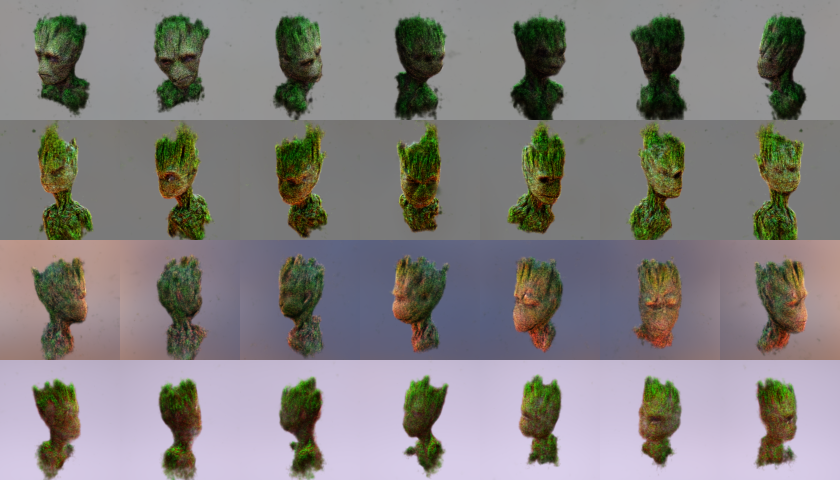}
            \end{minipage}
        }
    \end{minipage}

    \caption{Influence of number of BNF intervals $n_i$ for multiple particles. Experiments compare performance with $n_i=2$ versus $n_i=10$ intervals. The finer granularity of BNF intervals ($n_i=10$) leads to better synchronization between particle generations, mitigating the Multi-Face Janus Problem.}
    \label{fig:app_main_hypern}
\end{figure*}

\begin{figure*}[t!]
    \centering

    \begin{minipage}[c]{1\linewidth}
        \centering
        \parbox{1\linewidth}{\centering $n_{ema}=10,000$}\vspace{-3mm}
        \subfloat{
            \begin{minipage}[c]{1\linewidth}
                \includegraphics[width=\linewidth]{./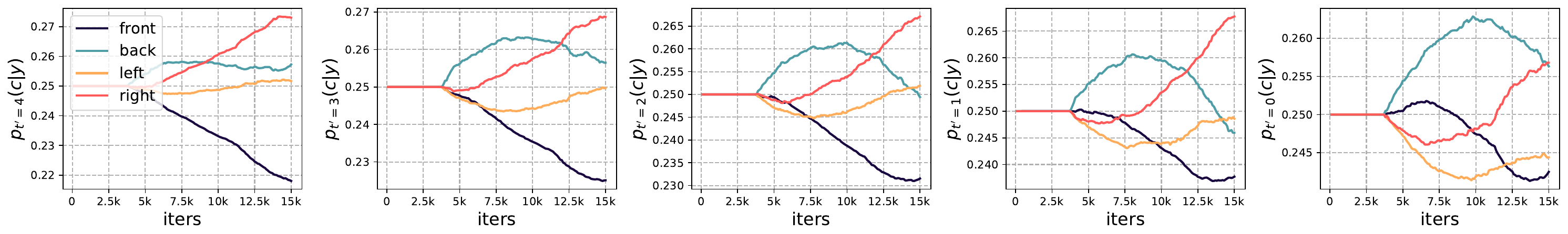}
            \end{minipage}
        }

        \parbox{1\linewidth}{\centering $n_{ema}=1,000$}\vspace{-3mm}
        \subfloat{
            \begin{minipage}[c]{1\linewidth}
                \includegraphics[width=\linewidth]{./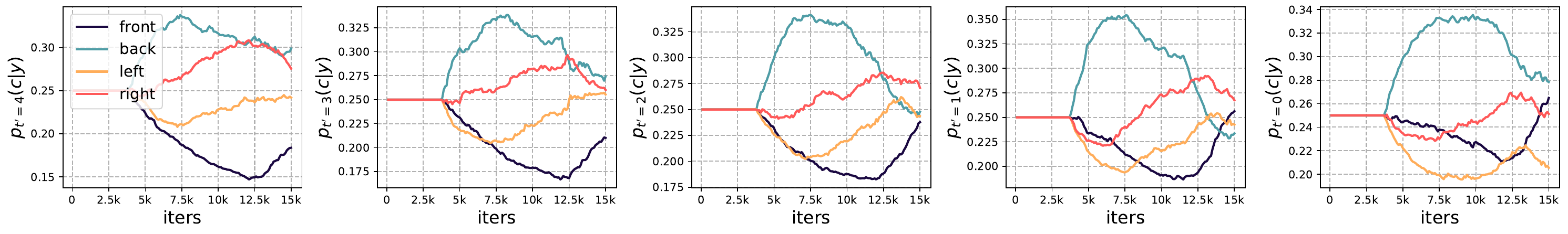}
            \end{minipage}
        }

        \parbox{1\linewidth}{\centering $n_{ema}=100$}\vspace{-3mm}
        \subfloat{
            \begin{minipage}[c]{1\linewidth}
                \includegraphics[width=\linewidth]{./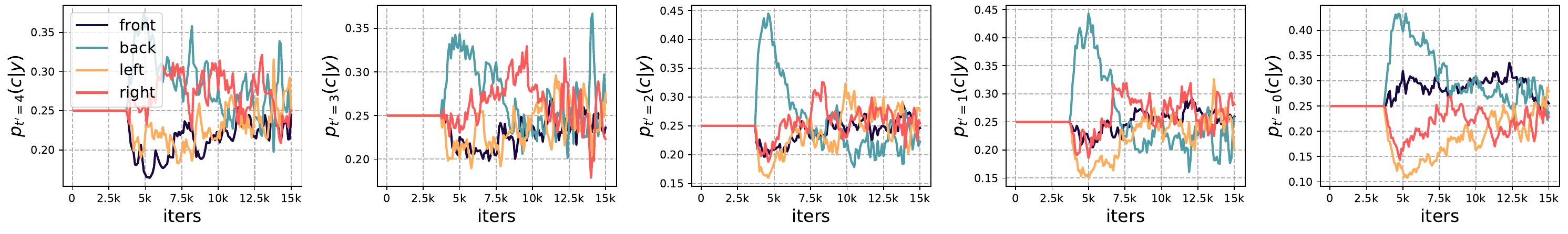}
            \end{minipage}
        }
        \\
        
    \end{minipage}

    \caption{The impact of varying the number of valid EMA steps $n_{ema}$ on pose probability distributions over time. Each curve shows $\bar{p}_t(\bar{c}|y)$, representing how pose distributions evolve during training. For visualization clarity, we map the continuous time $t$ to discrete indices $t'$, where each index spans 100 timesteps (e.g., $t'=0$ corresponds to $t \in [0, 100]$). A lower $n_{ema}$ enables timely correction of distribution bias, resulting in more stable probability distributions (\ie, please zoom in for a better view of the y-axis).}
    \label{fig:app_main_hyperema}
\end{figure*}
\begin{figure*}[h]
    \centering

    \begin{minipage}[c]{1\linewidth}
        \centering
        \parbox{1\linewidth}{\centering ``A kangaroo wearing boxing gloves.''}
        \vspace{-7mm}

        \subfloat{
            \begin{minipage}[c]{1\linewidth}
                \centering
                \rotatebox[origin=l]{90}{\parbox[c][0.03\linewidth]{0.135\linewidth}{\centering $n_{\bar{c}}=3$}}
                \includegraphics[width=0.95\linewidth]{./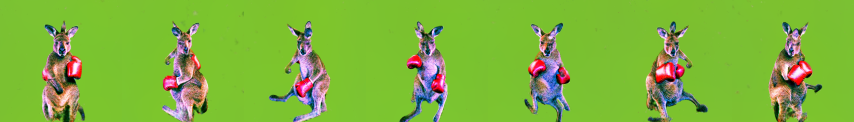}
            \end{minipage}
        }\vspace{-5mm}\\
        \subfloat{
            \begin{minipage}[c]{1\linewidth}
                \centering
                \rotatebox[origin=l]{90}{\parbox[c][0.03\linewidth]{0.135\linewidth}{\centering $n_{\bar{c}}=4$}}
                \includegraphics[width=0.95\linewidth]{./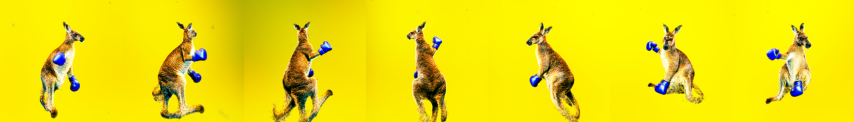}
            \end{minipage}
        }\vspace{-5mm}\\
        \subfloat{
            \begin{minipage}[c]{1\linewidth}
                \centering
                \rotatebox[origin=l]{90}{\parbox[c][0.03\linewidth]{0.135\linewidth}{\centering $n_{\bar{c}}=6$}}
                \includegraphics[width=0.95\linewidth]{./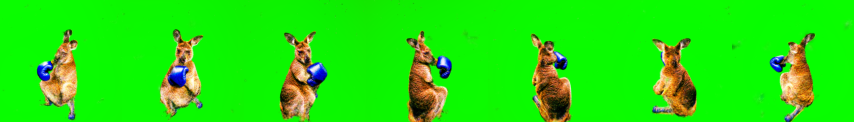}
            \end{minipage}
        }
    \end{minipage}

    \caption{Influence of the number of templates $n_{\bar{c}}$. Varying $n_{\bar{c}}$ shows minimal impact on performance, as our pose classifier is designed for coarse categorization and cannot effectively distinguish fine-grained poses at higher $n_{\bar{c}}$ values.}
    \label{fig:app_main_np}
\end{figure*}

\textbf{Influence of BNF interval $n_i$.} The BNF time scheduler controls the sampling intervals for score distillation. A larger $n_i$ provides finer control over the sampling process, making it more closely resemble the DDPM sampling process. Through our single-particle optimization experiments, we observe that adjusting $n_i$ values demonstrates minimal influence on training dynamics until the terminal phase of optimization. During this final stage, larger $n_i$ configurations become particularly impactful. For example, when the sampling time step is limited to the interval $[100, 0]$, the model tends to overfit, often producing oversaturated colors. We typically use early stopping to prevent this. However, in simultaneous multi-particle optimization, we observe that a larger BNF $n_i$ improves training quality. As shown in Fig.~\ref{fig:app_main_hypern}, the training process for different BNF values demonstrates this effect. When $n_i=2$, the training is imbalanced across particles. For example, in the case of ``beagle,'' the first particle learns the information more quickly, while the fourth particle of ``Groot'' progresses more slowly. This imbalance causes the fastest particle to converge to one mode of the distribution, such as all back views for ``beagle'', while other particles converge to different modes (e.g., front view, back view, etc.). This results in an undesirable outcome, where the overall distribution across particles appears uniform, but each individual particle suffers from the Multi-Face Janus problem—one is biased toward front views, while another is biased toward back views, which contradicts our goal. To resolve this issue, we ensure that all particles converge at a consistent rate. Our BNF time scheduling controls the interval of sampling time steps, where a larger BNF value ($n_i$) results in time steps being sampled within a narrower range during the early training phase. This promotes a more uniform optimization process across particles. As a result, training consistency improves, reducing the risk of biased convergence. In Fig.~\ref{fig:app_main_hypern}, we show the generation effect with $n_i=10$, which better aligns the modified distribution across all four particles.

\textbf{Influence of valid EMA steps $n_{ema}$.} The valid number of EMA steps, $n_{ema}$, ensures that the weights of the last $n_{ema}$ steps account for more than 90\% of the total, while weights beyond this threshold are negligible and can be approximated as invalid. We set $n_{ema}$ to 100, 1000, and 10000, respectively, to simulate distribution updates at different speeds. We plot the probability curve $\bar{p}_t(\bar{c}|y)$ during the first five EMA intervals (i.e., $t \in [0, 500]$ steps for original sampling, with the first half held at 0 because the BNF scheduler has not yet sampled the current interval). The results in Fig.~\ref{fig:app_main_hyperema} show that an overly long EMA time step prevents the model from capturing the real-time pose distribution, leading to an inability to correct distribution bias. In practice, we typically choose $n_{ema}=100$ to ensure effective estimation of the distribution.

\begin{figure*}[t!]
    \centering
    \parbox{1\linewidth}{\centering ``Samurai koala bear.''}

    \begin{minipage}[c]{0.22\linewidth}
        \subfloat[$p_0(c|y)$]{
            \begin{minipage}[c]{1\linewidth}
                \centering
                \includegraphics[width=0.95\linewidth]{./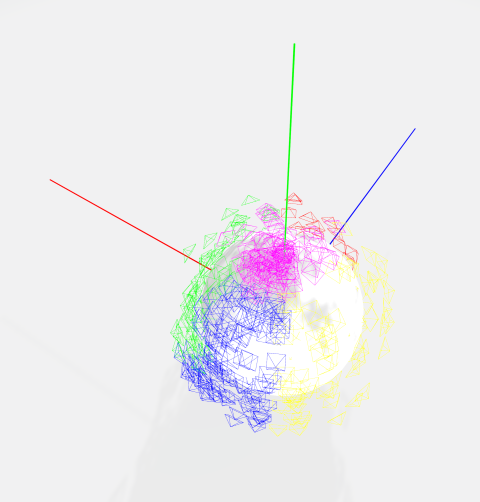}
            \end{minipage}
        }
    \end{minipage}
    \begin{minipage}[c]{0.77\linewidth}
        \centering

        \subfloat{
            \begin{minipage}[c]{1\linewidth}
                \centering
                \rotatebox[origin=l]{90}{\parbox[c][0.03\linewidth]{0.135\linewidth}{\centering $q$}}
                \includegraphics[width=0.95\linewidth]{./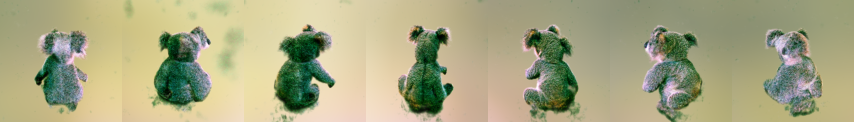}
            \end{minipage}
        }\vspace{-5mm}\\\setcounter{subfigure}{1}
        \subfloat[Results]{
            \begin{minipage}[c]{1\linewidth}
                \centering
                \rotatebox[origin=l]{90}{\parbox[c][0.03\linewidth]{0.135\linewidth}{\centering USD}}
                \includegraphics[width=0.95\linewidth]{./resource/ablation/pyxt/pyx0_bear.png}
            \end{minipage}
        }
    \end{minipage}

    \caption{Comparison with the $q$ sampling and our USD. $q$ sampling modifies uniform view sampling using estimated pose probabilities $p_0(c|y)$. (a) shows the estimated $p_0(c|y)$ distribution for the prompt, with probabilities approximately $[0.1, 0.6, 0.15, 0.15]$ for the front (red), back (blue), left (yellow), and right (green) views. (b) shows the corresponding generation results.}
    \label{fig:app_main_qx}
\end{figure*}

\begin{figure*}[t!]
    \centering
    \parbox{1\linewidth}{\centering ``DSLR Camera, photography, dslr, camera, noobie, box-modeling, maya.''}
    \begin{minipage}[c]{0.49\linewidth}
        \centering
        \subfloat{
            \begin{minipage}[c]{0.24\linewidth}
                \includegraphics[width=\linewidth]{./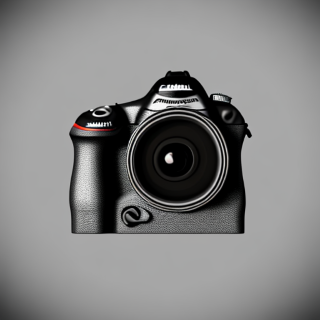}
            \end{minipage}\hspace{-1.mm}
            \begin{minipage}[c]{0.24\linewidth}
                \includegraphics[width=\linewidth]{./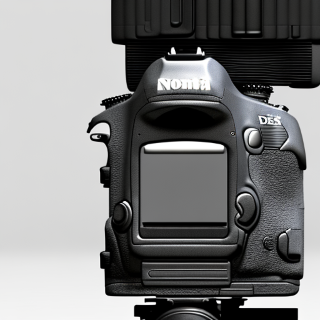}
            \end{minipage}\hspace{-1.mm}
            \begin{minipage}[c]{0.24\linewidth}
                \includegraphics[width=\linewidth]{./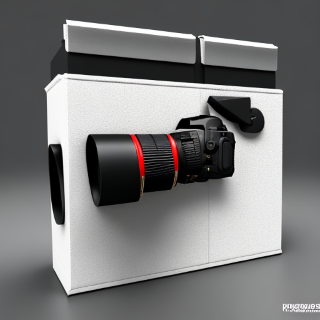}
            \end{minipage}\hspace{-1.mm}
            \begin{minipage}[c]{0.24\linewidth}
                \includegraphics[width=\linewidth]{./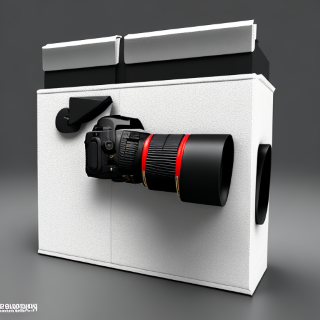}
            \end{minipage}
        }
    \end{minipage}
    \begin{minipage}[c]{0.49\linewidth}
        \centering
        \subfloat{
            \begin{minipage}[c]{0.24\linewidth}
                \includegraphics[width=\linewidth]{./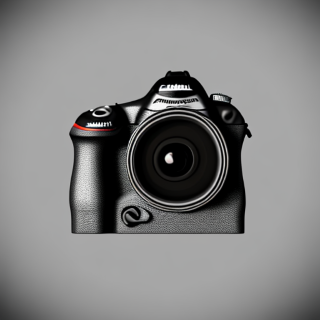}
            \end{minipage}\hspace{-1.mm}
            \begin{minipage}[c]{0.24\linewidth}
                \includegraphics[width=\linewidth]{./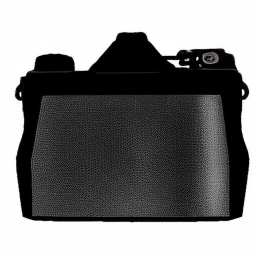}
            \end{minipage}\hspace{-1.mm}
            \begin{minipage}[c]{0.24\linewidth}
                \includegraphics[width=\linewidth]{./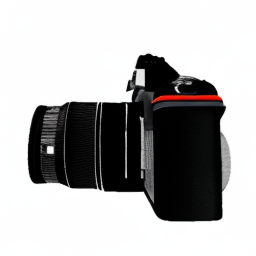}
            \end{minipage}\hspace{-1.mm}
            \begin{minipage}[c]{0.24\linewidth}
                \includegraphics[width=\linewidth]{./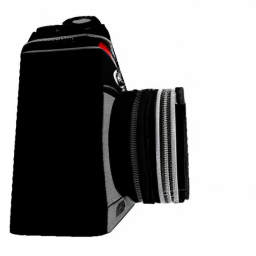}
            \end{minipage}
        }
    \end{minipage}

    \parbox{1\linewidth}{\centering ``A DSLR photo of a chimpanzee dressed like Napoleon Bonaparte.''}
    \begin{minipage}[c]{0.49\linewidth}
        \centering
        \subfloat{
            \begin{minipage}[c]{0.24\linewidth}
                \includegraphics[width=\linewidth]{./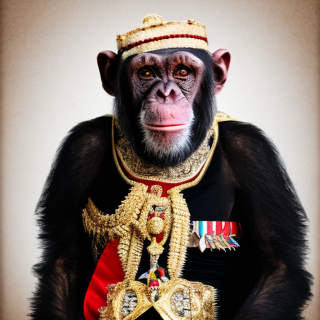}
            \end{minipage}\hspace{-1.mm}
            \begin{minipage}[c]{0.24\linewidth}
                \includegraphics[width=\linewidth]{./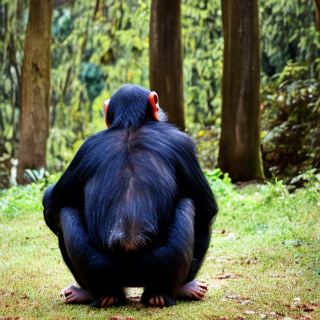}
            \end{minipage}\hspace{-1.mm}
            \begin{minipage}[c]{0.24\linewidth}
                \includegraphics[width=\linewidth]{./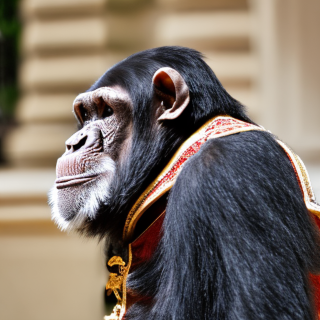}
            \end{minipage}\hspace{-1.mm}
            \begin{minipage}[c]{0.24\linewidth}
                \includegraphics[width=\linewidth]{./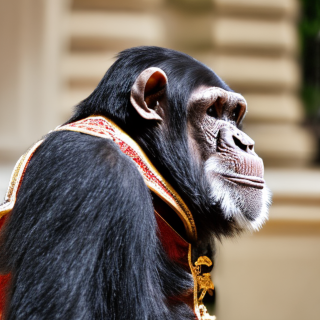}
            \end{minipage}
        }
    \end{minipage}
    \begin{minipage}[c]{0.49\linewidth}
        \centering
        \subfloat{
            \begin{minipage}[c]{0.24\linewidth}
                \includegraphics[width=\linewidth]{./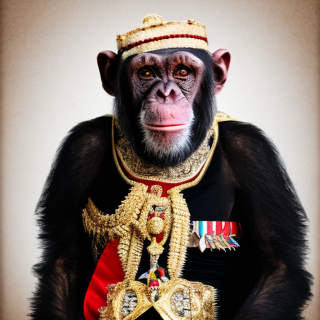}
            \end{minipage}\hspace{-1.mm}
            \begin{minipage}[c]{0.24\linewidth}
                \includegraphics[width=\linewidth]{./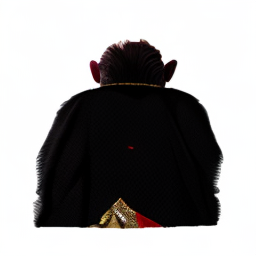}
            \end{minipage}\hspace{-1.mm}
            \begin{minipage}[c]{0.24\linewidth}
                \includegraphics[width=\linewidth]{./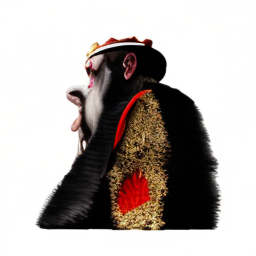}
            \end{minipage}\hspace{-1.mm}
            \begin{minipage}[c]{0.24\linewidth}
                \includegraphics[width=\linewidth]{./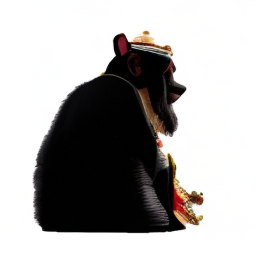}
            \end{minipage}
        }
    \end{minipage}

    \parbox{1\linewidth}{\centering ``A DSLR photo of a squirrel playing guitar.''}
    \begin{minipage}[c]{0.49\linewidth}
        \centering
        \subfloat{
            \begin{minipage}[c]{0.24\linewidth}
                \includegraphics[width=\linewidth]{./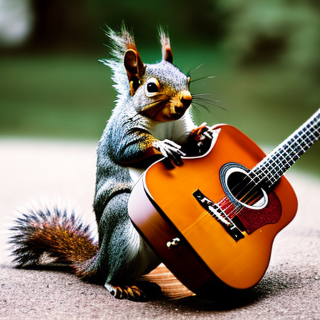}
            \end{minipage}\hspace{-1.mm}
            \begin{minipage}[c]{0.24\linewidth}
                \includegraphics[width=\linewidth]{./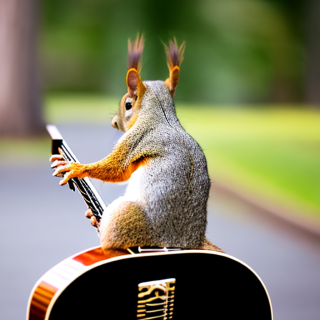}
            \end{minipage}\hspace{-1.mm}
            \begin{minipage}[c]{0.24\linewidth}
                \includegraphics[width=\linewidth]{./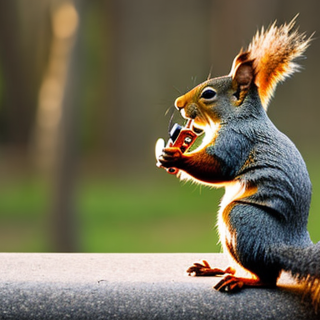}
            \end{minipage}\hspace{-1.mm}
            \begin{minipage}[c]{0.24\linewidth}
                \includegraphics[width=\linewidth]{./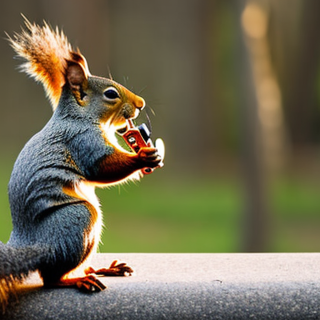}
            \end{minipage}
        }
    \end{minipage}
    \begin{minipage}[c]{0.49\linewidth}
        \centering
        \subfloat{
            \begin{minipage}[c]{0.24\linewidth}
                \includegraphics[width=\linewidth]{./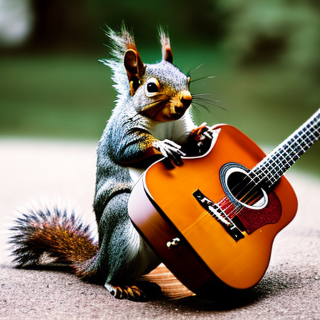}
            \end{minipage}\hspace{-1.mm}
            \begin{minipage}[c]{0.24\linewidth}
                \includegraphics[width=\linewidth]{./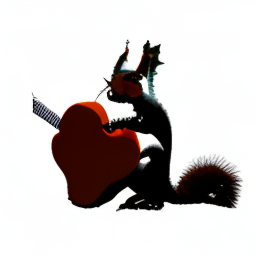}
            \end{minipage}\hspace{-1.mm}
            \begin{minipage}[c]{0.24\linewidth}
                \includegraphics[width=\linewidth]{./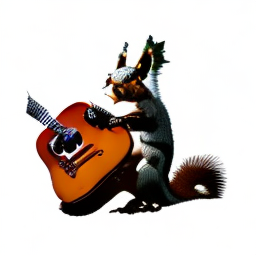}
            \end{minipage}\hspace{-1.mm}
            \begin{minipage}[c]{0.24\linewidth}
                \includegraphics[width=\linewidth]{./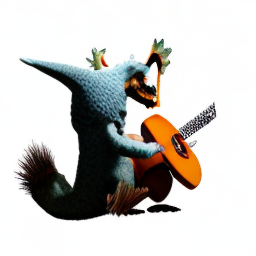}
            \end{minipage}
        }
    \end{minipage}

    \parbox{1\linewidth}{\centering ``A zombie bust.''}
    \begin{minipage}[c]{0.49\linewidth}
        \centering\setcounter{subfigure}{0}
        \subfloat[Stable Diffusion]{
            \begin{minipage}[c]{0.24\linewidth}
                \includegraphics[width=\linewidth]{./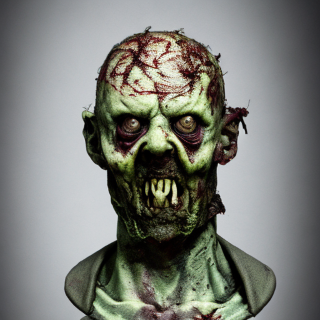}
            \end{minipage}\hspace{-1.mm}
            \begin{minipage}[c]{0.24\linewidth}
                \includegraphics[width=\linewidth]{./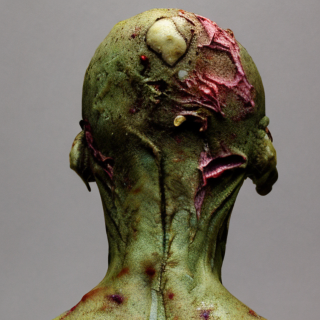}
            \end{minipage}\hspace{-1.mm}
            \begin{minipage}[c]{0.24\linewidth}
                \includegraphics[width=\linewidth]{./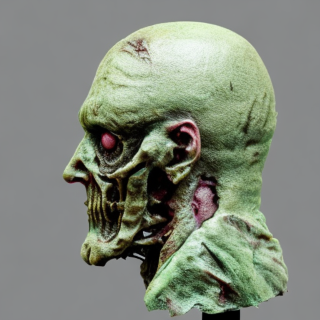}
            \end{minipage}\hspace{-1.mm}
            \begin{minipage}[c]{0.24\linewidth}
                \includegraphics[width=\linewidth]{./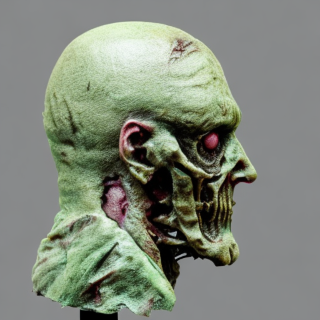}
            \end{minipage}
        }
    \end{minipage}
    \begin{minipage}[c]{0.49\linewidth}
        \centering
        \subfloat[Zero 1-to-3]{
            \begin{minipage}[c]{0.24\linewidth}
                \includegraphics[width=\linewidth]{./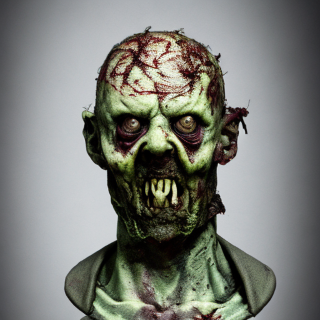}
            \end{minipage}\hspace{-1.mm}
            \begin{minipage}[c]{0.24\linewidth}
                \includegraphics[width=\linewidth]{./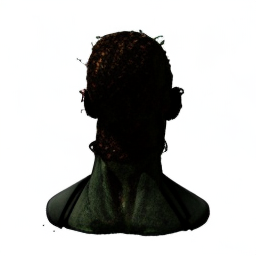}
            \end{minipage}\hspace{-1.mm}
            \begin{minipage}[c]{0.24\linewidth}
                \includegraphics[width=\linewidth]{./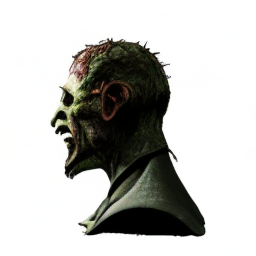}
            \end{minipage}\hspace{-1.mm}
            \begin{minipage}[c]{0.24\linewidth}
                \includegraphics[width=\linewidth]{./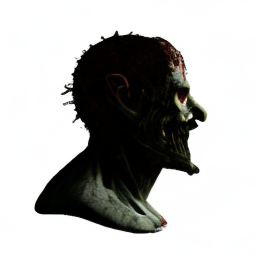}
            \end{minipage}
        }
    \end{minipage}
    \caption{Examples of template images. (a) Templates manually curated from Stable Diffusion~\citep{rombach2022high} generations. (b) Multi-view images obtained using Zero 1-to-3~\citep{liu2023zero}. Note that the multi-view image need not be completely 3D consistent and high-quality.}
    \label{fig:app_ref_img}
\end{figure*}

\textbf{Influence of number of views $n_{\bar{c}}$.} We study how varying the number of pose categories ($n_{\bar{c}}$ = 3, 4, and 6) affects overall performance. Our experiments, as illustrated in Fig.~\ref{fig:app_main_np}, show that increasing the number of categories from 4 to 6 produces minimal improvement in 3D consistency. This performance plateau stems from our pose classifier's design, which lacks sensitivity to fine-grained pose distinctions, creating a natural ceiling when presented with more detailed pose categories.

\textbf{Sampling $q$.} We examine an alternative approach for sampling the target distribution $q$ using estimated probability $p_t(c|y)$, similar to the first-stage methodology in DreamControl~\citep{huang2024dreamcontrol}. As shown in Fig.~\ref{fig:app_main_qx}, this sampling strategy overemphasizes densely populated regions of the pose space while providing insufficient supervision for less frequent viewpoints. This imbalance leads to compromised geometric consistency in the generated results.

\begin{figure*}[t]
    \centering
    \parbox{1\linewidth}{\centering ``A photo of a beagle's head wearing a beret.''}
    \begin{minipage}[c]{0.24\linewidth}
        \centering
        \subfloat[Sketches]{
            \begin{minipage}[c]{0.5\linewidth}
                \includegraphics[width=\linewidth]{./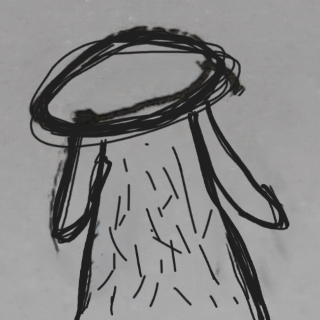}\vspace{-1.mm}\\
                \includegraphics[width=\linewidth]{./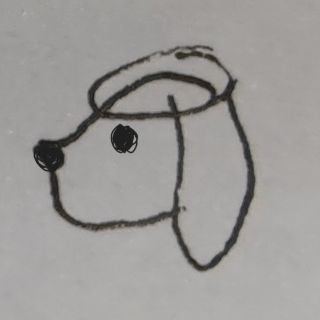}
            \end{minipage}\hspace{-1.mm}
            \begin{minipage}[c]{0.5\linewidth}
                \includegraphics[width=\linewidth]{./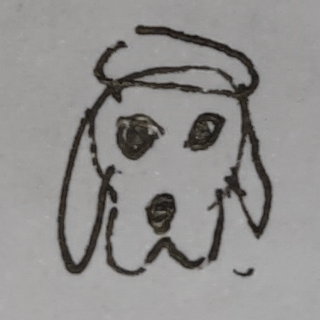}\vspace{-1.mm}\\
                \includegraphics[width=\linewidth]{./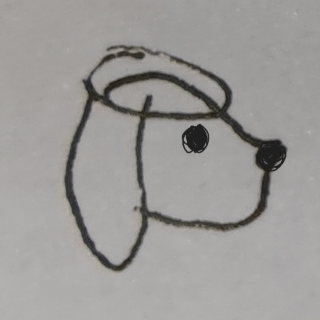}
            \end{minipage}
        }
    \end{minipage}
    \begin{minipage}[c]{0.74\linewidth}
        \centering
        \subfloat[Samples for test]{
            \begin{minipage}[c]{0.166\linewidth}
                \includegraphics[width=\linewidth]{./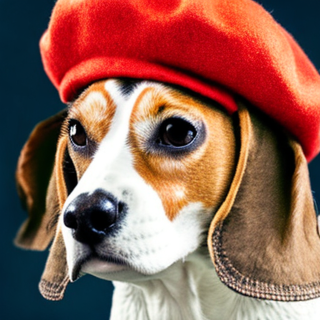}\vspace{-1.mm}\\
                \includegraphics[width=\linewidth]{./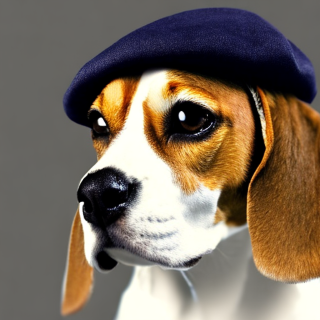}
            \end{minipage}\hspace{-1.mm}
            \begin{minipage}[c]{0.166\linewidth}
                \includegraphics[width=\linewidth]{./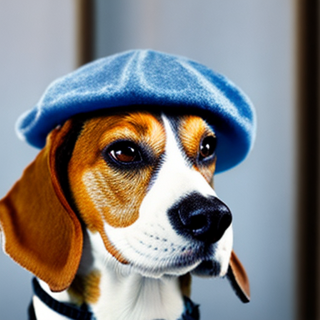}\vspace{-1.mm}\\
                \includegraphics[width=\linewidth]{./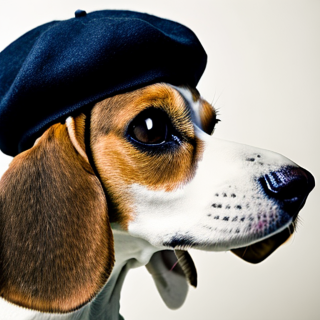}
            \end{minipage}\hspace{-1.mm}
            \begin{minipage}[c]{0.166\linewidth}
                \includegraphics[width=\linewidth]{./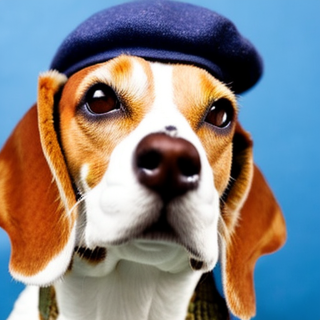}\vspace{-1.mm}\\
                \includegraphics[width=\linewidth]{./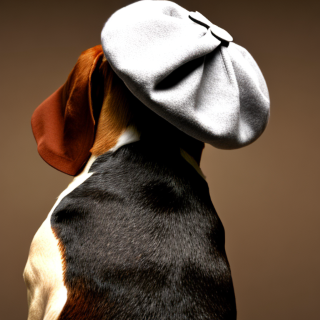}
            \end{minipage}\hspace{-1.mm}
            \begin{minipage}[c]{0.166\linewidth}
                \includegraphics[width=\linewidth]{./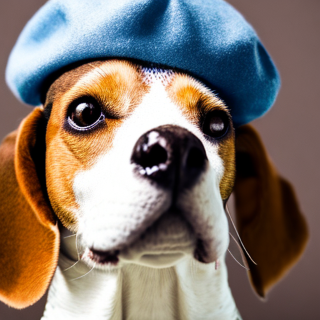}\vspace{-1.mm}\\
                \includegraphics[width=\linewidth]{./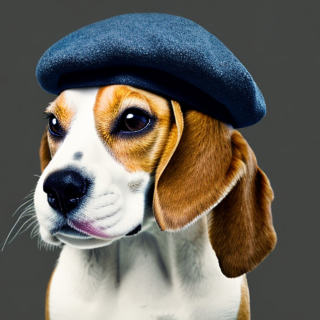}
            \end{minipage}\hspace{-1.mm}
            \begin{minipage}[c]{0.166\linewidth}
                \includegraphics[width=\linewidth]{./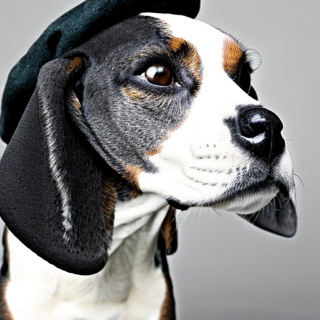}\vspace{-1.mm}\\
                \includegraphics[width=\linewidth]{./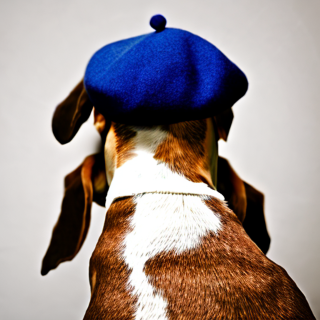}
            \end{minipage}\hspace{-1.mm}
            \begin{minipage}[c]{0.166\linewidth}
                \includegraphics[width=\linewidth]{./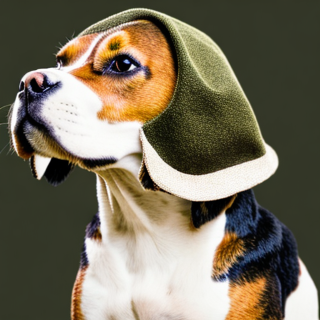}\vspace{-1.mm}\\
                \includegraphics[width=\linewidth]{./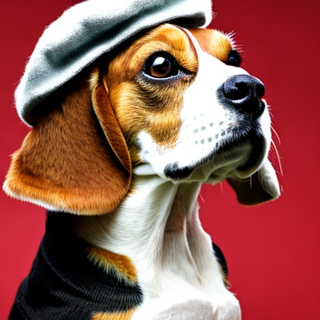}
            \end{minipage}
        }
    \end{minipage}\vspace{-3mm}

    \subfloat[Cross modality rectification]{
        \begin{minipage}[c]{1\linewidth}
            \centering
            \includegraphics[width=\linewidth]{./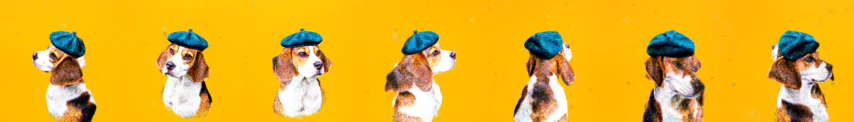}
        \end{minipage}
    }

    \vspace{-2mm}
    \subfloat[Cross modality control]{
        \begin{minipage}[c]{1\linewidth}
            \centering
            \includegraphics[width=\linewidth]{./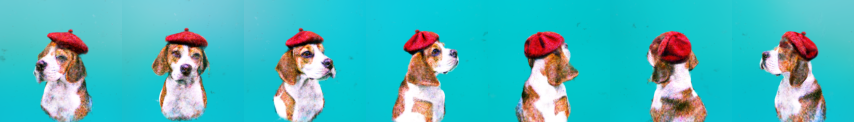}
        \end{minipage}
    }
    \caption{Cross-modality generation results. (a) Amateur sketches drawn by the authors. (b) Initial Stable Diffusion generations for evaluation (quantitative results in Table~\ref{tab:app_cross}). (c) Results with sketch-based rectification. (d) Generations with pose-controlled sketch guidance, demonstrating significant quality improvements.}
    \label{fig:app_cross}
\end{figure*}



\subsection{Templates for Pose Classifier}\label{app:main_exps_templates}
We generate reference template images using Stable Diffusion~\citep{rombach2022high} by combining the original prompt with directional text modifiers: ``from front view,'' ``from side view,'' and ``from back view.'' With the set of generated images, users can simply select some with different poses as the templates. Additionally, users also have the option to generate templates using Zero-1-to-3~\citep{liu2023zero}.

As shown in Fig.~\ref{fig:app_ref_img}, the template images do not need to be 3D consistent. They only need to convey basic pose information. Our experiments in Appendix~\ref{app:main_exps_cross} demonstrate this flexibility through a cross-modal experiment where we successfully use simple sketches as templates.

\begin{table}[t]
    \centering
    \caption{Cross-modal classifier predictions for images in Fig.~\ref{fig:app_cross}(b). Images are arranged in two rows: IDs 1-6 (top) and IDs 7-12 (bottom). Cell colors indicate manually annotated ground-truth pose categories. Note: Ambiguous images (\eg, ID 1) that could be classified as multiple categories (\eg, "front" or "left") are highlighted with lighter color. \textbf{Bold} values indicate maximum classifier probabilities.}
    \label{tab:app_cross}
    \begin{tabular}{ccccc||ccccc} 
        \hline
        \textbf{ID} & \textbf{Back} & \textbf{Front} & \textbf{Left} & \textbf{Right} & \textbf{ID} & \textbf{Back} & \textbf{Front} & \textbf{Left} & \textbf{Right} \\
        \hline
        1  & 0.0127 & \cellcolor{green!10}\textbf{0.7895} & \cellcolor{green!10}0.1773 & 0.0204  & 7  & 0.0017 & \cellcolor{green!10}0.4411 & \cellcolor{green!10}\textbf{0.5447} & 0.0123  \\
        2  & 0.0037 & \cellcolor{green!10}0.2843 & 0.0217 & \cellcolor{green!10}\textbf{0.6902}  & 8  & 0.0100 & 0.1306 & 0.0069 & \cellcolor{green!20}\textbf{0.8522}  \\
        3  & 0.0013 & \cellcolor{green!20}\textbf{0.9684} & 0.0152 & 0.0149  & 9  & \cellcolor{green!20}\textbf{0.8361} & 0.0154 & 0.0792 & 0.0691  \\
        4  & 0.0016 & \cellcolor{green!20}\textbf{0.8374} & 0.1291 & 0.0318  & 10  & 0.0006 & \cellcolor{green!20}0.1418 & \textbf{0.8526} & 0.0048  \\
        5  & 0.0781 & 0.0637 & 0.0004 & \cellcolor{green!20}\textbf{0.8575}  & 11  & \cellcolor{green!20}\textbf{0.8979} & 0.0610 & 0.0236 & 0.0173  \\
        6  & 0.0018 & 0.1153 & \cellcolor{green!20}\textbf{0.8795} & 0.0033  & 12  & 0.0393 & \cellcolor{green!10}0.0368 & 0.0055 & \cellcolor{green!10}\textbf{0.9182}  \\
        \hline
    \end{tabular}
\end{table}

\begin{figure*}[t!]
    \centering

    \begin{minipage}[c]{0.95\linewidth}
        \centering
        \parbox{1\linewidth}{\centering ``Samurai koala bear.''}\vspace{-3mm}
        \subfloat{
            \begin{minipage}[c]{0.24\linewidth}
                \includegraphics[width=\linewidth]{./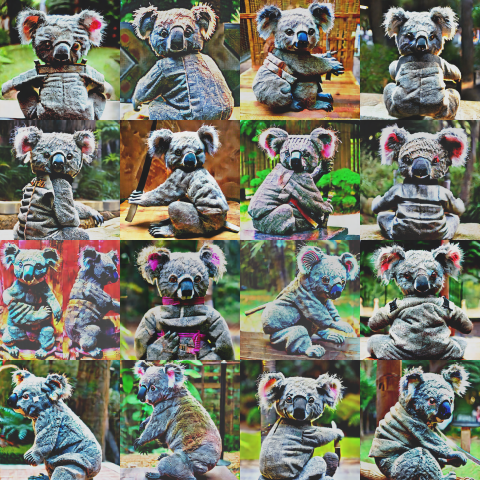}
            \end{minipage}
        }\hspace{-2mm}
        \subfloat{
            \begin{minipage}[c]{0.24\linewidth}
                \includegraphics[width=\linewidth]{./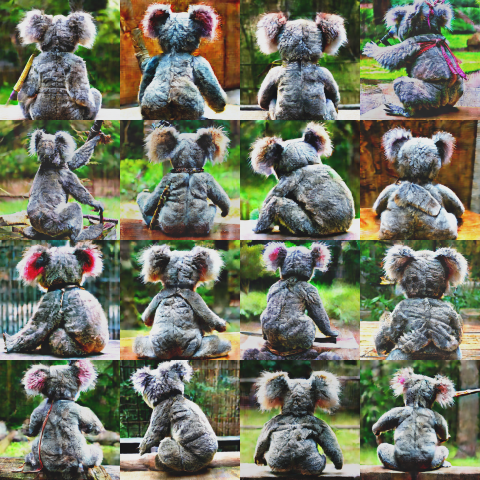}
            \end{minipage}
        }\hspace{-2mm}
        \subfloat{
            \begin{minipage}[c]{0.24\linewidth}
                \includegraphics[width=\linewidth]{./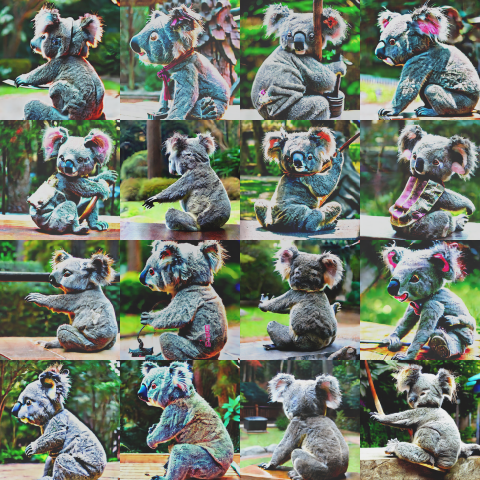}
            \end{minipage}
        }\hspace{-2mm}
        \subfloat{
            \begin{minipage}[c]{0.24\linewidth}
                \includegraphics[width=\linewidth]{./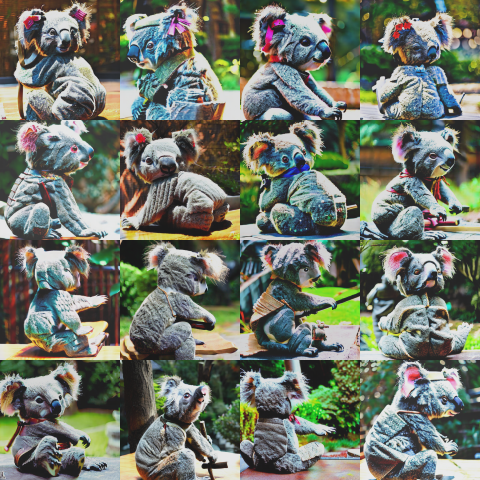}
            \end{minipage}
        }
    \end{minipage}

    \begin{minipage}[c]{0.95\linewidth}
        \centering
        \parbox{1\linewidth}{\centering ``Wes Anderson style Red Panda, reading a book, super cute, highly detailed and colored.''}\vspace{-3mm}\setcounter{subfigure}{0}
        \subfloat[Front]{
            \begin{minipage}[c]{0.24\linewidth}
                \includegraphics[width=\linewidth]{./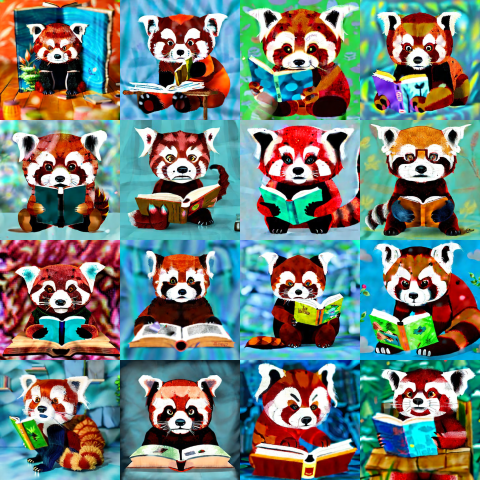}
            \end{minipage}
        }\hspace{-2mm}
        \subfloat[Back]{
            \begin{minipage}[c]{0.24\linewidth}
                \includegraphics[width=\linewidth]{./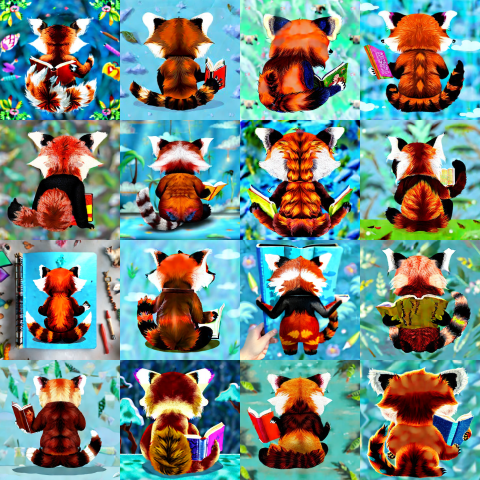}
            \end{minipage}
        }\hspace{-2mm}
        \subfloat[Left]{
            \begin{minipage}[c]{0.24\linewidth}
                \includegraphics[width=\linewidth]{./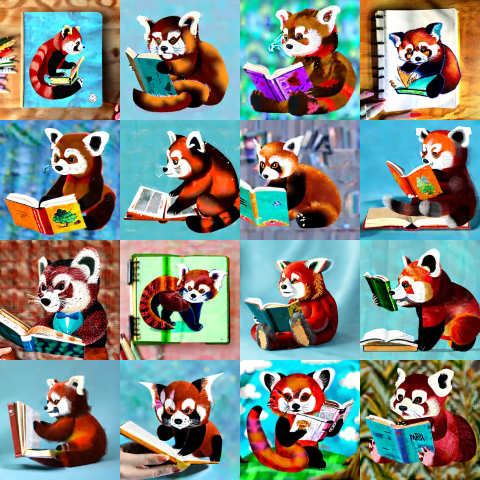}
            \end{minipage}
        }\hspace{-2mm}
        \subfloat[Right]{
            \begin{minipage}[c]{0.24\linewidth}
                \includegraphics[width=\linewidth]{./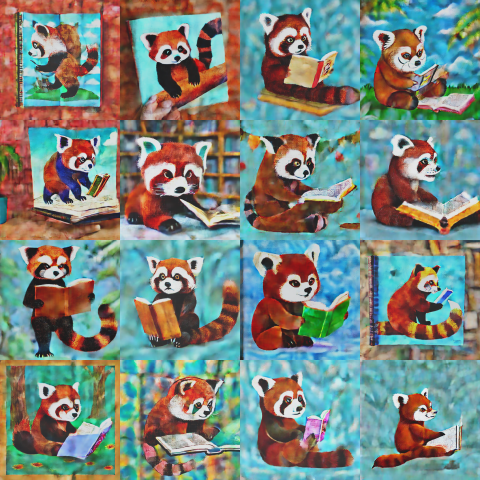}
            \end{minipage}
        }
    \end{minipage}

    \caption{Pose-controlled generation using classifier outputs. Each batch demonstrates generation results using one of the four classifier logits, corresponding to ``front'', ``back'', ``left'', and ``right'' poses, respectively.}
    \label{fig:app_ext}
\end{figure*}

\subsection{Cross Modality Rectification}\label{app:main_exps_cross}
We demonstrate our pose estimator's scalability through a cross-domain experiment using hand-drawn sketches. For this experiment, we draw four sketches (see Fig.~\ref{fig:app_cross}(a)) corresponding to the prompt and construct a pose classifier based on these sketches. When applied to real images, this sketch-based classifier accurately computes both masks and probabilities. In Fig.~\ref{fig:app_cross}(b), we sample some images for testing the performance of the sketch-based classifier. Quantitative results are presented in Table~\ref{tab:app_cross}.

Fig.~\ref{fig:app_cross}(c) and (d) present the 3D generation results. (c) shows scenes generated using manual sketch guidance, while (d) demonstrates results using pose control supervision (detailed in Appendix~\ref{app:main_exps_control}). The latter approach notably enhances the 3D consistency of the output.

\subsection{Controllability as a Special Case of USD}\label{app:main_exps_control}

We further explore conditional control building upon our previous classifier. While our experimental setup considered a uniform distribution of poses across all angles, real-world applications may not require such comprehensive coverage. The number of discrete poses can be reduced when we can specify a more precise range of angles.

In cases where the perspective can be constrained to a narrow range (\eg, front views from -45 to 45 degrees) that can be summarized with only one template image, the rectifier function can be simplified to $r^{ctrl}_\xi(\boldsymbol{x}_t|y) = p_\xi(\bar{c} | \boldsymbol{x}_t, y)$. This simplification is similar to approaches used in PnP~\citep{graikos2022diffusion} and DPS~\citep{chung2022diffusion}, where irrelevant $p_t(\bar{c}|y)$ terms are eliminated. Under these conditions, the generation process is controlled by a single pose, effectively targeting just one specific conditional sub-distribution.

For implementation, we can optimize the classifier using a single logit to achieve this control. While this approach resembles PnP~\citep{graikos2022diffusion}, our method enables multi-particle optimization without requiring specialized classifiers for noisy images, which turns out to be more flexible.

As demonstrated in Fig.~\ref{fig:app_ext}, our approach achieves precise pose control in 2D particle generation. The examples show more efficient control compared to using textual descriptions. We've successfully extended this functionality to both cross-modal (sketch-guided) and 3D-prior based generation, enhancing geometric consistency as shown in Fig.~\ref{fig:app_cross}(d) and Fig.~\ref{fig:app_demo_future}(b). Furthermore, by leveraging DINOv2~\citep{oquab2023dinov2}'s cross-modal matching capabilities, we can explore novel applications. For instance, using airplane images to guide the generation of flying eagles. We believe this opens up promising avenues for expanding the practical applications of our theoretical framework.

\subsection{Runtime Evaluation}\label{app:main_exps_runtime}
\begin{table}[t!]
    \centering
    \caption{Runtime comparisons. All measurements are reported in minutes.}
    \begin{tabular}{ccccccc}
    \hline
    \textbf{USD} & \textbf{VSD} & \textbf{SDS} & \textbf{Debiased-SDS} & \textbf{PerpNeg} & \textbf{ESD} & \textbf{SDS-Bridge} \\ 
    \hline
    297.34 & 180.70 & 43.35 & 43.58 & 45.97 & 201.55 & 73.24 \\ 
    \hline
    \end{tabular}
    \label{tab:app_runtime}
\end{table}
We conduct our experiments at $256\times256$ resolution using a single Nvidia GeForce RTX 4090 GPU. While USD and VSD share the same framework, other comparison methods are implemented using threestudio~\citep{threestudio2023}. To ensure a fair comparison, all methods are evaluated using their default settings to achieve convergence. As shown in Table~\ref{tab:app_runtime}, USD requires longer computation times compared to baseline methods, primarily due to the additional back-propagation through the diffusion U-Net~\citep{ronneberger2015u} required by the rectifier function.

Despite the increased computational overhead, our method's substantial improvements in geometric consistency justify this trade-off. The computational cost remains practical for pose control applications (in Appendix~\ref{app:main_exps_control}), as pose control is typically only necessary during the initial stages of shape formation. Future research directions could focus on developing optimization strategies to enhance computational efficiency while maintaining the method's performance advantages.

\subsection{User Study and Discussions on the Success Rate}\label{app:main_exps_study}

\begin{table}[t!]
    \centering
    \caption{User study on geometric consistency. Experts ranked the generated results from different baseline methods based on geometric consistency. Rankings are converted to scores based on position, with higher-ranked options receiving higher scores.}
    \begin{tabular}{ccccccc}
    \hline
    \textbf{USD} & \textbf{VSD} & \textbf{SDS} & \textbf{Debiased-SDS} & \textbf{PerpNeg} & \textbf{ESD} & \textbf{SDS-Bridge} \\ 
    \hline
    4.80 & 3.27 & 2.81 & 2.44 & 1.96 & 3.21 & 2.50 \\
    \hline
    \end{tabular}
    \label{table:app_userstudy}
\end{table}
\begin{table}[t!]
    \centering
    \caption{Success rates for Janus-free generation. Scoring system assigns penalties of 0.5 for local feature duplication and 1.0 for global feature duplication. While our method effectively mitigates global feature duplication, local duplications occasionally persist due to inherent limitations of score distillation algorithm.}
    \begin{tabular}{ccccccc}
    \hline
    \textbf{Prompts} & \textbf{Mean} & \textbf{Std} & \textbf{Median} & \textbf{Mode} & \textbf{Min} & \textbf{Max} \\ \hline
    ``kangaroo''  & 0.65  & 0.5296  & 0.5  & 0.5  & 0  & 1.5  \\ 
    ``bear''  & 0.94  & 0.5270 & 0.75 & 0.5/1 & 0.5 & 2 \\ \hline
    \end{tabular}
    
    \label{tab:app_success_rate}
    \end{table}

We conduct a user study involving more than 25 human experts to evaluate and rank different methods based on their overall geometric consistency. The average ranks for all methods are reported in Table~\ref{table:app_userstudy}.

Additionally, we analyze the success rate of generating Janus-free models. Given the inherent randomness of score distillation methods, achieving perfect geometric consistency remains challenging. To quantify inconsistencies, we develop a systematic rating system. The system evaluates both global and local features using the formula $R_{score} = (n_{cnt}^g-n_{gt}^g) + 0.5\times(n_{cnt}^l-n_{gt}^l)$, where for global features (such as faces or body), $n_{cnt}^g - n_{gt}^g$ represents the difference between the actual count and the expected count, while for local features (such as legs, arms, tails), each duplicated feature contributes 0.5 to the score. Here, $n_{cnt}^l$ and $n_{gt}^l$ denote the actual and expected counts of local features respectively. Note that the expected count is the correct number that the scene should occur. For instance, in the case of generating a bust, the face's expected count is 1. Table~\ref{tab:app_success_rate} presents the Multi-Face Janus scores for two test prompts, demonstrating our method's performance. We provide a detailed discussion of potential solutions in Appendix~\ref{app:ext_pose}.

\section{Classifier Experiments}\label{app:cls_exps}
\begin{table}[t!]
    \centering
    \caption{Classification performance and ablation studies. We compare three variants: orientation classifier without texture score, texture classifier without orientation score, and the complete model using both scores.}
    \label{table:class}
    \begin{tabular}{lccc}
    \hline
    \textbf{Metric} & \textbf{Full} & \textbf{Orient} & \textbf{Texture} \\ \hline
    Average Accuracy           & 0.7846 & 0.6328 & 0.7699 \\ 
    Average Precision     & 0.7986 & 0.6718 & 0.8013 \\ 
    Average Recall        & 0.7426 & 0.5934 & 0.7396 \\ 
    Average F1 Score      & 0.7439 & 0.5942 & 0.7308 \\ \hline
    \end{tabular}
    
    \end{table}

In this appendix, we evaluate our pose classifier's performance using the annotations described in Sec.~\ref{sec:exp_settings}. We assess both the texture and orientation branches independently. Table~\ref{table:class} presents the classification performance of the main classifier and its two variants. This ablation study validates our chosen classifier architecture.
\section{Validation Experiments on 2D Sampling}\label{app:val_exps}

We validate the performance of uniform score distillation using a set of 2D particles. Starting with 16 initialized particles, we conduct training over 4,000 iterations. The comparison between USD and variational score distillation is illustrated in Fig.~\ref{fig:app_val_2d}. Our results demonstrate that USD successfully achieves a more balanced distribution compared to the biased pre-trained distribution.

\begin{figure*}[t!]
    \centering

    \begin{minipage}[c]{0.48\linewidth}
        \centering
        \parbox{1\linewidth}{\centering ``Samurai koala bear.''}\vspace{-3mm}
        \subfloat[VSD]{
            \begin{minipage}[c]{0.45\linewidth}
                \includegraphics[width=\linewidth]{./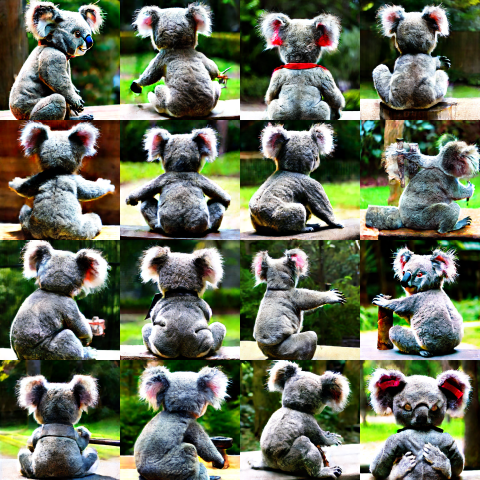}
            \end{minipage}
        }\hspace{-2mm}
        \subfloat[USD]{
            \begin{minipage}[c]{0.45\linewidth}
                \includegraphics[width=\linewidth]{./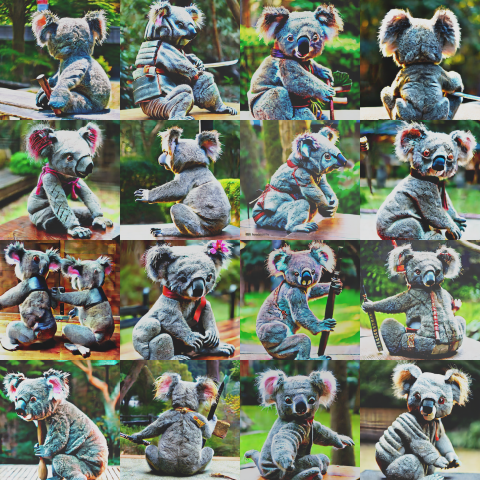}
            \end{minipage}
        }
    \end{minipage}
    \begin{minipage}[c]{0.48\linewidth}
        \centering
        \parbox{1\linewidth}{\centering ``A kangaroo wearing boxing gloves.''}\vspace{-3mm}
        \subfloat[VSD]{
            \begin{minipage}[c]{0.45\linewidth}
                \includegraphics[width=\linewidth]{./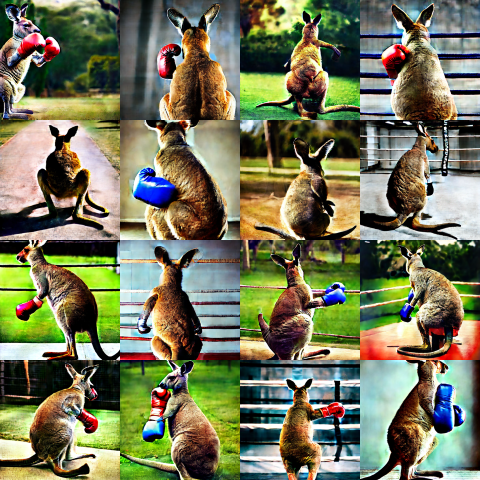}
            \end{minipage}
        }\hspace{-2mm}
        \subfloat[USD]{
            \begin{minipage}[c]{0.45\linewidth}
                \includegraphics[width=\linewidth]{./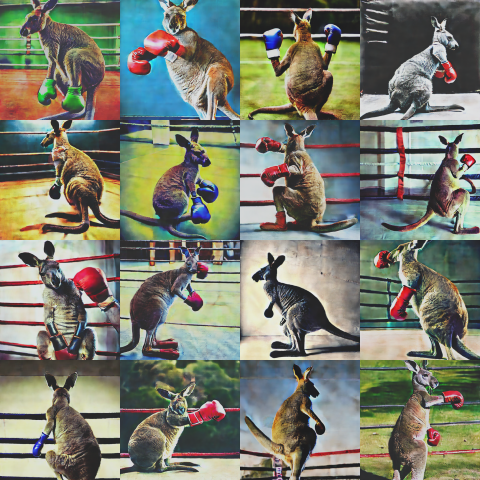}
            \end{minipage}
        }
    \end{minipage}

    \caption{2D score distillation comparing VSD~\citep{wang2024prolificdreamer} and USD. The prompts are augmented with auxiliary view descriptions (``from side view, from back view'') to capture multi-perspective information. Due to the original distribution's bias toward back-view angles, VSD generates predominantly back-view results, while USD successfully rectifies this distributional bias to produce more balanced viewpoints.}
    \label{fig:app_val_2d}
\end{figure*}

Fig.~\ref{fig:app_val_pc} shows the pose distribution statistics across 10 intervals for both USD and VSD. Here, $\bar{p}_t(\bar{c}|y)$, which represents the expectation of $\bar{p}_t(\bar{c}|\boldsymbol{x},y)$ over $\boldsymbol{x}_t$, indicates the current pose distribution and reveals training bias progression. While VSD exhibits a strong bias toward specific distributions during training (due to the usage of auxiliary prompts), our method maintains an approximately uniform distribution throughout the process.

\begin{figure*}[t!]
    \centering

    \begin{minipage}[c]{1\linewidth}
        \centering
        \parbox{1\linewidth}{\centering $\bar{p}_t(\bar{c}|y)$ for VSD}\vspace{-3mm}
        \subfloat{
            \begin{minipage}[c]{1\linewidth}
                \includegraphics[width=\linewidth]{./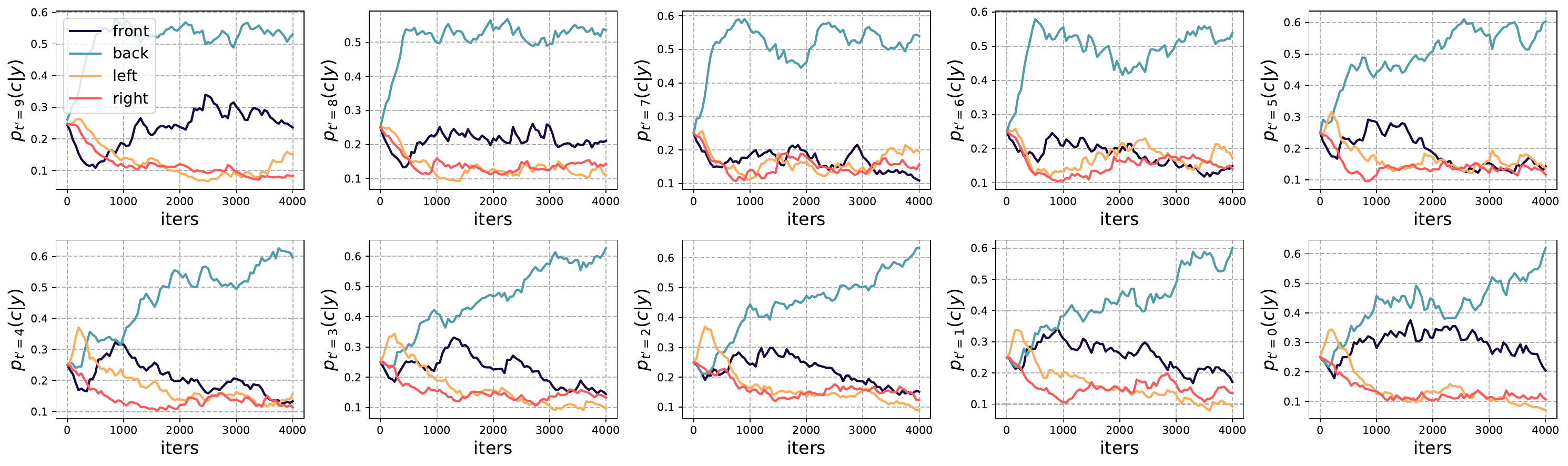}
            \end{minipage}
        }

        \parbox{1\linewidth}{\centering $\bar{p}_t(\bar{c}|y)$ for USD}\vspace{-3mm}
        \subfloat{
            \begin{minipage}[c]{1\linewidth}
                \includegraphics[width=\linewidth]{./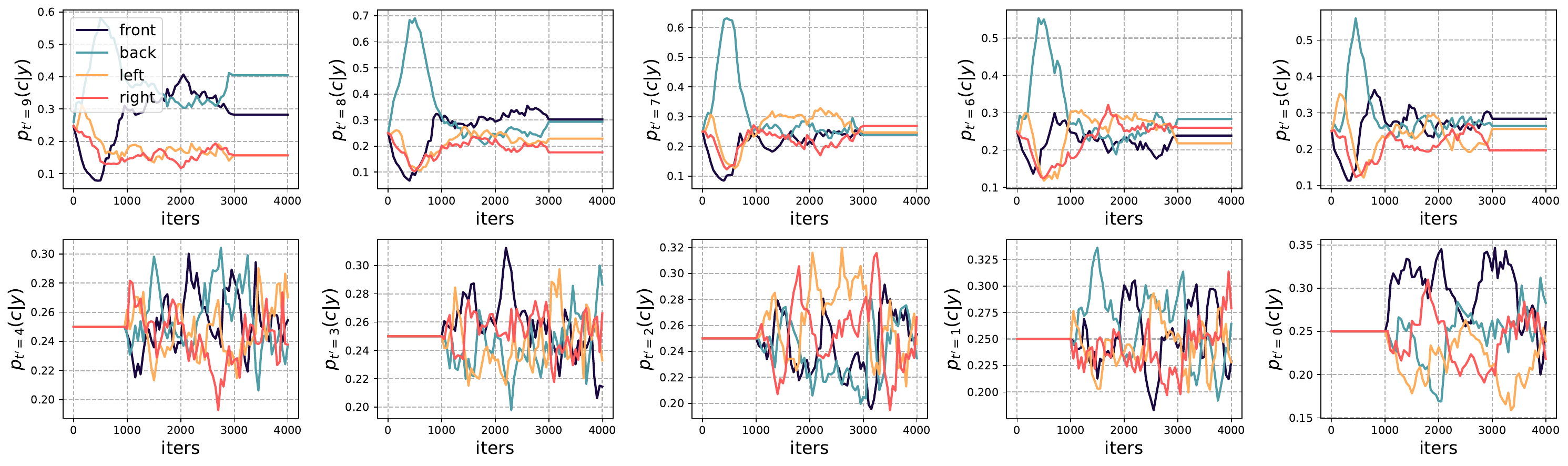}
            \end{minipage}
        }

        
    \end{minipage}

    \caption{Comparison of pose probability distributions $\bar{p}_{t}(\bar{c}|y)$ between VSD~\citep{wang2024prolificdreamer} and USD across different timestep intervals $t$. While VSD converges to a biased distribution, USD maintains an approximately uniform distribution across camera poses.}
    \label{fig:app_val_pc}
\end{figure*}

\section{Discussion on Limitations and Future Works}\label{app:ext_pose}

\begin{figure*}[t!]
    \centering

    \begin{minipage}[c]{0.95\linewidth}
        \centering
        \parbox{1\linewidth}{\centering ``A person's face.''}\vspace{-3mm}
        \subfloat[Original (VSD)]{
            \begin{minipage}[c]{0.24\linewidth}
                \includegraphics[width=\linewidth]{./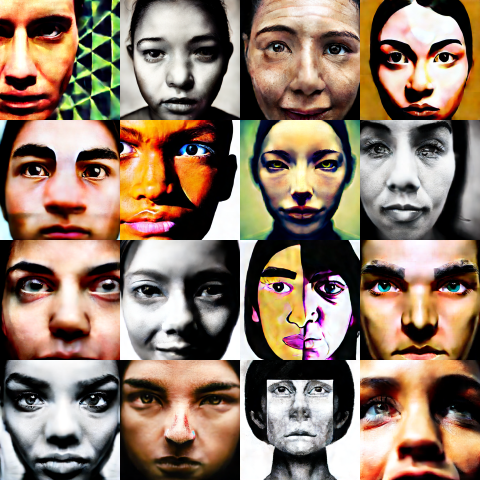}
            \end{minipage}
        }\hspace{-2mm}
        \subfloat[Rectified (USD)]{
            \begin{minipage}[c]{0.24\linewidth}
                \includegraphics[width=\linewidth]{./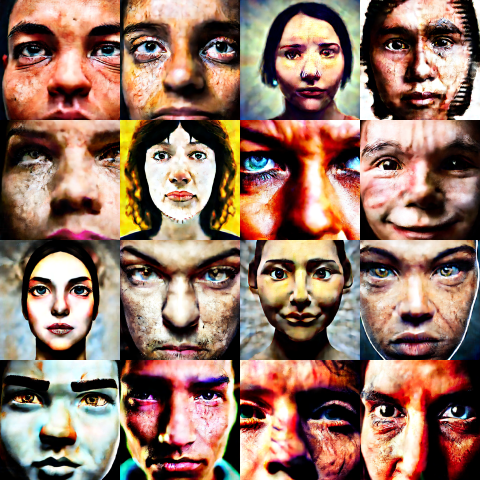}
            \end{minipage}
        }\hspace{-2mm}
        \subfloat[Control (male)]{
            \begin{minipage}[c]{0.24\linewidth}
                \includegraphics[width=\linewidth]{./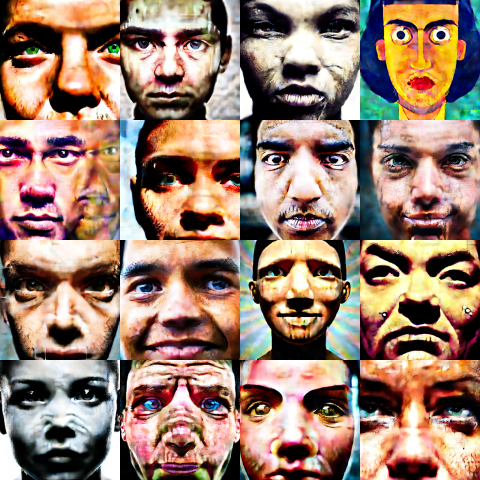}
            \end{minipage}
        }\hspace{-2mm}
        \subfloat[Contorl (female)]{
            \begin{minipage}[c]{0.24\linewidth}
                \includegraphics[width=\linewidth]{./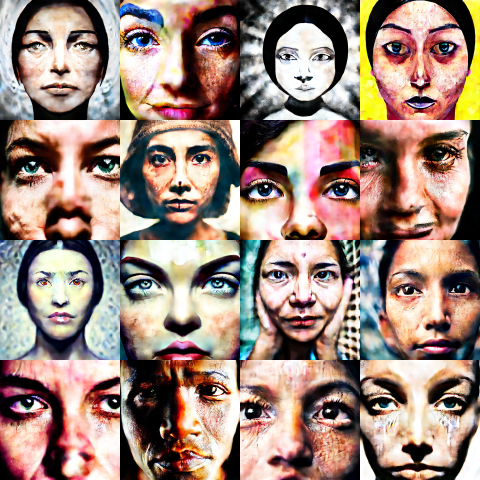}
            \end{minipage}
        }
    \end{minipage}

    \caption{Addressing semantic distributional bias using a CLIP~\citep{radford2021learning} classifier. (a) Results from VSD~\citep{wang2024prolificdreamer} exhibit inherent gender bias, predominantly generating female subjects. (b) By incorporating a CLIP-based male/female classifier, our method achieves balanced gender distribution. (c) and (d) demonstrate fine-grained control over specific gender attributes, enabling targeted generation of male and female subjects respectively.}
    \label{fig:app_bias}
\end{figure*}

\begin{figure*}[t!]
    \centering
    \parbox{1\linewidth}{\centering ``A platypus, dressed in a video game pixelated costume, steps on a pixelated surfboard and holds a squid weapon that emits 8-bit light effects.''}
    \begin{minipage}[c]{0.49\linewidth}
        \centering
        \subfloat[Tripo AI v2]{
            \begin{minipage}[c]{0.25\linewidth}
                \includegraphics[width=\linewidth]{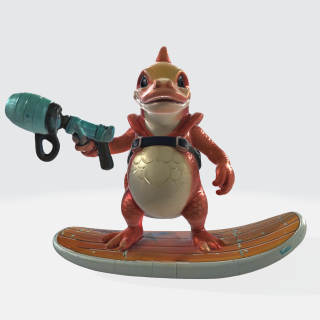}\vspace{-1.mm}\\
                \includegraphics[width=\linewidth]{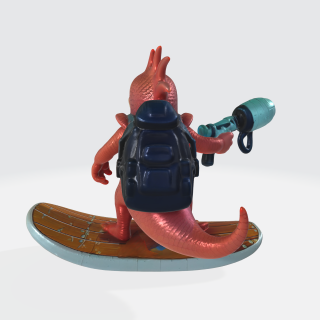}
            \end{minipage}\hspace{-1.mm}
            \begin{minipage}[c]{0.25\linewidth}
                \includegraphics[width=\linewidth]{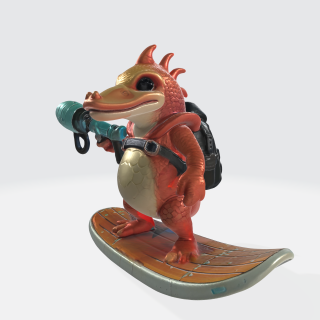}\vspace{-1.mm}\\
                \includegraphics[width=\linewidth]{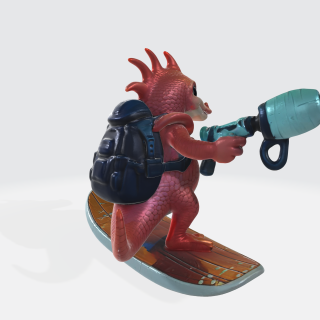}
            \end{minipage}\hspace{-1.mm}
            \begin{minipage}[c]{0.25\linewidth}
                \includegraphics[width=\linewidth]{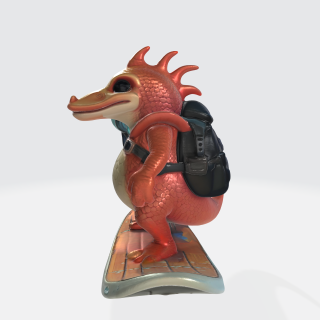}\vspace{-1.mm}\\
                \includegraphics[width=\linewidth]{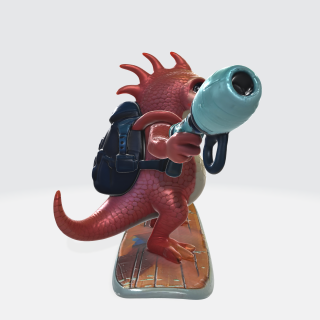}
            \end{minipage}\hspace{-1.mm}
            \begin{minipage}[c]{0.25\linewidth}
                \includegraphics[width=\linewidth]{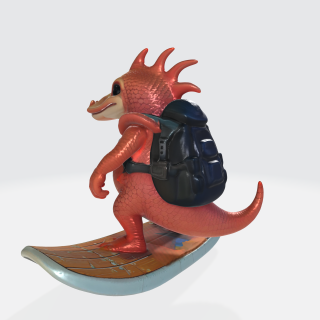}\vspace{-1.mm}\\
                \includegraphics[width=\linewidth]{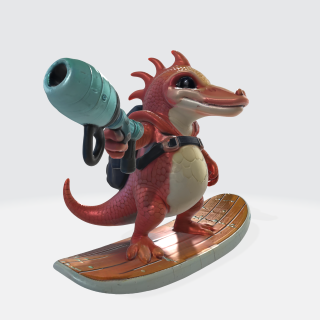}
            \end{minipage}\hspace{-1.mm}
        }
    \end{minipage}
    \begin{minipage}[USD with control]{0.49\linewidth}
        \centering
        \subfloat[USD+control]{
            \begin{minipage}[c]{0.25\linewidth}
                \includegraphics[width=\linewidth]{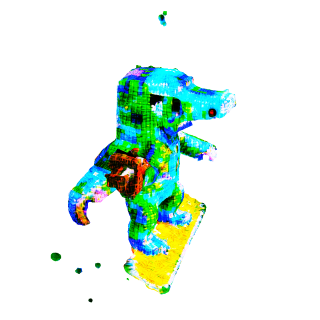}\vspace{-1.mm}\\
                \includegraphics[width=\linewidth]{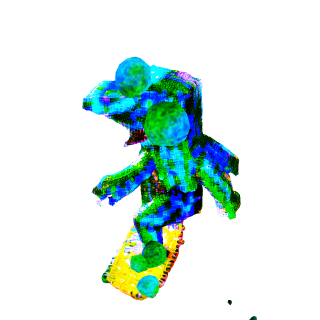}
            \end{minipage}\hspace{-1.mm}
            \begin{minipage}[c]{0.25\linewidth}
                \includegraphics[width=\linewidth]{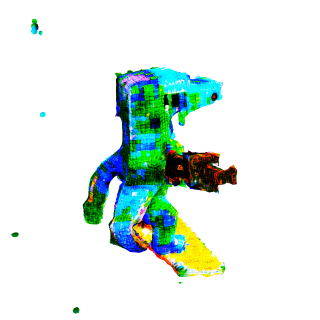}\vspace{-1.mm}\\
                \includegraphics[width=\linewidth]{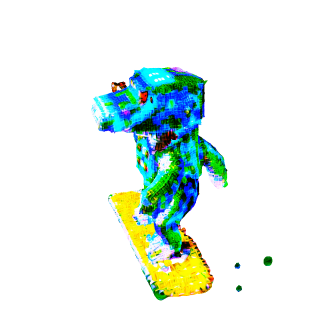}
            \end{minipage}\hspace{-1.mm}
            \begin{minipage}[c]{0.25\linewidth}
                \includegraphics[width=\linewidth]{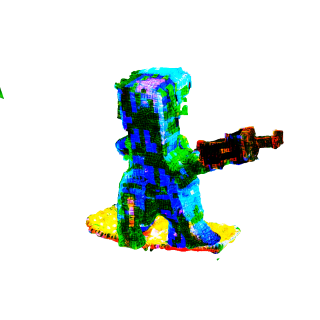}\vspace{-1.mm}\\
                \includegraphics[width=\linewidth]{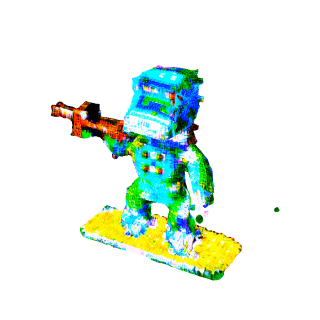}
            \end{minipage}\hspace{-1.mm}
            \begin{minipage}[c]{0.25\linewidth}
                \includegraphics[width=\linewidth]{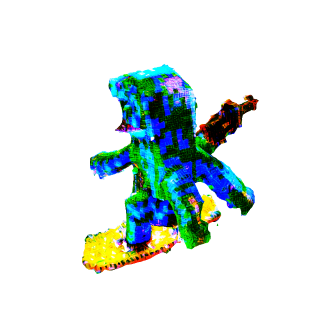}\vspace{-1.mm}\\
                \includegraphics[width=\linewidth]{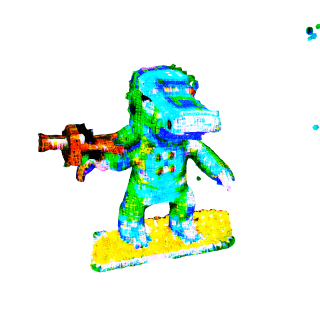}
            \end{minipage}\hspace{-1.mm}
        }
    \end{minipage}
    \caption{Demonstration of the integration of USD with 3D-based methods (Tripo AI).}
    \label{fig:app_demo_future}
\end{figure*}



This discussion examines our work's boundaries while identifying promising paths for subsequent research. We identify several key limitations and opportunities for advancement.

A primary limitation of this work is generation speed. The bottleneck lies in the U-Net~\citep{ronneberger2015u} gradient back-propagation introduced by the rectifier function, which requires further optimization. Future research could explore methods to effectively bypass U-Net gradient back-propagation or develop a score-free optimization framework similar to MicroDreamer~\citep{chen2024microdreamer}.

Another significant challenge concerns 3D consistency of localized features. While USD eliminates bias in the overall data distribution, its reliance on the score distillation algorithm, which lacks explicit geometric consistency supervision, can lead to geometrically inconsistent content, potentially limiting practical applications. Addressing this limitation requires incorporating multi-perspective supervision during generation. Notably, the special case discussed in Appendix~\ref{app:main_exps_control} demonstrates a potential supervision mechanism for score distillation that warrants further investigation.

Beyond current technical limitations, we propose new directions for control-based synthesis that expand on cross-modal approaches (see Appendix~\ref{app:main_exps_cross}). For instance, Fig.~\ref{fig:app_demo_future} demonstrates an experiment using an imaginative prompt. As shown in Fig.~\ref{fig:app_demo_future}(a), the 3D generative model Tripo AI v2\footnote{https://lumalabs.ai/genie} captures basic geometric elements effectively, but still faces challenges when interpreting more abstract or imaginative descriptions (\ie, ``pixelated costume'' and ``pixelated surfboard'') due to its 3D modeling constraints. In contrast, our approach leverages selected Tripo AI renderings for pose control (Fig.~\ref{fig:app_demo_future}(a)), resulting in a more accurate prancing effect that better matches the text description, as demonstrated in Fig.~\ref{fig:app_demo_future}(b). While our model is trained from scratch and may lack geometric refinement, fine-tuning it from a geometrically consistent base model~\citep{zheng2024learning} can yield results that excel in both geometric accuracy and textural detail.

Finally, highlighting the versatility of our approach, our USD algorithm demonstrates considerable extensibility beyond pose classification and 3D generation. Fig.~\ref{fig:app_bias} showcases its application in addressing gender distribution bias in image generation using the CLIP~\citep{radford2021learning} classifier, enabling independent control over gender representation. This adaptability suggests that USD could be applied to address other forms of algorithmic bias with different classifier architectures.

\section{Prompts}\label{app:prompt}
In our experiments, we use the prompts introduced in the existing works such as SDS ~\citep{poole2022dreamfusion} and VSD~\citep{wang2024prolificdreamer} as shown in Table~\ref{tab:promptlist}.

\begin{table}[htbp]
    \centering
    \caption{Experimental prompt list. Each prompt is augmented with auxiliary view descriptors ``from side view, from back view''.}
    \begin{tabular}{|c|p{5.5cm}|c|p{5.5cm}|}
    \hline
    \textbf{ID} & \textbf{Prompt Description} & \textbf{ID} & \textbf{Prompt Description} \\ \hline
    1 & An airplane made out of wood. & 12 & A peacock on a surfboard. \\ \hline
    2 & A bald eagle carved out of wood features. & 13 & A portrait of Groot, head, HDR, photorealistic, 8K. \\ \hline
    3 & A blue motorcycle. & 14 & A sea turtle. \\ \hline
    4 & A dragon-shaped teapot. & 15 & A zombie bust. \\ \hline
    5 & A DSLR photo of a beagle in a detective's outfit. & 16 & A 3D printed white bust of a man with curly hair. \\ \hline
    6 & A DSLR photo of a chimpanzee dressed like Napoleon Bonaparte. & 17 & DSLR Camera, photography, dslr, camera, noobie, box-modeling, maya. \\ \hline
    7 & A DSLR photo of a football helmet. & 18 & Mecha vampire girl chibi. \\ \hline
    8 & A kingfisher bird. & 19 & Robot with pumpkin head. \\ \hline
    9 & A fantasy painting of a dragoncat. & 20 & Robotic bee, high detail. \\ \hline
    10 & A kangaroo wearing boxing gloves. & 21 & Samurai koala bear. \\ \hline
    11 & A DSLR photo of a squirrel playing guitar. & 22 & Wes Anderson style Red Panda, reading a book, super cute, highly detailed and colored. \\ \hline
    \end{tabular}
    \label{tab:promptlist}
\end{table}

\clearpage
\section{Additional Comparison and Results}\label{app:compare}
In this part, we present additional comparisons and results.

\begin{figure*}[h!]
    \centering

    \begin{minipage}[c]{1\linewidth}
        \centering
        \parbox{1\linewidth}{\centering ``A DSLR photo of a beagle in a detective's outfit.''}
        \vspace{-7mm}

        \subfloat{
            \begin{minipage}[c]{1\linewidth}
                \centering
                \rotatebox[origin=l]{90}{\parbox[c][0.03\linewidth]{0.15\linewidth}{\centering ESD}}
                \hspace{1mm}
                \includegraphics[width=0.15\linewidth]{./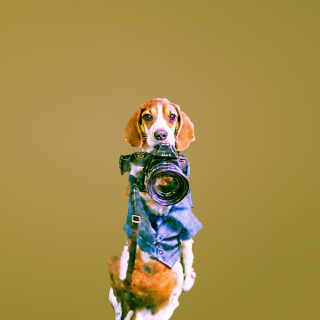}\hspace{-0.85mm}
                \includegraphics[width=0.15\linewidth]{./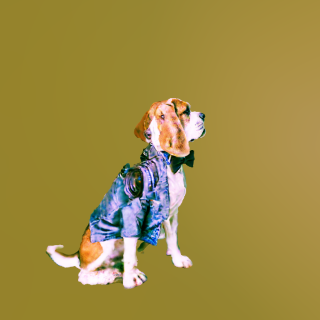}\hspace{-0.85mm}
                \includegraphics[width=0.15\linewidth]{./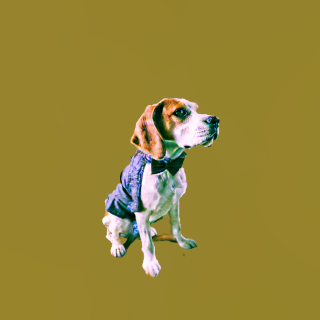}\hspace{-0.85mm}
                \includegraphics[width=0.15\linewidth]{./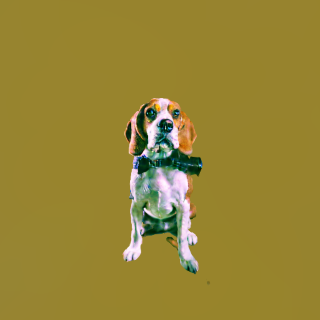}\hspace{-0.85mm}
                \includegraphics[width=0.15\linewidth]{./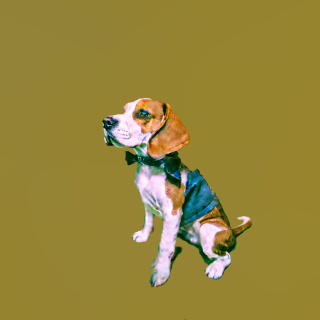}\hspace{-0.85mm}
                \includegraphics[width=0.15\linewidth]{./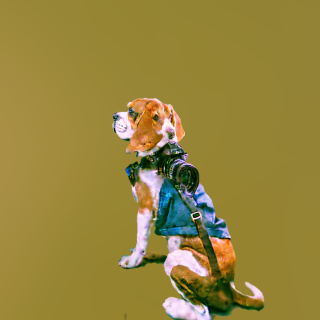}\hspace{-0.85mm}
            \end{minipage}
        }
    \end{minipage}
    \\
    \vspace{-1.5mm}
    \begin{minipage}[c]{1\linewidth}
        \centering
        \subfloat{
            \begin{minipage}[c]{1\linewidth}
                \centering
                \rotatebox[origin=l]{90}{\parbox[c][0.03\linewidth]{0.15\linewidth}{\centering SDS-bridge}}
                \hspace{1mm}
                \includegraphics[width=0.15\linewidth]{./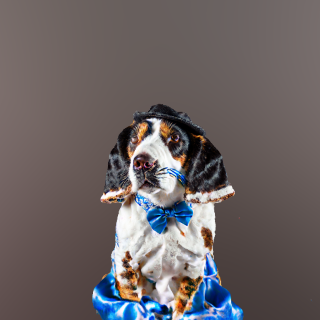}\hspace{-0.85mm}
                \includegraphics[width=0.15\linewidth]{./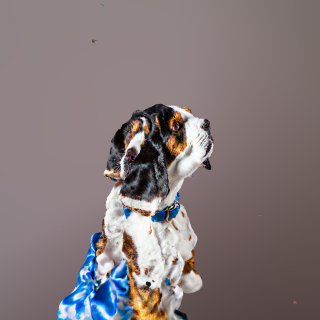}\hspace{-0.85mm}
                \includegraphics[width=0.15\linewidth]{./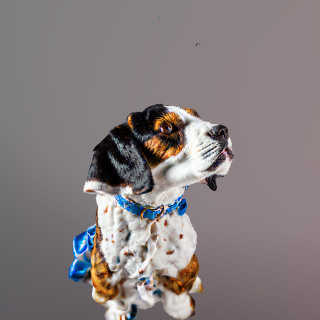}\hspace{-0.85mm}
                \includegraphics[width=0.15\linewidth]{./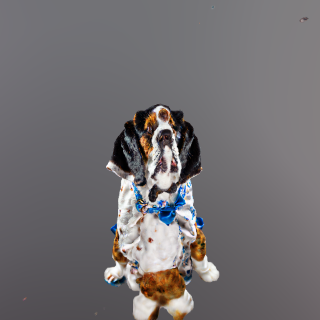}\hspace{-0.85mm}
                \includegraphics[width=0.15\linewidth]{./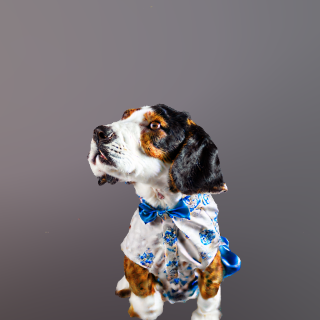}\hspace{-0.85mm}
                \includegraphics[width=0.15\linewidth]{./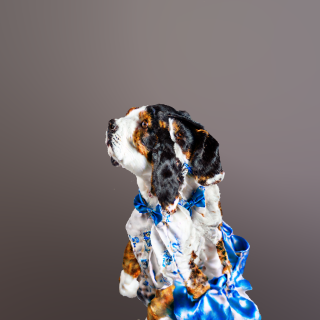}\hspace{-0.85mm}
            \end{minipage}
        }
    \end{minipage}
    \\
    \vspace{-1.5mm}
    \begin{minipage}[c]{1\linewidth}
        \centering
        \subfloat{
            \begin{minipage}[c]{1\linewidth}
                \centering
                \rotatebox[origin=l]{90}{\parbox[c][0.03\linewidth]{0.15\linewidth}{\centering USD}}
                \hspace{1mm}
                \includegraphics[width=0.15\linewidth]{./resource/render/usd/beagle/rgb_0.png}\hspace{-0.85mm}
                \includegraphics[width=0.15\linewidth]{./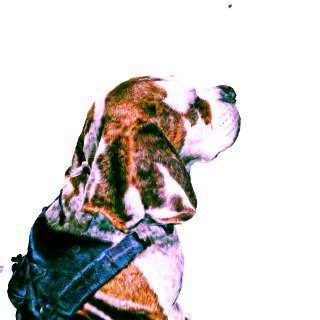}\hspace{-0.85mm}
                \includegraphics[width=0.15\linewidth]{./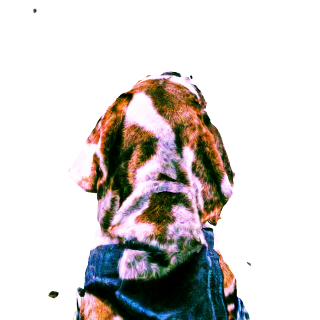}\hspace{-0.85mm}
                \includegraphics[width=0.15\linewidth]{./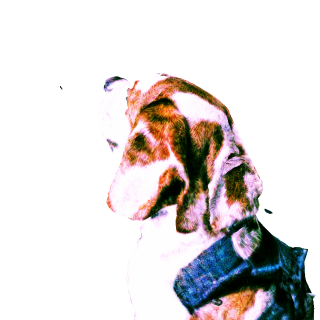}\hspace{-0.85mm}
                \includegraphics[width=0.15\linewidth]{./resource/render/usd/beagle/rgb_4.png}\hspace{-0.85mm}
                \includegraphics[width=0.15\linewidth]{./resource/render/usd/beagle/rgb_5.png}\hspace{-0.85mm}
            \end{minipage}
        }
    \end{minipage}

    \begin{minipage}[c]{1\linewidth}
        \centering
        \parbox{1\linewidth}{\centering ``A portrait of Groot, head, HDR, photorealistic, 8K.''}
        \vspace{-7mm}

        \subfloat{
            \begin{minipage}[c]{1\linewidth}
                \centering
                \rotatebox[origin=l]{90}{\parbox[c][0.03\linewidth]{0.15\linewidth}{\centering ESD}}
                \hspace{1mm}
                \includegraphics[width=0.15\linewidth]{./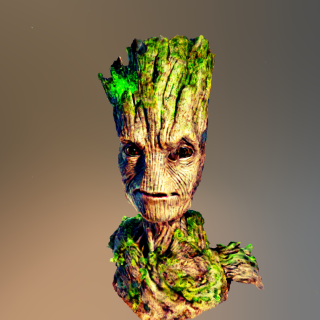}\hspace{-0.85mm}
                \includegraphics[width=0.15\linewidth]{./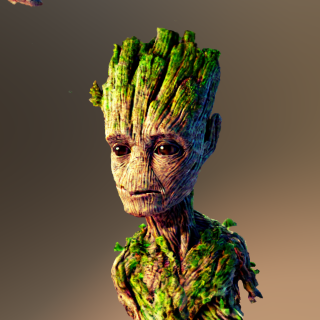}\hspace{-0.85mm}
                \includegraphics[width=0.15\linewidth]{./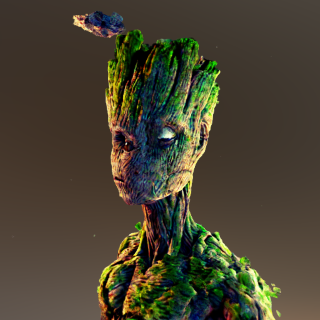}\hspace{-0.85mm}
                \includegraphics[width=0.15\linewidth]{./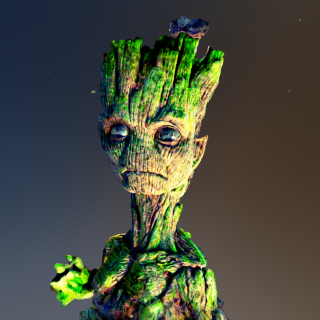}\hspace{-0.85mm}
                \includegraphics[width=0.15\linewidth]{./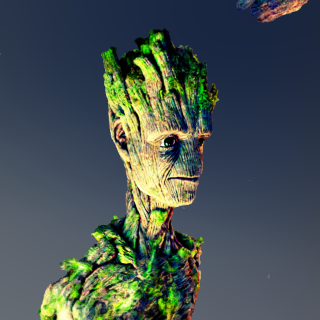}\hspace{-0.85mm}
                \includegraphics[width=0.15\linewidth]{./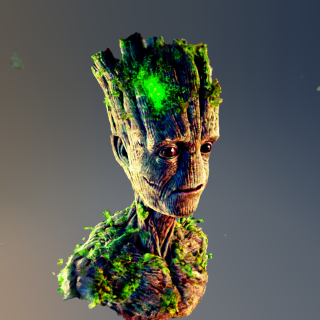}\hspace{-0.85mm}
            \end{minipage}
        }
    \end{minipage}
    \\
    \vspace{-1.5mm}
    \begin{minipage}[c]{1\linewidth}
        \centering
        \subfloat{
            \begin{minipage}[c]{1\linewidth}
                \centering
                \rotatebox[origin=l]{90}{\parbox[c][0.03\linewidth]{0.15\linewidth}{\centering SDS-bridge}}
                \hspace{1mm}
                \includegraphics[width=0.15\linewidth]{./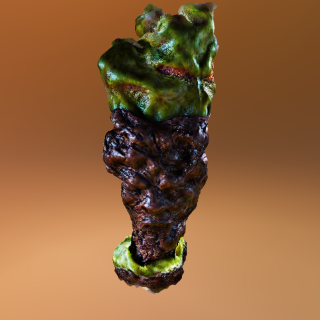}\hspace{-0.85mm}
                \includegraphics[width=0.15\linewidth]{./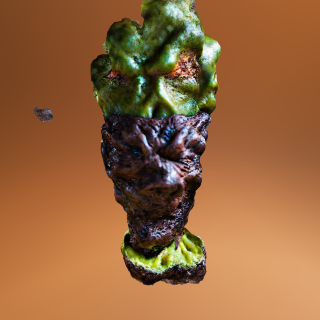}\hspace{-0.85mm}
                \includegraphics[width=0.15\linewidth]{./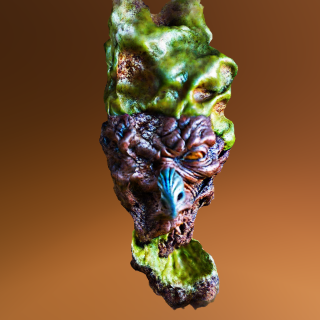}\hspace{-0.85mm}
                \includegraphics[width=0.15\linewidth]{./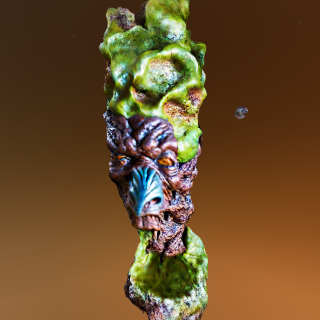}\hspace{-0.85mm}
                \includegraphics[width=0.15\linewidth]{./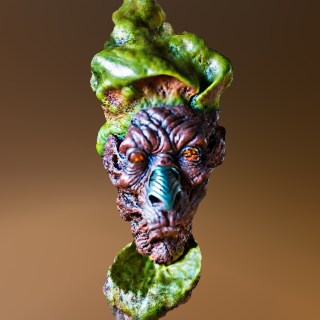}\hspace{-0.85mm}
                \includegraphics[width=0.15\linewidth]{./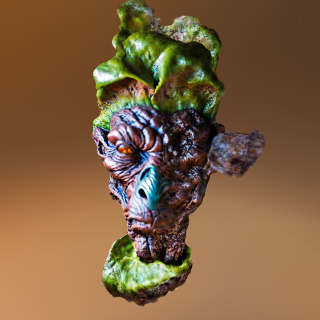}\hspace{-0.85mm}
            \end{minipage}
        }
    \end{minipage}
    \\
    \vspace{-1.5mm}
    \begin{minipage}[c]{1\linewidth}
        \centering
        \subfloat{
            \begin{minipage}[c]{1\linewidth}
                \centering
                \rotatebox[origin=l]{90}{\parbox[c][0.03\linewidth]{0.15\linewidth}{\centering USD}}
                \hspace{1mm}
                \includegraphics[width=0.15\linewidth]{./resource/render/usd/groot/rgb_0.png}\hspace{-0.85mm}
                \includegraphics[width=0.15\linewidth]{./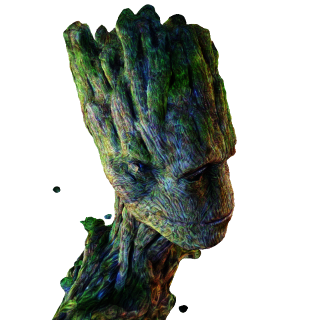}\hspace{-0.85mm}
                \includegraphics[width=0.15\linewidth]{./resource/render/usd/groot/rgb_2.png}\hspace{-0.85mm}
                \includegraphics[width=0.15\linewidth]{./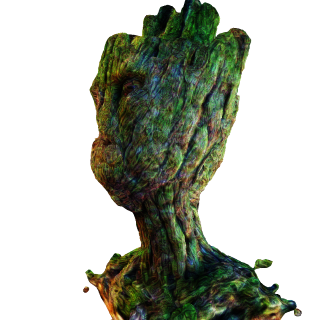}\hspace{-0.85mm}
                \includegraphics[width=0.15\linewidth]{./resource/render/usd/groot/rgb_4.png}\hspace{-0.85mm}
                \includegraphics[width=0.15\linewidth]{./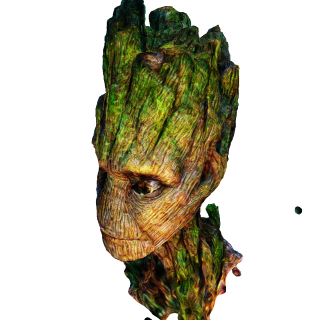}\hspace{-0.85mm}
            \end{minipage}
        }
    \end{minipage}

    \begin{minipage}[c]{1\linewidth}
        \centering
        \parbox{1\linewidth}{\centering ``A kangaroo wearing boxing gloves.''}
        \vspace{-7mm}

        \subfloat{
            \begin{minipage}[c]{1\linewidth}
                \centering
                \rotatebox[origin=l]{90}{\parbox[c][0.03\linewidth]{0.15\linewidth}{\centering ESD}}
                \hspace{1mm}
                \includegraphics[width=0.15\linewidth]{./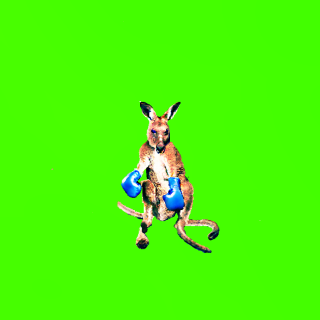}\hspace{-0.85mm}
                \includegraphics[width=0.15\linewidth]{./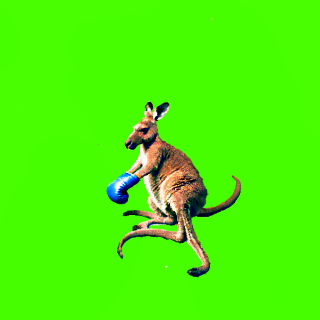}\hspace{-0.85mm}
                \includegraphics[width=0.15\linewidth]{./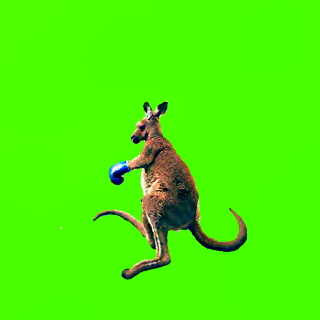}\hspace{-0.85mm}
                \includegraphics[width=0.15\linewidth]{./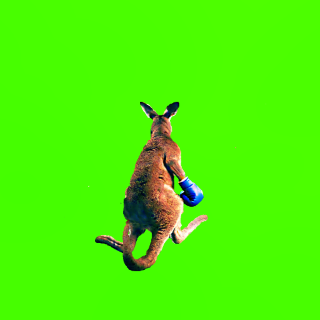}\hspace{-0.85mm}
                \includegraphics[width=0.15\linewidth]{./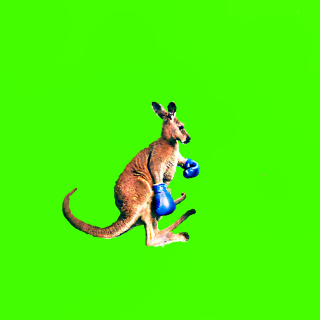}\hspace{-0.85mm}
                \includegraphics[width=0.15\linewidth]{./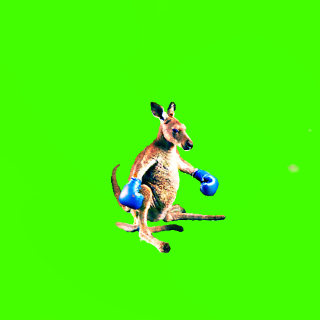}\hspace{-0.85mm}
            \end{minipage}
        }
    \end{minipage}
    \\
    \vspace{-1.5mm}
    \begin{minipage}[c]{1\linewidth}
        \centering
        \subfloat{
            \begin{minipage}[c]{1\linewidth}
                \centering
                \rotatebox[origin=l]{90}{\parbox[c][0.03\linewidth]{0.15\linewidth}{\centering SDS-bridge}}
                \hspace{1mm}
                \includegraphics[width=0.15\linewidth]{./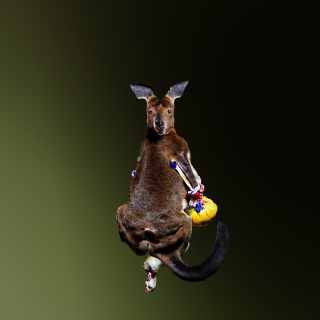}\hspace{-0.85mm}
                \includegraphics[width=0.15\linewidth]{./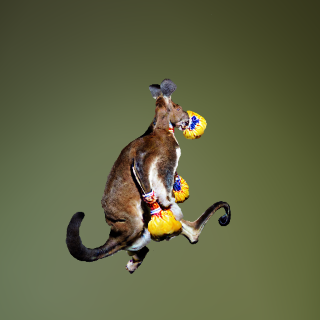}\hspace{-0.85mm}
                \includegraphics[width=0.15\linewidth]{./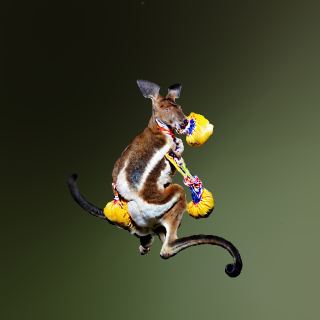}\hspace{-0.85mm}
                \includegraphics[width=0.15\linewidth]{./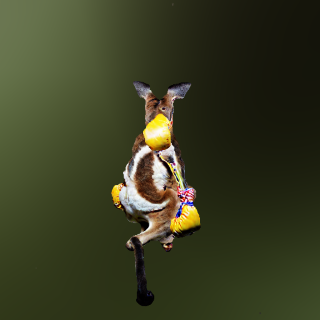}\hspace{-0.85mm}
                \includegraphics[width=0.15\linewidth]{./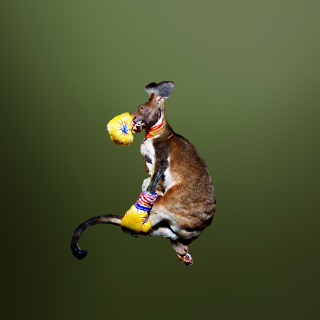}\hspace{-0.85mm}
                \includegraphics[width=0.15\linewidth]{./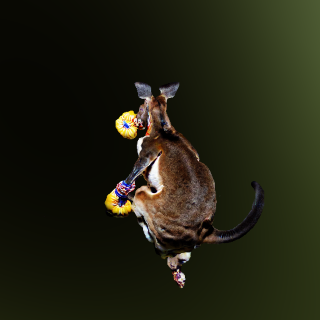}\hspace{-0.85mm}
            \end{minipage}
        }
    \end{minipage}
    \\
    \vspace{-1.5mm}
    \begin{minipage}[c]{1\linewidth}
        \centering
        \subfloat{
            \begin{minipage}[c]{1\linewidth}
                \centering
                \rotatebox[origin=l]{90}{\parbox[c][0.03\linewidth]{0.15\linewidth}{\centering USD}}
                \hspace{1mm}
                \includegraphics[width=0.15\linewidth]{./resource/render/usd/kangaroo/rgb_0.png}\hspace{-0.85mm}
                \includegraphics[width=0.15\linewidth]{./resource/render/usd/kangaroo/rgb_1.png}\hspace{-0.85mm}
                \includegraphics[width=0.15\linewidth]{./resource/render/usd/kangaroo/rgb_2.png}\hspace{-0.85mm}
                \includegraphics[width=0.15\linewidth]{./resource/render/usd/kangaroo/rgb_3.png}\hspace{-0.85mm}
                \includegraphics[width=0.15\linewidth]{./resource/render/usd/kangaroo/rgb_4.png}\hspace{-0.85mm}
                \includegraphics[width=0.15\linewidth]{./resource/render/usd/kangaroo/rgb_5.png}\hspace{-0.85mm}
            \end{minipage}
        }
    \end{minipage}

    \caption{Additional Comparisons with ESD and SDS-Bridge.}
    \label{fig:app_com_supp}
\end{figure*}

\begin{figure*}[h!]
    \centering

    \begin{minipage}[c]{1\linewidth}
        \centering
        \parbox{1\linewidth}{\centering ``Samurai koala bear.''}
        \vspace{-7mm}

        \subfloat{
            \begin{minipage}[c]{1\linewidth}
                \centering
                \rotatebox[origin=l]{90}{\parbox[c][0.03\linewidth]{0.15\linewidth}{\centering SDS}}
                \hspace{1mm}
                \includegraphics[width=0.15\linewidth]{./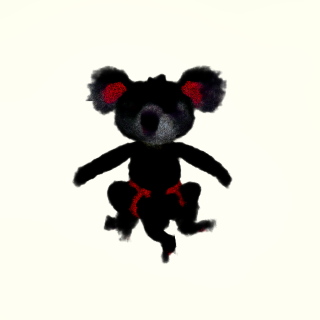}\hspace{-0.85mm}
                \includegraphics[width=0.15\linewidth]{./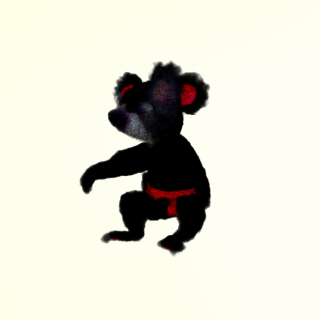}\hspace{-0.85mm}
                \includegraphics[width=0.15\linewidth]{./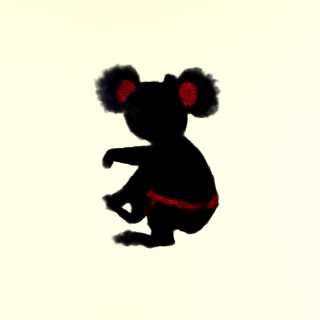}\hspace{-0.85mm}
                \includegraphics[width=0.15\linewidth]{./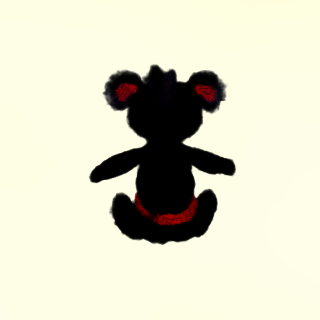}\hspace{-0.85mm}
                \includegraphics[width=0.15\linewidth]{./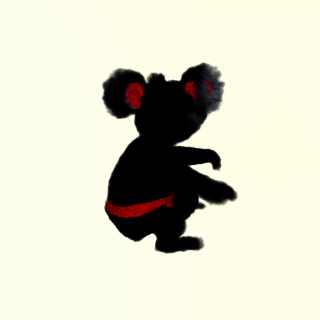}\hspace{-0.85mm}
                \includegraphics[width=0.15\linewidth]{./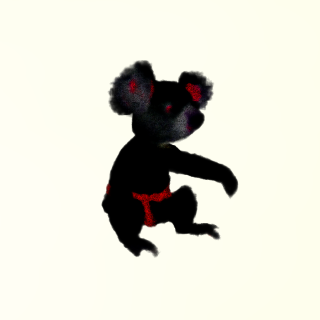}\hspace{-0.85mm}
            \end{minipage}
    }
    \end{minipage}
    \\
    \vspace{-1.5mm}
    \begin{minipage}[c]{1\linewidth}
        \centering
        \subfloat{
            \begin{minipage}[c]{1\linewidth}
                \centering
                \rotatebox[origin=l]{90}{\parbox[c][0.03\linewidth]{0.15\linewidth}{\centering Debiased-SDS}}
                \hspace{1mm}
                \includegraphics[width=0.15\linewidth]{./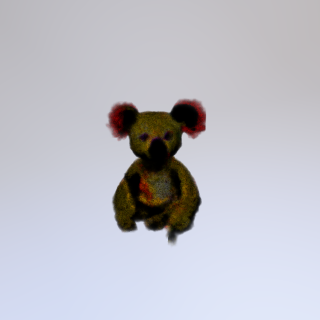}\hspace{-0.85mm}
                \includegraphics[width=0.15\linewidth]{./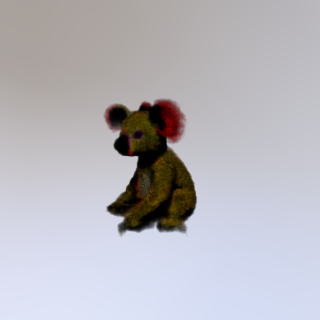}\hspace{-0.85mm}
                \includegraphics[width=0.15\linewidth]{./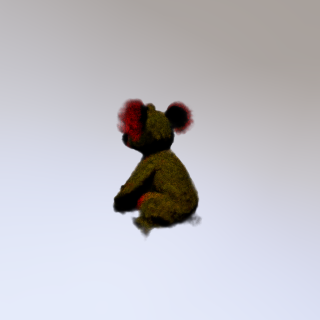}\hspace{-0.85mm}
                \includegraphics[width=0.15\linewidth]{./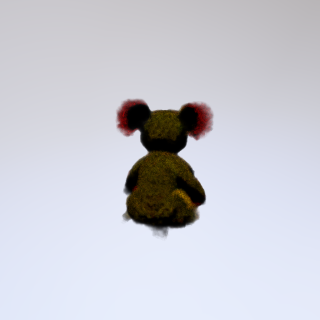}\hspace{-0.85mm}
                \includegraphics[width=0.15\linewidth]{./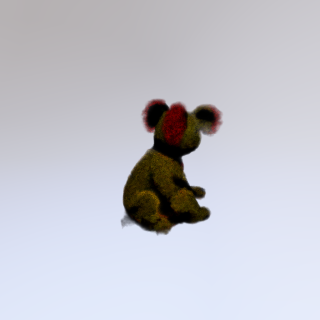}\hspace{-0.85mm}
                \includegraphics[width=0.15\linewidth]{./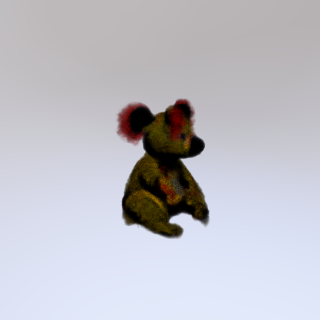}\hspace{-0.85mm}
            \end{minipage}
    }
    \end{minipage}
    \\
    \vspace{-1.5mm}
    \begin{minipage}[c]{1\linewidth}
        \centering
        \subfloat{
            \begin{minipage}[c]{1\linewidth}
                \centering
                \rotatebox[origin=l]{90}{\parbox[c][0.03\linewidth]{0.15\linewidth}{\centering PerpNeg}}
                \hspace{1mm}
                \includegraphics[width=0.15\linewidth]{./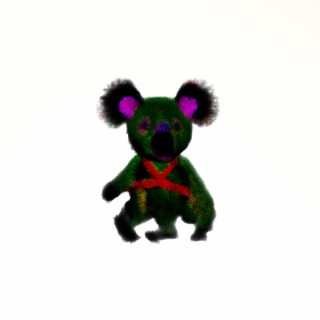}\hspace{-0.85mm}
                \includegraphics[width=0.15\linewidth]{./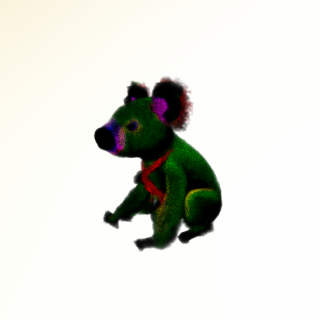}\hspace{-0.85mm}
                \includegraphics[width=0.15\linewidth]{./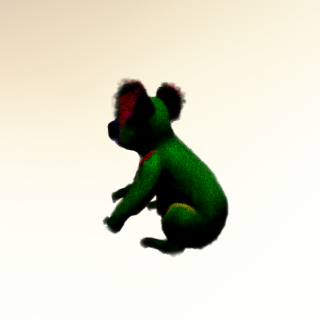}\hspace{-0.85mm}
                \includegraphics[width=0.15\linewidth]{./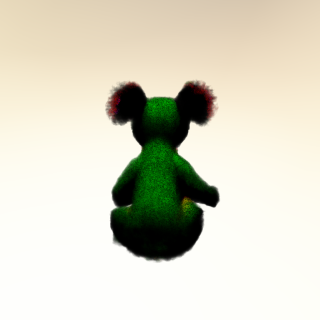}\hspace{-0.85mm}
                \includegraphics[width=0.15\linewidth]{./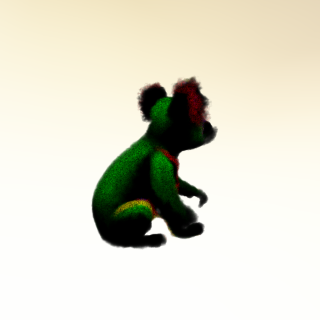}\hspace{-0.85mm}
                \includegraphics[width=0.15\linewidth]{./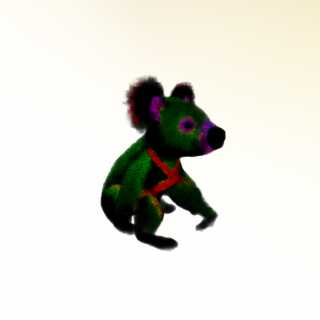}\hspace{-0.85mm}
            \end{minipage}
    }
    \end{minipage}
    \\
    \vspace{-1.5mm}
    \begin{minipage}[c]{1\linewidth}
        \centering
        \subfloat{
            \begin{minipage}[c]{1\linewidth}
                \centering
                \rotatebox[origin=l]{90}{\parbox[c][0.03\linewidth]{0.15\linewidth}{\centering VSD}}
                \hspace{1mm}
                \includegraphics[width=0.15\linewidth]{./resource/render/vsd/bear/rgb_0.png}\hspace{-0.85mm}
                \includegraphics[width=0.15\linewidth]{./resource/render/vsd/bear/rgb_1.png}\hspace{-0.85mm}
                \includegraphics[width=0.15\linewidth]{./resource/render/vsd/bear/rgb_2.png}\hspace{-0.85mm}
                \includegraphics[width=0.15\linewidth]{./resource/render/vsd/bear/rgb_3.png}\hspace{-0.85mm}
                \includegraphics[width=0.15\linewidth]{./resource/render/vsd/bear/rgb_4.png}\hspace{-0.85mm}
                \includegraphics[width=0.15\linewidth]{./resource/render/vsd/bear/rgb_5.png}\hspace{-0.85mm}
            \end{minipage}
    }
    \end{minipage}
    \\
    \vspace{-1.5mm}
    \begin{minipage}[c]{1\linewidth}
        \centering
        \subfloat{
            \begin{minipage}[c]{1\linewidth}
                \centering
                \rotatebox[origin=l]{90}{\parbox[c][0.03\linewidth]{0.15\linewidth}{\centering USD}}
                \hspace{1mm}
                \includegraphics[width=0.15\linewidth]{./resource/render/usd/bear/rgb_0.png}\hspace{-0.85mm}
                \includegraphics[width=0.15\linewidth]{./resource/render/usd/bear/rgb_1.png}\hspace{-0.85mm}
                \includegraphics[width=0.15\linewidth]{./resource/render/usd/bear/rgb_2.png}\hspace{-0.85mm}
                \includegraphics[width=0.15\linewidth]{./resource/render/usd/bear/rgb_3.png}\hspace{-0.85mm}
                \includegraphics[width=0.15\linewidth]{./resource/render/usd/bear/rgb_4.png}\hspace{-0.85mm}
                \includegraphics[width=0.15\linewidth]{./resource/render/usd/bear/rgb_5.png}\hspace{-0.85mm}
            \end{minipage}
    }
    \end{minipage}
    \\
    \vspace{2mm}
    \begin{minipage}[c]{1\linewidth}
        \centering
        \parbox{1\linewidth}{\centering ``DSLR Camera, photography, dslr, camera, noobie, box-modeling, maya.''}
        \vspace{-7mm}

        \subfloat{
            \begin{minipage}[c]{1\linewidth}
                \centering
                \rotatebox[origin=l]{90}{\parbox[c][0.03\linewidth]{0.15\linewidth}{\centering SDS}}
                \hspace{1mm}
                \includegraphics[width=0.15\linewidth]{./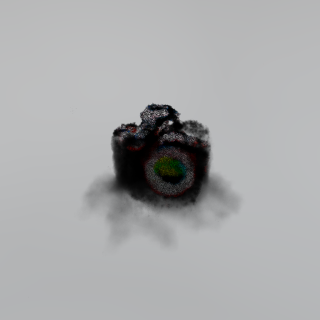}\hspace{-0.85mm}
                \includegraphics[width=0.15\linewidth]{./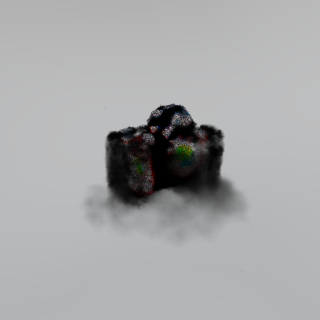}\hspace{-0.85mm}
                \includegraphics[width=0.15\linewidth]{./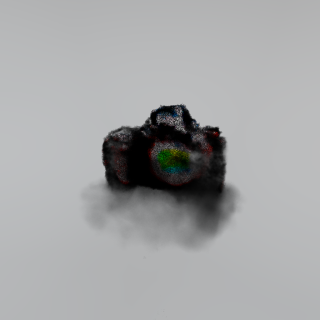}\hspace{-0.85mm}
                \includegraphics[width=0.15\linewidth]{./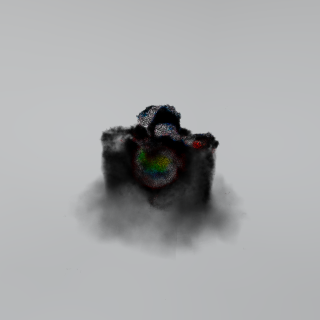}\hspace{-0.85mm}
                \includegraphics[width=0.15\linewidth]{./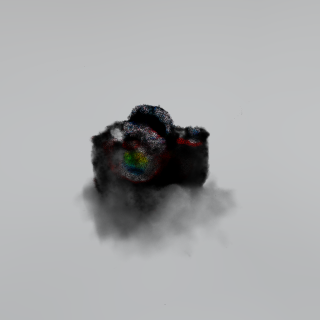}\hspace{-0.85mm}
                \includegraphics[width=0.15\linewidth]{./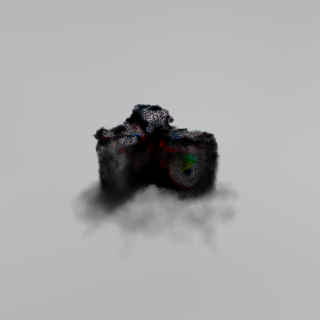}\hspace{-0.85mm}
            \end{minipage}
    }
    \end{minipage}
    \\
    \vspace{-1.5mm}
    \begin{minipage}[c]{1\linewidth}
        \centering
        \subfloat{
            \begin{minipage}[c]{1\linewidth}
                \centering
                \rotatebox[origin=l]{90}{\parbox[c][0.03\linewidth]{0.15\linewidth}{\centering Debiased-SDS}}
                \hspace{1mm}
                \includegraphics[width=0.15\linewidth]{./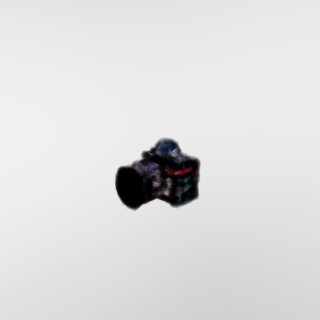}\hspace{-0.85mm}
                \includegraphics[width=0.15\linewidth]{./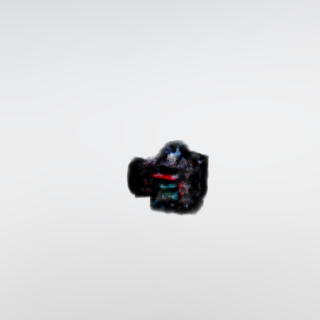}\hspace{-0.85mm}
                \includegraphics[width=0.15\linewidth]{./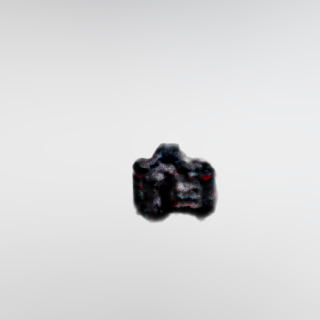}\hspace{-0.85mm}
                \includegraphics[width=0.15\linewidth]{./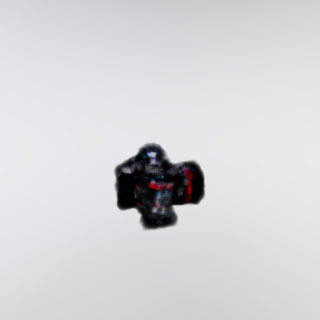}\hspace{-0.85mm}
                \includegraphics[width=0.15\linewidth]{./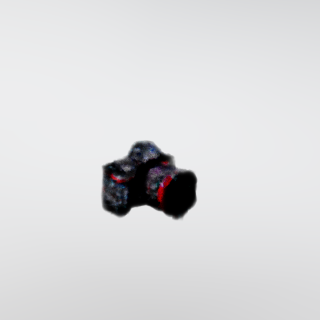}\hspace{-0.85mm}
                \includegraphics[width=0.15\linewidth]{./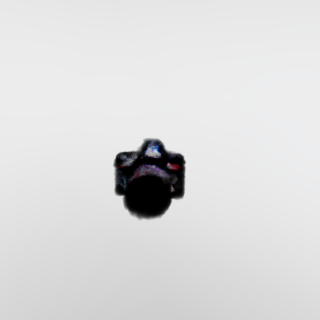}\hspace{-0.85mm}
            \end{minipage}
    }
    \end{minipage}
    \\
    \vspace{-1.5mm}
    \begin{minipage}[c]{1\linewidth}
        \centering
        \subfloat{
            \begin{minipage}[c]{1\linewidth}
                \centering
                \rotatebox[origin=l]{90}{\parbox[c][0.03\linewidth]{0.15\linewidth}{\centering PerpNeg}}
                \hspace{1mm}
                \includegraphics[width=0.15\linewidth]{./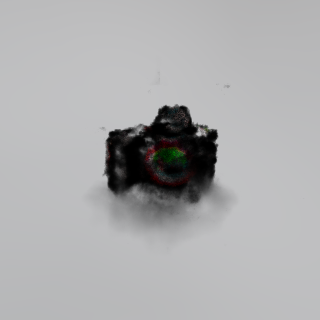}\hspace{-0.85mm}
                \includegraphics[width=0.15\linewidth]{./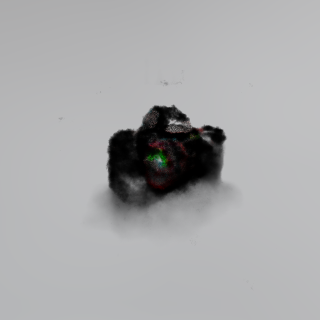}\hspace{-0.85mm}
                \includegraphics[width=0.15\linewidth]{./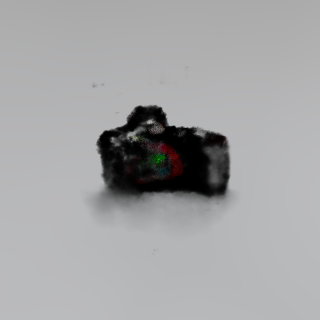}\hspace{-0.85mm}
                \includegraphics[width=0.15\linewidth]{./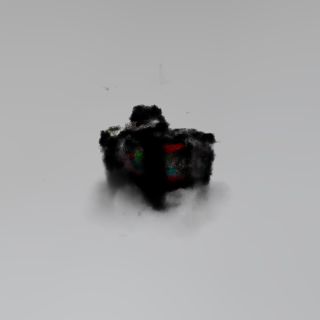}\hspace{-0.85mm}
                \includegraphics[width=0.15\linewidth]{./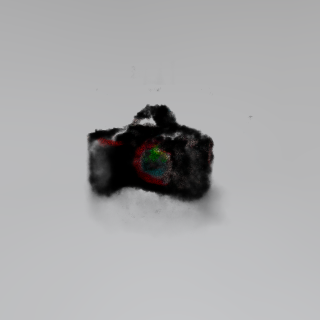}\hspace{-0.85mm}
                \includegraphics[width=0.15\linewidth]{./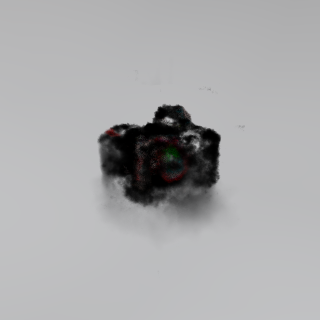}\hspace{-0.85mm}
            \end{minipage}
    }
    \end{minipage}
    \\
    \vspace{-1.5mm}
    \begin{minipage}[c]{1\linewidth}
        \centering
        \subfloat{
            \begin{minipage}[c]{1\linewidth}
                \centering
                \rotatebox[origin=l]{90}{\parbox[c][0.03\linewidth]{0.15\linewidth}{\centering VSD}}
                \hspace{1mm}
                \includegraphics[width=0.15\linewidth]{./resource/render/vsd/camera/rgb_0.png}\hspace{-0.85mm}
                \includegraphics[width=0.15\linewidth]{./resource/render/vsd/camera/rgb_1.png}\hspace{-0.85mm}
                \includegraphics[width=0.15\linewidth]{./resource/render/vsd/camera/rgb_2.png}\hspace{-0.85mm}
                \includegraphics[width=0.15\linewidth]{./resource/render/vsd/camera/rgb_3.png}\hspace{-0.85mm}
                \includegraphics[width=0.15\linewidth]{./resource/render/vsd/camera/rgb_4.png}\hspace{-0.85mm}
                \includegraphics[width=0.15\linewidth]{./resource/render/vsd/camera/rgb_5.png}\hspace{-0.85mm}
            \end{minipage}
    }
    \end{minipage}
    \\
    \vspace{-1.5mm}
    \begin{minipage}[c]{1\linewidth}
        \centering
        \subfloat{
            \begin{minipage}[c]{1\linewidth}
                \centering
                \rotatebox[origin=l]{90}{\parbox[c][0.03\linewidth]{0.15\linewidth}{\centering USD}}
                \hspace{1mm}
                \includegraphics[width=0.15\linewidth]{./resource/render/usd/camera/rgb_0.png}\hspace{-0.85mm}
                \includegraphics[width=0.15\linewidth]{./resource/render/usd/camera/rgb_1.png}\hspace{-0.85mm}
                \includegraphics[width=0.15\linewidth]{./resource/render/usd/camera/rgb_2.png}\hspace{-0.85mm}
                \includegraphics[width=0.15\linewidth]{./resource/render/usd/camera/rgb_3.png}\hspace{-0.85mm}
                \includegraphics[width=0.15\linewidth]{./resource/render/usd/camera/rgb_4.png}\hspace{-0.85mm}
                \includegraphics[width=0.15\linewidth]{./resource/render/usd/camera/rgb_5.png}\hspace{-0.85mm}
            \end{minipage}
    }
    \end{minipage}
    \\

    \caption{Additional Comparisons.}
    \label{fig:app_com1}
\end{figure*}

\begin{figure*}[h!]
    \centering

    \begin{minipage}[c]{1\linewidth}
        \centering
        \parbox{1\linewidth}{\centering ``A portrait of Groot, head, HDR, photorealistic, 8K.''}
        \vspace{-7mm}

        \subfloat{
            \begin{minipage}[c]{1\linewidth}
                \centering
                \rotatebox[origin=l]{90}{\parbox[c][0.03\linewidth]{0.15\linewidth}{\centering SDS}}
                \hspace{1mm}
                \includegraphics[width=0.15\linewidth]{./resource/render/sds/groot/rgb_0.png}\hspace{-0.85mm}
                \includegraphics[width=0.15\linewidth]{./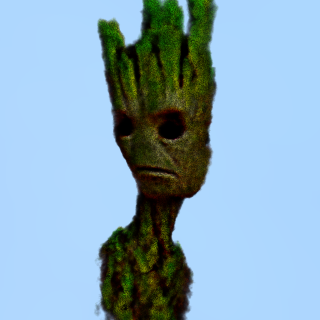}\hspace{-0.85mm}
                \includegraphics[width=0.15\linewidth]{./resource/render/sds/groot/rgb_2.png}\hspace{-0.85mm}
                \includegraphics[width=0.15\linewidth]{./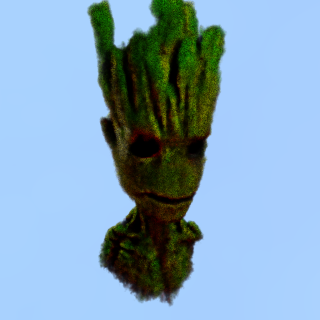}\hspace{-0.85mm}
                \includegraphics[width=0.15\linewidth]{./resource/render/sds/groot/rgb_4.png}\hspace{-0.85mm}
                \includegraphics[width=0.15\linewidth]{./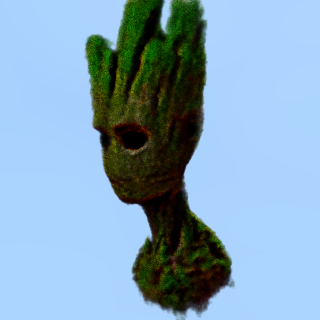}\hspace{-0.85mm}
            \end{minipage}
    }
    \end{minipage}
    \\
    \vspace{-1.5mm}
    \begin{minipage}[c]{1\linewidth}
        \centering
        \subfloat{
            \begin{minipage}[c]{1\linewidth}
                \centering
                \rotatebox[origin=l]{90}{\parbox[c][0.03\linewidth]{0.15\linewidth}{\centering Debiased-SDS}}
                \hspace{1mm}
                \includegraphics[width=0.15\linewidth]{./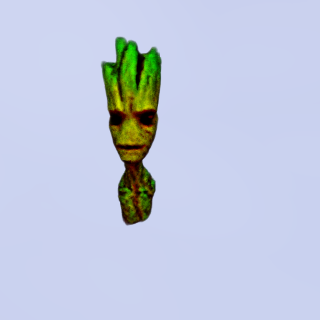}\hspace{-0.85mm}
                \includegraphics[width=0.15\linewidth]{./resource/render/deb/groot/rgb_1.png}\hspace{-0.85mm}
                \includegraphics[width=0.15\linewidth]{./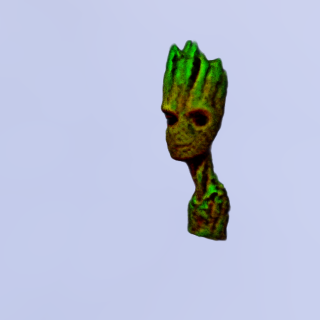}\hspace{-0.85mm}
                \includegraphics[width=0.15\linewidth]{./resource/render/deb/groot/rgb_3.png}\hspace{-0.85mm}
                \includegraphics[width=0.15\linewidth]{./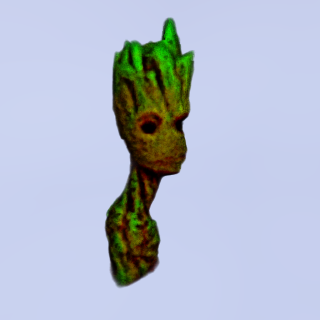}\hspace{-0.85mm}
                \includegraphics[width=0.15\linewidth]{./resource/render/deb/groot/rgb_5.png}\hspace{-0.85mm}
            \end{minipage}
    }
    \end{minipage}
    \\
    \vspace{-1.5mm}
    \begin{minipage}[c]{1\linewidth}
        \centering
        \subfloat{
            \begin{minipage}[c]{1\linewidth}
                \centering
                \rotatebox[origin=l]{90}{\parbox[c][0.03\linewidth]{0.15\linewidth}{\centering PerpNeg}}
                \hspace{1mm}
                \includegraphics[width=0.15\linewidth]{./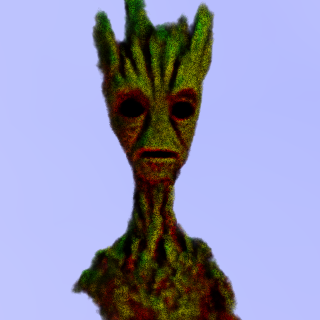}\hspace{-0.85mm}
                \includegraphics[width=0.15\linewidth]{./resource/render/ppn/groot/rgb_1.png}\hspace{-0.85mm}
                \includegraphics[width=0.15\linewidth]{./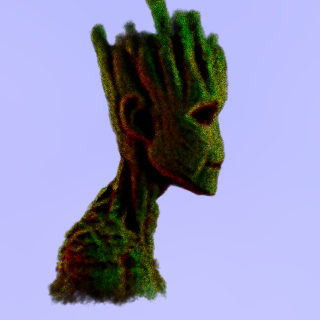}\hspace{-0.85mm}
                \includegraphics[width=0.15\linewidth]{./resource/render/ppn/groot/rgb_3.png}\hspace{-0.85mm}
                \includegraphics[width=0.15\linewidth]{./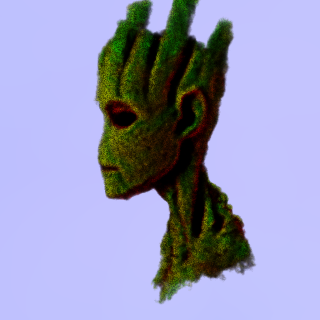}\hspace{-0.85mm}
                \includegraphics[width=0.15\linewidth]{./resource/render/ppn/groot/rgb_5.png}\hspace{-0.85mm}
            \end{minipage}
    }
    \end{minipage}
    \\
    \vspace{-1.5mm}
    \begin{minipage}[c]{1\linewidth}
        \centering
        \subfloat{
            \begin{minipage}[c]{1\linewidth}
                \centering
                \rotatebox[origin=l]{90}{\parbox[c][0.03\linewidth]{0.15\linewidth}{\centering VSD}}
                \hspace{1mm}
                \includegraphics[width=0.15\linewidth]{./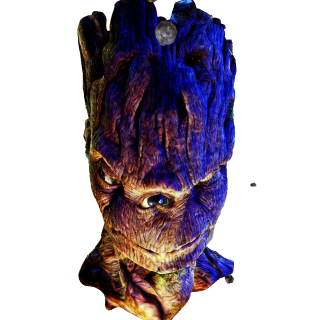}\hspace{-0.85mm}
                \includegraphics[width=0.15\linewidth]{./resource/render/vsd/groot/rgb_1.png}\hspace{-0.85mm}
                \includegraphics[width=0.15\linewidth]{./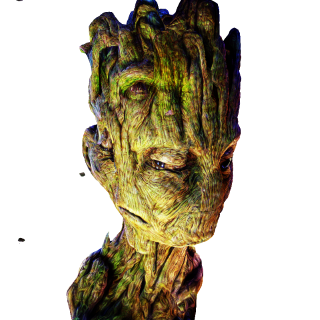}\hspace{-0.85mm}
                \includegraphics[width=0.15\linewidth]{./resource/render/vsd/groot/rgb_3.png}\hspace{-0.85mm}
                \includegraphics[width=0.15\linewidth]{./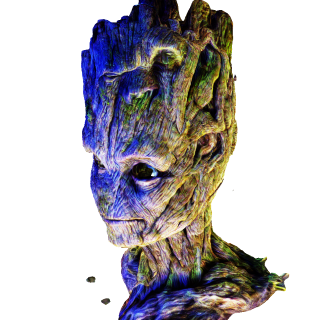}\hspace{-0.85mm}
                \includegraphics[width=0.15\linewidth]{./resource/render/vsd/groot/rgb_5.png}\hspace{-0.85mm}
            \end{minipage}
    }
    \end{minipage}
    \\
    \vspace{-1.5mm}
    \begin{minipage}[c]{1\linewidth}
        \centering
        \subfloat{
            \begin{minipage}[c]{1\linewidth}
                \centering
                \rotatebox[origin=l]{90}{\parbox[c][0.03\linewidth]{0.15\linewidth}{\centering USD}}
                \hspace{1mm}
                \includegraphics[width=0.15\linewidth]{./resource/render/usd/groot/rgb_0.png}\hspace{-0.85mm}
                \includegraphics[width=0.15\linewidth]{./resource/render/usd/groot/rgb_1.png}\hspace{-0.85mm}
                \includegraphics[width=0.15\linewidth]{./resource/render/usd/groot/rgb_2.png}\hspace{-0.85mm}
                \includegraphics[width=0.15\linewidth]{./resource/render/usd/groot/rgb_3.png}\hspace{-0.85mm}
                \includegraphics[width=0.15\linewidth]{./resource/render/usd/groot/rgb_4.png}\hspace{-0.85mm}
                \includegraphics[width=0.15\linewidth]{./resource/render/usd/groot/rgb_5.png}\hspace{-0.85mm}
            \end{minipage}
    }
    \end{minipage}
    \\
    \vspace{2mm}
    \begin{minipage}[c]{1\linewidth}
        \centering
        \parbox{1\linewidth}{\centering ``A DSLR photo of a football helmet.''}
        \vspace{-7mm}

        \subfloat{
            \begin{minipage}[c]{1\linewidth}
                \centering
                \rotatebox[origin=l]{90}{\parbox[c][0.03\linewidth]{0.15\linewidth}{\centering SDS}}
                \hspace{1mm}
                \includegraphics[width=0.15\linewidth]{./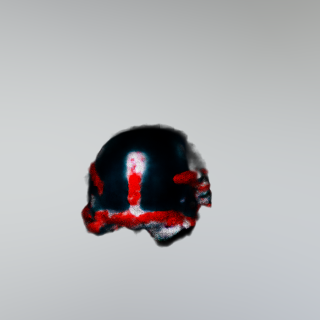}\hspace{-0.85mm}
                \includegraphics[width=0.15\linewidth]{./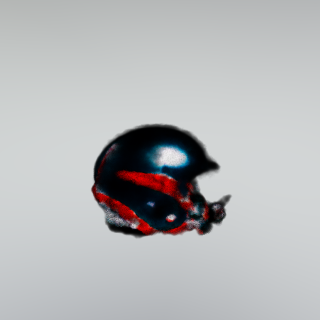}\hspace{-0.85mm}
                \includegraphics[width=0.15\linewidth]{./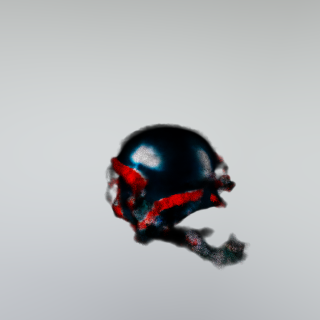}\hspace{-0.85mm}
                \includegraphics[width=0.15\linewidth]{./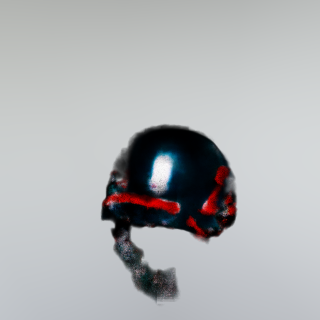}\hspace{-0.85mm}
                \includegraphics[width=0.15\linewidth]{./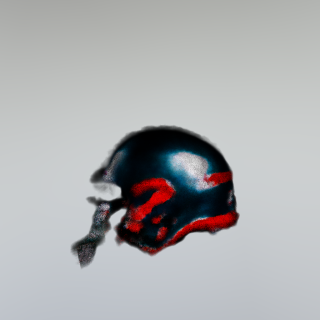}\hspace{-0.85mm}
                \includegraphics[width=0.15\linewidth]{./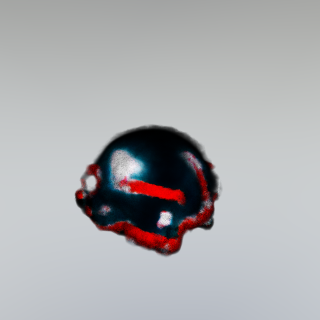}\hspace{-0.85mm}
            \end{minipage}
    }
    \end{minipage}
    \\
    \vspace{-1.5mm}
    \begin{minipage}[c]{1\linewidth}
        \centering
        \subfloat{
            \begin{minipage}[c]{1\linewidth}
                \centering
                \rotatebox[origin=l]{90}{\parbox[c][0.03\linewidth]{0.15\linewidth}{\centering Debiased-SDS}}
                \hspace{1mm}
                \includegraphics[width=0.15\linewidth]{./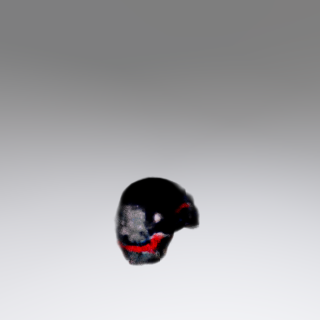}\hspace{-0.85mm}
                \includegraphics[width=0.15\linewidth]{./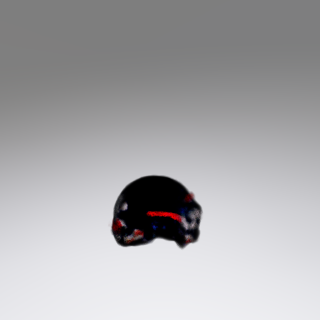}\hspace{-0.85mm}
                \includegraphics[width=0.15\linewidth]{./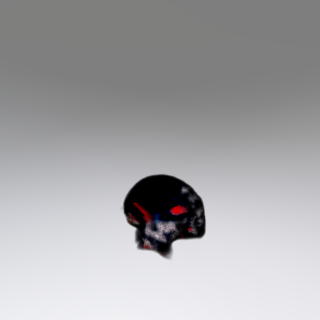}\hspace{-0.85mm}
                \includegraphics[width=0.15\linewidth]{./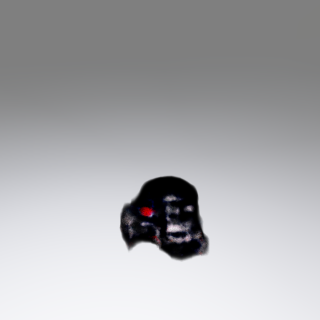}\hspace{-0.85mm}
                \includegraphics[width=0.15\linewidth]{./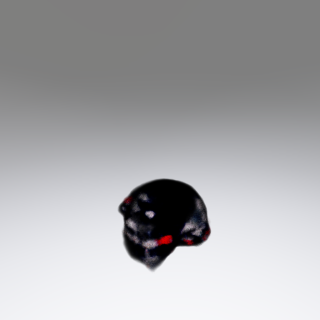}\hspace{-0.85mm}
                \includegraphics[width=0.15\linewidth]{./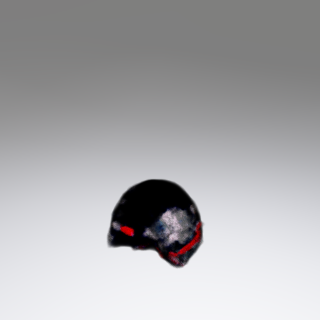}\hspace{-0.85mm}
            \end{minipage}
    }
    \end{minipage}
    \\
    \vspace{-1.5mm}
    \begin{minipage}[c]{1\linewidth}
        \centering
        \subfloat{
            \begin{minipage}[c]{1\linewidth}
                \centering
                \rotatebox[origin=l]{90}{\parbox[c][0.03\linewidth]{0.15\linewidth}{\centering PerpNeg}}
                \hspace{1mm}
                \includegraphics[width=0.15\linewidth]{./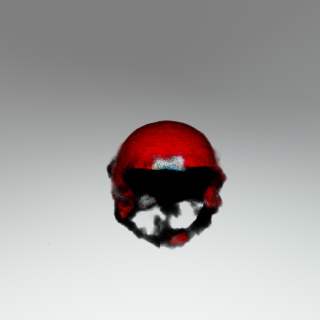}\hspace{-0.85mm}
                \includegraphics[width=0.15\linewidth]{./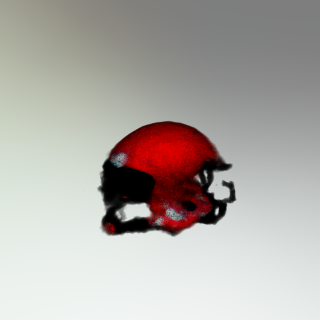}\hspace{-0.85mm}
                \includegraphics[width=0.15\linewidth]{./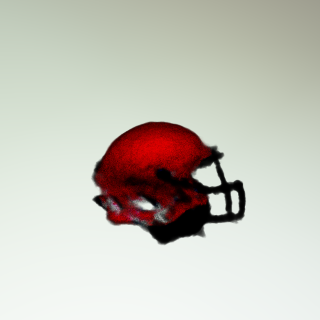}\hspace{-0.85mm}
                \includegraphics[width=0.15\linewidth]{./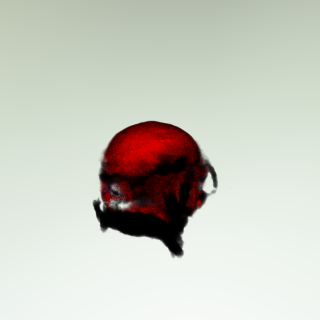}\hspace{-0.85mm}
                \includegraphics[width=0.15\linewidth]{./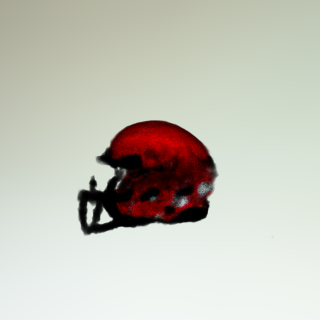}\hspace{-0.85mm}
                \includegraphics[width=0.15\linewidth]{./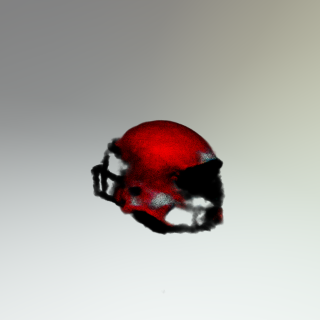}\hspace{-0.85mm}
            \end{minipage}
    }
    \end{minipage}
    \\
    \vspace{-1.5mm}
    \begin{minipage}[c]{1\linewidth}
        \centering
        \subfloat{
            \begin{minipage}[c]{1\linewidth}
                \centering
                \rotatebox[origin=l]{90}{\parbox[c][0.03\linewidth]{0.15\linewidth}{\centering VSD}}
                \hspace{1mm}
                \includegraphics[width=0.15\linewidth]{./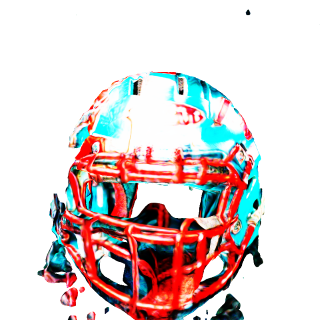}\hspace{-0.85mm}
                \includegraphics[width=0.15\linewidth]{./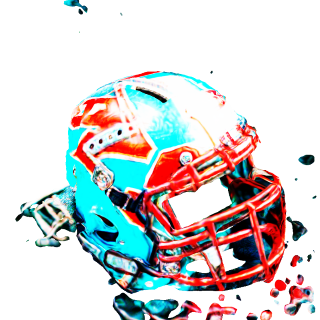}\hspace{-0.85mm}
                \includegraphics[width=0.15\linewidth]{./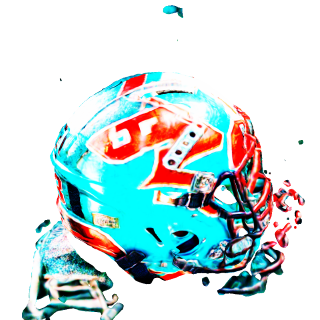}\hspace{-0.85mm}
                \includegraphics[width=0.15\linewidth]{./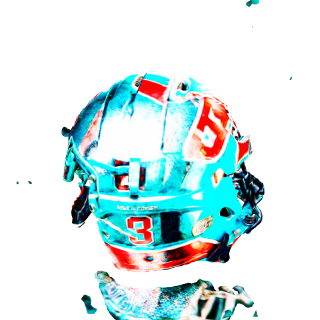}\hspace{-0.85mm}
                \includegraphics[width=0.15\linewidth]{./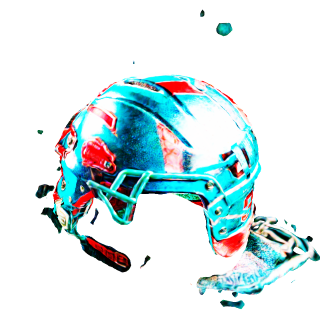}\hspace{-0.85mm}
                \includegraphics[width=0.15\linewidth]{./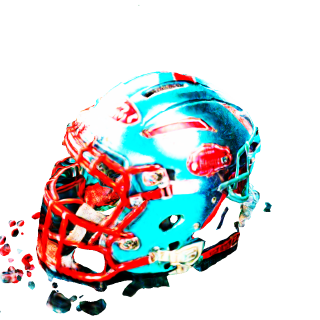}\hspace{-0.85mm}
            \end{minipage}
    }
    \end{minipage}
    \\
    \vspace{-1.5mm}
    \begin{minipage}[c]{1\linewidth}
        \centering
        \subfloat{
            \begin{minipage}[c]{1\linewidth}
                \centering
                \rotatebox[origin=l]{90}{\parbox[c][0.03\linewidth]{0.15\linewidth}{\centering USD}}
                \hspace{1mm}
                \includegraphics[width=0.15\linewidth]{./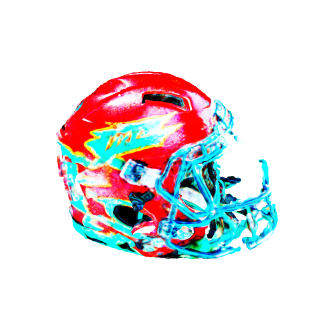}\hspace{-0.85mm}
                \includegraphics[width=0.15\linewidth]{./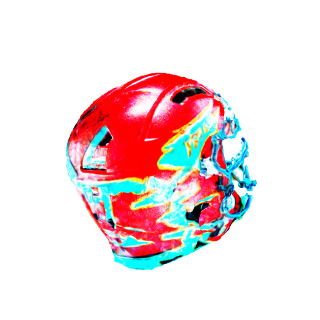}\hspace{-0.85mm}
                \includegraphics[width=0.15\linewidth]{./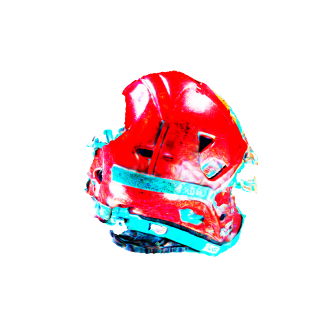}\hspace{-0.85mm}
                \includegraphics[width=0.15\linewidth]{./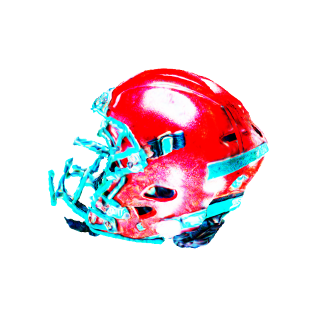}\hspace{-0.85mm}
                \includegraphics[width=0.15\linewidth]{./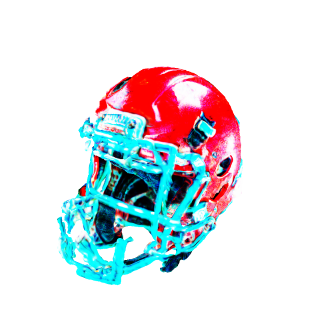}\hspace{-0.85mm}
                \includegraphics[width=0.15\linewidth]{./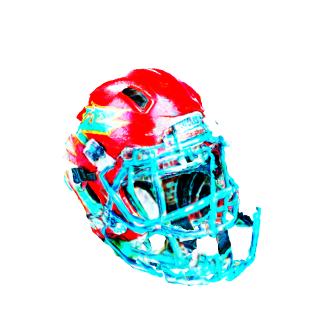}\hspace{-0.85mm}
            \end{minipage}
    }
    \end{minipage}
    \\

    \caption{Additional Comparisons.}
    \label{fig:app_com2}
\end{figure*}

\begin{figure*}[h!]
    \centering

    \begin{minipage}[c]{1\linewidth}
        \centering
        \parbox{1\linewidth}{\centering ``A kangaroo wearing boxing gloves.''}
        \vspace{-7mm}

        \subfloat{
            \begin{minipage}[c]{1\linewidth}
                \centering
                \rotatebox[origin=l]{90}{\parbox[c][0.03\linewidth]{0.15\linewidth}{\centering SDS}}
                \hspace{1mm}
                \includegraphics[width=0.15\linewidth]{./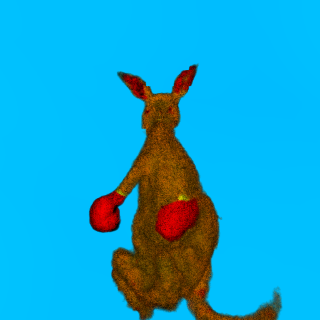}\hspace{-0.85mm}
                \includegraphics[width=0.15\linewidth]{./resource/render/sds/kangaroo/rgb_1.png}\hspace{-0.85mm}
                \includegraphics[width=0.15\linewidth]{./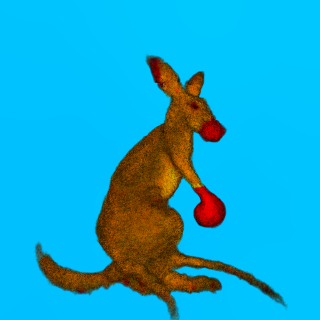}\hspace{-0.85mm}
                \includegraphics[width=0.15\linewidth]{./resource/render/sds/kangaroo/rgb_3.png}\hspace{-0.85mm}
                \includegraphics[width=0.15\linewidth]{./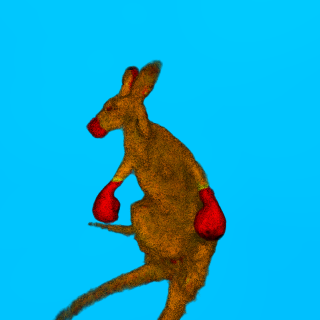}\hspace{-0.85mm}
                \includegraphics[width=0.15\linewidth]{./resource/render/sds/kangaroo/rgb_5.png}\hspace{-0.85mm}
            \end{minipage}
    }
    \end{minipage}
    \\
    \vspace{-1.5mm}
    \begin{minipage}[c]{1\linewidth}
        \centering
        \subfloat{
            \begin{minipage}[c]{1\linewidth}
                \centering
                \rotatebox[origin=l]{90}{\parbox[c][0.03\linewidth]{0.15\linewidth}{\centering Debiased-SDS}}
                \hspace{1mm}
                \includegraphics[width=0.15\linewidth]{./resource/render/deb/kangaroo/rgb_0.png}\hspace{-0.85mm}
                \includegraphics[width=0.15\linewidth]{./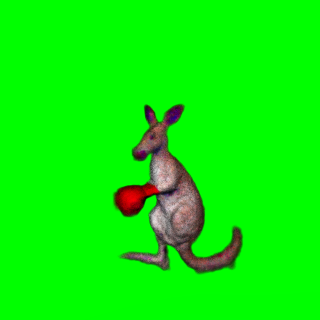}\hspace{-0.85mm}
                \includegraphics[width=0.15\linewidth]{./resource/render/deb/kangaroo/rgb_2.png}\hspace{-0.85mm}
                \includegraphics[width=0.15\linewidth]{./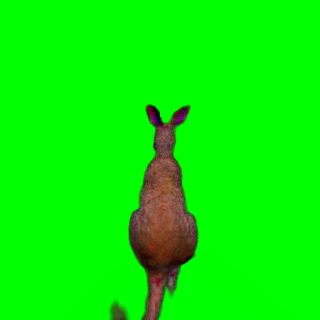}\hspace{-0.85mm}
                \includegraphics[width=0.15\linewidth]{./resource/render/deb/kangaroo/rgb_4.png}\hspace{-0.85mm}
                \includegraphics[width=0.15\linewidth]{./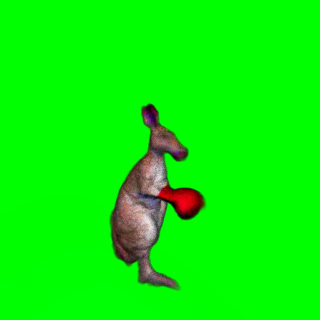}\hspace{-0.85mm}
            \end{minipage}
    }
    \end{minipage}
    \\
    \vspace{-1.5mm}
    \begin{minipage}[c]{1\linewidth}
        \centering
        \subfloat{
            \begin{minipage}[c]{1\linewidth}
                \centering
                \rotatebox[origin=l]{90}{\parbox[c][0.03\linewidth]{0.15\linewidth}{\centering PerpNeg}}
                \hspace{1mm}
                \includegraphics[width=0.15\linewidth]{./resource/render/ppn/kangaroo/rgb_0.png}\hspace{-0.85mm}
                \includegraphics[width=0.15\linewidth]{./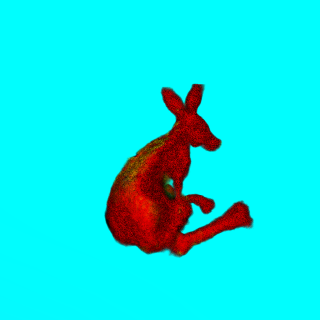}\hspace{-0.85mm}
                \includegraphics[width=0.15\linewidth]{./resource/render/ppn/kangaroo/rgb_2.png}\hspace{-0.85mm}
                \includegraphics[width=0.15\linewidth]{./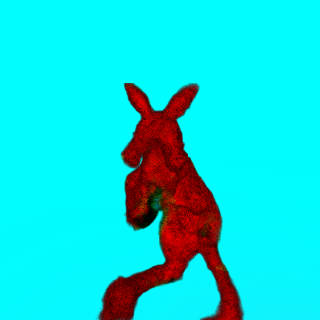}\hspace{-0.85mm}
                \includegraphics[width=0.15\linewidth]{./resource/render/ppn/kangaroo/rgb_4.png}\hspace{-0.85mm}
                \includegraphics[width=0.15\linewidth]{./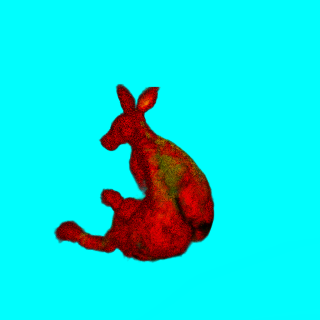}\hspace{-0.85mm}
            \end{minipage}
    }
    \end{minipage}
    \\
    \vspace{-1.5mm}
    \begin{minipage}[c]{1\linewidth}
        \centering
        \subfloat{
            \begin{minipage}[c]{1\linewidth}
                \centering
                \rotatebox[origin=l]{90}{\parbox[c][0.03\linewidth]{0.15\linewidth}{\centering VSD}}
                \hspace{1mm}
                \includegraphics[width=0.15\linewidth]{./resource/render/vsd/kangaroo/rgb_0.png}\hspace{-0.85mm}
                \includegraphics[width=0.15\linewidth]{./resource/render/vsd/kangaroo/rgb_1.png}\hspace{-0.85mm}
                \includegraphics[width=0.15\linewidth]{./resource/render/vsd/kangaroo/rgb_2.png}\hspace{-0.85mm}
                \includegraphics[width=0.15\linewidth]{./resource/render/vsd/kangaroo/rgb_3.png}\hspace{-0.85mm}
                \includegraphics[width=0.15\linewidth]{./resource/render/vsd/kangaroo/rgb_4.png}\hspace{-0.85mm}
                \includegraphics[width=0.15\linewidth]{./resource/render/vsd/kangaroo/rgb_5.png}\hspace{-0.85mm}
            \end{minipage}
    }
    \end{minipage}
    \\
    \vspace{-1.5mm}
    \begin{minipage}[c]{1\linewidth}
        \centering
        \subfloat{
            \begin{minipage}[c]{1\linewidth}
                \centering
                \rotatebox[origin=l]{90}{\parbox[c][0.03\linewidth]{0.15\linewidth}{\centering USD}}
                \hspace{1mm}
                \includegraphics[width=0.15\linewidth]{./resource/render/usd/kangaroo/rgb_0.png}\hspace{-0.85mm}
                \includegraphics[width=0.15\linewidth]{./resource/render/usd/kangaroo/rgb_1.png}\hspace{-0.85mm}
                \includegraphics[width=0.15\linewidth]{./resource/render/usd/kangaroo/rgb_2.png}\hspace{-0.85mm}
                \includegraphics[width=0.15\linewidth]{./resource/render/usd/kangaroo/rgb_3.png}\hspace{-0.85mm}
                \includegraphics[width=0.15\linewidth]{./resource/render/usd/kangaroo/rgb_4.png}\hspace{-0.85mm}
                \includegraphics[width=0.15\linewidth]{./resource/render/usd/kangaroo/rgb_5.png}\hspace{-0.85mm}
            \end{minipage}
    }
    \end{minipage}
    \\
    \vspace{2mm}
    \begin{minipage}[c]{1\linewidth}
        \centering
        \parbox{1\linewidth}{\centering ``A DSLR photo of a squirrel playing guitar.''}
        \vspace{-7mm}

        \subfloat{
            \begin{minipage}[c]{1\linewidth}
                \centering
                \rotatebox[origin=l]{90}{\parbox[c][0.03\linewidth]{0.15\linewidth}{\centering SDS}}
                \hspace{1mm}
                \includegraphics[width=0.15\linewidth]{./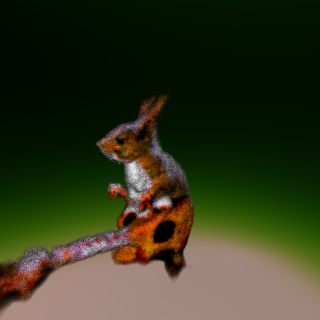}\hspace{-0.85mm}
                \includegraphics[width=0.15\linewidth]{./resource/render/sds/squirrel/rgb_1.png}\hspace{-0.85mm}
                \includegraphics[width=0.15\linewidth]{./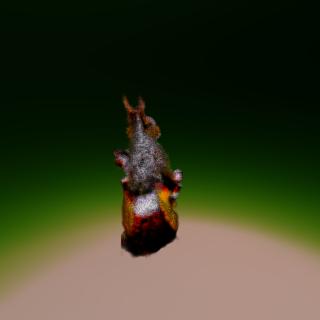}\hspace{-0.85mm}
                \includegraphics[width=0.15\linewidth]{./resource/render/sds/squirrel/rgb_3.png}\hspace{-0.85mm}
                \includegraphics[width=0.15\linewidth]{./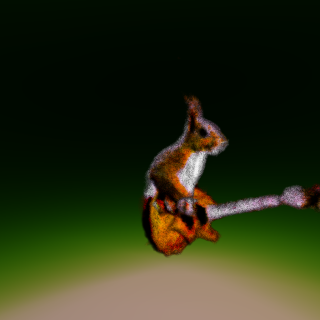}\hspace{-0.85mm}
                \includegraphics[width=0.15\linewidth]{./resource/render/sds/squirrel/rgb_5.png}\hspace{-0.85mm}
            \end{minipage}
    }
    \end{minipage}
    \\
    \vspace{-1.5mm}
    \begin{minipage}[c]{1\linewidth}
        \centering
        \subfloat{
            \begin{minipage}[c]{1\linewidth}
                \centering
                \rotatebox[origin=l]{90}{\parbox[c][0.03\linewidth]{0.15\linewidth}{\centering Debiased-SDS}}
                \hspace{1mm}
                \includegraphics[width=0.15\linewidth]{./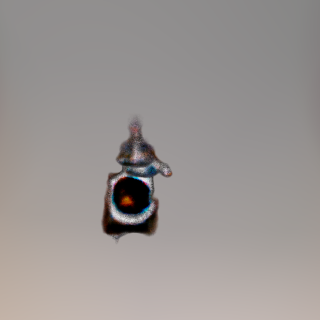}\hspace{-0.85mm}
                \includegraphics[width=0.15\linewidth]{./resource/render/deb/squirrel/rgb_1.png}\hspace{-0.85mm}
                \includegraphics[width=0.15\linewidth]{./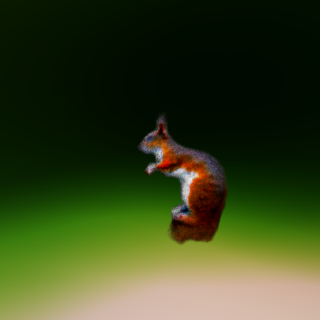}\hspace{-0.85mm}
                \includegraphics[width=0.15\linewidth]{./resource/render/deb/squirrel/rgb_3.png}\hspace{-0.85mm}
                \includegraphics[width=0.15\linewidth]{./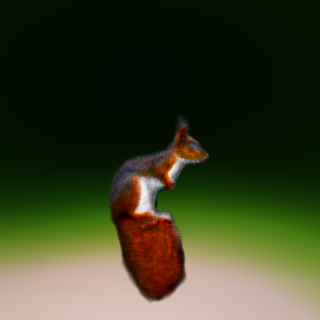}\hspace{-0.85mm}
                \includegraphics[width=0.15\linewidth]{./resource/render/deb/squirrel/rgb_5.png}\hspace{-0.85mm}
            \end{minipage}
    }
    \end{minipage}
    \\
    \vspace{-1.5mm}
    \begin{minipage}[c]{1\linewidth}
        \centering
        \subfloat{
            \begin{minipage}[c]{1\linewidth}
                \centering
                \rotatebox[origin=l]{90}{\parbox[c][0.03\linewidth]{0.15\linewidth}{\centering PerpNeg}}
                \hspace{1mm}
                \includegraphics[width=0.15\linewidth]{./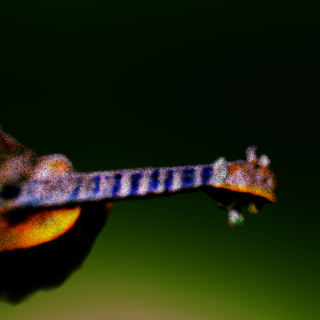}\hspace{-0.85mm}
                \includegraphics[width=0.15\linewidth]{./resource/render/ppn/squirrel/rgb_1.png}\hspace{-0.85mm}
                \includegraphics[width=0.15\linewidth]{./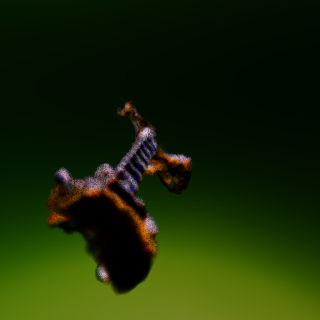}\hspace{-0.85mm}
                \includegraphics[width=0.15\linewidth]{./resource/render/ppn/squirrel/rgb_3.png}\hspace{-0.85mm}
                \includegraphics[width=0.15\linewidth]{./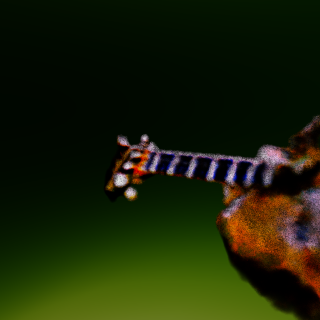}\hspace{-0.85mm}
                \includegraphics[width=0.15\linewidth]{./resource/render/ppn/squirrel/rgb_5.png}\hspace{-0.85mm}
            \end{minipage}
    }
    \end{minipage}
    \\
    \vspace{-1.5mm}
    \begin{minipage}[c]{1\linewidth}
        \centering
        \subfloat{
            \begin{minipage}[c]{1\linewidth}
                \centering
                \rotatebox[origin=l]{90}{\parbox[c][0.03\linewidth]{0.15\linewidth}{\centering VSD}}
                \hspace{1mm}
                \includegraphics[width=0.15\linewidth]{./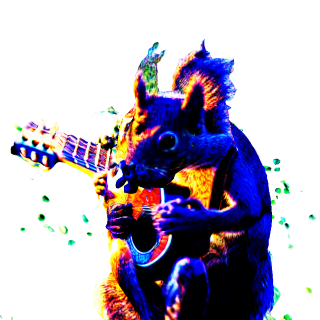}\hspace{-0.85mm}
                \includegraphics[width=0.15\linewidth]{./resource/render/vsd/squirrel/rgb_1.png}\hspace{-0.85mm}
                \includegraphics[width=0.15\linewidth]{./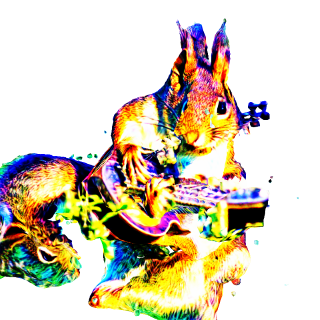}\hspace{-0.85mm}
                \includegraphics[width=0.15\linewidth]{./resource/render/vsd/squirrel/rgb_3.png}\hspace{-0.85mm}
                \includegraphics[width=0.15\linewidth]{./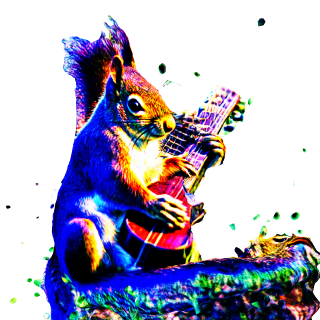}\hspace{-0.85mm}
                \includegraphics[width=0.15\linewidth]{./resource/render/vsd/squirrel/rgb_5.png}\hspace{-0.85mm}
            \end{minipage}
    }
    \end{minipage}
    \\
    \vspace{-1.5mm}
    \begin{minipage}[c]{1\linewidth}
        \centering
        \subfloat{
            \begin{minipage}[c]{1\linewidth}
                \centering
                \rotatebox[origin=l]{90}{\parbox[c][0.03\linewidth]{0.15\linewidth}{\centering USD}}
                \hspace{1mm}
                \includegraphics[width=0.15\linewidth]{./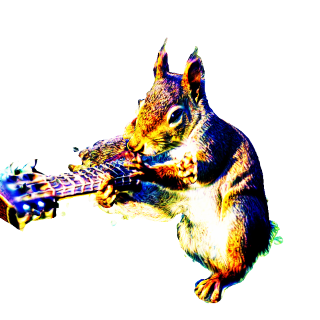}\hspace{-0.85mm}
                \includegraphics[width=0.15\linewidth]{./resource/render/usd/squirrel/rgb_1.png}\hspace{-0.85mm}
                \includegraphics[width=0.15\linewidth]{./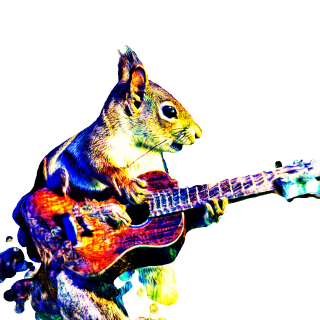}\hspace{-0.85mm}
                \includegraphics[width=0.15\linewidth]{./resource/render/usd/squirrel/rgb_3.png}\hspace{-0.85mm}
                \includegraphics[width=0.15\linewidth]{./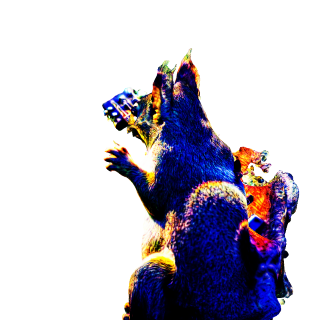}\hspace{-0.85mm}
                \includegraphics[width=0.15\linewidth]{./resource/render/usd/squirrel/rgb_5.png}\hspace{-0.85mm}
            \end{minipage}
    }
    \end{minipage}
    \\

    \caption{Additional Comparisons.}
    \label{fig:app_com3}
\end{figure*}

\begin{figure*}[t!]
    \centering

    \begin{minipage}[c]{1\linewidth}
        \centering
        \parbox{1\linewidth}{\centering ``A DSLR photo of a beagle in a detective's outfit.''}
        \vspace{-7mm}

        \subfloat{
        \begin{minipage}[c]{1\linewidth}
            \includegraphics[width=0.165\linewidth]{./resource/render/usd/beagle/rgb_0.png}\hspace{-0.85mm}
            \includegraphics[width=0.165\linewidth]{./resource/render/usd/beagle/rgb_1.png}\hspace{-0.85mm}
            \includegraphics[width=0.165\linewidth]{./resource/render/usd/beagle/rgb_2.png}\hspace{-0.85mm}
            \includegraphics[width=0.165\linewidth]{./resource/render/usd/beagle/rgb_3.png}\hspace{-0.85mm}
            \includegraphics[width=0.165\linewidth]{./resource/render/usd/beagle/rgb_4.png}\hspace{-0.85mm}
            \includegraphics[width=0.165\linewidth]{./resource/render/usd/beagle/rgb_5.png}\hspace{-0.85mm}
        \end{minipage}
    }
    \end{minipage}
    \\

    \vspace{1mm}
    \begin{minipage}[c]{1\linewidth}
        \centering
        \parbox{1\linewidth}{\centering ``Mecha vampire girl chibi.''}
        \vspace{-7mm}

        \subfloat{
        \begin{minipage}[c]{1\linewidth}
            \includegraphics[width=0.165\linewidth]{./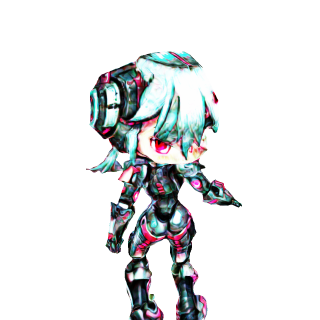}\hspace{-0.85mm}
            \includegraphics[width=0.165\linewidth]{./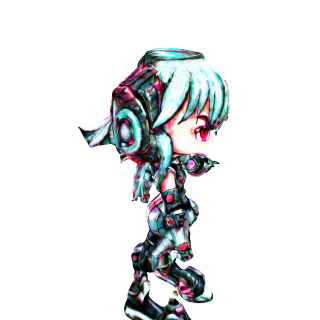}\hspace{-0.85mm}
            \includegraphics[width=0.165\linewidth]{./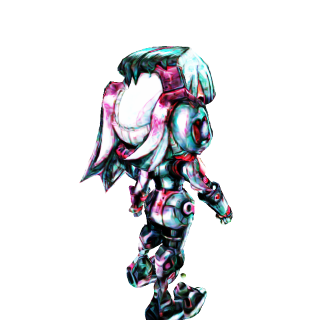}\hspace{-0.85mm}
            \includegraphics[width=0.165\linewidth]{./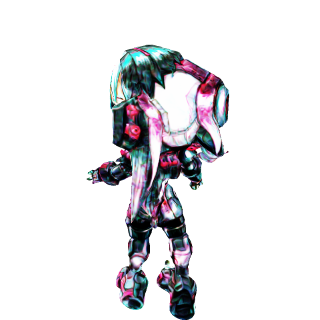}\hspace{-0.85mm}
            \includegraphics[width=0.165\linewidth]{./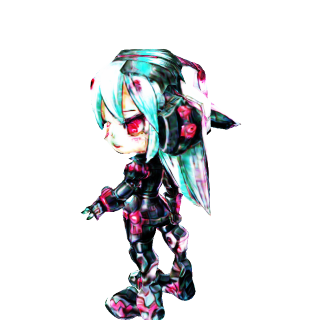}\hspace{-0.85mm}
            \includegraphics[width=0.165\linewidth]{./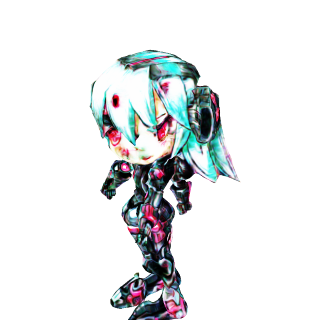}\hspace{-0.85mm}
        \end{minipage}
    }
    \end{minipage}
    \\

    \vspace{1mm}
    \begin{minipage}[c]{1\linewidth}
        \centering
        \parbox{1\linewidth}{\centering ``A peacock on a surfboard.''}
        \vspace{-7mm}

        \subfloat{
        \begin{minipage}[c]{1\linewidth}
            \includegraphics[width=0.165\linewidth]{./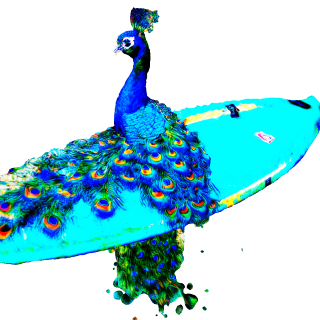}\hspace{-0.85mm}
            \includegraphics[width=0.165\linewidth]{./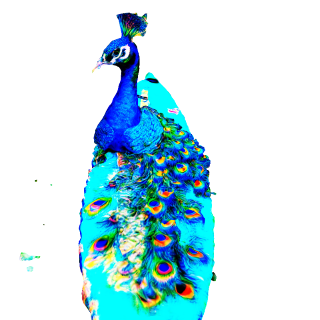}\hspace{-0.85mm}
            \includegraphics[width=0.165\linewidth]{./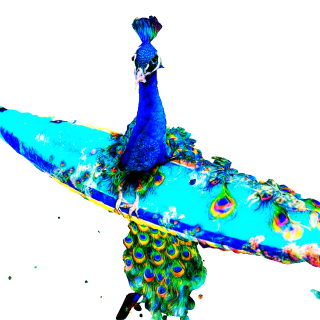}\hspace{-0.85mm}
            \includegraphics[width=0.165\linewidth]{./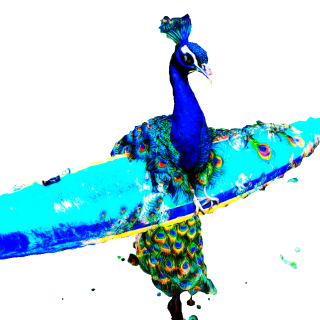}\hspace{-0.85mm}
            \includegraphics[width=0.165\linewidth]{./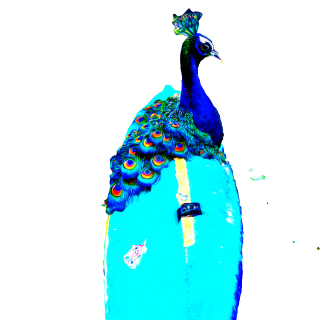}\hspace{-0.85mm}
            \includegraphics[width=0.165\linewidth]{./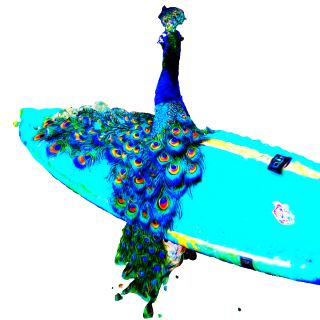}\hspace{-0.85mm}
        \end{minipage}
    }
    \end{minipage}
    \\

    \vspace{1mm}
    \begin{minipage}[c]{1\linewidth}
        \centering
        \parbox{1\linewidth}{\centering ``A blue motorcycle.''}
        \vspace{-7mm}

        \subfloat{
        \begin{minipage}[c]{1\linewidth}
            \includegraphics[width=0.165\linewidth]{./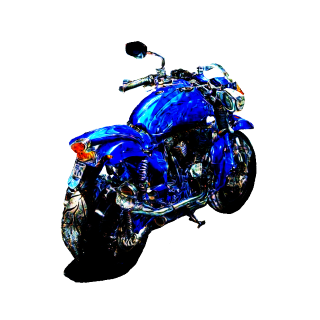}\hspace{-0.85mm}
            \includegraphics[width=0.165\linewidth]{./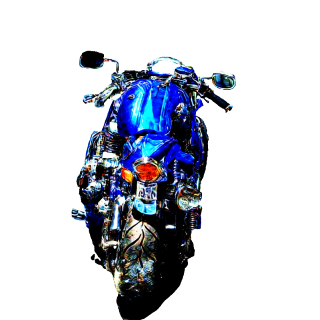}\hspace{-0.85mm}
            \includegraphics[width=0.165\linewidth]{./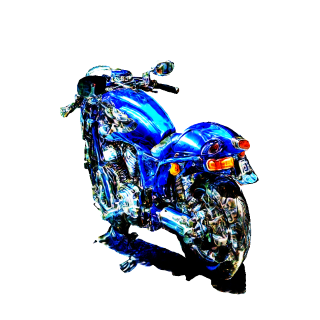}\hspace{-0.85mm}
            \includegraphics[width=0.165\linewidth]{./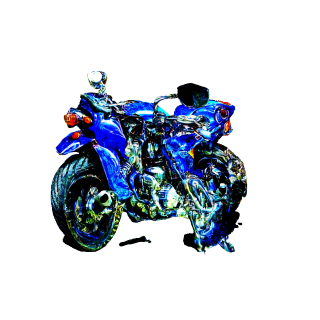}\hspace{-0.85mm}
            \includegraphics[width=0.165\linewidth]{./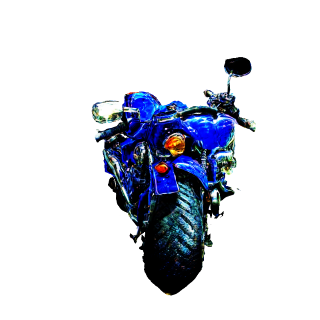}\hspace{-0.85mm}
            \includegraphics[width=0.165\linewidth]{./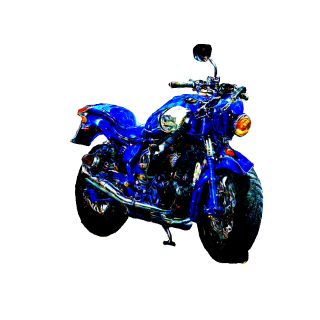}\hspace{-0.85mm}
        \end{minipage}
    }
    \end{minipage}
    \\

    \begin{minipage}[c]{1\linewidth}
        \centering
        \parbox{1\linewidth}{\centering ``An airplane made out of wood.''}
        \vspace{-7mm}

        \subfloat{
        \begin{minipage}[c]{1\linewidth}
            \includegraphics[width=0.165\linewidth]{./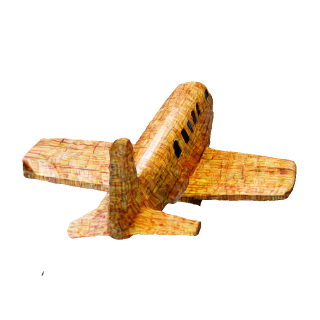}\hspace{-0.85mm}
            \includegraphics[width=0.165\linewidth]{./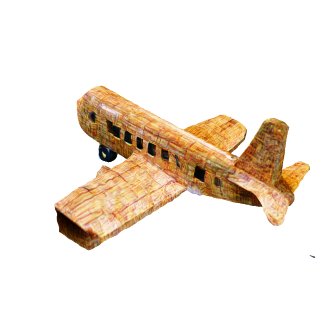}\hspace{-0.85mm}
            \includegraphics[width=0.165\linewidth]{./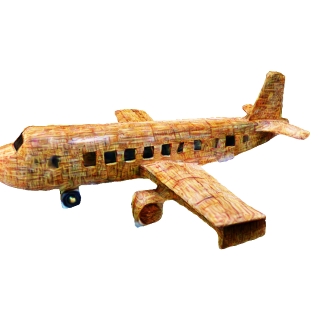}\hspace{-0.85mm}
            \includegraphics[width=0.165\linewidth]{./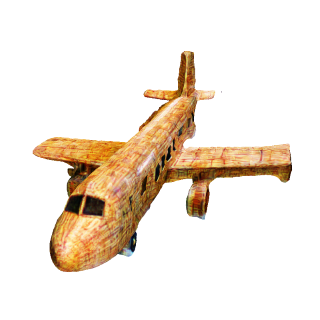}\hspace{-0.85mm}
            \includegraphics[width=0.165\linewidth]{./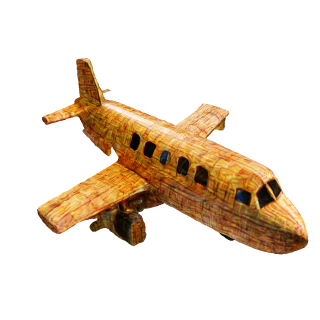}\hspace{-0.85mm}
            \includegraphics[width=0.165\linewidth]{./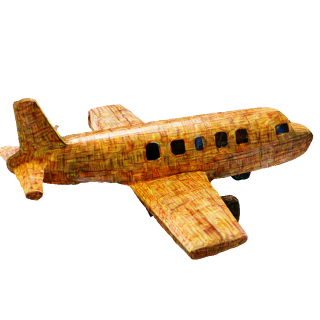}\hspace{-0.85mm}
        \end{minipage}
    }
    \end{minipage}
    \\

    \vspace{1mm}
    \begin{minipage}[c]{1\linewidth}
        \centering
        \parbox{1\linewidth}{\centering ``Robotic bee, high detail.''}
        \vspace{-7mm}

        \subfloat{
        \begin{minipage}[c]{1\linewidth}
            \includegraphics[width=0.165\linewidth]{./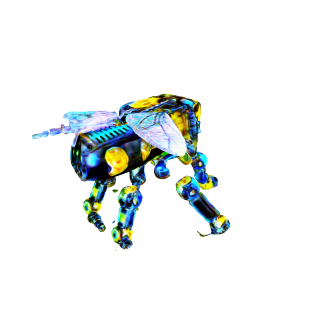}\hspace{-0.85mm}
            \includegraphics[width=0.165\linewidth]{./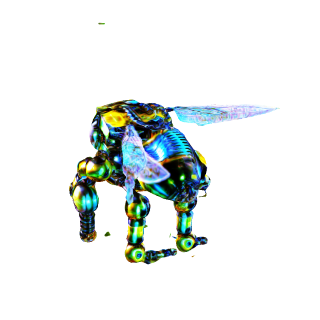}\hspace{-0.85mm}
            \includegraphics[width=0.165\linewidth]{./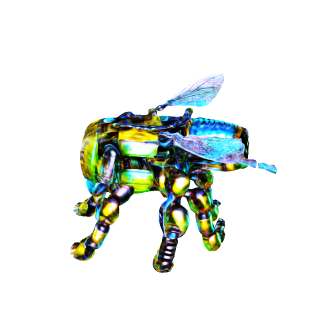}\hspace{-0.85mm}
            \includegraphics[width=0.165\linewidth]{./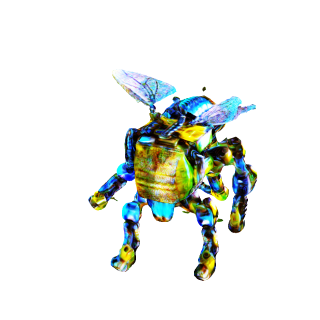}\hspace{-0.85mm}
            \includegraphics[width=0.165\linewidth]{./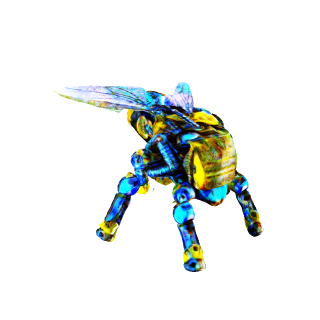}\hspace{-0.85mm}
            \includegraphics[width=0.165\linewidth]{./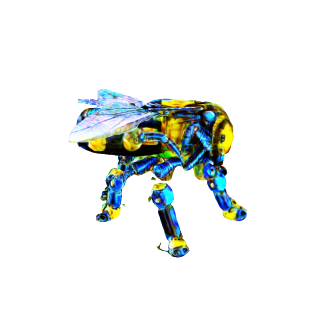}\hspace{-0.85mm}
        \end{minipage}
    }
    \end{minipage}
    \\
    
    \vspace{1mm}
    \begin{minipage}[c]{1\linewidth}
        \centering
        \parbox{1\linewidth}{\centering ``A DSLR photo of a chimpanzee dressed like Napoleon Bonaparte.''}
        \vspace{-7mm}

        \subfloat{
        \begin{minipage}[c]{1\linewidth}
            \includegraphics[width=0.165\linewidth]{./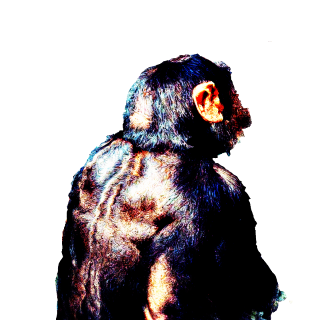}\hspace{-0.85mm}
            \includegraphics[width=0.165\linewidth]{./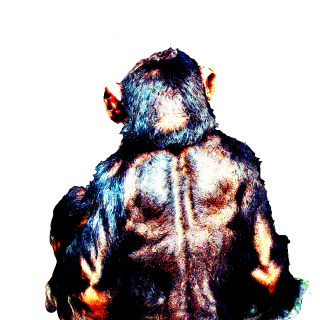}\hspace{-0.85mm}
            \includegraphics[width=0.165\linewidth]{./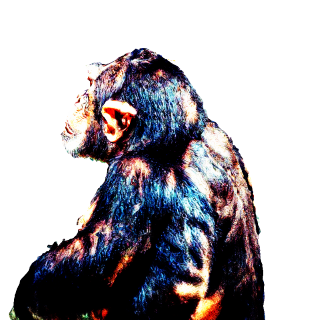}\hspace{-0.85mm}
            \includegraphics[width=0.165\linewidth]{./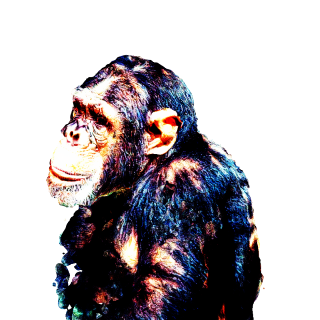}\hspace{-0.85mm}
            \includegraphics[width=0.165\linewidth]{./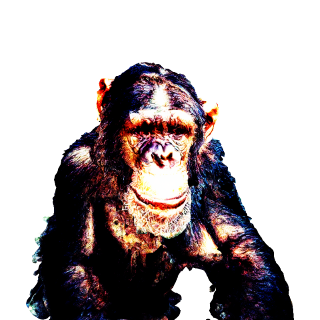}\hspace{-0.85mm}
            \includegraphics[width=0.165\linewidth]{./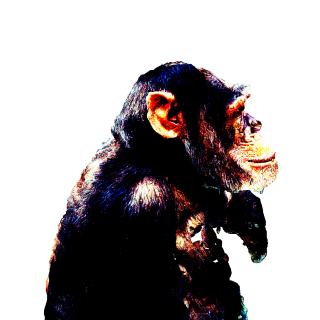}\hspace{-0.85mm}
        \end{minipage}
    }
    \end{minipage}
    \\

    \caption{More examples.}
    \label{fig:app_examples1}
\end{figure*}

\begin{figure*}[t!]
    \centering

    \vspace{1mm}
    \begin{minipage}[c]{1\linewidth}
        \centering
        \parbox{1\linewidth}{\centering ``A bald eagle carved out of wood features.''}
        \vspace{-7mm}

        \subfloat{
        \begin{minipage}[c]{1\linewidth}
            \includegraphics[width=0.165\linewidth]{./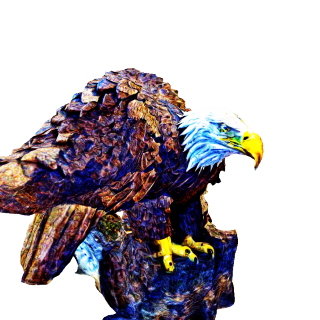}\hspace{-0.85mm}
            \includegraphics[width=0.165\linewidth]{./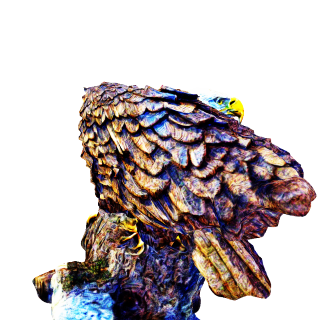}\hspace{-0.85mm}
            \includegraphics[width=0.165\linewidth]{./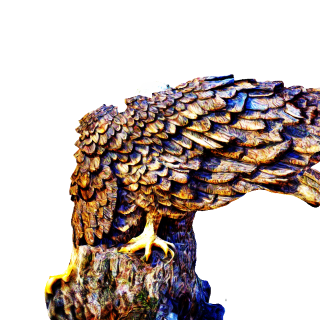}\hspace{-0.85mm}
            \includegraphics[width=0.165\linewidth]{./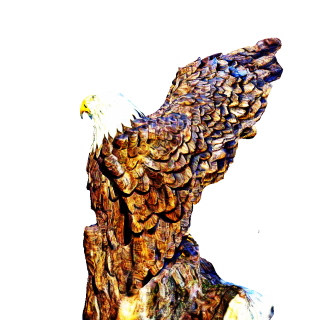}\hspace{-0.85mm}
            \includegraphics[width=0.165\linewidth]{./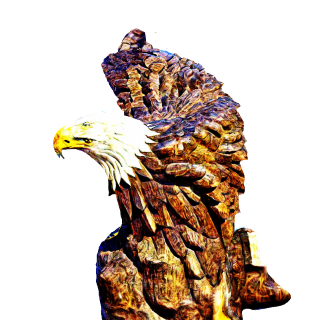}\hspace{-0.85mm}
            \includegraphics[width=0.165\linewidth]{./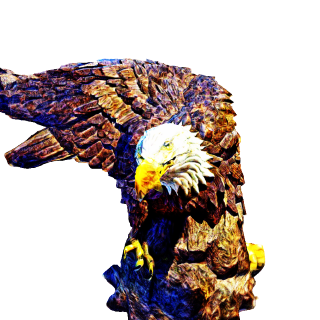}\hspace{-0.85mm}
        \end{minipage}
    }
    \end{minipage}
    \\

    \vspace{1mm}
    \begin{minipage}[c]{1\linewidth}
        \centering
        \parbox{1\linewidth}{\centering ``A kingfisher bird.''}
        \vspace{-7mm}

        \subfloat{
        \begin{minipage}[c]{1\linewidth}
            \includegraphics[width=0.165\linewidth]{./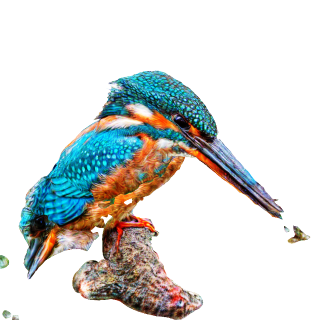}\hspace{-0.85mm}
            \includegraphics[width=0.165\linewidth]{./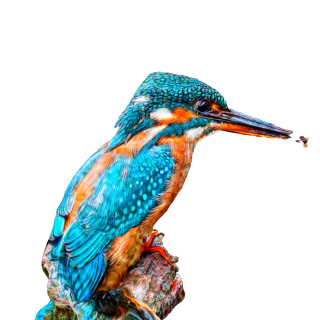}\hspace{-0.85mm}
            \includegraphics[width=0.165\linewidth]{./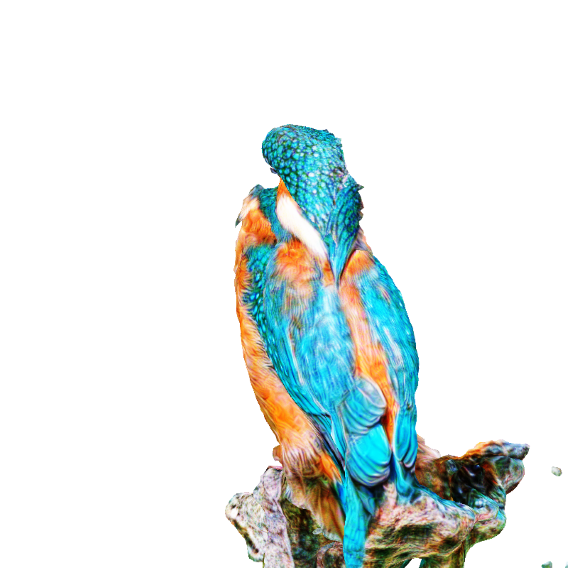}\hspace{-0.85mm}
            \includegraphics[width=0.165\linewidth]{./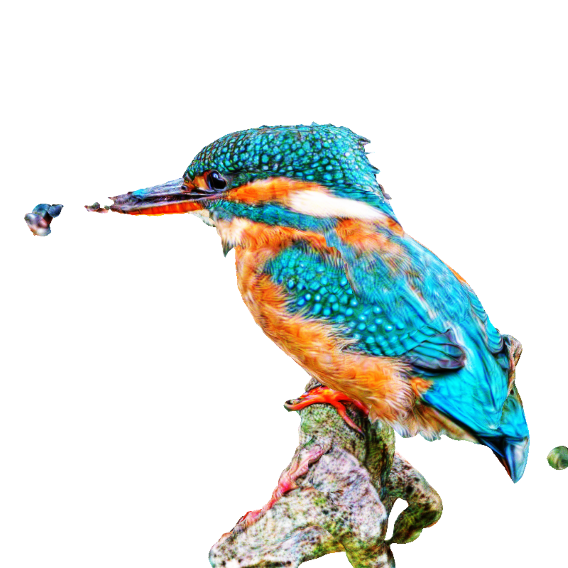}\hspace{-0.85mm}
            \includegraphics[width=0.165\linewidth]{./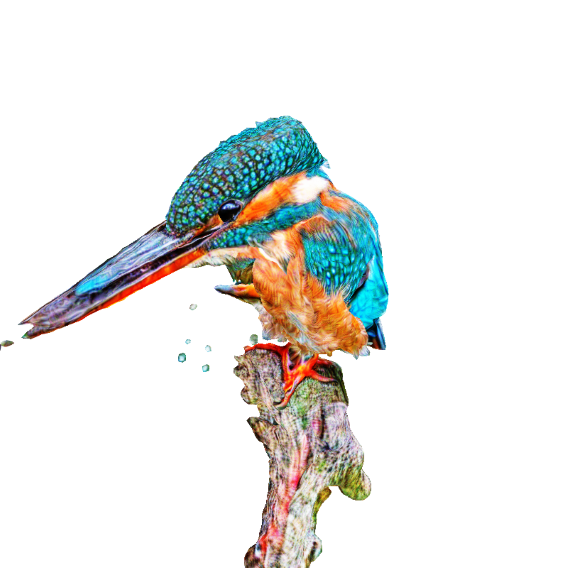}\hspace{-0.85mm}
            \includegraphics[width=0.165\linewidth]{./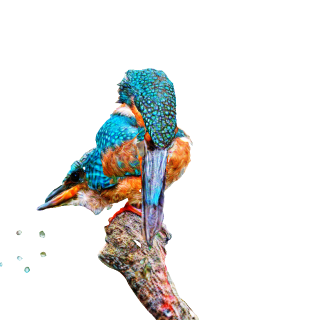}\hspace{-0.85mm}
        \end{minipage}
    }
    \end{minipage}
    \\

    \vspace{1mm}
    \begin{minipage}[c]{1\linewidth}
        \centering
        \parbox{1\linewidth}{\centering ``Robot with pumpkin head.''}
        \vspace{-7mm}

        \subfloat{
        \begin{minipage}[c]{1\linewidth}
            \includegraphics[width=0.165\linewidth]{./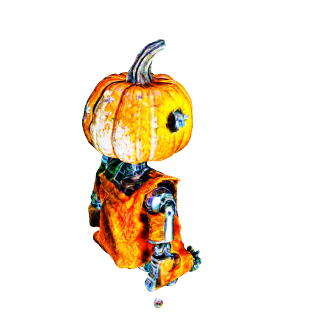}\hspace{-0.85mm}
            \includegraphics[width=0.165\linewidth]{./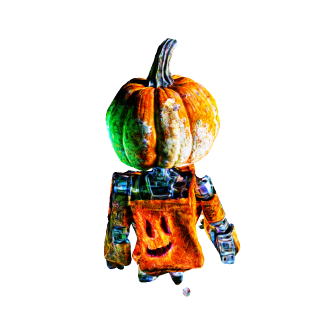}\hspace{-0.85mm}
            \includegraphics[width=0.165\linewidth]{./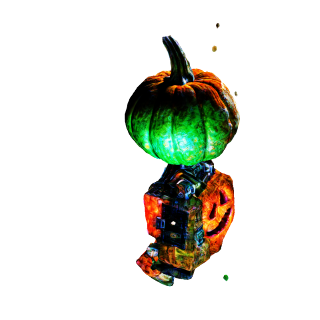}\hspace{-0.85mm}
            \includegraphics[width=0.165\linewidth]{./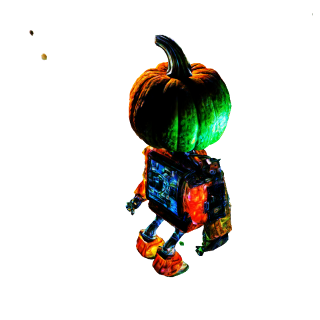}\hspace{-0.85mm}
            \includegraphics[width=0.165\linewidth]{./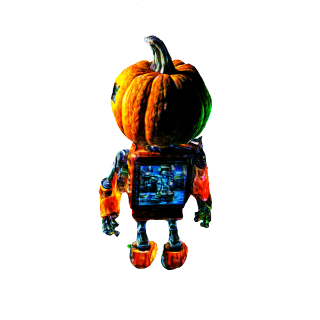}\hspace{-0.85mm}
            \includegraphics[width=0.165\linewidth]{./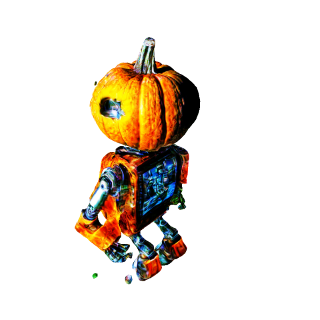}\hspace{-0.85mm}
        \end{minipage}
    }
    \end{minipage}
    \\

    \vspace{1mm}
    \begin{minipage}[c]{1\linewidth}
        \centering
        \parbox{1\linewidth}{\centering ``A dragon-shaped teapot.''}
        \vspace{-7mm}

        \subfloat{
        \begin{minipage}[c]{1\linewidth}
            \includegraphics[width=0.165\linewidth]{./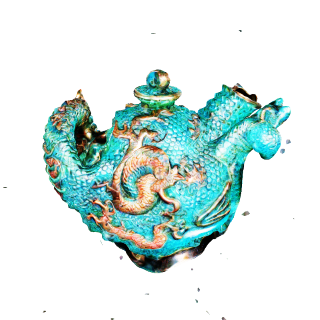}\hspace{-0.85mm}
            \includegraphics[width=0.165\linewidth]{./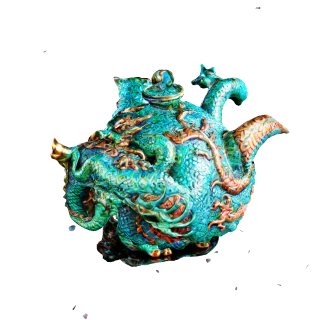}\hspace{-0.85mm}
            \includegraphics[width=0.165\linewidth]{./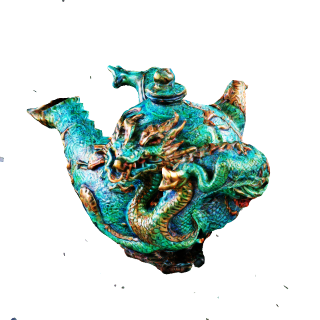}\hspace{-0.85mm}
            \includegraphics[width=0.165\linewidth]{./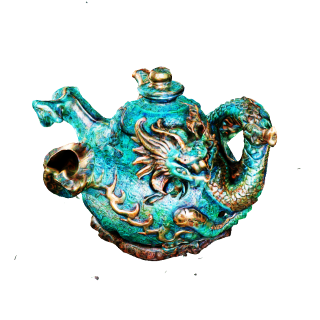}\hspace{-0.85mm}
            \includegraphics[width=0.165\linewidth]{./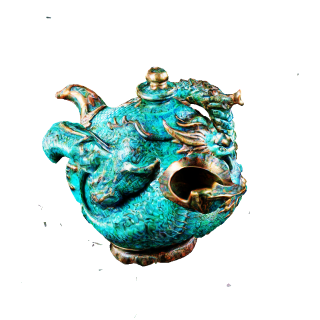}\hspace{-0.85mm}
            \includegraphics[width=0.165\linewidth]{./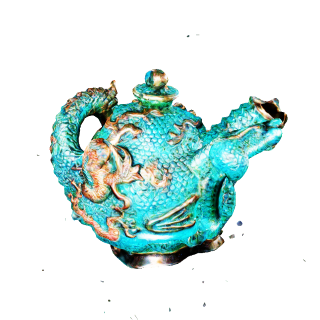}\hspace{-0.85mm}
        \end{minipage}
    }
    \end{minipage}
    \\

    \vspace{1mm}
    \begin{minipage}[c]{1\linewidth}
        \centering
        \parbox{1\linewidth}{\centering ``A sea turtle.''}
        \vspace{-7mm}

        \subfloat{
        \begin{minipage}[c]{1\linewidth}
            \includegraphics[width=0.165\linewidth]{./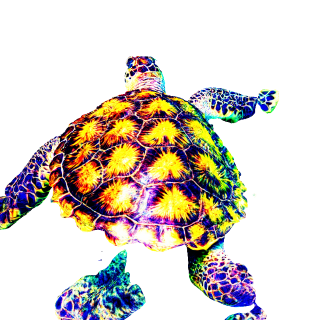}\hspace{-0.85mm}
            \includegraphics[width=0.165\linewidth]{./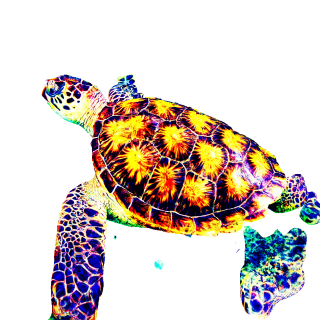}\hspace{-0.85mm}
            \includegraphics[width=0.165\linewidth]{./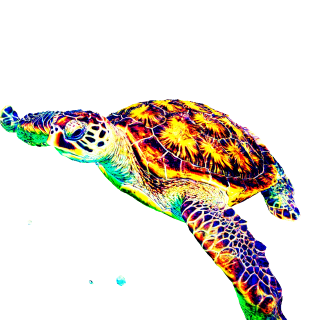}\hspace{-0.85mm}
            \includegraphics[width=0.165\linewidth]{./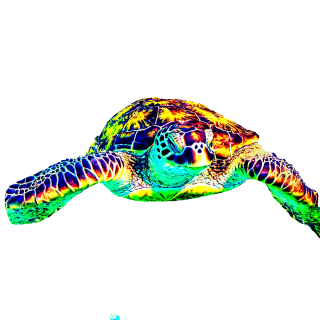}\hspace{-0.85mm}
            \includegraphics[width=0.165\linewidth]{./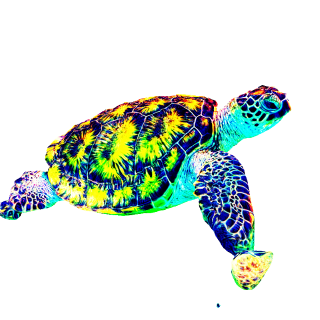}\hspace{-0.85mm}
            \includegraphics[width=0.165\linewidth]{./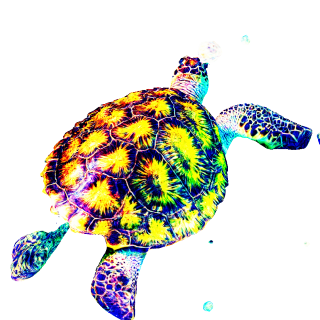}\hspace{-0.85mm}
        \end{minipage}
    }
    \end{minipage}
    \\

    \vspace{1mm}
    \begin{minipage}[c]{1\linewidth}
        \centering
        \parbox{1\linewidth}{\centering ``A zombie bust.''}
        \vspace{-7mm}

        \subfloat{
        \begin{minipage}[c]{1\linewidth}
            \includegraphics[width=0.165\linewidth]{./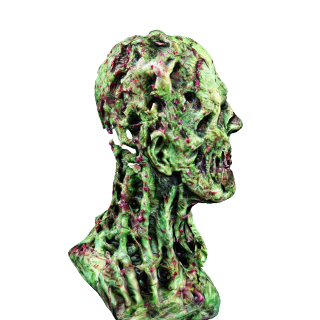}\hspace{-0.85mm}
            \includegraphics[width=0.165\linewidth]{./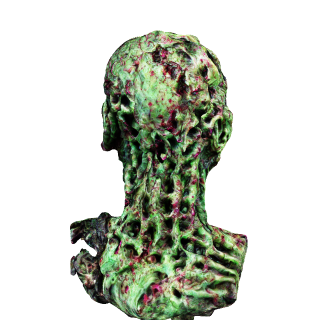}\hspace{-0.85mm}
            \includegraphics[width=0.165\linewidth]{./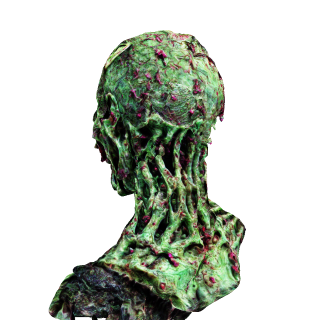}\hspace{-0.85mm}
            \includegraphics[width=0.165\linewidth]{./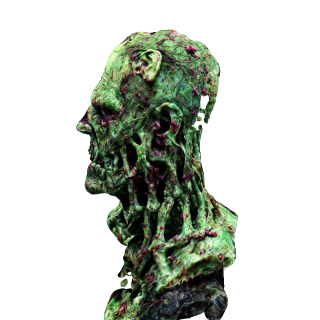}\hspace{-0.85mm}
            \includegraphics[width=0.165\linewidth]{./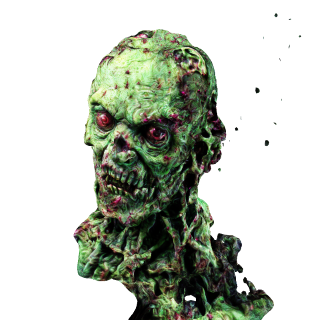}\hspace{-0.85mm}
            \includegraphics[width=0.165\linewidth]{./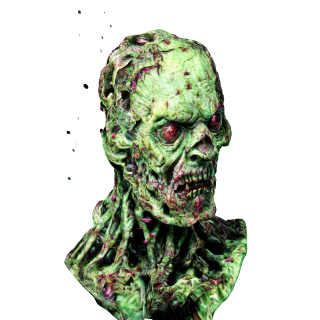}\hspace{-0.85mm}
        \end{minipage}
    }
    \end{minipage}
    \\

    \vspace{1mm}
    \begin{minipage}[c]{1\linewidth}
        \centering
        \parbox{1\linewidth}{\centering ``A fantasy painting of a dragoncat.''}
        \vspace{-7mm}

        \subfloat{
        \begin{minipage}[c]{1\linewidth}
            \includegraphics[width=0.165\linewidth]{./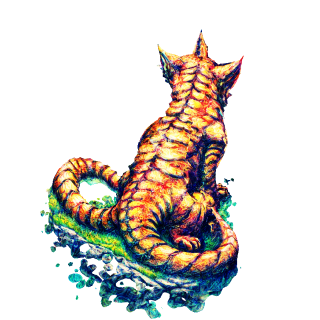}\hspace{-0.85mm}
            \includegraphics[width=0.165\linewidth]{./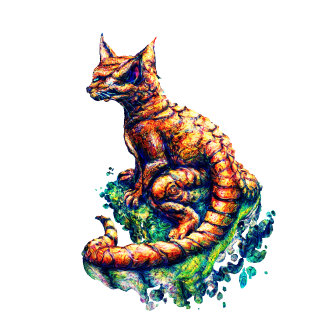}\hspace{-0.85mm}
            \includegraphics[width=0.165\linewidth]{./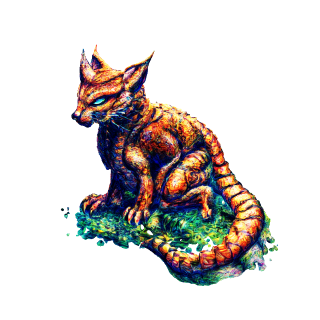}\hspace{-0.85mm}
            \includegraphics[width=0.165\linewidth]{./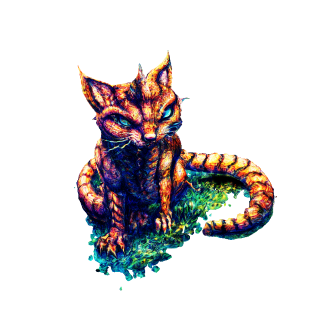}\hspace{-0.85mm}
            \includegraphics[width=0.165\linewidth]{./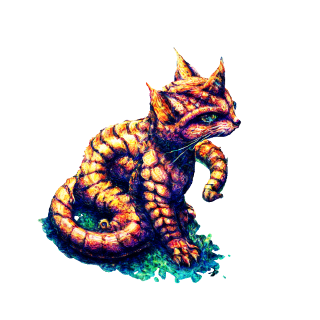}\hspace{-0.85mm}
            \includegraphics[width=0.165\linewidth]{./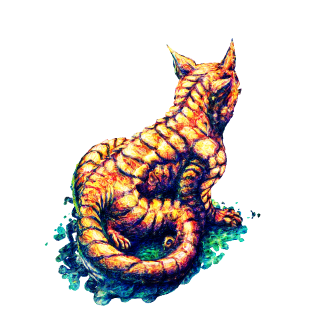}\hspace{-0.85mm}
        \end{minipage}
    }
    \end{minipage}
    \\

    \caption{More examples.}
    \label{fig:app_examples1}
\end{figure*}

\end{document}